\def\todo{0}
\def\coltshort{0}
\newcommand{\users}{\mathify{t}}
\newcommand{\samples}{\mathify{n}}
\newcommand{\datadim}{d}
\newcommand{\feat}{x}
\newcommand{\map}{z}
\newcommand{\lab}{y}
\newcommand{\featdom}{\cX}
\newcommand{\mapdom}{\cZ}
\newcommand{\pop}{P}
\newcommand{\dist}{\pop}
\newcommand{\distint}{B}
\newcommand{\metadist}{\mathfrak{Q}}
\newcommand{\sgn}{\text{sign}}
\newcommand{\growth}{\mathcal{G}}
\newcommand{\alg}{\cA}
\newcommand{\conc}{g}
\newcommand{\repclass}{\cH}
\newcommand{\rep}{h}
\newcommand{\repdim}{   k}
\newcommand{\perclass}{\cF}
\newcommand{\per}{f}
\newcommand{\realpclass}{\cR}
\newcommand{\realp}{r}
\newcommand{\wit}{m}
\newcommand{\vc}{v}
\newcommand{\pseudod}{\psi}
\newcommand{\er}{\textrm{err}}
\newcommand{\rer}{\textrm{rep-err}}
\newcommand{\erind}{q}
\newcommand{\erindclass}{\cQ}
\newcommand{\pd}{\textrm{PDim}}
\newcommand{\npers}{n_{\text{spec}}}
\newcommand{\nrcc}{non-realizability-certificate complexity}
\newcommand{\Nrcc}{Non-realizability-certificate complexity}
\newcommand{\NRCC}{Non-Realizability-Certificate Complexity}
\newcommand{\NRC}{\mathrm{NRC}}
\title{Metalearning with Very Few Samples Per Task}
    \author{
    Maryam Aliakbarpour\thanks{Department of Computer Science, Rice University.  Part of this work was done while at the Department of Computer Science at Boston University and the Khoury College of Computer Sciences at Northeastern University.} \and
    Konstantina Bairaktari\thanks{Khoury College of Computer Sciences, Northeastern University.} \and
    Gavin Brown\thanks{Paul G.\ Allen School of Computer Science and Engineering, University of Washington.  Part of this work was done while at the Department of Computer Science at Boston University.} \and
    Adam Smith\thanks{Department of Computer Science, Boston University.} \and
    Nathan Srebro\thanks{Toyota Technological Institute at Chicago.} \and
    Jonathan Ullman\thanks{Khoury College of Computer Sciences, Northeastern University.}
    }
\date{}
\begin{document}

\ifnum\todo=1
{\color{blue} {\Large \bfseries TODO:}
\begin{itemize}
    \item
\end{itemize}
\fi

\maketitle

\begin{abstract}
    Metalearning and multitask learning are two frameworks for solving a group of related learning tasks more efficiently than we could hope to solve each of the individual tasks on their own.  In multitask learning, we are given a fixed set of related learning tasks and need to output one accurate model per task, whereas in metalearning we are given tasks that are drawn i.i.d.\ from a metadistribution and need to output some common information that can be easily specialized to new, previously unseen tasks from the metadistribution.
    
    In this work, we consider a binary classification setting where tasks are related by a \emph{shared representation}, that is, every task $P$ of interest can be solved by a classifier of the form $\per_{P} \circ \rep$ where $\rep \in \repclass$ is a map from features to some representation space that is shared across tasks, and $\per_{P} \in \perclass$ is a task-specific classifier from the representation space to labels.  The main question we ask in this work is \emph{how much data do we need to metalearn a good representation?}  Here, the amount of data is measured in terms of both the number of tasks $\users$ that we need to see and the number of samples $\samples$ per task. We focus on the regime where the number of samples per task is extremely small.

    Our main result shows that, in a distribution-free setting where the feature vectors are in $\R^d$, the representation is a linear map from $\R^d \to \R^k$, and the task-specific classifiers are halfspaces in $\R^k$, we can metalearn a representation with error $\eps$ using just $\samples = k+2$ samples per task, and $d \cdot (1/\eps)^{O(k)}$ tasks.  Learning with so few samples per task is remarkable because metalearning would be impossible with $k+1$ samples per task, and because we cannot even hope to learn an accurate task-specific classifier with just $k+2$ samples per task.  To obtain this result, we develop a sample-and-task-complexity theory for distribution-free metalearning and multitask learning, which identifies what properties of $\perclass$ and $\repclass$ make metalearning possible with few samples per task.  Our theory also yields a simple characterization of distribution-free multitask learning.  Finally, we give sample-efficient reductions between metalearning and multitask learning, which, when combined with our characterization of multitask learning, give a characterization of metalearning in certain parameter regimes.

\end{abstract}
\ifnum \coltshort = 1
\begin{keywords}%
  metalearning, multitask learning
\end{keywords}
\fi
\ifnum \coltshort = 0
\newpage

\tableofcontents
\newpage
\fi
\pagenumbering{arabic}

\section{Introduction}
Metalearning and multitask learning are frameworks for solving a group of related learning tasks more efficiently than we could hope to solve each of the individual tasks on their own.  It is convenient to think of tasks as corresponding to people—for example, each task could involve completing sentences of one person's email or recognizing one person's friends' faces in photographs. These tasks are clearly different from person to person, since people have different writing styles and friends, but there is also a great deal of shared structure in both examples.  Even though each person may not supply enough text messages or photographs to build an accurate model, one can leverage shared structure to solve all of the tasks more effectively.

In these frameworks (specialized to binary classification), we model each task as a distribution $P$ over labeled examples $\featdom \times \bits$. A family of tasks can be either a finite list of tasks $P_1,\dots,P_t$ (in multitask learning) or, more generally, a metadistribution $\metadist$ over tasks  (in metalearning).  There are several ways to model the relationships among a family of tasks (see Section~\ref{sec:rw}). In this paper, we assume that the tasks are well labeled by classifiers that share a common \textit{representation} $h:\featdom\to \mapdom$ for some  space $\mapdom$, usually of lower dimension than $\featdom$. For each task $P_j$, there is a specialized classifier $f_j:\mapdom\to \bits$ such that $f_j\circ h$ has high accuracy on $P_j$. For example, when recognizing faces in photographs, a common approach uses one neural network to extract facial features followed by a final layer that differs for each person and maps facial features to names of that person's friends.

Given classes $\repclass$ of possible representations and $\perclass$ of possible specializations,  we ask \emph{how much data do we need to find a representation that performs well for most of the tasks in our family of interest?} Suppose the learner has access to $\users$ data sets, where the $j$-th dataset consists of $\samples$ observations drawn i.i.d.\ from task $P_j$. Following the literature, we consider two basic objectives:

\ifnum\coltshort=0
\begin{quote}
    \textbf{(Proper%
        \footnote{For simplicity, we describe 
        the \emph{proper} version of multitask and meta learning where we have to output a representation from $\repclass$ and a set of specialized classifiers from $\perclass$.  In this paper we also consider a more general notions of improper multitask and meta learning.}) 
    Multitask learning:} Output a representation $\rep \in \repclass$ and specialized classifiers $\per_1,\dots,\per_\users \in \perclass$ such that the error of the classifier $\per_j \circ \rep$ on its task $P_j$ is low on average over the tasks. For example, this objective corresponds to learning the classifiers for a given set of face-recognition tasks. 
\end{quote}
\begin{quote}
    \textbf{(Proper\footnotemark[\value{footnote}]) Metalearning:} Assuming that the tasks $P_1,...,P_\users$ are themselves drawn i.i.d.\ from an unknown metadistribution $\metadist$, output a representation $\hat h \in \repclass$ that can then be specialized to an unseen task $P$ drawn from $\metadist$.  The benchmark is then the 
    expected error of the best representation $h^*$ on a new task $\dist\sim \metadist$, measured using the best 
    specialized classifier $\per\in\perclass$ for that task.  This objective corresponds to finding an embedding that can be used to quickly build face-recognition classifiers for new people, as opposed to the people whose photos were used for training. 
\end{quote}
\else
    \smallskip
    \textbf{(Proper%
        \footnote{For simplicity, and to highlight the relationship with metalearning, we describe 
        the \emph{proper} version of multitask/meta learning, but we also consider a more general notion of improper multitask/meta learning.}) 
    Multitask learning:} Output a representation $\rep \in \repclass$ and specialized classifiers $\per_1,\dots,\per_\users \in \perclass$ such that the error of the classifier $\per_j \circ \rep$ on its task $P_j$ is low on average over the tasks. For example, this objective corresponds to learning the classifiers for a given set of face-recognition tasks. 
    \smallskip

    \textbf{\adamtext{(Proper) }Metalearning:} Assuming that the tasks $P_1,...,P_\users$ are themselves drawn i.i.d.\ from an unknown metadistribution $\metadist$, output a representation $\hat h \in \repclass$ that can then be specialized to an unseen task $P$ drawn from $\metadist$.  The benchmark is then the 
    expected error of the best representation $h^*$ on a new task $\dist\sim \metadist$, measured using the best 
    specialized classifier $\per\in\perclass$ for that task.  This objective corresponds to finding an embedding that can be used to quickly build face-recognition classifiers for new people, as opposed to for people whose photos were used for training. 
    \smallskip
\fi

For both variants, one can consider \emph{realizable} families of tasks or a more general \emph{agnostic setting}.  In the realizable case, there is a representation $h \in \repclass$ such that, for every task $P$ in the family, we can find a classifier $f_P \in \perclass$ that perfectly labels example from $P$.  In the agnostic setting, we have no such promise; we aim to compete with the benchmark no matter how good or bad it is.

\mypar{The sample and task complexity of meta and multitask learning.} The goal of this paper is to understand the sample complexity of multitask and metalearning.  Specifically
\begin{quote}
    \emph{How many samples $\samples$ do we need per task to solve metalearning and multitask learning and, given a number of samples $\samples$ per task, how many tasks $\users$ do we need to see?}
\end{quote}

To get some intuition for this question, it is easy to see that in the multitask learning setting we need at least enough samples from each task to learn a good specialized classifier.  For realizable learning, in the distribution-free setting that we study, the number of samples per task must be at least $n \geq \VC(\perclass)/\eps$ to learn with error $\eps$.  In contrast, metalearning's only goal is to output a representation that can be specialized to new tasks, and there is no need to output a good classifier for any specific task.  
Thus we will focus on a more specific question 
\begin{quote}
    \emph{What is the minimum number of samples per task $\samples$ that enable us to solve metalearning with a finite number of tasks $\users$?}
\end{quote}

Our main result answers this question \emph{exactly} for the case of \emph{halfspace classifiers} with a shared \emph{linear representation} in the \emph{distribution-free, realizable setting}.  That is, when  examples are in $\R^d$, representations are linear functions from $\R^d \to \R^k$, and specialized classifiers are halfspaces on $\R^k$.  Our result shows that we can metalearn this class of functions with $k+2$ samples per task. Metalearning  with $k+1$ samples per task is impossible in the distribution-free setting.  For constant error, we show that the number of tasks required to learn the representation using $k+2$ samples per task is at most linear in $d$ and exponential in $k$.

To prove our main result, we develop a uniform-convergence theory for distribution-free metalearning and multitask learning.  This theory complements prior work, which developed a theory of meta- and multitask learning under specific assumptions about the task distributions (see \Cref{sec:rw}).  The main novel feature of our uniform-convergence theory is that it enables us to prove bounds on the number of tasks required for metalearning \emph{even in the regime where we lack the samples per task needed to accurately evaluate the quality of a representation.}  That is, given a representation $\rep$ and just $\samples = \VC(\perclass)+1$ samples from a task, we can learn an arbitrarily good representation, even though we do not have enough data to find a specialized classifier $\per \circ \rep$ with small error $\eps$.

Our theory identifies two key features of the classes $\repclass$ and $\perclass$ that enable metalearning in the realizable case: (1) whether $\perclass$ has small \emph{non-realizability certificates}, meaning that given an unrealizable sample of size $\samples$, there is a small subset of samples that are also unrealizable, and (2) whether the class of \emph{realizability predicates} for $\repclass$ and $\perclass$ has small VC dimension, where the realizability predicates are functions that indicate whether the sample $S$ is realizable by a function of the form $\per \circ \rep$ for $\per \in \perclass$.

Along the way, we also study the multitask setting.
We give a simple characterization of the total number of samples $\samples \users$ that we need to learn and pin down the sample complexity of learning halfspaces with a shared linear representation up to constant factors.
These results help to highlight the differences between metalearning and multitask learning. By showing that we can metalearn (improperly) if and only if we can multitask learn in the setting where we have enough samples per task to output a good classifier for a given representation, we obtain task and sample complexity bounds for improper metalearning.

Our results shed light on some of the intriguing empirical phenomena in modern machine learning. For example, the training data sets for foundation models are increasingly being augmented with  data sources from a wide variety of subpopulations, even though those sources may be very small. That's normally motivated by the desire to improve accuracy on these subpopulations. However, our work  highlights a different and complementary reason why that can be valuable: even a few examples from many subpopulations can vastly improve learning on all populations, even the ``data rich'' ones.





\subsection{Our Results and Techniques}
\label{sec:results}

In this section we give a more detailed overview of our main results and techniques.  We focus both on the general conditions for meta or multitask learnability that we establish as well as our specific sample- and task-complexity results for linear classifiers over linear representations.  We begin by describing the multitask learning setting because it is simpler and serves as a contrast to highlight some of the unusual features of metalearning. 

\mypar{Multitask Learning: A General Characterization and a Tight Bound for Linear Classes.}
Our first result is a complete characterization of the conditions for distribution-free multitask learning. Given $\repclass$ and $\perclass$ and a number of tasks $\users$, one may view the collection of functions $\rep,\per_1,...,\per_\users$ output by a proper learner  as a single  function from $[\users]\times\featdom$ to $\bits$ that maps $ (j,x)$ to $\per_j(\rep(x))$. We denote by $\perclass^{\otimes \users}\circ \repclass$ the class of such composite functions: 
\[  
    \perclass^{\otimes \users}\circ \repclass 
    :=
    \Bset{
        \conc: [\users]\times \featdom \to \bits 
        \ \Big| \ 
        \exists \rep \in \repclass, \per_1, \ldots, \per_{\users} \in \perclass \textrm{ s.t. } \conc(j,x) = \per_j(\rep(x))
    }
\]
The composite class $\perclass^{\otimes \users}\circ \repclass$ has been studied in other multitask learning settings (see \Cref{sec:rw}). We show that in our setting, the VC dimension of this class characterizes distribution-free multitask learnability in both the realisable and agnostic settings. 

\begin{thm}[Informal version of Theorem~\ref{thm:mtl-smpls}]
    \label{thm:mtl-smpls-intro}
    Distribution-free multitask learning to error $\eps$ with $\samples$ samples per task and $\users$ tasks is possible if and only if $nt \gtrsim \VC(\perclass^{\otimes \users}\circ \repclass)/\eps^2$.  In the realizable setting $1/\eps^2$ is replaced with $\log(1/\eps)/\eps$.
\end{thm}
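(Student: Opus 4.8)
The plan is to reduce multitask learning over $(\repclass,\perclass)$ to ordinary PAC learning of the single composite class $\concclass := \perclass^{\otimes\users}\circ\repclass$ over the domain $[\users]\times\featdom$. Given a tuple of tasks $(P_1,\dots,P_\users)$, let $D$ denote the mixture $\frac1\users\sum_{j=1}^{\users}\delta_j\otimes P_j$ over $[\users]\times(\featdom\times\bits)$. A proper multitask learner that outputs $\rep\in\repclass$ and $\per_1,\dots,\per_\users\in\perclass$ is exactly outputting the composite hypothesis $\conc(j,\feat)=\per_j(\rep(\feat))\in\concclass$; its average-over-tasks error is precisely $\err_D(\conc)$, and the multitask benchmark is $\inf_{\conc\in\concclass}\err_D(\conc)$, which equals $0$ in the realizable case. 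The only way the multitask problem differs from textbook PAC learning of $\concclass$ over $D$ is the sampling model: instead of $\samples\users$ i.i.d.\ draws from $D$, we receive a \emph{stratified} sample consisting of exactly $\samples$ points from each $P_j$.

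For the sufficiency (``if'') direction, run empirical risk minimization over $\concclass$; this is itself a proper multitask learner, since it produces a representation and one specialized classifier per task. The needed guarantee is uniform convergence over $\concclass$ on the stratified sample, and the key point is that the standard VC bounds are insensitive to stratification: the $\samples\users$ sample points are still mutually independent, Sauer--Shelah bounds the number of distinct behaviors of $\concclass$ on any fixed set of $\samples\users$ points by $(\samples\users)^{\VC(\concclass)}$, and the symmetrization/ghost-sample step only ever swaps a point with its independent copy \emph{drawn from the same task}, so one never uses that the $\samples\users$ slots are identically distributed. This yields the agnostic guarantee $\err_D(\hat\conc)\le\inf_{\concclass}\err_D + O(\sqrt{(\VC(\concclass)+\log\tfrac1\delta)/(\samples\users)})$, and, via the realizable uniform-convergence bound for sample-consistent hypotheses, $\err_D(\hat\conc)\le\eps$ as soon as $\samples\users = \Omega(\tfrac1\eps(\VC(\concclass)\log\tfrac1\eps+\log\tfrac1\delta))$. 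Rearranging gives the two thresholds $\samples\users\gtrsim\VC(\concclass)/\eps^2$ and $\samples\users\gtrsim\VC(\concclass)\log(1/\eps)/\eps$.

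For the necessity (``only if'') direction, build a hard family of task tuples from a shattered set of $\concclass$, mimicking the classical VC lower bound but tracking how shattered points are distributed across tasks. Take a shattered set and write it as $\bigcup_{j}S_j$ with $S_j\subseteq\{j\}\times\featdom$; pick a hidden sign vector $\sigma$ uniformly and a bias $\gamma=\Theta(\eps)$, and let $P_j$ be uniform over $S_j$ with the label of each $x\in S_j$ set to $\sigma_{j,x}$ with probability $\tfrac12+\gamma$ and negated otherwise. Since $S_j$ is shattered, the benchmark is exactly $\tfrac12-\gamma$ on each nonempty task, and a learner's excess error is $\frac{2\gamma}{\users}\sum_j\frac{1}{|S_j|}\cdot(\text{number of coordinates of }S_j\text{ it mislabels})$. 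A per-coin estimation lower bound --- the two biased coins are $O(\gamma\sqrt\ell)$-close in total variation after $\ell$ samples, so pinning down a single coordinate $\sigma_{j,x}$ costs $\Omega(1/\gamma^2)$ samples of that point, while task $j$ gets only $\samples$ samples spread over its $|S_j|$ points --- shows the learner mislabels a constant fraction of $S_j$ unless $\samples\gtrsim|S_j|/\gamma^2$. Summing the resulting excess over the tasks shows multitask learning is impossible once $\samples\users\lesssim(\sum_j|S_j|)/\eps^2$; the realizable case is analogous, using noiseless labels from a sample-consistent $\conc^*\in\concclass$ together with the standard realizable lower bound, which produces the $\log(1/\eps)/\eps$ scaling.

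The step I expect to be the main obstacle is making the lower bound scale with $\VC(\concclass)$ itself rather than with a smaller ``per-task'' quantity. The argument above gives the right dependence on $\VC(\concclass)$ only if the shattered set can be chosen roughly balanced across tasks, with $|S_j|=\Theta(\VC(\concclass)/\users)$: a shattered set concentrated on a few tasks is useless here, because a task carrying many shattered points contributes only a $1/\users$ fraction of the averaged objective and so is ``cheap'' to learn (one can push $\gamma$ up to compensate, but that only makes that task easier). Thus a combinatorial lemma is needed: $\concclass=\perclass^{\otimes\users}\circ\repclass$ always shatters a set of size $\Omega(\VC(\concclass))$ whose points are spread roughly evenly across the $\users$ tasks. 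The intuition is that the shared representation acts as a common bottleneck used by all tasks in parallel, so any shattering achieved by devoting the representation's capacity to one task can be re-allocated to let each of the $\users$ tasks use a $1/\users$ share --- and one can also freely relocate a shattered point into an empty task by copying one classifier's behavior at that point. Proving this redistribution in general (rather than for the concrete linear class, where a maximal shattered set is already balanced) is the delicate part; granting it, the estimation lower bound applied task-by-task matches the upper bound, and since all lower bounds apply to improper learners as well, the two directions together give the stated characterization for both proper and improper multitask learning.
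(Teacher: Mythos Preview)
Your upper bound is exactly the paper's: ERM over $\concclass=\perclass^{\otimes\users}\circ\repclass$, with the double-sampling/symmetrization argument run as usual but swapping only within a task so that the ``independent but not identically distributed'' sample causes no trouble.

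For the lower bound you are aiming higher than the paper does. The formal theorem behind this informal statement only asserts that $\samples\users\le\tfrac14\VC(\concclass)$ rules out $(1/8,1/8)$-learnability, even realizably; the $\eps$-dependence in the informal ``if and only if'' is carried entirely by the upper bound. Accordingly the paper uses noiseless uniform-random labels on the shattered set (the textbook realizable lower bound) rather than your biased-coin construction, and never needs a per-coin estimation argument. One consequence: your claim that the realizable lower bound produces the $\log(1/\eps)/\eps$ scaling is wrong --- standard realizable lower bounds give $\Omega(\VC/\eps)$ without the log, and the paper makes no such claim.

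On the redistribution step you correctly flag the obstacle and give the right intuition, but it is less delicate than you suggest and the paper carries it out explicitly. Start from a shattered set of size $4\samples\users\le\VC(\concclass)$, trim each task's slice down to a multiple of $2\samples$ (losing at most $2\samples\users$ points), then fill each empty task $j$ by relabeling the task index of a block of points from some heavy task $k$. The relabeled set is still shattered: given any target labeling, shatter the \emph{original} set with that labeling and then set $\per_j':=\per_k$ --- this works precisely because task $j$ was empty, so $\per_j'$ only has to handle the transplanted points, while $\per_k'=\per_k$ still handles what remains in task $k$. This is exactly your ``copy one classifier's behavior'' observation; no deeper structural property of the composite class is needed. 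With every task holding at least $2\samples$ points, the usual random-label argument gives constant excess error with constant probability.
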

See Definition~\ref{def:multitask_learning} for a formal definition of the setting and error parameter $\eps$. Since VC dimension characterizes distribution-free PAC learning, this result shows that multitask learning is possible if and only if the larger composed function class is PAC learnable.  See \Cref{sec:generic-VC-composite} and \Cref{sec:finite_classes} for basic upper and lower bounds on the VC dimension of the composed class.

Nevertheless, an interesting phenomenon emerges since $\users$ appears in both the definition of the function and the sample complexity bound.  As a concrete example, consider the case of a linear classifier over a linear representation
$$
    \repclass_{\datadim,\repdim} = \left\{\rep \mid \rep(\mathbf{\feat}) = \mathbf{B} \mathbf{\feat}, \mathbf{B} \in \R^{\repdim\times \datadim}\right\}
    \text{ and }
    \perclass_{\repdim} = \left\{\per\mid  \per(\mathbf{\map}) = \textrm{sign}(\mathbf{a}\cdot \mathbf{\map} - w),  \mathbf{a} \in \R^{\repdim}, w \in \R \right\},
$$ 
which will be the main example we study in this paper.  Using a bound on the number of possible sign patterns in a family of polynomials due to~\cite{Warren68}, which was also used by~\cite{Baxter2000} for meta-learning bounds on regression problems, we characterize the VC dimension of the associated composite class up to a constant.

\newcommand{\thmMtlVcHLText}{
    For all $\users, \datadim, \repdim \in \N$, we have
    \[\VC(\perclass_{\repdim}^{\otimes \users}\circ \repclass_{\datadim,\repdim})  = 
    \begin{cases}
    \users(\datadim + 1), &\textrm{ if } \users \leq \repdim\\
    \Theta(\datadim \repdim +\repdim\users),&\textrm{ if } \users > \repdim.
    \end{cases}
\]}
    \begin{thm}
       \label{thm:mtl-vc-hl-intro}
       \thmMtlVcHLText
    \end{thm}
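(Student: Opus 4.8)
The plan is to reason directly about the composite class. Every $\conc \in \perclass_{\repdim}^{\otimes \users}\circ \repclass_{\datadim,\repdim}$ has the form $\conc(j,\feat) = \sgn(\mathbf a_j \cdot \mathbf B\feat - w_j) = \sgn\bigl((\mathbf B^\top\mathbf a_j)\cdot \feat - w_j\bigr)$, so for each fixed task $j$ the function $\feat \mapsto \conc(j,\feat)$ is a halfspace in $\R^\datadim$ whose normal vector $\mathbf B^\top\mathbf a_j$ is constrained to lie in the (at most $\repdim$-dimensional) row space of $\mathbf B$. This shared low-dimensional constraint is the whole story: it is vacuous exactly when $\users\le\repdim$ and binding when $\users>\repdim$, so I would split into these two regimes. (As is standard in this setting we work in the regime $\datadim \ge \repdim$.)

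\emph{The regime $\users\le\repdim$.} Here I would show that the composite class, whose domain is exactly $[\users]\times\featdom$, \emph{equals} the ``product'' class consisting of all $\conc$ whose restriction to each task $j$ is an arbitrary halfspace on $\R^\datadim$. One inclusion is immediate from the displayed formula. For the reverse, given target halfspaces with normals $\mathbf v_1,\dots,\mathbf v_\users$ and thresholds $w_1,\dots,w_\users$, take $\mathbf a_j=\mathbf e_j\in\R^\repdim$ (possible since $\users\le\repdim$) and let $\mathbf B$ be the matrix whose $j$-th row is $\mathbf v_j^\top$; then $\mathbf B^\top\mathbf a_j=\mathbf v_j$. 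A class over a disjoint union of domains that behaves independently on each part has VC dimension equal to the sum of the per-part VC dimensions (the upper bound holds because the trace of a shattered set on each part must itself be shattered there), and halfspaces on $\R^\datadim$ have VC dimension $\datadim+1$; this gives exactly $\users(\datadim+1)$.

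\emph{The regime $\users>\repdim$.} For the upper bound I would invoke the polynomial sign-pattern bound of \cite{Warren68}. Parametrize the composite class by the $\datadim\repdim$ entries of $\mathbf B$, the $\repdim$ entries of each $\mathbf a_j$, and the $\users$ thresholds $w_j$, for $N=\datadim\repdim+\users(\repdim+1)$ real parameters in all (note $N = O(\datadim\repdim + \repdim\users)$); for any fixed point $(j,\feat)$, the label $\conc(j,\feat)$ is the sign of a polynomial of degree $2$ in these parameters — bilinear in $\mathbf B$ and $\mathbf a_j$, linear in $w_j$. Warren's theorem then bounds the number of sign patterns realizable on $M$ points by $\bigl(O(M)/N\bigr)^N$, which is below $2^M$ once $M$ exceeds a suitable absolute constant times $N$; hence $\VC=O(N)=O(\datadim\repdim+\repdim\users)$. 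For the matching lower bound I would exhibit two shattered sets and take the larger. First, restricting attention to $\repdim$ of the $\users$ tasks, the previous regime already shows the composite class realizes the full product of $\repdim$ halfspace classes on $\R^\datadim$, so it shatters $\repdim(\datadim+1)$ points. Second — this is where the dependence on $\users$ enters — fix $\mathbf B=[\,\mathbf I_\repdim\mid\mathbf 0\,]$ and choose $\repdim+1$ feature vectors whose images under $\mathbf B$ are affinely independent in $\R^\repdim$; placing this same $(\repdim+1)$-point configuration in every one of the $\users$ tasks, any global labeling is realizable, because for each task separately the desired dichotomy of these $\repdim+1$ points in general position is induced by some halfspace $(\mathbf a_j,w_j)$. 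This shatters $\users(\repdim+1)$ points, so altogether $\VC\ge\max\{\repdim(\datadim+1),\,\users(\repdim+1)\}=\Omega(\datadim\repdim+\repdim\users)$.

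I expect the main obstacle to be the parameter-counting in the upper bound: the naive description of the composite class gives each task its own $\datadim$-dimensional normal vector, for $\Theta(\datadim\users)$ parameters, and one must use the fact that all these normals factor through the shared $\repdim\times\datadim$ matrix $\mathbf B$ (hence lie in a common $\repdim$-dimensional subspace) to bring the count down to $O(\datadim\repdim+\repdim\users)$; one also has to verify the relevant functions are genuinely degree-$2$ polynomials, so that Warren's bound is linear rather than superlinear in $N$. The lower bound is comparatively routine, but it is worth being careful that in the second construction the representation $\mathbf B$ is fixed \emph{before} the adversary picks the labeling, so that the $\users$ tasks genuinely decouple and the $\users(\repdim+1)$ count is valid.
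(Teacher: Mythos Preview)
Your proposal is correct and follows essentially the same approach as the paper's proof: for $\users\le\repdim$ you identify the composite class with the unconstrained product of $\users$ halfspace classes on $\R^\datadim$, and for $\users>\repdim$ you combine Warren's degree-$2$ sign-pattern bound on the $\datadim\repdim+\users(\repdim+1)$ parameters with two explicit shattered sets of sizes $\repdim(\datadim+1)$ and $\users(\repdim+1)$. The paper obtains the same lower bounds by invoking its generic Lemma~\ref{lem:vc-bounds} (which, like your explicit construction with $\mathbf B=[\mathbf I_\repdim\mid\mathbf 0]$, implicitly uses $\datadim\ge\repdim$ via the existence of a surjective representation), but the arguments are otherwise identical.
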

\ifnum\coltshort=0
    We see two distinct regimes.  First, when $\users \leq \repdim$, the VC dimension is exactly $\users \cdot (d+1) = \users \cdot \VC(\perclass_\datadim)$, so there is no benefit to the existence of a common representation, and one needs enough data to solve $t$ independent tasks in the larger, $\datadim$-dimensional space. However, for $\users> \repdim$, we see that the sample requirement scales as $\samples \gtrsim  (\frac{dk}{t} + k)/\eps^2$, so there is a fixed cost of $\datadim \repdim/\eps^2$ samples that can be amortized over many tasks. This is natural given that members of $\repclass_{\datadim,\repdim}$ are described by $\repdim\times\datadim$ matrices. However, there is also an unavoidable marginal cost of $\repdim / \eps^2$, which is the number of samples needed to train the specialized classifier for a given task once $\rep$ has been determined.
\else
    We see two distinct regimes:  When $\users \leq \repdim$, the VC dimension is exactly $\users \cdot (d+1)$ which is also the sample complexity of learning $\users$ separate $d$-dimensional linear classifiers, so there is no benefit to the existence of a common representation.  When $\users > \repdim$, the sample complexity scales as $\samples \gtrsim  (\frac{dk}{t} + k)/\eps^2$, so there is a fixed cost of $\datadim \repdim/\eps^2$ samples that can be amortized over many tasks, with an unavoidable marginal cost of just $\repdim/\eps^2$ per task. See \Cref{sec:vc_mtl_app} for more details.
\fi

\mypar{Metalearning: The Case of Linear Representations.}
For metalearning, a different and more complex picture emerges. For example, one might expect that the conditions for multitask learning are also \emph{necessary} for learning a good representation. However, this is not always the case. Consider again the linear classes $\repclass_{\datadim,\repdim}$ and $\perclass_\repdim$. While multitask learning requires $\Omega(\repdim/\eps)$ samples per task, we show that one can accurately learn a good $\repdim$-dimensional linear representation for any realizable family with just $\samples = \repdim+2$ samples per task:

\begin{cor}[Informal version of Corollary~\ref{cor:met-lin-real1}]
    \label{cor:met-lin-real1-intro}
    One can metalearn $(\repclass_{\datadim,\repdim}, \perclass_\repdim)$ to error
    $\eps$ in the realizable case if
    \ifnum\coltshort=1
        $\users= \datadim \repdim^2 \cdot O(\ln(1/\eps)/\eps)^{\repdim + 2}$ and $\samples = \repdim +2$.
    \else
        \[
        \users= \datadim \repdim^2 \cdot O\left(\frac{\ln(1/\eps)}{\eps}\right)^{\repdim + 2} \quad \text{and} \quad \samples = \repdim +2\, . 
        \]
    \fi
\end{cor}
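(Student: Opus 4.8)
The plan is to obtain the corollary by instantiating our general sufficient condition for realizable metalearning, whose two governing quantities are the non-realizability-certificate complexity of the specialization class $\perclass_\repdim$ and the VC dimension of the associated class of realizability predicates for $(\repclass_{\datadim,\repdim},\perclass_\repdim)$. So the task is to evaluate both for halfspaces over linear representations and substitute them into the general task-complexity bound, which then forces exactly $\samples=\repdim+2$ samples per task.

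\emph{The certificate complexity of $\perclass_\repdim$ equals $\repdim+2$.} By Kirchberger's theorem, a finite labeled point set in $\R^\repdim$ is linearly separable if and only if every subset of at most $\repdim+2$ of its points is; hence every non-realizable sample has a non-realizable subset of size $\repdim+2$, so the certificate complexity is at most $\repdim+2$. It is not smaller, since any $\repdim+1$ points in general position are shattered by $\perclass_\repdim$, so the $\repdim+2$ general-position points labeled according to their (generically unique) Radon partition form a non-realizable sample all of whose proper subsets are realizable --- this is also why $\samples=\repdim+1$ fails outright. A useful byproduct: a size-$(\repdim+2)$ sample $S$ is non-realizable through a linear map $\mathbf{B}$ exactly when the labels of $S$ coincide with the Radon partition of $\mathbf{B}\feat_1,\dots,\mathbf{B}\feat_{\repdim+2}$.

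\emph{The realizability-predicate class has VC dimension $O(\datadim\repdim^2)$ (up to log factors), and assembly.} This mirrors the proof of Theorem~\ref{thm:mtl-vc-hl-intro}: for a fixed size-$(\repdim+2)$ sample $S$, the Radon-partition criterion above writes ``$\mathbf{B}(S)$ is realizable'' as a fixed Boolean combination of the signs of $\mathrm{poly}(\repdim)$ polynomials in the entries of $\mathbf{B}\in\R^{\repdim\times\datadim}$, each of degree $O(\repdim)$ (determinants of matrices of linear forms in $\mathbf{B}$). The Warren/Milnor sign-pattern bound --- the same tool used for Theorem~\ref{thm:mtl-vc-hl-intro} --- then caps at $m^{O(\datadim\repdim)}$ the number of distinct realizability behaviors over $m$ samples, giving VC dimension $O(\datadim\repdim^2)$. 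Plugging certificate complexity $\repdim+2$ and this VC dimension into the general metalearning bound yields $\users=\datadim\repdim^2\cdot O(\ln(1/\eps)/\eps)^{\repdim+2}$ tasks. The exponent $\repdim+2$ --- the factor $(1/\eps)^{\text{cert.\ complexity}}$ in that bound --- comes from an anti-concentration fact: if $\mathbf{B}$ has average rep-error exceeding $\eps$ over $\metadist$, then a fresh $(\repdim+2)$-sample from a random task pushed through $\mathbf{B}$ is non-realizable with probability $\ge\bigl(\Omega(\eps/\ln(1/\eps))\bigr)^{\repdim+2}$, while a representation with zero rep-error on the support of $\metadist$ is never caught; uniformly estimating this ``catch probability'' down to a constant fraction of that tiny value over the realizability-predicate class takes the stated number of tasks, after which one outputs the least-often-caught representation.

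The step I expect to be the main obstacle is that anti-concentration bound: quantifying how a strictly positive agnostic halfspace error on $\mathbf{B}(P)$ forces a polynomial-in-$\eps$ lower bound on the probability that $\repdim+2$ i.i.d.\ points from $\mathbf{B}(P)$ are linearly inseparable. I would prove it by LP duality --- a positive optimum of the ``minimum-error halfspace'' program certifies a fractional Radon partition of mass $\eps$ --- and then lower-bound the probability that a $(\repdim+2)$-sample instantiates it, which produces both the $(1/\eps)^{\repdim+2}$ scaling and its worst-case tightness. This is a generic property of specialization classes with small certificate complexity and lives in the proof of the theorem this corollary invokes; by contrast, the realizability-predicate VC bound is the ingredient genuinely specific to linear representations and is a routine extension of the sign-counting argument behind Theorem~\ref{thm:mtl-vc-hl-intro}.
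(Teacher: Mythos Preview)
Your plan is correct and matches the paper's architecture exactly: compute $\NRC(\perclass_\repdim)=\repdim+2$ via Kirchberger, bound $\VC(\realpclass_{\repdim+2,\perclass_\repdim,\repclass_{\datadim,\repdim}})=O(\datadim\repdim^2)$ via Warren-type sign counting, and plug both into Theorem~\ref{thm:met-samples-intro}. Two implementation details differ from the paper and are worth flagging. First, for the realizability-predicate VC bound the paper does \emph{not} use the Radon-partition criterion; instead it characterizes separability of an arbitrary-size sample by the existence of a full-rank ``support'' subset $I$ for which the pseudoinverse solution $\hat{\mathbf a}=\mathbf Z_I^{+}\mathbf 1$ separates all points (Lemma~5.6/Corollary~5.8), and then expresses this as a Boolean combination of signs of $\det(\mathbf Z_I\mathbf Z_I^\top)$ and of $\mathbf z_i'\cdot\Delta\hat{\mathbf a}-\Delta$ (Lemma~5.9). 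This route works uniformly for all $n\ge\repdim+2$ and sidesteps the degeneracy issues your Radon description would need to patch (non-general-position images, non-unique Radon partitions). Second, the anti-concentration step you single out as the main obstacle is handled in the paper (Lemma~\ref{lem:real-to-error}) not by LP duality but by a short subsampling argument: draw a phantom sample of size $N\approx \VC(\perclass)\ln(1/\er)/\er$, which is non-realizable with probability $\ge 1/2$, and note that a uniformly random $m$-subset hits a fixed size-$m$ certificate with probability $\ge 1/\binom{N}{m}$. This yields the $(\eps/\ln(1/\eps))^{m}$ lower bound directly and is arguably simpler than the duality route you propose, though yours would also work.
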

See Definition~\ref{def:metalearn} for a formal definition of the setting and notion of error. The algorithm underlying this result selects a representation that minimizes the \emph{fraction of nonrealizable data sets} among the training tasks---see the explanations after \Cref{thm:mtl-smpls-intro} for details and intuition.

\newcommand{\thmmetmon}{We can metalearn $(\repclass_{\datadim,1}, \perclass_{\text{mon}})$ to accuracy $\eps$ in the realizable case with $\users$ tasks and $\samples$ samples per task when \ifnum\coltshort=1 $\users = O\left(\datadim\ln(1/\eps)/\varepsilon^2\right)$ and $\samples=2$ \else $$\users = O\left(\frac{\datadim\ln(1/\eps)}{\varepsilon^2}\right) \quad\text{and}\quad \samples = 2.$$ \fi} 
When we restrict further to the case of specialized monotone thresholds $\perclass_{\textrm{mon}}$ applied to a 1-dimensional linear representation, we get a stronger bound than what would be obtained by applying the result above with $\repdim =1$, namely, just $n=2$ samples per task suffice.
\ifnum\coltshort=1 (see Theorem~\ref{thm:met-mon}).
\else
    \begin{thm}[see Theorem~\ref{thm:met-mon}] 
        \label{thm:met-mon-intro}
        \thmmetmon
    \end{thm}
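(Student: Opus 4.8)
The plan is to run the obvious empirical-consistency algorithm---choose a representation under which every observed (size-$2$) dataset is realizable---and to analyze it as realizable PAC learning of a simple, VC-dimension-$O(\datadim)$ predicate class, together with a quantitative comparison between the best-threshold error of a representation and the probability that a random size-$2$ dataset is unrealizable under it.

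\textbf{The algorithm and reduction to PAC learning.}
Here $\repclass_{\datadim,1}$ consists of the linear maps $x\mapsto b\cdot x$ with $b\in\R^{\datadim}$, and composed classifiers are $x\mapsto\sgn(b\cdot x-w)$. Call a labeled pair $S=((x_1,y_1),(x_2,y_2))$ \emph{$b$-realizable} if $y_1=y_2$, or if $y_1\ne y_2$ and $b\cdot(x^{+}-x^{-})>0$, where $x^{+},x^{-}$ are the points of $S$ labeled $+1,-1$; this is exactly the condition that some monotone threshold $w$ correctly labels $S$ via $b$. The algorithm outputs any $\hat b$ under which all $\users$ observed pairs are $b$-realizable---equivalently, a feasible point of the linear program with constraints $b\cdot(x^{+}_j-x^{-}_j)>0$ over differing-label tasks $j$---which exists because the realizable instance supplies a shared $b^{*}$ making every task, hence every observed pair, realizable. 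To analyze this, I would consider the predicate class $\mathcal{Q}=\{\rho_b:b\in\R^{\datadim}\}$ on labeled pairs, $\rho_b(S)=\mathbf{1}[S\text{ not }b\text{-realizable}]$: it is identically $0$ on equal-label pairs, and on differing-label pairs $\rho_b(S)=\mathbf{1}[b\cdot v_S\le 0]$ with $v_S:=x^{+}-x^{-}$---a homogeneous halfspace indicator in $b$. Hence $\VC(\mathcal{Q})\le\datadim$: a shattered set cannot contain an equal-label pair, and shattering differing-label pairs amounts to shattering the vectors $v_S$ by homogeneous halfspaces in $\R^{\datadim}$. Writing $p_P(b):=\Pr_{S\sim P^{2}}[\rho_b(S)=1]$, realizability gives $\E_{P\sim\metadist}[p_P(b^{*})]=0$, while $\hat b$ has zero empirical error on the $\users$ pairs, which are i.i.d.\ from ``draw $P\sim\metadist$, then $S\sim P^{2}$''. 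The realizable PAC bound for a VC-dimension-$\datadim$ class then gives that $\users=O\!\left((\datadim\ln(1/\eps')+\ln(1/\delta))/\eps'\right)$ tasks suffice so that $\E_{P\sim\metadist}[p_P(\hat b)]\le\eps'$ with probability $\ge1-\delta$.

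\textbf{From unrealizability probability to representation error (the crux).}
The step I expect to be the main obstacle is the pointwise inequality, valid for every $P$ on $\featdom\times\bits$ and every $b$,
\[
  \er_P(b):=\min_{w\in\R}\Pr_{(x,y)\sim P}\!\left[\sgn(b\cdot x-w)\ne y\right]\ \le\ \sqrt{2\,p_P(b)}\,.
\]
One cannot replace this by $\er_P(b)\le C\cdot p_P(b)$ for an absolute constant $C$---if the projected classes are uniform on $[c,c+1]$ and $[0,1]$ then $\er_P(b)=(1-c)/2$ but $p_P(b)=(1-c)^{2}/4$---so the square-root loss is inherent, and pinning down the right exponent is part of the work. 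To prove it, I would project to the line via $b$, set $\pi=\Pr[y=+1]$, let $\mu^{\pm}$ be the conditional laws of $b\cdot x$, and let $q=\Pr_{z^{+}\sim\mu^{+},\,z^{-}\sim\mu^{-}}[z^{+}\le z^{-}]$, so $p_P(b)=2\pi(1-\pi)q$; by flipping labels and negating $b$, assume $\pi\le\tfrac12$. If $\sqrt{(1-\pi)q/\pi}\ge1$ the all-negative classifier gives $\er_P(b)\le\pi\le\sqrt{2p_P(b)}$. Otherwise take $w$ to be the $\alpha$-quantile of $\mu^{+}$ with $\alpha=\sqrt{(1-\pi)q/\pi}\in(0,1)$: its false-negative mass is at most $\pi\alpha$, and since $\{z^{+}\le w\}$ and $\{z^{-}\ge w\}$ are independent and together force $z^{+}\le z^{-}$, we get $q\ge\alpha\cdot\Pr[z^{-}\ge w]$, so its false-positive mass is at most $(1-\pi)q/\alpha$; the total $\pi\alpha+(1-\pi)q/\alpha=2\sqrt{\pi(1-\pi)q}=\sqrt{2p_P(b)}$ proves the claim (reading CDFs with one-sided limits handles atoms).

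\textbf{Putting it together.}
Combining the two parts with Jensen's inequality,
\[
  \E_{P\sim\metadist}\!\left[\er_P(\hat b)\right]\le\E_{P\sim\metadist}\!\left[\sqrt{2\,p_P(\hat b)}\right]\le\sqrt{2\,\E_{P\sim\metadist}[p_P(\hat b)]}\le\sqrt{2\eps'}\,,
\]
so choosing $\eps'=\eps^{2}/2$ makes the metalearning error at most $\eps$, using $\users=O\!\left((\datadim\ln(1/\eps)+\ln(1/\delta))/\eps^{2}\right)=O(\datadim\ln(1/\eps)/\eps^{2})$ tasks (treating $\delta$ as constant) and $\samples=2$ samples per task. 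Besides the pointwise inequality, the remaining care is just the standard atom-handling in its proof and the VC-to-sample-complexity constants. (Morally this is the ``minimize the fraction of unrealizable datasets'' strategy behind Corollary~\ref{cor:met-lin-real1-intro}, specialized: the monotone restriction makes the realizability predicates \emph{exactly} homogeneous halfspaces, which is what both drops $\samples$ to $2$ and removes the extra $1/\eps$ factors.)
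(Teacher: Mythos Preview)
Your proposal is correct and follows essentially the same architecture as the paper: bound $\VC(\realpclass_{2,\perclass_{\textrm{mon}},\repclass_{\datadim,1}})\le d$ by identifying the realizability predicates on differing-label pairs with homogeneous halfspaces, PAC-learn to make the expected nonrealizability probability at most $\eps'$, prove a pointwise inequality $\er\lesssim\sqrt{p_{\text{nr}}}$, and finish with Jensen. The only difference is in the proof of that pointwise inequality: the paper (Lemma~\ref{lem:mon-bnd}) uses the area-under-the-ROC-curve identity and a geometric bound to get $\er(B,\perclass_{\textrm{mon}})^2\le p_{\text{nr}}(B,\perclass_{\textrm{mon}},2)$, whereas you construct an explicit threshold at a quantile of the positive class to obtain the same bound up to a harmless $\sqrt{2}$.
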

\fi

These results consider a regime of extreme data scarcity. For linear classifiers, any nondegenerate set of $\repdim+1$ labeled examples can be perfectly fit by a halfspace and so, absent additional assumptions like a large margin, one gets no information from seeing just $\repdim+1$ points per task.\adamnote{State/cite a precise result.}  Thus $\samples = \repdim +2$ is the exact minimum number of samples per task that suffice for metalearning. However, with so few examples we cannot even reliably estimate how well a candidate representation $\rep$ performs on a task. Essentially the only signal one gets is whether the data set is realizable, so it is surprising that representation learning is possible at all in such a regime. Indeed, once we choose a representation and sample a new task $P$ from the metadistribution $\metadist$, we need a larger number of samples $\npers \approx \repdim/\eps^2$ to find a good specialized classifier for the new task. Although such a large number of samples isn't necessary for metalearning, when samples are plentiful we can show that one can learn a good representation with task complexity polynomial in $\datadim$, $\repdim$ and $1/\eps$, even in the agnostic setting.


\begin{cor}[following Corollary~\ref{cor:met-lin-agn} and Corollary~\ref{cor:imp_meta_hlr}]
    \label{cor:met-lin-agn-intro}
    One can metalearn $(\repclass_{\datadim,\repdim}, \perclass_\repdim)$ 
    to error
    $\eps$ with $\users$ tasks and $\samples$ samples per task for $\samples = O\left( \frac{\repdim\ln(1/\varepsilon)}{\varepsilon^2}\right)$ and \emph{either}
        \(\users = \Tilde{O}\left(\frac{\datadim \repdim^2}{\varepsilon^4}\right),\) 
    via a proper learner, 
    \emph{or} 
    \(\users = O\left(\datadim \right) , \)
    via an improper learner.\adamnote{Write up calculation of parameters.}
\end{cor}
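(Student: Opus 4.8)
The statement bundles two incomparable guarantees, obtained by different routes. In both, the per-task budget $\samples = O(\repdim\ln(1/\eps)/\eps^2)$ is forced by a single observation: once a representation $\rep$ is fixed, the specialized classifiers in $\perclass_\repdim$ are halfspaces in $\R^\repdim$ and so have VC dimension $\repdim+1$; hence $\samples$ i.i.d.\ samples $S$ from a task $\dist$ suffice, by standard agnostic uniform convergence for $\perclass_\repdim$, to estimate the rep-error $\rer_\dist(\rep) := \min_{\per\in\perclass_\repdim}\er_\dist(\per\circ\rep)$ by its empirical analogue $\widehat{\rer}_S(\rep) := \min_{\per\in\perclass_\repdim}\er_S(\per\circ\rep)$ up to additive error $\eps/4$, with high probability, for each fixed $\rep$. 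This is exactly ``enough samples per task to find a good classifier for a given representation'' in the sense of the metalearning/multitask equivalence described above.

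\textbf{The proper learner, $\users = \Tilde{O}(\datadim\repdim^2/\eps^4)$.}
The learner draws $\users$ tasks, takes $\samples$ samples $S_j$ from each, and outputs a representation $\hat\rep\in\repclass_{\datadim,\repdim}$ minimizing the empirical meta-objective $\frac1\users\sum_j\widehat{\rer}_{S_j}(\rep)$. Correctness rests on the per-task estimate above (which bounds the bias of using $\widehat{\rer}$ in place of $\rer$: note $\widehat{\rer}_S(\rep)\le\rer_\dist(\rep)$ always, while $\mathbb{E}_S[\widehat{\rer}_S(\rep)]\ge\rer_\dist(\rep)-\eps/4$) together with uniform convergence of $\frac1\users\sum_j\widehat{\rer}_{S_j}(\rep)$ around $\mathbb{E}_{\dist\sim\metadist,\,S\sim\dist^\samples}[\widehat{\rer}_S(\rep)]$ over $\rep\in\repclass_{\datadim,\repdim}$. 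The combinatorial core is a bound on the complexity (e.g.\ pseudo-dimension) of the meta-loss class $\{S\mapsto\widehat{\rer}_S(\rep):\rep\in\repclass_{\datadim,\repdim}\}$: for $\per(\map)=\sgn(\mathbf a\cdot\map-w)$ and $\rep(\feat)=\mathbf B\feat$, the event ``$\widehat{\rer}_S(\rep)\le\tau$'' is a Boolean combination of feasibility conditions for linear systems in the $\repdim+1$ specialization parameters $(\mathbf a,w)$ whose coefficients are polynomials of bounded degree in the $\datadim\repdim$ entries of $\mathbf B$; after eliminating the specialization parameters, the Warren sign-pattern bound~\cite{Warren68} (as used for \Cref{thm:mtl-vc-hl-intro}) caps the number of distinct behaviors over any sample, hence the pseudo-dimension, at $\Tilde{O}(\datadim\repdim)$. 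Feeding this into the standard agnostic rate, and then chaining $\mathbb{E}_\dist[\rer_\dist(\hat\rep)]$ through the empirical objective at $\hat\rep$, the empirical objective at an optimal $\rep^*$, and back to $\mathbb{E}_\dist[\rer_\dist(\rep^*)]$ using the $\eps/4$ per-task slack, gives the task bound; tracking the discretization and the interaction of the two error sources yields the stated $\users=\Tilde{O}(\datadim\repdim^2/\eps^4)$. This is \Cref{cor:met-lin-agn}.

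\textbf{The improper learner, $\users = O(\datadim)$.}
Here we invoke the equivalence, established earlier in the paper, that whenever multitask learning of $(\repclass,\perclass)$ is possible with $\users$ tasks and $\samples\gtrsim\VC(\perclass_\repdim)/\eps^2$ samples per task, improper metalearning is possible with essentially the same parameters (\Cref{cor:imp_meta_hlr}); morally this is a symmetrization/exchangeability argument transferring a multitask learner's average-over-tasks guarantee, in expectation, to a fresh task from $\metadist$. It thus suffices to multitask learn $(\repclass_{\datadim,\repdim},\perclass_\repdim)$. By \Cref{thm:mtl-smpls-intro} this is possible once $\samples\users\gtrsim\VC(\perclass_\repdim^{\otimes\users}\circ\repclass_{\datadim,\repdim})/\eps^2$, and by \Cref{thm:mtl-vc-hl-intro}, in the relevant regime $\users>\repdim$ the composite VC dimension is $\Theta(\datadim\repdim+\repdim\users)$, so the condition becomes $\samples\gtrsim(\datadim\repdim/\users+\repdim)/\eps^2$. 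With $\samples=O(\repdim\ln(1/\eps)/\eps^2)$ from the first paragraph, the marginal term $\repdim/\eps^2$ is already covered, and the amortized term $\datadim\repdim/(\users\eps^2)$ is covered once $\users=O(\datadim)$ (up to lower-order factors), giving the improper branch.

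\textbf{Main obstacle.}
The only genuinely nontrivial step is bounding the complexity of the meta-loss class $\{S\mapsto\widehat{\rer}_S(\rep)\}$ in the proper branch: unlike the composite classifier class $\perclass^{\otimes\users}\circ\repclass$ appearing in the multitask analysis, this class carries an inner minimization over specialized classifiers, so no VC bound can be read off directly; one must first eliminate the specialization parameters and only then apply the polynomial sign-counting bound. By contrast, once the metalearning/multitask reduction (\Cref{cor:imp_meta_hlr}) and the tight composite-VC bound of \Cref{thm:mtl-vc-hl-intro} are granted, the $O(\datadim)$ task bound is pure bookkeeping.
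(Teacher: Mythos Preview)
Your overall architecture matches the paper's exactly: the proper branch is Theorem~\ref{thm:agn-met-samples} (uniform convergence of the empirical-error functions $\erind_\rep$ via their pseudodimension) instantiated with the Warren-based bound of Theorem~\ref{thm:pdim-acc}, and the improper branch is precisely Corollary~\ref{cor:imp_meta_hlr}, which chains the meta-to-multitask reduction (Theorem~\ref{thm: meta-to-mtl}) with Theorems~\ref{thm:mtl-smpls} and~\ref{thm:mtl-vc-hl-intro}. The improper part is fine.

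There is one concrete gap in the proper branch. You assert that the pseudodimension of the meta-loss class $\{S\mapsto\widehat{\rer}_S(\rep)\}$ is $\tilde O(\datadim\repdim)$, and then say the stated $\users=\tilde O(\datadim\repdim^2/\eps^4)$ falls out by ``tracking the discretization and the interaction of the two error sources.'' But if the pseudodimension were really $\tilde O(\datadim\repdim)$, the agnostic rate of Theorem~\ref{thm:agn-met-samples} would give $\users=\tilde O(\datadim\repdim/\eps^2)$, not $\datadim\repdim^2/\eps^4$; there is no second error interaction that could supply an extra $\repdim/\eps^2$ factor. The paper's bound (Theorem~\ref{thm:pdim-acc}) is $\mathrm{PDim}(\erindclass_{\samples,\perclass_\repdim,\repclass_{\datadim,\repdim}})=\tilde O(\datadim\repdim\,\samples)$, with an explicit dependence on the per-task sample size $\samples$: eliminating the specialization parameters $(\mathbf a,w)$ from the predicate $\erind_\rep(S)\le\tau$ requires enumerating over all index subsets $I\subseteq[\samples]$ of the support vectors and all sign patterns $\pmb\sigma\in\{\pm1\}^\samples$ of which points are misclassified (Lemmas~\ref{lem:realp-pol} and~\ref{lem:empacc-pol}), producing $2^{O(\samples)}$ polynomials of degree $O(\repdim)$ in the $\datadim\repdim$ entries of $\mathbf B$; Lemma~\ref{lem:vc-bool-pol} then turns the $\log$ of this count into a VC bound linear in $\samples$. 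Plugging $\samples=\Theta(\repdim/\eps^2)$ yields pseudodimension $\tilde O(\datadim\repdim^2/\eps^2)$ and hence $\users=\tilde O(\datadim\repdim^2/\eps^4)$. So the extra $\repdim/\eps^2$ in the task bound is entirely the $\samples$-dependence of the pseudodimension, which your sketch omits.
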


\mypar{Metalearning for realizable families with minimal sample complexity.}
To prove these results, we develop a general uniform-convergence theory for metalearning. 
Our results for realizable learning rely on two major ideas: first, we show that there is a sample size $\samples$ which depends only on $\perclass$ (not on $\repclass$ or $\eps$) that allows for training a representation given many tasks; second, we identify a set of predicates based on $\repclass$ and $\perclass$ that determine how many tasks are needed.

\begin{defn}[\Nrcc]\label{def:non-realizability-complexity} 
    A class $\per: \mapdom \to \bits$ has non-realizability certificates of size at most $\wit$ if every set $S\subseteq \mapdom \times \bits$ that is not realizable by $\perclass$ has a nonrealizable subset of size at most $\wit$. We write $\NRC(\perclass)$ for the smallest $\wit$ for which this condition holds, which we call the \emph{\nrcc}.
\end{defn}

The \nrcc{} is essentially the smallest sample size at which we are guaranteed that realizability provides a meaningful signal. 
For example, the class of monotone threshold functions $\perclass_{\textrm{mon}}$ has $\NRC(\perclass_{\textrm{mon}})= 2$, since every nonrealizable set has two points with different labels that are out of order. 
Halfspaces in $\R^\repdim$ have \nrcc{} $\repdim + 2$ by a classic duality argument~\cite{Kirchberger1903}. In general, $\NRC(\perclass)$ and $\VC(\perclass)$ dimension can differ arbitrarily in either direction, as we show in \Cref{sec:nrcc}.

Given this sample size $\wit$, we consider a class of \emph{realizability predicates}:
\begin{defn}[Realizability predicate]\label{def:realizability_predicate}
    Let $\perclass$ be a class of specialized classifiers $\per: \mapdom \to \bits$ and $\samples \in \N$. Given a representation function $\rep: \featdom \to \mapdom$, the \emph{$(\samples,\perclass)$-realizability predicate} is
    $
        \realp_{\rep}\paren{(\feat_1, \lab_1), \ldots, (\feat_{\samples}, \lab_{\samples})} \defeq \ind_{\pm}\left\{\exists \per \in \perclass \text{ s.t. } \forall i \in [\samples], \per(\rep(\feat_i))=y_i\right\}.
  $
    The class of realizability predicates, $\realpclass_{\samples, \perclass, \repclass}$, contains all realizability predicates $\realp_{\rep}$ for $\rep \in \repclass$.
\end{defn}
For example, for the class of  monotone thresholds, 
we can view the representation $\rep$ as a $1$-dimensional function that orders the samples on the real line. Then, $\realp_{\rep}$ returns $+1$ if and only if all the positively labeled samples have greater representation value than the negative ones.
%

Our key result is that realizable metalearning with just $\wit$ samples per task is possible whenever the VC dimension of this predicate class is finite---in fact, it scales  linearly with its VC dimension and at most exponentially in $\NRC(\perclass)$---by  minimizing the fraction of nonrealizable samples.

\begin{thm}[Informal version of Theorem~\ref{thm:met-samples}]
\label{thm:met-samples-intro}
    We can metalearn $(\repclass, \perclass)$ with error $\eps$ in the realizable case with  $\NRC(\perclass)$ samples per task and task complexity that, when $\VC(\perclass)=O(\NRC(\perclass))$, scales as
    \[ \users =  \VC\paren{\realpclass_{\NRC(\perclass),\perclass,\repclass}}
    \cdot 
            O \paren{ \frac{\log(1/\eps)}{ \eps }}^{\NRC(\perclass)}
    \, .  
    \]
\end{thm}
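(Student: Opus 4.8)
The plan is to analyze the ``minimize the fraction of nonrealizable samples'' algorithm mentioned after Theorem~\ref{thm:mtl-smpls-intro}: set $\wit=\NRC(\perclass)$, draw tasks $P_1,\dots,P_\users\sim\metadist$ with samples $S_j\sim P_j^{\wit}$, and output $\hat\rep\in\arg\min_{\rep\in\repclass}\widehat\rho(\rep)$ where $\widehat\rho(\rep):=\tfrac1\users\sum_{j}\mathbf{1}\{\realp_\rep(S_j)=-1\}$; equivalently $\hat\rep$ is an ERM over $\realpclass_{\wit,\perclass,\repclass}$ against the all-positive target. First, observe that the predicate problem is realizable: if $\rep^*$ realizes the family then for every task $P$ in the support of $\metadist$ some $\per_P\in\perclass$ satisfies $\er_P(\per_P\circ\rep^*)=0$, hence $\per_P$ is consistent with $S\sim P^{\wit}$ almost surely and $\realp_{\rep^*}(S)=+1$ a.s., so $\widehat\rho(\rep^*)=0$ and $\widehat\rho(\hat\rep)=0$. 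The $\users$ tuples $S_j$ are i.i.d.\ from the distribution ``draw $P\sim\metadist$, then $S\sim P^{\wit}$'' on $(\featdom\times\bits)^{\wit}$, and $\{\{S:\realp_\rep(S)=-1\}:\rep\in\repclass\}$ is a set system of VC dimension $\VC(\realpclass_{\wit,\perclass,\repclass})$; so the standard realizable-case generalization bound gives, with probability $\ge1-\delta$, that every $\rep$ with $\widehat\rho(\rep)=0$ has $\rho(\rep):=\Pr_{P\sim\metadist,\,S\sim P^{\wit}}[\realp_\rep(S)=-1]\le\gamma$ as long as $\users\gtrsim(\VC(\realpclass_{\wit,\perclass,\repclass})\log(1/\gamma)+\log(1/\delta))/\gamma$. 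In particular $\mathbb{E}_{P\sim\metadist}[\rho_P(\hat\rep)]=\rho(\hat\rep)\le\gamma$, where $\rho_P(\rep):=\Pr_{S\sim P^{\wit}}[\realp_\rep(S)=-1]$.

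The heart of the proof is a per-task lemma: \emph{for any task $P$ and representation $\rep$, if $\alpha:=\inf_{\per\in\perclass}\er_P(\per\circ\rep)$, then $\rho_P(\rep)\ge(\alpha/c)^{\wit}$}, where $c$ depends only on $\VC(\perclass)$ and is dimension-free (up to polylogs) once $\VC(\perclass)=O(\NRC(\perclass))$. To see this, push $P$ through $\rep$ to a distribution $Q$ on $\mapdom\times\bits$, so that $\alpha=\inf_{\per\in\perclass}\er_Q(\per)$ and $\rho_P(\rep)=\rho_Q(\wit)$, where $\rho_Q(m):=\Pr_{T\sim Q^{m}}[T\text{ not realizable by }\perclass]$. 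Two facts combine. (i) Because every $\per\in\perclass$ has $\er_Q(\per)\ge\alpha$, a standard refutation (realizable-PAC) bound using the growth function of $\perclass$ shows that $\rho_Q(M)\ge\tfrac12$ for $M=\widetilde O(\VC(\perclass)/\alpha)$. (ii) Because $\NRC(\perclass)=\wit$, an $M$-sample is nonrealizable iff it contains a nonrealizable $\wit$-subset, and a uniformly random $\wit$-subset of $M$ i.i.d.\ draws is itself $\wit$ i.i.d.\ draws; hence $\rho_Q(M)\le\binom{M}{\wit}\rho_Q(\wit)$. Combining, $\rho_Q(\wit)\ge\tfrac12\binom{M}{\wit}^{-1}\ge\tfrac12(\wit/eM)^{\wit}$, which is of the claimed form; the assumption $\VC(\perclass)=O(\NRC(\perclass))$ is exactly what makes $\wit/M=\Theta(\alpha)$ up to polylogs, so that $c$ does not depend on the dimension.

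To conclude, the per-task lemma applied to $\hat\rep$ gives $\inf_{\per\in\perclass}\er_P(\per\circ\hat\rep)\le c\,\rho_P(\hat\rep)^{1/\wit}$ for every $P$, so by concavity of $x\mapsto x^{1/\wit}$ and the previous paragraph, the metalearning error is $\mathbb{E}_{P\sim\metadist}[\inf_{\per}\er_P(\per\circ\hat\rep)]\le c\,\mathbb{E}_P[\rho_P(\hat\rep)]^{1/\wit}\le c\,\gamma^{1/\wit}$. Choosing $\gamma=(\eps/c)^{\wit}$ drives this below $\eps$, and substituting into the requirement from the first paragraph (absorbing the lower-order $\log$ factors into the $O(\cdot)$) yields metalearning with $\wit=\NRC(\perclass)$ samples per task and $\users=\VC(\realpclass_{\wit,\perclass,\repclass})\cdot O\big(\log(1/\eps)/\eps\big)^{\wit}$ tasks.

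The step I expect to be the obstacle is the per-task lemma. Getting the dependence on $\VC(\perclass)$ right in the refutation bound (i) is delicate---this is the only place the hypothesis $\VC(\perclass)=O(\NRC(\perclass))$ is used, and one wants the sharpest available refutation bound so that the residual $\log$ factors do not inflate the exponent in the task complexity. The other point requiring care is purely measure-theoretic: a fresh $\wit$-sample from a continuous $Q$ never lands exactly on a prescribed nonrealizable configuration, which is precisely why one argues via the union bound $\rho_Q(M)\le\binom{M}{\wit}\rho_Q(\wit)$ over $\wit$-subsets of a large i.i.d.\ sample rather than trying to hit a fixed bad subset. Verifying that the pushforward $Q$ preserves both $\inf_{\per}\er$ and the nonrealizability probability, and handling atoms/ties, are routine.
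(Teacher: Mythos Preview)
Your proposal is correct and follows essentially the same approach as the paper: the paper's Lemma~\ref{lem:real-to-error} is exactly your per-task lemma (proved via the same ``draw $M$ samples, then pick a random $m$-subset'' subsampling trick combined with the VC refutation bound), and the paper's Lemma~\ref{lem:num-tasks} packages the realizable-PAC argument over $\realpclass_{\wit,\perclass,\repclass}$ together with Jensen on the concave inverse $\phi^{-1}(x)=c\,x^{1/\wit}$, just as you do. The only minor omission is that the paper explicitly handles the edge case $\wit>M$ (where the refutation bound already gives $p_{\mathrm{nr}}\ge 1/2$ directly), but this is routine.
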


For natural classes (such as halfspaces), we expect the VC dimension of $\perclass$ and the \nrcc{} to be comparable, and so we get a complexity of roughly $\VC(\realpclass_{\wit,\perclass,\repclass}) / \eps^m$ for $\wit=\NRC(\perclass)$. 

To get some intuition for  Theorem~\ref{thm:met-samples-intro}, consider a fixed representation $\hat\rep$ and a task $\dist$ on $\featdom\times\bits$. 
The heart of our argument is a relationship between two important quantities: the \emph{population error} of $\perclass \circ \hat\rep$ on $P$, and the \emph{probability of nonrealizability} of a  dataset from $P^{\otimes \wit}$ by functions in $\perclass \circ \hat\rep$. 
Letting $\distint$ denote 
an arbitrary distribution on $\mapdom \times \bits$ (in the rest of the paper, this distribution will generally be $(\hat \rep(X),Y)$ for $(X,Y)\sim P$ and some candidate representation $\hat h$),
we define
\begin{align*}
    \er(\distint,\perclass)  \defeq \min_{\per \in \perclass}\Pr[(\feat,\lab)\sim \distint]{\per(\feat)\neq \lab} 
    \quad\textrm{and}\quad p_{\mathrm{nr}}(\distint, \perclass, \wit) \defeq \Pr[S \sim \distint^\wit]{S\text{ is not realizable by }\perclass}.   
\end{align*}

In Lemma~\ref{lem:real-to-error}, we give a general lower bound on the probability of nonrealizability $p_{\mathrm{nr}}(\distint, \perclass, \wit)$ in terms of the error of the best classifier $\er(\distint,\perclass)$ and the complexity $\NRC(\perclass)$: for any class of specialization functions $\perclass$ and any distribution $\distint$ on the intermediate space $\mapdom\times \bits$,
\begin{equation}
    p_{\mathrm{nr}}(\distint, \perclass, m) \gtrsim \paren{\tfrac{\wit}{\wit + \VC(\perclass)} \cdot \er(\distint,\perclass)}^{\wit} \quad \text{for}\ \wit=\NRC(\perclass)\,.
\label{eq:pnr-bound-intro}    
\end{equation}

Bounding $p_{\mathrm{nr}}$ is nontrivial since the sample size $\wit = \NRC(\perclass)$ is typically much smaller than $N = \VC(\perclass)/\er(\distint,\perclass)$, which is what standard concentration arguments require. To get around this, we consider a thought experiment in which a larger data set size $N$ is drawn and then a random subsample of size $\wit$ within it is observed. Since the larger set will almost certainly be unrealizable, by definition it must contain an unrealizable  subset of size $\wit$. The random subsample will find such a witness with probability at least $1/\binom{N}{\wit}$, which is bounded by the expression in \eqref{eq:pnr-bound-intro}.

The lower bound on $p_{\mathrm{nr}}$ implies that a representation $\hat \rep$ which is poor for typical tasks from a meta distribution $\metadist$ will also  lead to a typical sample of size $m$ drawn from a typical task being unrealizable. When $\realpclass_{\wit,\perclass,\repclass}$ has low VC dimension, the number of unrealizable samples in the training data will  concentrate uniformly across representations $\hat \rep$, and so the representation which minimizes the number of unrealizable training data sets will also, up some loss, minimize the expected population level loss $\er(\distint,\perclass)$. This line of argument leads to Theorem~\ref{thm:met-samples-intro}.

\mypar{Metalearning with More Data Per Task.} 
Checking the realizability of very small data sets
works when we are guaranteed to find a representation for which all the training data sets will be realizable. 
In the agnostic setting, this approach makes less sense.
Additionally, the bounds we obtain scale poorly with the $\NRC(\perclass)$.

To obtain polynomial scaling and cope with the agnostic setting, we consider larger training data sets—large enough to assess the quality of a given representation on the task at hand—and a different set of function classes associated to the pair $(\repclass,\perclass)$. Instead of realizability, we consider a function that returns the best achievable empirical error for a given representation on a dataset: 

 \begin{defn}[Empirical error function]\label{def:empirical_error_function}
    Let $\perclass$ be a class of specialized classifiers $\per: \mapdom \to \bits$ and $n \in \N$. Given a representation function $\rep: \featdom \to \mapdom$, the {\em $(n,\perclass)$-empirical error function} of $\rep$ is
    \ifnum\coltshort=1
        $\erind_{\rep}(\feat_1, \lab_1, \ldots, \feat_{n}, \lab_{n}) \defeq \min_{\per \in \perclass}\frac{1}{n}\sum_{i=1}^n \ind\{\per(\rep(\feat_i)) \neq \lab_i\}.$
    \else
    \begin{align*}
       \erind_{\rep}(\feat_1, \lab_1, \ldots, \feat_{n}, \lab_{n}) \defeq \min_{\per \in \perclass}\frac{1}{n}\sum_{i=1}^n \ind\{\per(\rep(\feat_i)) \neq \lab_i\}.
    \end{align*}
    \fi
    The class of empirical error functions $\erindclass_{\samples, \perclass, \repclass}$ contains all empirical error functions $\erind_{\rep}$ for $\rep \in \repclass$.
\end{defn}

We show that the task complexity of metalearning with data sets of size $\samples$ can be bounded via the complexity of $\erindclass_{\samples, \perclass, \repclass}$. Specifically, we consider the \emph{pseudodimension} of $\erindclass_{\samples, \perclass, \repclass}$, which is essentially the VC dimension of the class of all binary thresholdings of the class. The pseudodimension of $\erindclass_{\samples, \perclass, \repclass}$ is thus at least VC dimension of the realizability class $\realpclass_{\wit, \perclass, \repclass} $, but can in general be larger.

\newcommand{\thmagnmet}{Let $\repclass$ be a class of representation functions $\rep: \featdom \to \mapdom$ and $\perclass$ be a class of specialized classifiers $\per: \mapdom \to \bits$. Then, for every $\eps$ and $\delta \in (0,1)$ we can $(\eps, \delta)$-properly metalearn $(\repclass, \perclass)$ with $\users$ tasks and $\samples$ samples per task when \ifnum \coltshort = 1
\users = O\left(\frac{\pd(\erindclass_{\samples, \perclass, \repclass })\ln(1/\varepsilon)+\ln(1/\delta)}{\varepsilon^2}\right)$ and $\samples = O\left( \frac{\VC(\perclass)+\ln(1/\varepsilon)}{\varepsilon^2}\right).$  \else $$\users = O\left(\frac{\pd(\erindclass_{\samples, \perclass, \repclass })\ln(1/\varepsilon)+\ln(1/\delta)}{\varepsilon^2}\right)\quad \text{ and}\quad\samples = O\left( \frac{\VC(\perclass)+\ln(1/\varepsilon)}{\varepsilon^2}\right).$$
\fi}

\begin{thm} [see Theorem~\ref{thm:agn-met-samples}]
   \label{thm:agn-met-samples-intro}
    \thmagnmet
    
\end{thm}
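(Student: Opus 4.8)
The plan is to run empirical risk minimization over $\repclass$ using the empirical error functions of Definition~\ref{def:empirical_error_function}. Given the $\users$ task datasets $S_1,\dots,S_\users\in(\featdom\times\bits)^{\samples}$---where $S_j$ consists of $\samples$ i.i.d.\ samples from $\dist_j$ and $\dist_1,\dots,\dist_\users\sim\metadist$ i.i.d.---the learner outputs
\[
    \hat\rep \in \arg\min_{\rep\in\repclass}\hat\Phi(\rep),
    \qquad
    \hat\Phi(\rep) := \frac{1}{\users}\sum_{j=1}^{\users}\erind_\rep(S_j)\, .
\]
Writing $\distint_{\rep,\dist}$ for the law of $(\rep(\feat),\lab)$ when $(\feat,\lab)\sim\dist$, the quantity we must control is the meta-risk $\Phi(\rep) := \mathbb{E}_{\dist\sim\metadist}\left[\er(\distint_{\rep,\dist},\perclass)\right]$, and the benchmark is $\Phi(\rep^*)=\min_{\rep\in\repclass}\Phi(\rep)$; the goal is $\Phi(\hat\rep)\le\Phi(\rep^*)+\eps$ with probability $1-\delta$. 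I would prove this by establishing the uniform two-sided estimate $\sup_{\rep\in\repclass}|\hat\Phi(\rep)-\Phi(\rep)|\le\eps$ and then invoking the textbook ERM-to-risk comparison.

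First I would show that the empirical error function is an accurate proxy for the per-task error: for $\samples=O((\VC(\perclass)+\ln(1/\eps))/\eps^2)$,
\[
    \bigl|\mathbb{E}_{S\sim\dist^{\samples}}[\erind_\rep(S)]-\er(\distint_{\rep,\dist},\perclass)\bigr|\le\eps
    \quad\text{for every }\rep\in\repclass\text{ and every task }\dist .
\]
The $\le$ direction holds because the expectation of a minimum is at most the minimum of the expectations, and $\mathbb{E}_S[\hat L_S(\per)]$ equals the population error of $\per$ for each fixed $\per\in\perclass$. The $\ge$ direction is where the bound on $\samples$ is used: by one-sided VC uniform convergence over $\perclass$, with probability $1-\eps$ over $S$ every $\per\in\perclass$ has empirical error at least its population error minus $O(\sqrt{(\VC(\perclass)+\ln(1/\eps))/\samples})$, so $\erind_\rep(S)\ge\er(\distint_{\rep,\dist},\perclass)-\eps$ on that event, while on the remaining $\eps$-probability event we use only $\erind_\rep(S)\ge0$; taking expectations and tuning the constant in $\samples$ gives the claim (after rescaling $\eps$ by an absolute constant).

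Next I would handle concentration across tasks. Since the $\dist_j$ are i.i.d.\ from $\metadist$ and each $S_j$ is an i.i.d.\ sample from $\dist_j$, the datasets $S_1,\dots,S_\users$ are i.i.d.\ draws from a single distribution on $(\featdom\times\bits)^{\samples}$; for each $\rep$, $\erind_\rep$ is a $[0,1]$-valued function on that domain, and $\{\erind_\rep:\rep\in\repclass\}=\erindclass_{\samples,\perclass,\repclass}$ has pseudodimension $\pd(\erindclass_{\samples,\perclass,\repclass})$. The pseudodimension-based uniform convergence theorem for bounded real-valued classes then gives, for $\users=O((\pd(\erindclass_{\samples,\perclass,\repclass})\ln(1/\eps)+\ln(1/\delta))/\eps^2)$ and with probability $1-\delta$,
\[
    \sup_{\rep\in\repclass}\bigl|\hat\Phi(\rep)-\mathbb{E}_{\dist\sim\metadist}\mathbb{E}_{S\sim\dist^{\samples}}[\erind_\rep(S)]\bigr|\le\eps .
\]
Averaging the per-task estimate over $\dist\sim\metadist$ shows $\sup_\rep|\mathbb{E}_{\dist}\mathbb{E}_S[\erind_\rep(S)]-\Phi(\rep)|\le\eps$; chaining with the display above on the same event gives $\sup_\rep|\hat\Phi(\rep)-\Phi(\rep)|\le2\eps$, hence $\Phi(\hat\rep)\le\hat\Phi(\hat\rep)+2\eps\le\hat\Phi(\rep^*)+2\eps\le\Phi(\rep^*)+4\eps$. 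Rescaling $\eps\mapsto\eps/4$, which only affects the constants hidden in the bounds on $\users$ and $\samples$, finishes the argument.

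The main obstacle is the per-task estimate, and specifically its lower direction $\mathbb{E}_S[\erind_\rep(S)]\ge\er(\distint_{\rep,\dist},\perclass)-\eps$: because $\erind_\rep$ is a minimum over $\perclass$, the empirical quantity can only undershoot, so we genuinely need a \emph{uniform} (over all of $\perclass$) lower-tail bound together with a careful treatment of the low-probability failure event---this is exactly what forces $\samples=\Theta((\VC(\perclass)+\ln(1/\eps))/\eps^2)$ rather than something smaller. By comparison, the cross-task step is a black-box invocation of pseudodimension uniform convergence, and the final step is the standard ERM argument; the only subtlety there is arranging both deviation bounds to hold simultaneously on one high-probability event.
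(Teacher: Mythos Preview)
Your proposal is correct and follows essentially the same approach as the paper: ERM over representations using the empirical error functions, pseudodimension uniform convergence across tasks, and VC uniform convergence over $\perclass$ within a task (the ``expectation of a minimum $\le$ minimum of expectations'' direction being free). The only cosmetic difference is that you package the per-task analysis as a two-sided estimate $|\mathbb{E}_S[\erind_\rep(S)]-\er(\distint_{\rep,\dist},\perclass)|\le\eps$ holding for all $\rep$ and $\dist$, whereas the paper uses the upper direction only for the benchmark $\rep^*$ and the lower direction only for the learned $\hat\rep$; the ingredients and constants match.
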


There exist other ways of measuring the complexity of real-valued functions---fat-shattering dimension and Rademacher complexity are more general than pseudodimension---but we show that for the linear representations and halfspaces, the pseudodimension lends itself directly to bounds based on the signs of polynomials.


\mypar{Generic reductions between metalearning and multitask learning.}
%
To complement our uniform convergence theory for metalearning, we also show a generic equivalence between 
metalearning and multitask learning in \Cref{sec:red}.
This equivalence leads to
improved sample complexity for linear classes in the regime where we have enough samples per task to solve multitask learning.

\ifnum\coltshort=0
    \begin{thm}[see Theorems ~\ref{thm: meta-to-mtl} and~\ref{thm: mtl-to-meta}]
        If there is a multitask learner for $(\repclass, \perclass)$ with error $\eps$ using $\users+1$ tasks and $\samples$ samples per task, then there is an improper metalearner for the same class with error $O(\eps)$ using $\users$ tasks and $\samples$ samples per task. If there is a metalearner for $(\repclass, \perclass)$ with error $\eps$ using $\users$ tasks and $\samples$ samples per task, then there is a multitask learner for the same class with error $O(\eps)$ using $\users$ tasks and $O(\samples \ln(\users))$ samples per task.
    \end{thm}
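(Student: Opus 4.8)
There are two reductions to establish; I sketch each and then flag the main obstacle.

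\emph{From multitask learning to improper metalearning.} The plan is to let the improper metalearner pretend that a freshly arriving task is the ``extra'' $(\users+1)$-st task handed to the multitask learner $M$. By replacing $M$ with the algorithm that first applies a uniformly random permutation to its input datasets and then undoes it on the output, we may assume $M$ is permutation-equivariant at no cost to its average error. The metalearner then, on training datasets $D_1,\dots,D_\users$ (one per task, $\samples$ points each), outputs the \emph{procedure} $\hat\rep'$ that stores $D_1,\dots,D_\users$ and $M$ and, given a new task's dataset $D$ of size $\samples$, runs $M$ on the $\users+1$ datasets $D_1,\dots,D_\users,D$ and returns the classifier $M$ assigns to $D$. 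To analyze this, draw $\users+1$ tasks i.i.d.\ from $\metadist$ with the first $\users$ being training tasks and the last the test task, and sample each. The $\users+1$ task--dataset pairs are then exchangeable, so by equivariance of $M$ together with its average-error guarantee, the error of the classifier $M$ assigns to the test dataset has the same expectation as the classifier it assigns to any other slot, namely at most the expected average error, i.e.\ $\le \eps$ (in the agnostic case, at most the metalearning benchmark $+\,\eps$, since the expected $(\users+1)$-task optimum is no larger than the population benchmark). A Markov step over the draw of the $\users$ training tasks then gives: with at least constant probability the representation $\hat\rep'$ has error $O(\eps)$ (or $O(\,\cdot\,)$ relative to the benchmark, agnostically); standard confidence amplification applies if $M$ has a high-probability guarantee. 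This yields an improper metalearner with $\users$ tasks, $\samples$ samples per task, and error $O(\eps)$.

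\emph{From metalearning to multitask learning.} The plan is to run the metalearner $A$ against the empirical metadistribution over the given tasks and then specialize it per task. Given $P_1,\dots,P_\users$, set $\metadist=\mathrm{Unif}\{P_1,\dots,P_\users\}$ and split each task's sample pool into a ``simulation'' block and a ``specialization'' block. For the simulation I would draw $j_1,\dots,j_\users$ i.i.d.\ uniform on $[\users]$ and, for each $\ell$, peel $\samples$ fresh examples out of task $P_{j_\ell}$'s simulation block to form a dataset $D_\ell$; a balls-in-bins bound shows that, except with negligible probability, no task is queried more than $O(\ln\users)$ times, so a simulation block of size $O(\samples\ln\users)$ per task suffices, and on this overwhelmingly likely event $(D_1,\dots,D_\users)$ is distributed exactly as $A$ expects. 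Running $\hat\rep=A(D_1,\dots,D_\users)$ and using $\mathbb{E}_{P\sim\metadist}[\cdot]=\tfrac1\users\sum_j(\cdot)$ gives, in expectation and hence with constant probability after Markov, $\tfrac1\users\sum_j \min_{\per\in\perclass}\er_{P_j}(\per\circ\hat\rep)=O(\eps)$ (plus the benchmark, agnostically). Finally, for each $j$ I would run empirical risk minimization over $\perclass\circ\hat\rep$ on task $j$'s specialization block to get $\hat\per_j$; by standard relative-deviation generalization bounds for $\perclass$, a specialization block of size $\mathrm{poly}(\VC(\perclass),1/\eps)$ makes $\er_{P_j}(\hat\per_j\circ\hat\rep)$ exceed the best value achievable on $P_j$ through $\hat\rep$ by at most $O(\eps)$, and averaging over $j$ and combining with the previous bound gives average multitask error $O(\eps)$. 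In the parameter ranges of interest $\samples$ is already at least the ERM requirement for $\perclass$, so the specialization block is dominated by the $O(\samples\ln\users)$ simulation block and the per-task budget is $O(\samples\ln\users)$.

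\emph{Main obstacle.} Both arguments are conceptually short, and the real content is the accounting. In the first reduction the subtle point is exchangeability: one must make the fresh test task genuinely interchangeable with the $\users$ training tasks from $M$'s point of view (handled by symmetrizing $M$), and then pass from an in-expectation statement about a random test task to a guarantee about the single fixed object the metalearner is required to output (Markov, which is why the conclusion reads ``$O(\eps)$'' rather than ``$\eps$''). In the second reduction the crux, and what I expect to be the main difficulty, is the sample budgeting: obtaining the $\ln\users$ factor rather than a $\users$ factor needs the concentration of bin loads when $\users$ balls land in $\users$ bins, together with a clean coupling showing the simulated draws are exactly i.i.d.\ from the empirical metadistribution; and one must separately account for the independent per-task block consumed by the specialization ERM and verify it does not dominate.
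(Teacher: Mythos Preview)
Your proposal is correct and matches the paper's approach: both reductions proceed exactly as you describe (random insertion of the new task—equivalent to your symmetrization of $M$—to obtain exchangeability, followed by Markov, in the first direction; uniform resampling of task indices with a balls-in-bins max-load bound, followed by Markov, in the second). The one minor difference is that the paper states the second reduction for the \emph{general} (improper) metalearner, whose output is already a specialization algorithm with its own sample budget $\samples_{\text{spec}}$, so the per-task cost comes out as $\samples\,O(\ln\users)+\samples_{\text{spec}}$ without needing your separate ERM step over $\perclass\circ\hat\rep$ or the accompanying ``parameter ranges of interest'' caveat.
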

\fi
In contrast to our earlier results, these reductions \emph{do not} yield any bounds for metalearning when we have just $\samples = \VC(\perclass)+1$ samples per task. Interestingly, the reduction gives metalearners that are \textit{improper}, even when the original multitask learner is proper.  Instead of a succinct representation, the reduction produces a specialization algorithm whose description is no simpler than the entire training data set.
We leave it as an open question to determine if one can match the sample complexity bounds we obtain from our reduction via  proper metalearning.

\mypar{Analyzing the classes associated to linear representations.}
To apply our general results to linear classes, we produce bounds on the VC dimension of the realizability predicates and, using similar tools, the pseudodimension of the empirical error function, yielding the following result:
\newcommand{\thmpdimacc}{For all $\datadim, \repdim, \samples$ with $\samples\ge \repdim+2$,
    $\mathrm{PDim}(Q_{n,\perclass_{\repdim}, \repclass_{\datadim,\repdim}}) = \tilde{O}(dkn)\, .$}
\begin{thm}[see Theorem~\ref{thm:pdim-acc}]
    \thmpdimacc
    \label{thm:pdim-acc-intro}
\end{thm}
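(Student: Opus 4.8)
The plan is to follow the polynomial-sign-pattern strategy already used in the paper for \Cref{thm:mtl-vc-hl-intro}: express the relevant events as Boolean combinations of signs of low-degree polynomials in the $\datadim\repdim$ entries of the representation matrix $\mathbf B$, and then apply the bound of \cite{Warren68}. First, recall that $\pd(\erindclass_{\samples,\perclass_\repdim,\repclass_{\datadim,\repdim}})$ equals the VC dimension of the subgraph class $\bigl\{(S,\tau)\mapsto\ind\{\erind_{\mathbf B}(S)>\tau\}:\mathbf B\in\R^{\repdim\times\datadim}\bigr\}$, whose domain consists of pairs of a dataset $S=((\feat_1,\lab_1),\dots,(\feat_\samples,\lab_\samples))$ and a threshold $\tau\in\R$, and where $\erind_{\mathbf B}$ is the $(\samples,\perclass_\repdim)$-empirical error function (\Cref{def:empirical_error_function}) of the linear representation $\feat\mapsto\mathbf B\feat$. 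So it suffices to bound the largest $m$ such that some $m$ inputs $(S_1,\tau_1),\dots,(S_m,\tau_m)$ admit all $2^m$ sign vectors $\bigl(\ind\{\erind_{\mathbf B}(S_j)>\tau_j\}\bigr)_{j\in[m]}$ as $\mathbf B$ ranges over $\R^{\repdim\times\datadim}$.

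The crux is that, for a fixed dataset $S$, the quantity $\erind_{\mathbf B}(S)$ is a Boolean function of the signs of a bounded number of low-degree polynomials in $\mathbf B$. Homogenize the mapped points: set $\tilde v_i:=(\mathbf B\feat_i,1)\in\R^{\repdim+1}$, and let $\mathbf M_{\mathbf B}$ be the $(\repdim+1)\times\samples$ matrix with columns $\tilde v_i$, so that its first $\repdim$ rows are linear forms in $\mathbf B$ and its last row is all ones. A labeling $\sigma\in\{\pm1\}^\samples$ is realizable by some affine halfspace on $\{\mathbf B\feat_i\}$ exactly when $\sigma$ is a tope of the vector configuration $(\tilde v_i)_i$, and one checks that $\erind_{\mathbf B}(S)=\tfrac1\samples\min\{d_H(\sigma,(\lab_i)_i):\sigma\text{ a tope}\}$ (breaking ties in the most favorable direction never beats the nearest tope). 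The set of topes --- even when the $\tilde v_i$ are degenerate, e.g.\ when $\mathbf B$ is rank-deficient and the maximal minors of $\mathbf M_{\mathbf B}$ all vanish --- is a classical invariant of the signs of \emph{all} minors of $\mathbf M_{\mathbf B}$, of which there are at most $(2\samples)^{\repdim+1}$. Since the all-ones row contributes no variables, each such minor is a polynomial in the $\datadim\repdim$ entries of $\mathbf B$ of degree at most $\repdim$ (multilinear in the at most $\repdim$ data rows it involves). Finally, because $\erind_{\mathbf B}(S_j)\in\{0,\tfrac1\samples,\dots,1\}$, the event $\erind_{\mathbf B}(S_j)>\tau_j$ depends only on $\lfloor\samples\tau_j\rfloor$ and the set of topes, hence is a fixed Boolean function of the signs of at most $(2\samples)^{\repdim+1}$ polynomials of degree $\le\repdim$ in $\mathbf B$.

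Putting the $m$ inputs together yields $M\le m(2\samples)^{\repdim+1}$ polynomials of degree $\le\repdim$ in $D=\datadim\repdim$ real variables, and the sign vector from the first paragraph is a function of the joint sign pattern of these $M$ polynomials in $\{-1,0,1\}^M$. By the Warren/Milnor--Thom bound \cite{Warren68} (in the form that also counts zero signs), the number of realized joint sign patterns is at most $\bigl(O(\repdim M/D)\bigr)^D=\bigl(O\bigl(m(2\samples)^{\repdim+1}/\datadim\bigr)\bigr)^{\datadim\repdim}$. For this to be at least $2^m$ we need $m\le\datadim\repdim\cdot O\bigl(\log m+\repdim\log\samples\bigr)$; solving this inequality in the usual way gives $m=O\bigl(\datadim\repdim^2\log\samples+\datadim\repdim\log(\datadim\repdim)\bigr)$, and since $\samples\ge\repdim+2$ we have $\repdim^2\log\samples\le\repdim\samples\log\samples$, so $m=\tilde O(\datadim\repdim\samples)$, which is the claimed bound.

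The main obstacle is the crux in the second paragraph --- establishing that $\erind_{\mathbf B}(S)$ depends on $\mathbf B$ only through polynomial sign data. The subtlety is degeneracy: when $\mathbf B$ is rank-deficient, many mapped points collide or lie in a proper subspace and all top-dimensional minors vanish, so one cannot read off the combinatorics from the chirotope at full rank alone; this is why we incorporate minors of every size, and why we need the small lemma identifying $\erind_{\mathbf B}(S)$ with the Hamming distance from the label vector to the nearest tope. The degree-$\le\repdim$ bound on the minors, a free byproduct of homogenizing with a constant row, is exactly what keeps the final count polynomial in $\samples$ (rather than $\samples^{\Theta(\repdim^2)}$); the rest is the same polynomial-sign-pattern bookkeeping already invoked for \Cref{thm:mtl-vc-hl-intro}.
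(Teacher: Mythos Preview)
Your outline is correct and takes a genuinely different route from the paper's, but the step you rightly flag as the crux—that the set of topes of $(\tilde v_i)_i$ is determined by the signs of \emph{all} minors of $\mathbf M_{\mathbf B}$—is asserted as ``classical'' without proof or precise citation. The claim is true but not routine: it follows from the chirotope--covector cryptomorphism of oriented-matroid theory together with the observation that, in the rank-deficient case, the rank $r$ and a chirotope-up-to-global-sign can be recovered from which $r\times r$ minors (restricted to any $r$ independent rows) are nonzero and what their signs are. This deserves either a precise reference or a self-contained argument. Your tie-breaking parenthetical also needs the short perturbation step: because the last coordinate of every $\tilde v_i$ equals $1$, adding $(0,\dots,0,\epsilon)$ to $\mathbf a$ strictly increases every inner product, so every sign vector realized under the convention $\mathrm{sign}(0)=+1$ is already a tope—this is what makes ``achievable sign vector $=$ tope'' hold exactly, not just up to domination.

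The paper sidesteps oriented matroids entirely. Its key step (Corollary~\ref{cor:sep-cond}, built on Lemma~\ref{lem:strict-sep}) characterizes strict separability explicitly: the labeled set is separable iff for some subset $I$ of at most $k+1$ ``support'' points, $\mathbf Z_I$ has full row rank and the particular separator $\hat{\mathbf a}=\mathbf Z_I^+\mathbf 1$ classifies every point with margin at least $1$. Both conditions become polynomials in $\mathbf B$ via $\det(\mathbf Z_I\mathbf Z_I^T)$ and the adjugate formula for the pseudoinverse, giving $(n+1)2^n$ polynomials of degree $\le 4(k+1)$ for the realizability predicate (Lemma~\ref{lem:realp-pol}); the empirical-error function then iterates over all relabelings $\pmb\sigma\in\{\pm1\}^n$ and all $n+1$ error levels. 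Applying Lemma~\ref{lem:vc-bool-pol} yields $O\bigl(dk\log((k+1)(n+1)^2 2^{2n})\bigr)=O(dkn)$. Your minor-based route, once the crux is filled in, uses fewer and lower-degree polynomials ($(2n)^{k+1}$ of degree $\le k$) and gives the tighter intermediate bound $O(dk^2\log n)$; but both are $\tilde O(dkn)$, and the paper's argument is fully self-contained within elementary linear algebra.
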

At a high level, our approach shows that the predicate $r_h$ can be expressed in low-complexity language.  Concretely, we first derive a list of low-degree polynomials $p^{(1)}(h),\ldots,p^{(w)}(h)$ which are positive exactly when certain conditions on the data and representation are met. We then show that the realizability predicate can be written as a Boolean function over these conditions: the function's $\ell$-th input is $\sign(p^{(\ell)}(h))$. A result of~\cite{Warren68} allows us to bound the VC dimension of signs of low-degree polynomials, and we prove a novel extension of Warren's result applies to functions over these objects.

This approach involves a subtle conceptual shift.  Let $D=(\feat_1,\lab_1,\ldots,\feat_\samples,\lab_\samples)$ be a data set. Our notation for the realizability predicate $r_h(D)$ suggests that it is ``parameterized'' by representations and receives as input a data set. In this section of analysis, these roles are reversed: we construct and analyze polynomials $p_D^{(1)}(h),\ldots,p_D^{(w)}(h)$. The \emph{coefficients} of these polynomials depend on $D$, but we measure their complexity via their degree as polynomials in $h$.

\subsection{Related Work} \label{sec:rw}

There is a large body of literature on both multitask learning and metalearning, including related concepts or alternative names such as \emph{transfer learning}, \emph{learning to learn}, and \emph{few-shot learning}, going back at least as far as the 90s (see~\cite{caruana:1998ml,ThrunP1998} for early surveys). 
Although this line of work is too broad to survey entirely, we  summarize here some of the main themes and highlight points of difference with our work.

The most closely related work to ours is that of~\cite{Baxter2000}, which proves sample complexity bounds for both multitask learning and metalearning in a framework that is equivalent to our shared representation framework.  Our main advances compared to \cite{Baxter2000} are: (1) we prove sample complexity bounds in the regime where the number of samples per task is too small to learn an accurate specialized classifier, whereas the generalization arguments in \cite{Baxter2000} crucially rely on having enough samples per task to learn an accurate specialized classifier for a given representation and task, (2) our bounds hold in a distribution-free, classification setting and do not rely on the sort of margin assumptions that are essential for the covering arguments used in \cite{Baxter2000} for the analogous setting.  Other works differ from ours along one or more of the following axes.


\mypar{Stronger assumptions on the tasks or meta-distribution.}
Several works \cite{SrebroB06,Maurer2009,maurer2012,pontil2013,MaurerPR16} prove dimension-free generalization bounds for various forms of linear classifiers over linear representations under stronger 
assumptions on the input, such as margin assumptions.  In contrast, our work proves distribution-free bounds, which are necessarily dimension-dependent.

Another line of work \cite{TripuraneniJJ20,DuHKLL20,TripuraneniJJ21} considers multitask/meta learning under \emph{task diversity} assumptions that make it easier to identify the optimal representation.
Other ways to model task relatedness based on Kolmogorov/information complexity include \cite{juba2006,ben2008,mahmud2009}.  Lastly \cite{HannekeK22} consider a setting where the tasks share the same optimal classifier, not just a common representation that can be specialized to different optimal classifiers.

\mypar{Computationally efficient algorithms.}
While our work focuses on the intrinsic information-theoretic sample complexity of multitask and metalearning, there is a long line of work \cite{balcan2019, kong2020a, kong2020b,saunshi2020,fallah2020,ThekumparampilJNO21,chen2021,collins2022} that studies the sample complexity of specific practical heuristics such as \emph{model-agnostic metalearning (MAML)} under more restrictive assumptions where these heuristics are provably effective.  \cite{BairaktariBTUZ23} also give efficient algorithms for multitask learning sparse low-weight halfspaces.

\mypar{PAC-Bayes bounds for multitask and metalearning.}
Another important line of work~\cite{pentina2014,amit2018,lucas2020,rezazadeh2021,rothfuss2021,chen2021,rezazadeh2022} extends the PAC-Bayes and other information-theoretic generalization bounds to the multitask and metalearning problems.  These works typically consider properties of both a data distribution and an algorithm that bound generalization error, but do not address the optimal sample complexity of learning specific families, which is the focus of our work.

\section{Preliminaries}

\ifnum\coltshort=0
\subsection{Notation}
We use bold lowercase letters for vectors, e.g. $\mathbf{a}$, and bold uppercase letters for matrices, e.g. $\mathbf{A}$. Given a $\datadim$-dimensional vector $\mathbf{a}$ and a value $b$, $(\mathbf{a}\|b)$ is a $\datadim+1$-dimensional vector, where we have appended value $b$ to vector $\mathbf{a}$. For distributions over labeled data points we use the regular math font, e.g. $D$, whereas for metadistributions (distributions over task data distributions) we use fraktur letters, e.g. $\mathfrak{D}$. We define the indicator function of a predicate $p$ as
\begin{align*}
    \ind\{p\} = \begin{cases}
        1 & \text{if }p\text{ is true} \\
        0 & \text{o.w.}
    \end{cases}.
\end{align*}
Occasionally, we will use the alternative indicator $\ind_{\pm}\{p\}$, which takes value $+1$ if $p$ is true and $-1$ otherwise.
We also use the following sign function 
\begin{align*}
    \sgn(x) = \begin{cases}
        +1 & \text{if }x \geq 0 \\
        -1 & \text{o.w.}
    \end{cases}.
\end{align*}
Contrary to the convention in Boolean analysis, when necessary we interpret $+1$ as logical ``true'' and $-1$ as logical ``false.'' 
\fi

In this work we consider several different types of objects---representations and classifiers---and several different types of error---training and test error over different distributions.  The following table summarizes these error measures. \ifnum \coltshort = 1\ktext{For further details about our notation see \Cref{app:back}}.\fi
\begin{table}[ht]
    \centering
    \def\arraystretch{1.25}
    \begin{tabular}{|c|c|}
    \hline
     $\er(S, \per) = \frac{1}{|S|} \sum_{i=1}^{|S|}\ind\{\per(\feat_i) \neq \lab_i\}$ &training error of classifier $\per$ \\
    \hline
     $\er(\dist, \per ) = \Pr[(\feat,\lab)\sim \dist]{\per(\feat)\neq \lab}$ & test error of classifier $\per $\\
     \hline
    $\er(\dist, \perclass) = \min_{\per \in \perclass}\Pr[(\feat, \lab) \sim \dist]{ \per(\feat)\neq \lab}$&  test error of class $\perclass$ \\
     \hline
     $\rer(S,\rep, \perclass) = \min_{\per \in \perclass} \er(S, \per \circ \rep)$ 
     & training error of representation $\rep$\\
     \hline
     $\rer(\dist,\rep,\perclass) = \min_{\per \in \perclass} \er(\dist, \per \circ \rep) =\er(\rep(\dist), \perclass)$
     & test error of representation $\rep$\\
     \hline
     $\rer(\metadist, \rep, \perclass) = \Exp_{\dist \sim \metadist} \left[\rer(\dist,\rep, \perclass)\right]$& meta-error of representation $\rep$ \\
     \hline
     $p_{nr}(\dist, \perclass, \wit) = \Pr[S \sim \dist^\wit]{S\text{ is not realizable by }\perclass}$& probability of non-realizability\\
     \hline
    \end{tabular}
    \label{tab:errors}
\end{table}

\ifnum \coltshort = 0
\subsection{Background}
\fi

Our sample complexity results are based on classical generalization bounds via VC dimension and pseudodimension. \ifnum \coltshort = 1 These definitions and results are included in \Cref{app:back} and in standard references~\cite{shalev2014understanding,AnthonyB1999}. \ktext{The VC dimension of a function class $\perclass$ is denoted $\VC(\perclass)$ and the pseudodimension is $\textrm{PDim}(\perclass)$}. \else These definitions and results are included in standard references~\cite{shalev2014understanding,AnthonyB1999}.\fi 
\ifnum \coltshort=0
\mypar{VC dimension.} We start by recalling the standard definition of VC dimension and the Sauer-Shelah Lemma.
\begin{defn}\label{def:vc_dimension}
    Let $\mathcal{F}$ be a set of functions mapping from a domain $\mathcal{X}$ to $\{\pm 1\}$ and suppose that $X=(x_1,\ldots,x_n)\subseteq \mathcal{X}$.
    We say that $\mathcal{F}$ \emph{shatters} $X$ if, for all $b\in \{\pm 1\}^n$, there is a function $f_b\in \mathcal{F}$ with $f_b(x_i)=b_i$ for each $i\in[n]$.  
    
    The \emph{VC dimension} of $\mathcal{F}$, denoted $\mathrm{VC}(\mathcal{F})$, is the size of the largest set $X$ that is shattered by $\mathcal{F}$.
\end{defn}

VC dimension is closely related to the \emph{growth function}, which bounds the number of distinct labelings a hypothesis class can produce on any fixed data set.
\begin{defn}
    Let $\perclass$ be a class of functions $\per: \mathcal{Z}\to\{\pm 1\}$ and $S = \{z_1, \ldots,z_n\}$ be a set of points in $\mathcal{Z}$. The \emph{restriction} of $\perclass$ to $S$ is the set of functions 
    $$\perclass_S = \{(\per(z_1), \ldots, \per(z_n))\mid\per \in \perclass\}.$$
    The \emph{growth function} of $\perclass$, denoted $\growth_\perclass(n)$, is 
    $
        \growth_\perclass(n) := \sup_{S: |S|=n} \left|\perclass_S \right|.
    $
\end{defn}

The Sauer-Shelah Lemma, which bounds the growth function in terms of the VC dimension.
\begin{lem}
    If $\VC(\perclass) = \vc$, then for every $\samples > \vc$, $\growth_{\perclass}(\samples) \leq (e\samples/\vc)^\vc.$
    \label{lem:sauer}
\end{lem}

\mypar{VC dimension and PAC learning.}  Now we recall the relationship between VC dimension and the sample complexity of distribution-free PAC learning.  Here we refer to the textbook notion of PAC learning without giving a formal definition. 
\begin{thm}
\label{fact:pac-vc}
    Let $\class$ be a hypothesis class of functions $f: \featdom \to \bits$ with $\VC(\class) = \vc < \infty$, then for every $\eps,\delta \in (0,1)$
    \begin{enumerate}
        \item $\class$ has uniform convergence with sample complexity $O(\frac{\vc+\ln(1/\delta)}{\varepsilon^2})$
        \item $\class$ is agnostic PAC learnable with sample complexity $O(\frac{\vc+\ln(1/\delta)}{\varepsilon^2})$
        \item $\class$ is PAC learnable with sample complexity $O(\frac{\vc\ln(1/\varepsilon)+\ln(1/\delta)}{\varepsilon})$.
    \end{enumerate}
\end{thm} 
While uniform convergence requires $O(1/\eps^2)$ samples, with just $O(\ln(1/\eps)/\eps)$ samples we will have the property that every hypothesis that has error $\eps$ on the distribution has non-zero error on the sample.
\begin{thm}
    Let $\class$ be a hypothesis class with VC dimension $\vc$. Let $D$ be a probability distribution over $\featdom\times \bits$. For any $\varepsilon,\delta > 0$ if we draw a sample $S$ from $D$ of size $n$ satisfying 
    \begin{align*}
        n \geq \frac{8}{\varepsilon}\left(\vc \ln{\left(\frac{16}{\varepsilon}\right)+\ln{\left(\frac{2}{\delta}\right)}}\right)
    \end{align*}
    then with probability at least $1-\delta$, all hypotheses $\per$ in $\perclass$ with $\er(D,f)>\varepsilon$ have $\er(S,f)>0$.
    \label{fact: vc-gen}
\end{thm}

\mypar{VC dimension of halfspaces.} For a significant part of this paper, we work with linear classifiers of the form $$\perclass_{\datadim} = \left\{\per \left| \per(\textbf{x}) = \sign(\mathbf{a}\cdot \mathbf{x}-w), \mathbf{a} \in \R^\datadim, w \in \R \right.\right\}.$$ We also consider the class of linear classifiers that pass through the origin $$\tilde{\perclass}_{\datadim} = \left\{\per \left| \per(\textbf{x}) = \sign(\mathbf{a}\cdot \mathbf{x}), \mathbf{a} \in \R^\datadim \right.\right\}.$$

\begin{thm}
    \label{thm:vc-hs}
    We have $\VC(\perclass_{\datadim}) = \datadim+1$ and $\VC(\tilde{\perclass}_{\datadim}) = \datadim$.
\end{thm}

\mypar{Pseudodimension.} For real-valued functions we use generalization bounds based on a generalization of VC dimension called the \emph{pseudodimension.}
\begin{defn}\label{def:pseudodimension}
    Let $\mathcal{F}$ be a set of functions mapping from a domain $\mathcal{X}$ to $\mathbb{R}$ and suppose that $X = (x_1,\ldots,x_n) \subseteq \mathcal{X}$.
    We say that $X$ is \emph{pseudoshattered} by $\mathcal{F}$ if there are real numbers $r_1,\ldots,r_n$ such that for each $b\in \{\pm 1\}^n$ there is a function $f_b\in\mathcal{F}$ with $\sign(f_b(x_i)-r_i)=b_i$ for each $i\in [n]$.
    
    The \emph{pseudodimension} of $\mathcal{F}$, denoted $\mathrm{Pdim}(\mathcal{F})$, is the size of the largest set $X$ that is pseudoshattered by $\mathcal{F}$.
\end{defn}

\begin{thm}
\label{fact:pac-pd}
    Let $\class$ be a hypothesis class of functions $f: \featdom \to \R$ with $\textrm{PDim}(\class) = \pseudod < \infty$, then $\class$ has uniform convergence with sample complexity $O(\frac{\pseudod\ln(1/\varepsilon)+\ln(1/\delta)}{\varepsilon^2})$.
\end{thm}
\fi

\ifnum \coltshort = 0
\section{Multitask Learning and Metalearning Models}
\label{sec:models}
In this section we introduce the learning models we consider in this work.

\subsection{The Multitask Learning Model}
In multitask learning, we pool data together from $\users$ related tasks with the aim of finding one classifier per task so that the average test error per task is low. When these tasks are related, we may need fewer samples per task than if we learn them separately, because samples from one task inform us about the distribution of another task. In this work, we consider classifiers that use a shared representation to map features to an intermediate space in which the specialized classifiers are defined.  We want to achieve low error on tasks that are related by a single shared representation that can be specialized to obtain a low-error classifier for most tasks.

\begin{defn}[Multitask learning]\label{def:multitask_learning}
Let $\repclass$ be a class of representation functions $\rep: \featdom \to \mapdom$ and $\perclass$ be a class of specialized classifiers $\per: \mapdom \to \bits$. We say that $(\repclass,\perclass)$ is \emph{distribution-free $(\varepsilon, \delta)$-multitask learnable for $\users$ tasks with $\samples$ samples per task} if there exists an algorithm $\alg$ such that for every $\users$ probability distributions $\dist_1, \ldots, \dist_{\users}$ over $\featdom \times \bits$, for every $\rep \in \repclass$ and every $\per_1, \ldots, \per_{\users} \in \perclass$, given $\samples$ i.i.d. samples from each $\dist_i$, returns  hypothesis $\conc:[\users]\times \featdom \to \bits$ such that with probability at least $1-\delta$ over the randomness of the samples and the algorithm 
\begin{align*}
    \frac{1}{\users}\sum_{j\in [\users]}\er(\dist_j, g(j,\cdot)) \leq \min_{\rep \in \repclass, \per_1, \ldots, \per_{\users} \in \perclass} \frac{1}{\users}\sum_{j\in [\users]}\er(\dist_j, \per_j\circ \rep)+ \varepsilon.
\end{align*}

If there exists an algorithm $\alg$ such that the same guarantee holds except that we only quantify over all distributions $\dist_1,\dots,\dist_{\users}$ such that
\begin{align*}
    \min_{\rep \in \repclass, \per_1, \ldots, \per_{\users} \in \perclass} \frac{1}{\users}\sum_{j\in [\users]}\er(\dist_j, \per_j\circ \rep) = 0
\end{align*}
then we say that $(\repclass,\perclass)$ is \emph{distribution-free $(\varepsilon, \delta)$-multitask learnable for $\users$ tasks with $\samples$ samples per task in the realizable case}.

For brevity, we will typically omit the term ``distribution-free,'' which applies to all of the results in this paper.
\end{defn}

The natural approach to learning $(\repclass,\perclass)$ is to learn a classifier that, given the index of a task and the features of a sample, computes the representation of the sample and then labels it using the task-specific specialized classifier. We call the class of these classifiers $\perclass^{\otimes \users}\circ \repclass$.

\begin{defn}
Let $\repclass$ be a class of representations $\rep: \featdom \to \mapdom$ and $\per$ be a class of specialized classifiers $\per: \mapdom \to \bits $. We define the class of 
composed classifiers
for multitask learning with $\users$ tasks as 
$$\perclass^{\otimes \users}\circ \repclass = \left\{\conc: [\users]\times \featdom \to \bits \mid \exists \rep \in \repclass, \per_1, \ldots, \per_{\users} \in \perclass \textrm{ s.t. } \conc(j,x) = \per_j(\rep(x))\right\}.$$
\end{defn}

\subsection{The Proper Metalearning Model}
\label{sec:meta-model}
Suppose there exists a distribution $\metadist$ of different but related binary classification tasks. For each task we can draw a dataset of labeled examples from its specific data distribution $\dist$ over $\featdom \times \bits$. In proper metalearning, we draw $\users$ tasks from distribution $\metadist$ and pool together the data from these tasks. We exploit the relatedness between the tasks to learn a common representation $\rep: \featdom \to \mapdom$ in $\repclass$ that maps the features to a new domain. The hope is that the learned representation facilitates learning new tasks from distribution $\metadist$ meaning that we require fewer samples to learn their classifiers compared to the case where no representation is provided. 

The accuracy of the representation is measured based on how well we can use it to solve a new binary classification task drawn from the same distribution as the ones we have seen in the metalearning phase. The data efficiency of the metalearning algorithm is measured based on two parameters, the number of tasks and the number of samples per task. The metalearning setting is particularly interesting when the number of samples per task is too small to learn a good classifier for each task independently, so data must be pooled across tasks to find a representation that can be specialized to each task using less data than we would need to learn from scratch.

There is another aspect of efficiency that comes into effect after metalearning: given the learned representation, how many samples do we need to learn a new task? The number of samples, in this case, is merely determined by the choice of the specialized classifier class. In this paper, we focus on the efficiency of the metalearning algorithm that finds a good representation.

\begin{defn}[Proper Metalearning]\label{def:metalearn}
Let $\repclass$ be a class of representation functions $\rep: \featdom \to \mapdom$ and $\perclass$ be a class of specialized classifiers $\per: \mapdom \to \bits$. We say that $(\repclass,\perclass)$
is \emph{$(\varepsilon, \delta)$-\ktext{properly} meta-learnable for metadistribution $\metadist$ over distributions on $\featdom \times \bits$, for $\users$ tasks and $\samples$ samples per task}, if there exists an algorithm $\alg$  that, given $\users$ distributions $\dist_1, \ldots, \dist_{\users}$ drawn i.i.d.\ from $\metadist$ and $\samples$ i.i.d.\ samples from each data distribution $\dist_i$,  returns representation $\hat{\rep} \in \repclass$ such that with probability at least $1-\delta$ over the randomness of the samples and the algorithm 
\begin{align}\label{eq:metalearn}
\rer\paren{\metadist, \hat{\rep}, \perclass} \leq \min_{\rep \in \repclass}\rer\paren{\metadist, \rep, \perclass} + \varepsilon.
\end{align}

If there exists an algorithm $\alg$ such that the same guarantee holds only when the metadistribution $\metadist$ is such that
$$ \min_{\rep \in \repclass} \rer(\metadist, \rep, \perclass) = 0$$
then we say that $(\repclass, \perclass)$ is \emph{$(\eps, \delta)$-properly meta-learnable for meta-distribution $\metadist$, $\users$ tasks and $\samples$ samples per task in the realizable case}.
\end{defn}

\Cref{def:metalearn} makes no assumptions on the process used to adapt the learned representation $\hat\rep$ to a new task. It only bounds the number of samples per task $n$ needed to find $\hat \rep$. Given a new task $\dist'$ drawn from $\metadist$ and a representation $h$ that satisfies \Cref{eq:metalearn}, $n_{\text{spec}}=\tilde O(\VC(\perclass)/\eps^2)$ suffice in order to find an $\per\in\perclass$ such that $\per\circ \hat\rep$ has error $\rer(\metadist, \hat{\rep}, \perclass) + 2\eps$ (in expectation over $\dist'$ and the data drawn from $\dist'$). The size $n_{\text{spec}}$ of the data used for specialization could be much larger or smaller than the per-task training set size $n$.

\subsection{The General (Improper) Metalearning Model}
We can also think of metalearning as a more general process than the proper metalearning model we defined which finds a good representation. At a high level the metalearning process pools together datasets from multiple tasks and returns an algorithm that we can use in the future to obtain good classifiers for new tasks drawn from the same metadistribution. In proper metalearning we did not just specify the form of the output of the specialization algorithm, but also the specialization algorithm itself. In that case, the metalearning process finds a representation $\hat{\rep} \in \repclass$ and everytime the specialization algorithm is called it performs ERM using the samples from the new task to find a $\per \in \perclass$ that minimizes the training error of $\per \circ \hat{\rep}$.

\begin{defn}[General Metalearning]
    Let $\repclass$ be a class of representation functions $\rep : \featdom \to \mapdom$ and $\perclass$ be a class of specialized classifiers $\per: \mapdom \to \bits$. We say that $(\repclass, \perclass)$ is \emph{$(\varepsilon, \delta)$-meta-learnable for $\users$ tasks, $\samples$ samples per task and $\samples_{\text{spec}}$ specialization samples}, if there exists an algorithm $\alg$ that for all metadistributions $\metadist$ over distributions on $\featdom \times \bits$ has the following property:
    Given $\samples$ i.i.d.\ samples from each data distribution $\dist_j$, where $j \in [\users]$, that was drawn independently from $\metadist$, $\alg$ returns a specialization algorithm $\alg'$. Algorithm $\alg'$, given a set $S$ of $\samples_{\text{spec}}$ samples from $\dist$ which was drawn from $\metadist$, returns hypothesis $g: \featdom \to \bits$ such that with probability at least $1-\delta$ over the randomness of the samples of the $\users$ tasks and algorithms $\alg$ and $\alg'$
    \[
    \Exp_{\dist , S }\left[ \er(\dist, \alg'(S)\right] \leq \min_{\rep \in \repclass} \Exp_{\dist } \left[\min_{\per \in \perclass}\er(\dist, \per \circ \rep) \right]+\eps.
    \]
\end{defn}

\subsection{Reductions between improper metalearning and multitask learning}
\label{sec:red}
We show how we can reduce metalearning to multitask learning. The algorithm we construct for the reduction stores the given $\users$ datasets. For every new task it performs multitask learning on the $\users+1$ datasets and returns the hypothesis that corresponds to the new task. We need to highlight that this is not a proper metalearning algorithm, since it does not return one representation that is used for all new tasks, but the algorithm builds the classifier from scratch every time it is given a new task.

\begin{thm}
\label{thm: meta-to-mtl} Suppose $(\repclass, \perclass)$ is $(\eps, \eps)$-multitask learnable for $\users + 1$ tasks and $\samples$ samples per task. Then, for all $c>0$, it is (improperly) $(c\eps, 2/c)$-meta-learnable, for $\users$ tasks, $\samples$ samples per task and $\samples$ specialization samples. 
\end{thm}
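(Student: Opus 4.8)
The plan is to turn the multitask learner into an \emph{improper} metalearner that memorizes its tasks and re-runs the multitask learner on demand, and then to control the resulting meta-error by a symmetrization argument followed by Markov's inequality. Concretely, the reduction is: given data sets $S_1,\dots,S_\users$ drawn from tasks $\dist_1,\dots,\dist_\users\sim\metadist$, the metalearner $\alg$ stores them and outputs the following specialization procedure $\alg'$: on a fresh set $S$ of $\samples$ samples from a new task $\dist$, $\alg'$ draws a uniformly random permutation $\pi$ of $[\users{+}1]$, forms the multitask instance $(T_1,\dots,T_{\users+1})$ with $T_{\pi(j)}=S_j$ for $j\le\users$ and $T_{\pi(\users+1)}=S$, runs the given multitask learner on it to obtain $\conc\colon[\users{+}1]\times\featdom\to\bits$, and returns $\conc(\pi(\users{+}1),\cdot)$. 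Thus $\samples_{\text{spec}}=\samples$, and the metalearner is improper: $\alg'$ keeps the entire training set and rebuilds a classifier for each new task rather than committing to a single representation.

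\textbf{Symmetrization.} Write $\dist_{\users+1}:=\dist$ and $S_{\users+1}:=S$, so that $(\dist_j,S_j)_{j=1}^{\users+1}$ are i.i.d.\ pairs (a task drawn from $\metadist$, then $\samples$ samples from it). Since $\pi$ is uniform and independent of these pairs, permuting them by $\pi$ again yields an i.i.d.\ sequence; relabel so that $\dist_1,\dots,\dist_{\users+1}$ and $T_1,\dots,T_{\users+1}$ now denote the tasks and data sets occupying slots $1,\dots,\users{+}1$ of the multitask instance. Then $\dist_1,\dots,\dist_{\users+1}$ are still i.i.d.\ from $\metadist$, and the new task sits in a slot $I\in[\users{+}1]$ that is \emph{uniform and independent} of the entire instance and of the learner's internal coins, because ``which original index landed last under a uniform permutation'' is uniform and independent of the permuted sequence. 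Hence the returned hypothesis has meta-error $M=\er(\dist_I,\conc(I,\cdot))$ --- the error of the multitask learner on a uniformly random slot --- and by the independence of $I$,
\[
\Exp[M]=\Exp\Bigl[\tfrac1{\users+1}\sum_{i=1}^{\users+1}\er(\dist_i,\conc(i,\cdot))\Bigr],
\]
the expected \emph{average} per-task error of the multitask learner on $\users{+}1$ i.i.d.\ tasks.

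\textbf{Multitask guarantee and Markov.} The $(\eps,\eps)$-multitask guarantee holds for every tuple of task distributions, in particular for $\users{+}1$ i.i.d.\ draws from $\metadist$, so with probability $\ge1-\eps$ over the samples and the learner this average is at most $\min_{\rep,\per_1,\dots,\per_{\users+1}}\tfrac1{\users+1}\sum_i\er(\dist_i,\per_i\circ\rep)+\eps\le\tfrac1{\users+1}\sum_i\rer(\dist_i,\rep^*,\perclass)+\eps$, where $\rep^*$ attains $\mathrm{OPT}:=\min_\rep\rer(\metadist,\rep,\perclass)$; on the failure event it is trivially at most $1$. Taking expectations, using $\Exp_{\dist_i\sim\metadist}\rer(\dist_i,\rep^*,\perclass)=\mathrm{OPT}$, and paying one more $\eps$ for the failure probability gives $\Exp[M]\le\mathrm{OPT}+2\eps$. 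Finally, let $\Psi(S_1,\dots,S_\users):=\Exp[M\mid S_1,\dots,S_\users]$, the conditional meta-error given the metalearner's training data (expectation over the new task, its samples, and $\alg'$'s fresh coins); this is exactly the quantity the improper metalearning definition asks us to bound, it is nonnegative, and $\Exp[\Psi]=\Exp[M]\le\mathrm{OPT}+2\eps$. Markov's inequality then yields $\Pr[S_1,\dots,S_\users]{\Psi>\mathrm{OPT}+c\eps}\le(\mathrm{OPT}+2\eps)/(\mathrm{OPT}+c\eps)\le 2/c$ (with equality $2\eps/(c\eps)=2/c$ in the realizable case, where $\mathrm{OPT}=0$), which is precisely $(c\eps,2/c)$-metalearnability.

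\textbf{Main obstacle.} The delicate step is the symmetrization. A generic multitask learner may behave completely differently on different task indices, so appending the new task as task $\users{+}1$ and reading off $\conc(\users{+}1,\cdot)$ could be far worse than the average per-task error the multitask guarantee controls --- the learner might place all of its error on that slot. Randomizing the slot fixes this, but one must verify that ``error on a uniformly random slot'' genuinely equals ``average error over slots,'' which requires the slot to be independent of the data- and learner-dependent per-slot errors; this is what forces us to draw $\pi$ independently and to invoke exchangeability of the i.i.d.\ tasks. A related subtlety is that the multitask learner's $\eps$-failure event depends on the new task's samples and not only on the training data, so that failure probability is absorbed into the additive error before passing, via Markov over the training data alone, to the metalearner's $2/c$ failure probability.
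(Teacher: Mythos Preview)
Your reduction and symmetrization match the paper's proof essentially step for step: store the $t$ training sets, insert the new task at a uniformly random position among the $t{+}1$ slots, run the multitask learner, and return its hypothesis for that slot. Exchangeability of the i.i.d.\ tasks makes the expected meta-error equal the expected average per-slot error, which the $(\eps,\eps)$ multitask guarantee (plus an extra $\eps$ for its failure event) bounds by $\mathrm{OPT}+2\eps$, exactly as in the paper.

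The one slip is the final inequality in your Markov step. Markov applied to the nonnegative variable $\Psi$ does give that the probability of $\{\Psi>\mathrm{OPT}+c\eps\}$ is at most $(\mathrm{OPT}+2\eps)/(\mathrm{OPT}+c\eps)$, but your next claim $(\mathrm{OPT}+2\eps)/(\mathrm{OPT}+c\eps)\le 2/c$ is \emph{false} whenever $c>2$ and $\mathrm{OPT}>0$: cross-multiplying, it is equivalent to $(c-2)\,\mathrm{OPT}\le 0$. Since $c>2$ is exactly the regime where $2/c<1$ is nontrivial, this breaks the agnostic conclusion (your parenthetical has the direction backwards --- the realizable case $\mathrm{OPT}=0$ is where you get equality with $2/c$, and any positive $\mathrm{OPT}$ makes the ratio \emph{larger}). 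The paper instead applies Markov to the \emph{excess} $\Psi-\mathrm{OPT}$, obtaining $\Exp[\Psi-\mathrm{OPT}]/(c\eps)\le 2\eps/(c\eps)=2/c$ directly; this tacitly treats $\Psi-\mathrm{OPT}$ as nonnegative, which is automatic in the realizable case where $\mathrm{OPT}=0$.
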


\begin{proof}
    Given a multitask learning algorithm $\mathcal{A}_{\text{multi}}$, we construct a metalearning algorithm $\mathcal{A}_{\text{meta}}$ that stores the $\users$ data sets and, when given the new dataset, runs $\mathcal{A}_{\text{multi}}$ on the collected $\users+1$ datasets. Formally, recall that a multitask learning algorithm sees the datasets of $\users+1$ tasks and outputs one hypothesis per task minimizing the average test error, while a metalearning algorithm sees $\users$ datasets from tasks drawn from the same metadistribution and outputs an algorithm $\alg_{\text{spec}}$ that can be used to train a model for a new task.
    
   In our reduction, the metalearning algorithm $\alg_{\text{meta}}$, on input data sets $S_1,...,S_\users$, returns an algorithm $\alg_{\text{spec}}$ which, given a set $S_{\text{new}}$ of $\samples$ samples from distribution $\dist_{\text{new}}$ for a new task, executes the following steps. First, it draws an index $j_{\text{new}}$ uniformly at random from $[\users+1]$ and inserts the new task in this position by setting $S_{j_{\text{new}}}' = S_{\text{new}}$. For $j < j_{\text{new}}$ it sets $S_j' = S_j$ and for $j > j_{\text{new}}$ $S_j' = S_{j+1}$. For notation purposes, let $P_j'$ be the distribution that $S_j'$ was drawn from. Second, it runs the multitask algorithm $\alg_{\text{multi}}$ for $\users+1$ tasks and $\samples$ samples for each of these tasks. At the end, it outputs the classifier it has computed for the $j_{\text{new}}$-th task.
   
   By our assumption, the guarantee that multitask learning gives is that $\alg_{\text{multi}}$ outputs $g_1, \ldots, g_{\users+1}$ such that with probability at least $1-\eps$ over the randomness of $\alg_{\text{multi}}$ and the $\users+1$ datasets $S_1', \ldots, S_{\users+1}'$
   \begin{align}
    \label{eq: multi-ineq}
       \frac{1}{\users+1}\sum_{j \in [\users+1]}\er(\dist_j', g_j) \leq \min_{\rep \in \repclass; \per_1, \ldots, \per_{\users+1}\in \perclass} \frac{1}{\users+1} \sum_{j \in [\users+1]} \er(\dist_j', \per_j \circ \rep)+\eps.
 \end{align}
Therefore, for all distributions $\dist_1', \ldots, \dist_{\users+1}'$ in the support of $\metadist$
\begin{align*}
    \Exp_{\alg_{\text{multi}}, S_1,\ldots,S_{\users+1}}\left[\frac{1}{\users+1}\sum_{j \in [\users+1]}\er(\dist_j', g_j)\right]\leq \min_{\rep \in \repclass; \per_1, \ldots, \per_{\users+1}\in \perclass} \frac{1}{\users+1} \sum_{j \in [\users+1]} \er(\dist_j', \per_j \circ \rep)+2\eps.
\end{align*}

We now want to compute the expected error of the metalearning algorithm over the randomness of the algorithm and the stored datasets. Since we draw index $j_{\text{new}}$ uniformly at random, 
\begin{align*}
    &\Exp_{\alg_{\text{spec}}, P_1, \ldots, P_{\users}, S_1, \ldots, S_{\users}, }\left[\Exp_{\dist, S}\left[\er(\dist, \alg_{\text{spec}}(S))\right]\right] = \\
    &\Exp_{\alg_{\text{multi}}, j \sim \text{Unif}([\users+1]),  P_1, \ldots, P_{\users}, S_1, \ldots, S_{\users}}\left[\Exp_{\dist, S}\left[\er(\dist, \alg_{\text{multi}}(S'_1, \ldots, S'_{\users+1})_j)\right]\right]= \\
    &\Exp_{\alg_{\text{multi}}, \dist_1, \ldots, \dist_{\users}, S_1, \ldots, S_{\users}}\left[\frac{1}{\users+1}\sum_{j \in [\users+1]} \Exp_{\dist, S}\left[\er(\dist, \alg_{\text{multi}}(S'_1, \ldots, S'_{\users+1})_j)\right]\right]
         = \\&\Exp_{\alg_{\text{multi}}, \dist'_1, \ldots, \dist'_{\users+1}, S'_1, \ldots, S'_{\users+1}}\left[\frac{1}{\users+1}\sum_{j \in [\users+1]} \er(\dist'_j, \alg_{\text{multi}}(S'_1, \ldots, S'_{\users+1})_j)\right].
\end{align*}
Combining the above results we obtain the following inequality  
    \begin{align*}
        &\Exp_{\alg_{\text{spec}},P_1, \ldots, P_{\users}, S_1, \ldots, S_{\users}}\left[\Exp_{\dist, S}\left[\er(\dist, \alg_{\text{spec}}(S))\right]\right] 
         \leq\\& \Exp_{\dist'_1, \ldots, \dist'_{\users+1}}\left[\min_{\rep \in \repclass, \per_1, \ldots, \per_{\users+1}\in \perclass} \frac{1}{\users+1} \sum_{j \in [\users+1] }\er(\dist_j', \per_j\circ\rep)\right] + 2\eps
         \leq\\& \min_{\rep \in \repclass}\Exp_{\dist'_1, \ldots, \dist'_{\users+1}}\left[\frac{1}{\users+1}\sum_{j \in [\users+1] }\min_{\per_j \in \perclass}  \er(\dist_j', \per_j\circ\rep)\right] + 2\eps
          = \\&\min_{\rep \in \repclass}\Exp_{\dist}\left[\min_{\per \in \perclass}  \er(\dist, \per\circ\rep)\right] + 2\eps.
    \end{align*}
    Therefore, we have shown that \[\Exp_{\alg_{\text{spec}}, P_1, \ldots, P_{\users} S_1, \ldots, S_{\users}}\left[\Exp_{\dist, S}\left[\er(\dist, \alg_{\text{spec}}(S))\right]-\min_{\rep \in \repclass}\Exp_{\dist}\left[\min_{\per \in \perclass}  \er(\dist, \per\circ\rep)\right]\right] < 2\eps.\]
    We apply Markov's inequality to obtain the following probability
    \begin{align*}
        &\Pr[\alg_{\text{spec}}, P_1,\ldots, P_{\users}, S_1, \ldots, S_{\users}]{\Exp_{\dist, S}\left[\er(\dist, \alg_{\text{spec}}(S))\right]-\min_{\rep \in \repclass}\Exp_{\dist}\left[\min_{\per \in \perclass}  \er(\dist, \per\circ\rep)  \right]> c\eps}\\
        &\leq \frac{\Exp_{\alg_{\text{spec}},P_1,\ldots, P_{\users}, S_1, \ldots, S_{\users}}\left[\Exp_{\dist, S}\left[\er(\dist, \alg_{\text{spec}}(S))\right]-\min_{\rep \in \repclass}\Exp_{\dist}\left[\min_{\per \in \perclass}  \er(\dist, \per\circ\rep)\right]\right]}{c\eps} < \frac{2}{c}.
    \end{align*}

    To conclude, we showed that with probability at least $1-\frac{2}{c}$ over algorithm $\alg_{\text{spec}}$ 
    \[
    \Exp_{\dist, S}\left[\er(\dist, \alg_{\text{spec}}(S))\right]\leq\min_{\rep \in \repclass}\Exp_{\dist}\left[\min_{\per \in \perclass}  \er(\dist, \per\circ\rep)\right]+c\eps.
    \qedhere\]
\end{proof}

Corollary~\ref{cor:met-lin-agn} captures one application of this general reduction to the case of linear classes with $n$ roughly $k/\eps^2$ samples per task.

We can also show the opposite direction, how to reduce multitask learning to metalearning. We first use some of the samples from the tasks to learn a specialization algorithm and then we run this algorithm to learn one classifier per task. Since metalearning assumes the existence of a metadistribution, we set it to be the uniform distribution over the distributions of the seen tasks. As a result, when we draw tasks from the metadistribution we might draw the same task more than once, which means we need additional samples from that task.  We use a standard balls-and-bins argument (Lemma~\ref{lem:balls-and-bins}) to argue that we have enough samples per task to generate independent samples for metalearning.

\newcommand{\lemballsnbins}{Suppose we draw $t$ samples from a uniform distribution over $[t]$. Let $F_i$ denote the frequency of each $i \in [t]$ in the sample set. With probability $1-\delta$, $\max_i F_i$ is at most $\frac{\ln ((t/\delta)^2) }{\ln \ln ((t/\delta)^2)}$.}
\begin{lem}
\label{lem:balls-and-bins}
\lemballsnbins
\end{lem}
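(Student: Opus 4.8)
This is the classical maximum-load bound for throwing $\users$ balls into $\users$ bins uniformly at random, and the plan is to combine a tail bound on the load of a single bin with a union bound over all $\users$ bins, and then to choose the load threshold so that the overall failure probability drops below $\delta$.

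First I would fix an index $i\in[\users]$. Since each of the $\users$ independent samples equals $i$ with probability $1/\users$, the frequency $F_i$ is distributed as $\mathrm{Bin}(\users,1/\users)$. For any positive integer $k$, union-bounding over the $\binom{\users}{k}$ sets of $k$ samples that might all equal $i$ gives
\[
  \Pr{F_i\ge k}\ \le\ \binom{\users}{k}\paren{\tfrac1\users}^{k}\ \le\ \frac{\users^{k}}{k!}\cdot\frac1{\users^{k}}\ =\ \frac1{k!}\ \le\ \paren{\frac{e}{k}}^{k},
\]
where the last inequality uses $k!\ge(k/e)^{k}$. A union bound over the $\users$ bins then yields $\Pr{\max_{i\in[\users]}F_i\ge k}\le \users(e/k)^{k}$, so it suffices to pick $k$ making $\users(e/k)^{k}\le\delta$.

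The claim is that $k:=\lceil \ln M/\ln\ln M\rceil$ with $M:=(\users/\delta)^{2}$ does the job, which is exactly the quantity appearing in the statement. Taking logarithms, $\users(e/k)^{k}\le\delta$ is equivalent to $k(\ln k-1)\ge\ln(\users/\delta)=\tfrac12\ln M$. Substituting $k\approx \ln M/\ln\ln M$, so that $\ln k=\ln\ln M-\ln\ln\ln M$, gives
\[
  k(\ln k-1)\ =\ \ln M\cdot\frac{\ln\ln M-\ln\ln\ln M-1}{\ln\ln M}\ =\ \ln M\,\bigl(1-o(1)\bigr)\ \ge\ \tfrac12\ln M
\]
once $\ln M$ exceeds an absolute constant; the extra factor of $2$ obtained by squaring $\users/\delta$ inside the logarithm is precisely what absorbs the $\ln\ln\ln M$ and constant corrections. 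The one mildly delicate step is this last estimate --- verifying that the explicit $k$ from the statement really satisfies $k(\ln k-1)\ge\tfrac12\ln M$ with the claimed slack --- but it is a routine monotonicity computation. The regime where $\users/\delta$ is below the relevant absolute constant does not arise in our applications (where $\users$ is polynomially large), and can be handled by a direct estimate if desired. Everything preceding the choice of $k$ is the textbook single-bin binomial tail bound together with a union bound.
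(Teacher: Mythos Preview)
Your proof is correct and follows the same overall structure as the paper's: bound the tail of a single bin's load, union-bound over the $\users$ bins, and verify that the threshold $k=\ln M/\ln\ln M$ with $M=(\users/\delta)^2$ makes the failure probability at most $\delta$. The only difference is cosmetic: the paper bounds the single-bin tail via the multiplicative Chernoff bound $\Pr{F_i\ge 1+\epsilon}\le \exp(-\epsilon\ln\epsilon/2)$, whereas you use the more elementary estimate $\Pr{F_i\ge k}\le\binom{\users}{k}\users^{-k}\le 1/k!\le(e/k)^k$; both reduce to essentially the same calculation when plugging in the chosen threshold.
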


\begin{thm}
\label{thm: mtl-to-meta}
    Suppose $(\repclass, \perclass)$ is (improperly) $(\eps, \eps)$-metalearnable for $\users$ tasks, $\samples$ samples per task and $\samples_{\text{spec}}$ specialization samples. Then, for all $c > 0$, it is $(c \eps, 3/c)$-multitask learnable for $\users$ tasks and $\samples \lceil2\ln(c\users)\rceil + \samples_{\text{spec}}$ samples per task.
\end{thm}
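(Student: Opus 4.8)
The plan is to turn the metalearner into a multitask learner by taking the metadistribution $\metadist$ to be \emph{uniform over the $\users$ given task distributions} $P_1,\dots,P_\users$, running the metalearner on virtual copies of these tasks, and then specializing the resulting algorithm $\alg'$ on a small held-out portion of each task's data. Concretely, given datasets $S_1,\dots,S_\users$, each of size $\samples\lceil 2\ln(c\users)\rceil + \samples_{\text{spec}}$ drawn i.i.d.\ from the corresponding $P_j$, I would first split each $S_j$ into a \emph{meta pool} $R_j$ of $\samples\lceil 2\ln(c\users)\rceil$ points and a \emph{specialization set} $T_j$ of $\samples_{\text{spec}}$ points. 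Since $\metadist=\mathrm{Unif}(\{P_1,\dots,P_\users\})$ is a legitimate (if atypical) metadistribution, the assumed improper metalearner must work for it.

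Next I would simulate the metalearning phase without exceeding the per-task budget. Draw indices $j_1,\dots,j_\users$ i.i.d.\ uniform from $[\users]$; by Lemma~\ref{lem:balls-and-bins} with failure parameter $1/c$, with probability at least $1-1/c$ every value appears at most $\frac{\ln((c\users)^2)}{\ln\ln((c\users)^2)}\le \lceil 2\ln(c\users)\rceil$ times (for all but trivially small $c\users$). On this event, assign virtual task $\ell$ the distribution $P_{j_\ell}$ together with the next unused block of $\samples$ points from $R_{j_\ell}$; the pool contains $\lceil 2\ln(c\users)\rceil$ such blocks, which suffices. These $\users$ virtual datasets are distributed exactly as $\users$ i.i.d.\ draws from $\metadist$ with $\samples$ fresh i.i.d.\ samples each, so feeding them to $\alg$ yields a specialization algorithm $\alg'$ for which, with probability at least $1-\eps$,
\[
    \Exp_{\dist\sim\metadist,\;S\sim\dist^{\samples_{\text{spec}}}}\bigl[\er(\dist,\alg'(S))\bigr]\;\le\;\min_{\rep\in\repclass}\Exp_{\dist\sim\metadist}\bigl[\min_{\per\in\perclass}\er(\dist,\per\circ\rep)\bigr]+\eps\;=\;\mathrm{OPT}+\eps,
\]
where $\mathrm{OPT}=\min_{\rep}\frac1\users\sum_j\min_{\per}\er(P_j,\per\circ\rep)$ is precisely the multitask benchmark of Definition~\ref{def:multitask_learning}, and the left side equals $\frac1\users\sum_j\Exp_{S\sim P_j^{\samples_{\text{spec}}}}[\er(P_j,\alg'(S))]$ by linearity. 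Finally, output $\conc(j,\cdot)=\alg'(T_j)$ for each $j$. Because the $T_j$ were held out and are independent of the meta pool and of $j_1,\dots,j_\users$, plugging $T_j$ into $\alg'$ reproduces the per-task expectation above, so conditioned on the good event (which has probability at least $1-\eps-1/c$) we get $\Exp_{T_1,\dots,T_\users}\bigl[\frac1\users\sum_j\er(P_j,\conc(j,\cdot))\bigr]\le \mathrm{OPT}+\eps$. A Markov-type argument on the excess error then converts this into the high-probability guarantee required by Definition~\ref{def:multitask_learning}, contributing a further $1/c$ to the failure probability, for a total error of $c\eps$ with failure probability at most $\eps+2/c\le 3/c$ in the regime $\eps\le 1/c$ where the conclusion is non-vacuous.

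I expect the main obstacle to be the combination of (i) the with-replacement coupling: we draw virtual tasks i.i.d.\ from $\metadist$, which may repeat a task, yet we are only allotted a fixed per-task budget, so the balls-and-bins bound (Lemma~\ref{lem:balls-and-bins}) is the device that makes $\samples\lceil 2\ln(c\users)\rceil+\samples_{\text{spec}}$ samples per task enough; and (ii) the independence bookkeeping: the held-out specialization sets $T_j$ must be independent of everything used in the metalearning phase (the pools $R_j$, the coins of $\alg$, and the indices $j_\ell$), so that running $\alg'$ on $T_j$ realizes the same joint distribution as the fresh pair $(\dist,S)$ in the metalearner's guarantee. The closing expectation-to-probability step is routine and parallels the argument in Theorem~\ref{thm: meta-to-mtl}.
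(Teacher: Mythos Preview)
Your proposal is correct and follows essentially the same approach as the paper: set $\metadist$ to be uniform over $P_1,\dots,P_\users$, use Lemma~\ref{lem:balls-and-bins} with failure parameter $1/c$ to justify that $\lceil 2\ln(c\users)\rceil$ blocks per task suffice for the with-replacement coupling, run the metalearner on the virtual tasks, specialize on the held-out sets $T_j$, and finish with Markov's inequality. The only cosmetic difference is the probability bookkeeping: the paper absorbs the metalearner's $\eps$ failure into the expectation (yielding excess $\le 2\eps$ conditioned on not-$\mathcal{B}$, hence Markov gives $2/c$ and the total is $3/c$), whereas you condition on the metalearner's success separately and get $\eps + 2/c$, which you then correctly observe is at most $3/c$ in the only regime where the $(c\eps,3/c)$ guarantee is non-vacuous.
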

\begin{proof}
     Given a metalearning algorithm $\alg_{\text{meta}}$ that returns a specialization algorithm $\alg_{\text{spec}}$, we construct a multitask learning algorithm $\alg_{\text{multi}}$.

     In our reduction, $\alg_{\text{multi}}$ gets as input $\users$ datasets $S_1, \ldots, S_\users$, where each $S_j$ has $n \lceil\ln(c\users)\rceil+\samples_{\text{spec}}$ samples drawn i.i.d.\ from distribution $\dist_j$, and executes the following steps:
        \begin{enumerate}
            \item Draw $\users$ indices of tasks $i_1, \ldots, i_{\users}$ uniformly at random from $[\users]$ with replacement. A pair of tasks $i_j, i_{j'}$ might correspond to the same original task.
            \item\label{step:trainsets} Build datasets $S_1', \ldots, S_{\users}'$ as follows.  For each $j$, if there are no samples remaining in $S_{i_{j}}$, fail and output $\bot$, otherwise let $S_{j}'$ be the next $\samples$ samples from $S_{i_{j}}$ and delete those samples from $S_{i_j}$.
            \item Run $\alg_{\text{meta}}(S_1', \ldots, S_\users')$ which returns $\alg_{\text{spec}}$.
            \item For every $j \in [\users]$ define dataset $S^{\text{spec}}_{j}$ as a set of $\samples_{\text{spec}}$ datapoints from $S_j$. As in Step \ref{step:trainsets}, avoid using the same datapoint twice (and output $\bot$ if that's impossible).
            \item For every task $j$, set $g_j = \alg_{\text{spec}}(S^{\text{spec}}_{j})$.
        \end{enumerate}

    The goal in multitask learning is to bound the probability 
    \[
    \Pr[S_1, \ldots, S_\users, \alg_{\text{multi}}]{\frac{1}{\users}\sum_{j \in [\users]}\er(\dist_j,g_j) \leq \min_{\rep \in \repclass, \per_1, \ldots, \per_{\users} \in \perclass} \frac{1}{\users}\sum_{j \in [\users]}\er(\dist_j, \per \circ \rep) + c\eps}
    \]
    for all $\dist_1, \ldots, \dist_\users$.
    
    Fix $\users$ tasks $\dist_1, \ldots, \dist_\users$. The multitask learning algorithm receives a dataset per task $S_j$ of $\samples \lceil \ln(c\users)\rceil + \samples_{\text{spec}}$ samples as input, and in steps 1 and 2 constructs an appropriate input for the metalearning algorithm. Let $\mathcal{B}$ be the event that the algorithm outputs $\bot$. In other words, it's the event it drew some task $j$ more than $\lceil2\ln(c\users)\rceil$ times in step 1. By Lemma~\ref{lem:balls-and-bins} we know that $\Pr[S_1,\ldots, S_\users,\alg_{\text{multi}}]{\mathcal{B}}\leq \frac{1}{c}$.

     By our assumption, if $\mathcal{B}$ does not hold, then $\alg_{\text{meta}}$ returns specialization algorithm $\alg_{\text{spec}}$ with the guarantee that with probability at least $1-\eps$ over the randomness of the samples and the algorithms 
        \[
        \Exp_{\dist, S}\left[\er(\dist, (\alg_{\text{meta}}(S_1', \ldots, S_{\users}'))(S))\right]\leq\min_{\rep \in \repclass}\Exp_{\dist}\left[\min_{\per \in \perclass}  \er(\dist, \per\circ\rep)\right]+\eps,
        \]
    where $\dist$ is $\dist_j$ for $j$ drawn uniformly at random from $[\users]$.

    Thus, 
     \[
    \Exp_{i_1, \ldots, i_\users, S_1', \ldots, S_\users', \alg_{\text{meta}}, \alg_{\text{spec}}}\left[\Exp_{j, S}\left[\left.\er(\dist_j, (\alg_{\text{meta}}(S_1', \ldots, S_{\users}'))(S))\right|\text{ not }\mathcal{B}\right]\right]\leq\min_{\rep \in \repclass}\Exp_{j}\left[\min_{\per \in \perclass}  \er(\dist_j, \per\circ\rep)\right]+2\eps.
    \]

    Conditioning on $\mathcal{B}$ not happening, we see that 
    \begin{align*}
        &\Exp_{\substack{S_1, \ldots, S_{\users},\\ \alg_{\text{multi}}}}\left[\left.\frac{1}{\users}\sum_{j \in [\users]}\er(\dist_j,g_j) \right|\text{ not }\mathcal{B}\right] = \\
        & \Exp_{\substack{S_1, \ldots, S_{\users},\\ i_1 ,\ldots, i_\users,\\\alg_{\text{meta}}, \alg_{\text{spec}}}}\left[\left.\frac{1}{\users}\sum_{j \in [\users]}\er(\dist_j,\alg_{\text{meta}}(S_1', \ldots, S_\users')(S_j^{\text{spec}}) \right|\text{ not }\mathcal{B}\right] =\\
        & \Exp_{\substack{S_1\setminus S_1^{\text{spec}}, \ldots, S_{\users}\setminus S_{\users}^{\text{spec}},\\ i_1 ,\ldots, i_\users ,\\\alg_{\text{meta}}, \alg_{\text{spec}}}}\left[\left.\frac{1}{\users}\sum_{j \in [\users]}\Exp_{S_j^{\text{spec}}}\left[\er(\dist_j,\alg_{\text{meta}}(S_1', \ldots, S_\users')(S_j^{\text{spec}})\right] \right|\text{ not }\mathcal{B}\right] =\\
         & \Exp_{\substack{S_1\setminus S_1^{\text{spec}}, \ldots, S_{\users}\setminus S_{\users}^{\text{spec}},\\ i_1,\ldots, i_\users,\\\alg_{\text{meta}}, \alg_{\text{spec}}}}\left[\left.\Exp_{j,S}\left[\er(\dist_j,\alg_{\text{meta}}(S_1', \ldots, S_\users')(S)\right] \right|\text{ not }\mathcal{B}\right] =\\
        & \Exp_{\substack{i_1, \ldots, i_\users,\\ S_1', \ldots, S_\users',\\ \alg_{\text{meta}}, \alg_{\text{spec}}}}\left[\left.\Exp_{j,S}\left[\er(\dist_j,\alg_{\text{meta}}(S_1', \ldots, S_\users')(S)\right] \right |\text{ not }\mathcal{B}\right] \leq \\
        & \min_{\rep \in \repclass}\Exp_{j}\left[ \min_{\per \in \perclass} \er(\dist_J, \per \circ \rep)\right] + 2\eps =\\
        & \min_{\rep \in \repclass, \per_1, \ldots, \per_\users \in \perclass} \frac{1}{\users} \sum_{j \in [\users]}\er(\dist_j, \per_j \circ \rep)+2\eps.
    \end{align*}
    By Markov's inequality we can bound the corresponding probability
    \begin{align*}
        &\Pr[S_1, \ldots, S_\users, \alg_{\text{multi}}]{\left.\frac{1}{\users}\sum_{j \in [\users]} \er (\dist_j, g_j) - \min_{\rep \in \repclass, \per_1, \ldots, \per_{\users}\in \perclass} \frac{1}{\users}\sum_{j \in [\users]} \er(\dist_j, \per \circ \rep) > c\eps \right| \text{ not }\mathcal{B}} \leq\\
        &\frac{1}{c\eps}\cdot \Exp_{S_1, \ldots, S_\users, \alg_{\text{multi}}}\left[\left.\frac{1}{\users}\sum_{j \in [\users]} \er (\dist_j, g_j) - \min_{\rep \in \repclass, \per_1, \ldots, \per_{\users}\in \perclass} \frac{1}{\users}\sum_{j \in [\users]} \er(\dist_j, \per \circ \rep) > c\eps \right| \text{not }\mathcal{B}\right] \leq \frac{2}{c}.
    \end{align*}
    Combining the results above we can show that for all $\dist_1, \ldots, \dist_\users$
    \begin{align*}
        &\Pr[S_1, \ldots, S_\users, \alg_{\text{multi}}]{\frac{1}{\users}\sum_{j \in [\users]} \er(\dist_j, g_j) > \min_{\rep \in \repclass, \per_1, \ldots, \per_\users \in \perclass} \frac{1}{\users}\sum_{j \in [\users]}\er(\dist_j, \per \circ \rep) + c\eps} \leq \\
        &\Pr[S_1, \ldots, S_\users, \alg_{\text{multi}}]{\left.\frac{1}{\users}\sum_{j \in [\users]} \er (\dist_j, g_j) - \min_{\rep \in \repclass, \per_1, \ldots, \per_{\users}\in \perclass} \frac{1}{\users}\sum_{j \in [\users]} \er(\dist_j, \per \circ \rep) > c\eps \right| \text{ not }\mathcal{B}} \\
        &+ \Pr[S_1, \ldots, S_\users, \alg_{\text{multi}}]{\mathcal{B}} \leq \frac{3}{c}.
    \end{align*}
    This concludes the proof.
\end{proof}
\fi

\ifnum \coltshort = 0
\section{Multitask Learning}\label{sec:mtl}
In this section, we bound the number of tasks and samples per task we need to multitask learn using the VC dimension. Moreover, we provide upper and lower bounds of this VC dimension for general classes of representations and specialized classifiers.

\subsection{Sample and Task Complexity Bounds for General Classes}

Here we prove that the VC dimension of the composite class $\perclass^{\otimes \users}\circ \repclass$ determines the total number of samples required for multitask learning.

\newcommand{\thmmtlsmpls}{For any $(\repclass,\perclass)$, any $\eps,\delta > 0$, and any $\users, \samples$,

    \begin{enumerate}
        \item In the realizable case, $(\repclass, \perclass)$ is $(\eps,\delta)$-multitask learnable with $\users$ tasks and $\samples$ samples per task when $\samples \users = O(\frac{\VC(\perclass^{\otimes \users}\circ \repclass) \cdot \ln(1/\varepsilon)+\ln\left(1/\delta\right)}{\varepsilon})$.

        \item In the agnostic case, $(\repclass, \perclass)$ is $(\eps,\delta)$-multitask learnable with $\users$ tasks and $\samples$ samples per task when $\samples \users = O(\frac{\VC(\perclass^{\otimes \users}\circ \repclass) \cdot \ln\left(1/\delta\right)}{\varepsilon^2})$.

        \item If $ \samples \users \leq \frac14 \cdot \VC(\perclass^{\otimes \users}\circ \repclass)$ then $(\repclass,\perclass)$ is \emph{not} $(1/8,1/8)$-multitask learnable with $\users$ tasks and $\samples$ samples per task, even in the realizable case.
    \end{enumerate}
}

\begin{thm} \label{thm:mtl-smpls}
   \thmmtlsmpls
\end{thm}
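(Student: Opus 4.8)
The plan is to identify multitask learning with $\users$ tasks and $\samples$ samples per task with (distribution-free) PAC learning of the single composite class $\mathcal{C}:=\perclass^{\otimes \users}\circ \repclass$ of functions on $[\users]\times\featdom$, but over the restricted family of ``task-stratified'' distributions. Given tasks $\dist_1,\dots,\dist_\users$ on $\featdom\times\bits$, let $\bar\dist$ be the distribution on $[\users]\times\featdom\times\bits$ that draws $j$ uniformly from $[\users]$ and then $(\feat,\lab)\sim\dist_j$. For any $g:[\users]\times\featdom\to\bits$ we then have $\frac{1}{\users}\sum_{j}\er(\dist_j,g(j,\cdot)) = \er(\bar\dist,g)$, the multitask benchmark equals $\min_{g\in\mathcal{C}}\er(\bar\dist,g)$, and the learner's input --- $\samples$ i.i.d.\ points from each $\dist_j$ --- is a stratified sample of total size $\samples\users$ from $\bar\dist$. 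The whole theorem then reduces to transporting the usual VC generalization and no-free-lunch lower bounds to this stratified setting.

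\emph{Upper bounds (items 1 and 2).} I would run ERM over $\mathcal{C}$: output $\hat g\in\mathcal{C}$ minimizing the empirical multitask error $\frac{1}{\samples\users}\sum_{j,i}\ind\{g(j,\feat_{j,i})\neq\lab_{j,i}\}$ (well defined, as it takes finitely many values). The point is that for a fixed $g$ this quantity is an average of $\samples\users$ \emph{independent} (not identically distributed) $\{0,1\}$ variables with mean exactly $\er(\bar\dist,g)$, so Hoeffding applies; the standard double-sample symmetrization then lets me union-bound over the infinite class $\mathcal{C}$ --- it goes through verbatim because the ghost sample is drawn from the same stratified distribution --- and Sauer--Shelah bounds the number of relevant labelings by $\growth_{\mathcal{C}}(2\samples\users)\le (2e\samples\users/\VC(\mathcal{C}))^{\VC(\mathcal{C})}$. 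This gives uniform convergence to within $\eps/2$ once $\samples\users = O((\VC(\mathcal{C})+\ln(1/\delta))/\eps^2)$, and then the usual ERM analysis yields item 2. For item 1 the benchmark is $0$, so some $g^*\in\mathcal{C}$ has zero population multitask error, hence zero empirical error, so ERM attains empirical error $0$; and a fixed $g$ with $\er(\bar\dist,g)>\eps$ has probability at most $\prod_j(1-\er(\dist_j,g(j,\cdot)))^{\samples} \le \exp\!\big(-\samples\sum_j \er(\dist_j,g(j,\cdot))\big) < e^{-\samples\users\eps}$ of achieving zero empirical error, so combining this with symmetrization and Sauer--Shelah exactly as in the realizable PAC bound gives the faster rate $\samples\users = O((\VC(\mathcal{C})\ln(1/\eps)+\ln(1/\delta))/\eps)$.

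\emph{Lower bound (item 3).} This is a no-free-lunch argument for $\mathcal{C}$. Fix any learner, set $D=\VC(\mathcal{C})$, and take a shattered set $\{(j_1,\feat_1),\dots,(j_D,\feat_D)\}$. Build a hard family of tasks by letting $\dist_j$ be essentially the uniform distribution over $\{\feat_i : j_i = j\}$ (a fixed dummy point for tasks not appearing among the $j_i$), and label all $D$ points according to a uniformly random $\sigma\in\bits^{D}$; by shattering there is always a representation together with specializations fitting $\sigma$ exactly, so the realizable benchmark is $0$. Since the learner sees only $\samples\users\le D/4$ labeled points it is ignorant of the $\sigma$-label of a constant fraction of the shattered coordinates, and on each such coordinate $\hat g$ errs with probability $1/2$ over $\sigma$; summing these errors against their $\bar\dist$-masses lower bounds $\Exp_{\sigma,S}[\text{multitask error of }\hat g]$ by a universal constant, and since the multitask error is at most $1$ a reverse-Markov (averaging) argument converts this into the statement that the multitask error exceeds $1/8$ with probability more than $1/8$. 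The delicate point here --- and the main obstacle --- is that the multitask objective divides by $\users$, so the construction must control how the $D$ shattered coordinates, and the $\dist_j$-masses placed on them, are distributed across the $\users$ tasks in order to push the \emph{average} per-task error above an absolute constant; by contrast the upper bounds are routine once one observes that stratified rather than i.i.d.\ sampling does not break symmetrization.
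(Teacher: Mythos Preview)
Your upper-bound arguments (items 1 and 2) are correct and follow essentially the same route as the paper: ERM over the composite class, double-sample symmetrization adapted to the stratified (independent but not identically distributed) setting, and Sauer--Shelah on $\perclass^{\otimes\users}\circ\repclass$.

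For item 3, you correctly identify the obstacle but do not resolve it. Your proposed construction---$\dist_j$ uniform over $\{\feat_i : j_i=j\}$, with a dummy point for empty tasks---fails when the shattered set is concentrated in few task indices. For instance, if all $D$ shattered points share the same index $j^*$, the error on task $j^*$ is near $1/2$ but every other task (being a deterministic dummy) has error $0$, so the average multitask error is $O(1/\users)$, not a constant. The paper's fix is to \emph{redistribute} the shattered points across tasks: starting from a shattered set of size $4\samples\users$, trim each task's bucket down to a multiple of $2\samples$ (losing at most $2\samples\users$ points total), then for any now-empty task $j$ move a block of points from a task $k$ that has surplus into task $j$ by simply \emph{relabeling their task index} from $k$ to $j$. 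The resulting set $S'$ has at least $2\samples$ points per task. Crucially, $S'$ is still shattered by $\perclass^{\otimes\users}\circ\repclass$: if a labeling of the original set used specialization $\per_k$ for task $k$, the relocated copies in task $j$ are handled by assigning $\per_j$ to be that same function, keeping the same representation $\rep$. With each $\dist_j$ now uniform over at least $2\samples$ points, a fresh draw from $\dist_j$ is unseen with probability at least $1/2$, and the rest of your averaging/reverse-Markov argument goes through to give expected multitask error at least $1/4$ and hence the $(1/8,1/8)$ conclusion. This relabeling-preserves-shattering step is the missing idea.
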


The proof of this theorem\ifnum \coltshort = 0, presented in \Cref{sec:mtl-smples-proof},\fi\ closely follows the standard proofs on VC dimension which upper and lower bound on the sample complexity of distribution-free classification.  The main difference is that the samples we analyze, while independent, are not identically distributed. 
This is because each sample comes from a specific task distribution and these distributions are potentially different.
\ifnum \coltshort = 1
\begin{proof}
We’ll start by proving part 1, which covers the realizable case.  We will omit the proof of part 2, which generalizes the realizable case in a standard way. Finally, we will prove part 3.

\medskip\emph{Proof of 1.}
For every task $j \in [\users]$ we have $\samples$ i.i.d.\ samples $S_j = \{(\feat_i^{(j)}, \lab_i^{(j)})\}_{i \in [\samples]}$ drawn from $\dist_j$. Our dataset is equivalent to dataset $S = \{(j, \feat_i^{(j)}, \lab_i^{(j)})\}_{i \in [\samples], j \in [\users]}$, where the $j$s have fixed values. In standard PAC learning we assume that all the datapoints are i.i.d. However, in our case the samples are independent but not identically distributed because different tasks have (potentially) different distributions.

As in standard PAC learning, our proof follows the ``double-sampling trick''. We want to bound the probability of bad event 
\begin{align*}
    B: \exists g \in \perclass^{\otimes}\circ\repclass \textrm{ s.t. } \er(S, g)=0,\textrm{ but }\frac{1}{\users} \sum_{j \in [\users]} \er(\dist_j, g(j,\cdot)) > \varepsilon.
\end{align*}

We consider an auxiliary dataset $\hat{S} = \{(j, \hat{\feat}_i^{(j)}, \hat{\lab}_i^{(j)})\}_{i \in [\samples], j \in [\users]}$, where $(\hat{\feat}_i^{(j)},  \hat{\lab}_i^{(j)})$ are drawn independently from $\dist_j$. In our proof we will bound the probability of $B$ by bounding the probability of event 
\begin{align*}
    B': \exists g \in \perclass^{\otimes}\circ\repclass \textrm{ s.t. } \er(S, g)=0,\textrm{ and }  \er(\hat{S}, g) > \frac{ \varepsilon}{2}.
\end{align*}

We can show that if $\samples \users > \frac{8}{\varepsilon}$, then $\Pr[S]{B} \leq 2 \Pr[S, \hat{S}]{B'}$. We can show this by applying a multiplicative Chernoff bound on $\er(\hat{S},g)$, which is an average of independent random variables. Due to this step, it suffices to bound $\Pr[S, \hat{S}]{B'}$.

We also define a third event $B''$ as follows. We give $S$ and $\hat{S}$ as inputs to randomized process \textit{Swap} which iterates over $j \in [\users]$ and $i \in [\samples]$ and at every step it swaps $(\feat_i^{(j)}, \lab_i^{(j)})$ with $(\hat{\feat}_i^{(j)}, \hat{\lab}_i^{(j)})$ with probability $1/2$. Let $T$ and $\hat{T}$ be the two datasets this process outputs. We define event $$B'': \exists g \in \perclass^{\otimes \users}\circ \repclass \textrm{ s.t. } \er(T,g) = 0 \textrm{ and } \er(\hat{T}, g) > \frac{\varepsilon}{2}.$$

We see that $\Pr[S, \hat{S}, \textit{Swap}]{B''} = \Pr[S, \hat{S}]{B'}$. This happens because $T$, $\hat{T}$, $S$ and $\hat{S}$ are identically distributed. Thus, what we need to do now is bound $\Pr[S, \hat{S}, \textit{Swap}]{B''}$. 

 We start by showing that for a fixed $g$ $$\Pr[\textit{Swap}]{ \er(T,g) = 0 \textrm{ and } \er(\hat{T},g) > \frac{\varepsilon}{2}\mid S, \hat{S}} \leq 2^{-nt\varepsilon/2}.$$ Given $S$ and $\hat{S}$, $B''$ happens if for every $j \in [\users]$ and $i \in [\samples]$ $g$ predicts the label of $\feat_i^{(j)}$ or $\hat{\feat}_i^{(j)}$ correctly and makes $m > \varepsilon \samples \users/2$ mistakes overall. Additionally, all $m$ mistakes $g$ makes are in dataset $\hat{T}$. This means that \textit{Swap} assigns all these points to $\hat{T}$, which happens with probability $1/2^m \leq 1/2^{\varepsilon \samples \users/2}$. 

 Let $\left(\perclass^{\otimes \users}\circ \repclass\right) (S \cup \hat{S}) \subset \perclass^{\otimes \users}\circ \repclass$ be a set of hypotheses which contains one hypothesis for every labeling of $S\cup\hat{S}$. Then, 

\begin{align*}
    \Pr[S, \hat{S}, \textit{Swap}]{B''}& = \Exp_{S,\hat{S}}\left[\Pr[\textit{Swap}]{ \exists g \in  \perclass^{\otimes \users}\circ \repclass \textrm{ s.t. }\er(T,g) = 0 \textrm{ and } \er(\hat{T},g) > \frac{\varepsilon}{2}\mid S, \hat{S}}\right]\\
    &\leq \Exp_{S,\hat{S}}\left[\sum_{g \in\perclass^{\otimes \users}\circ \repclass (S \cup \hat{S}) }\Pr[\textit{Swap}]{ \er(T,g) = 0 \textrm{ and } \er(\hat{T},g) > \frac{\varepsilon}{2}\mid S, \hat{S}}\right]\\
    & \leq \growth_{\perclass^{\otimes \users}\circ \repclass}(2\samples \users) 2^{-\varepsilon \samples \users/2}.
\end{align*}

For the probability of bad event $B$ happening to be at most $\delta$, by the steps above we need $\samples \users \geq 2 \frac{\log_2 \growth_{\perclass^{\otimes \users}\circ \repclass}(2\samples \users) + \log_2(2/\delta)}{\varepsilon}$ samples in total. 

By Lemma \ref{lem:sauer},  for $\samples\users > \VC(\perclass^{\otimes \users}\circ \repclass)$,
$$\log_2\paren{\growth_{\perclass^{\otimes \users}\circ \repclass}(2nt)}
\leq 
\VC(\perclass^{\otimes \users}\circ \repclass)
\log_2\left(\frac{e 2\samples \users}{\VC(\perclass^{\otimes \users}\circ \repclass)}\right)
.$$
One can show that if $\samples \users \geq \frac{\VC(\perclass^{\otimes \users}\circ \repclass)}{\varepsilon}\log_2\left(\frac{2e}{\varepsilon}\right)$, then $\samples \users \geq \log_2\left(\frac{e 2\samples \users}{\VC(\perclass^{\otimes \users}\circ \repclass)}\right)\frac{\VC(\perclass^{\otimes \users}\circ \repclass)}{\varepsilon}.$ Therefore, we get that for $\samples \users \geq \frac{\VC(\perclass^{\otimes \users}\circ \repclass)\log_2(2e/\varepsilon)+\log_2(2/\delta)}{\varepsilon}$ samples in total the probability of bad event $B$ is at most $\delta$. 

\medskip\emph{Proof of 3.}
    Our goal is to construct distributions $\dist_1,\ldots, \dist_{\users}$ over $\featdom \times \{\pm 1\}$. We will first build their support and then define the probability distributions. 
    
    Let $\vc = \VC(\perclass^{\otimes \users}\circ \repclass)$. Since
    $4\samples \users \leq \vc$, there exists a dataset $S = \{(j_i,x_i)\}_{i \in [4\samples \users]}$ that can be shattered by $\perclass^{\otimes \users}\circ \repclass$. 
    Let $S_j = \{(j_i,x_i) \in S\mid j_i = j\}$. In general for every task $j$ the size of $S_j$, i.e.\ $|S_j|$, will be different. 
    We want to use $S$ to get a new dataset $S'$ that can be shattered by $\perclass^{\otimes \users}\circ \repclass$, but also has at least $2\samples$  
    points for every task. 
    For every $j \in [\users]$ we throw away points from $S_j$ until we have a multiple of $2\samples$. After doing this, we have thrown away at most $2\samples$ per task, which is at most $2\samples \users$ points in total. We can redistribute points so that we have at least $2\samples$ points per task as follows. For every task $j \in [\users]$, if there are no points in this task, there must be another task $k$ with at least $4\samples \users$ points. In this case, we move $\samples \users$ points from task $k$ to task $j$ by replacing $k$ with $j$ in $(k,x_i)$. 
    We denote
    this new dataset by $S'$ and the subset for task $j$, by $S_j'$. 
    
    We claim that $S'$
    , which has at least $2\samples$ points per task, can also be shattered by $\perclass^{\otimes \users}\circ \repclass$. To see this, suppose that $S$ was shattered by $\tilde{g}$. Now, if task $j$ did not lose all its points, the remaining points can be shattered by $\tilde{g}(j,\cdot)$. Otherwise, we are in the case where $j$'s points were initially assigned to task $k$, so they can be shattered by $\tilde{g}(k,\cdot)$.

    Let $\paren{\perclass^{\otimes \users}\circ \repclass}(S')$ be a set of hypotheses that contains one function for each labeling of $S'$. We choose a labeling function $g$ uniformly at random from $\paren{\perclass^{\otimes \users}\circ \repclass}(S')$. For all tasks $j \in [\users]$ we define $\dist_j$ 
    as the distribution of $(x,y)$ obtained by sampling $x$ uniformly from $S'_j$ (ignoring the task index in the sample) and labeling it according to $g$.

    Suppose that the (potentially randomized) learning algorithm $\alg$ returns $\hat{g}$ after seeing datasets $\hat{S}_1, \ldots, \hat{S}_\users$, where every $\hat{S}_j$ has $\samples$ points drawn i.i.d.\ from $\dist_j$. For a fixed task $j$, the probability that $\hat{g}$ makes a mistake on a new point $(x,y)$ drawn from $\dist_j$ is 
    \begin{align*}
        \Pr[g,\hat{S}_j, (x,y) \sim \dist_j]{\hat{g}(j,x) \neq y} & \geq \Pr[g,\hat{S}_j, (x,y)\sim \dist_j]{\hat{g}(j,x) \neq y \textrm{ and } x \notin \hat{S}_j}\\
        & = \Pr[g,\hat{S}_j, (x,y)\sim \dist_j]{x \notin \hat{S}}\Pr[g,\hat{S}, (x,y)\sim \dist_j]{\hat{g}(j,x) \neq y\mid x \notin \hat{S}_j}.
    \end{align*}
    
    We know that $\Pr[\hat{S}_j, (x,y)\sim \dist_j]{x \notin \hat{S}}\geq 1/2$ because $x$ is chosen uniformly at random out of more than $2\samples$ points that are in $S_j'$ and $\hat{S}_j$ has only $\samples$
    points. When $x \notin \hat{S}_j$ the algorithm has not seen the label that corresponds to this $x$ and, thus, $y=g(j,x)$ is independent of $\hat{g}(j,x)$. We have picked $g$ uniformly at random from a class with exactly one function per labeling, which means that for each $(j,x)$ we see $+1$ and $-1$ with equal probability. Therefore, $\Pr[g,\hat{S}_j, (x,y)\sim \dist_j]{\hat{g}(j,x) \neq y\mid x \notin \hat{S}_j} = 1/2$. Thus, $\Pr[g,\hat{S}_j, (x,y) \sim \dist_j]{\hat{g}(j,x) \neq y} \geq 1/4$.

    The average error is 
    \begin{align*}
        \frac{1}{\users} \sum_{j \in [\users]} \Pr[g,\hat{S}_j, (x,y) \sim \dist_j]{\hat{g}(j,x) \neq y} \geq \frac{1}{4}.
    \end{align*}
    In expectation over the labeling functions $g$ and the randomness of algorithm $\alg$, we have 
    \begin{align*}
        \Exp_{g, \alg}\left[\frac{1}{\users} \sum_{j \in [\users]} \Pr[\hat{S}_j, (x,y) \sim \dist_j]{\hat{g}(j,x) \neq y}\right] \geq \frac{1}{4}
    \end{align*}
    Hence, there exists a labeling function $g$ in $\perclass^{\otimes \users}\circ \repclass(S')$ such that \[\Exp_{\alg}\left[\frac{1}{\users} \sum_{j \in [\users]} \Pr[\hat{S}_j, (x,y) \sim \dist_j]{\hat{g}(j,x) \neq y}\right] \geq \frac{1}{4}.\] For this $g$ we have that 
    \begin{align*}
        &\Exp_{\alg}\left[\frac{1}{\users} \sum_{j \in [\users]} \Pr[\hat{S}_j, (x,y) \sim \dist_j]{\hat{g}(j,x) \neq y}\right]\\
        &= \Exp_{\alg}\left[\frac{1}{\users} \sum_{j \in [\users]} \Exp_{\hat{S}_j}[\er(\dist_j,\hat{g}(j,\cdot))]\right]\\
        & = \Exp_{\alg,\hat{S}_1, \ldots, \hat{S}_\users}\left[\frac{1}{\users} \sum_{j \in [\users]}\er(\dist_j,\hat{g}(j,\cdot))\right]\\
        & = \Pr[\alg,\hat{S}_1, \ldots, \hat{S}_\users]{\frac{1}{\users} \sum_{j \in [\users]}\er(\dist_j,\hat{g}(j,\cdot))>\frac{1}{8}}\Exp_{\alg,\hat{S}_1, \ldots, \hat{S}_\users}\left[\frac{1}{\users} \sum_{j \in [\users]}\er(\dist_j,\hat{g}(j,\cdot)) \left| \sum_{j \in [\users]}\er(\dist_j,\hat{g}(j,\cdot)) > \frac{1}{8}\right.\right]\\
        &+\Pr[\alg,\hat{S}_1, \ldots, \hat{S}_\users]{\frac{1}{\users} \sum_{j \in [\users]}\er(\dist_j,\hat{g}(j,\cdot))<\frac{1}{8}}\Exp_{\alg,\hat{S}_1, \ldots, \hat{S}_\users}\left[\frac{1}{\users} \sum_{j \in [\users]}\er(\dist_j,\hat{g}(j,\cdot)) \left| \sum_{j \in [\users]}\er(\dist_j,\hat{g}(j,\cdot)) < \frac{1}{8}\right.\right]\\
        &\leq \Pr[\alg,\hat{S}_1, \ldots, \hat{S}_\users]{\frac{1}{\users} \sum_{j \in [\users]}\er(\dist_j,\hat{g}(j,\cdot))>\frac{1}{8}} + \frac{1}{8}
    \end{align*}
    Thus, we showed that there exist $\dist_1, \ldots, \dist_\users$ such that $\Pr[\alg,\hat{S}_1, \ldots, \hat{S}_\users]{\frac{1}{\users} \sum_{j \in [\users]}\er(\dist_j,\hat{g}(j,\cdot))>\frac{1}{8}} \geq \frac{1}{8}$.
\end{proof}
\fi

\subsection{Generic Bounds on \texorpdfstring{$\VC(\perclass^{\otimes \users}\circ \repclass)$}{Lg}}
\label{sec:generic-VC-composite}

Ideally, given $\repclass$ and $\perclass$ we would like to be able to determine the VC dimension of the composite class $\perclass^{\otimes \users}\circ \repclass$. 
For general classes, we can state a few simple upper and lower bounds.
For one lower bound, consider the \emph{easier} problem where someone gives us the representation $\rep$ and we only need to find the specialized classifiers $\per_1,\dots,\per_\users$ for each task.  Even in this simpler setting we need $\VC(\perclass)$ samples per task, and thus the total sample complexity is $t \VC(\perclass)$.  For another lower bound, consider the easier problem where all specialized classifiers are the same, so the data is labeled by a single concept $\per \circ \rep \in \perclass \circ \repclass$, which requires total sample complexity $\VC(\perclass \circ \repclass)$.

For an upper bound, there is a na\"ive strategy for multitask learning where we treat the data for each task in isolation and simply try to learn a classifier from $\perclass \circ \repclass$. This requires sample complexity $\VC(\perclass \circ \repclass)$ per task.

Lemma~\ref{lem:vc-bounds} formalizes these results.\ifnum \coltshort = 0
Its proof appears in \Cref{sec:vc-bounds-proof}.
\fi

\newcommand{\lemvcbounds}{For any representation class $\repclass$ that contains a surjective function, any class of specialized classifiers $\perclass$, and any $\users$,
\[
    \max \left\{\users \cdot \VC(\perclass), \VC(\perclass \circ \repclass) \right\} \leq \VC(\perclass^{\otimes \users}\circ \repclass) \leq \users \cdot \VC(\perclass \circ \repclass).
\]
The upper bound still holds even if $\repclass$ does not contain a surjective function.}

\begin{lem}
\label{lem:vc-bounds}
\lemvcbounds
\end{lem}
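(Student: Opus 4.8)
The plan is to establish the three inequalities separately, since each captures a different "easy instance" of the multitask problem. For the lower bound $\users\cdot\VC(\perclass)\le\VC(\perclass^{\otimes\users}\circ\repclass)$, I would fix a surjective $\rep\in\repclass$ and pull back a shattered set. Let $v=\VC(\perclass)$ and let $z_1,\dots,z_v\in\mapdom$ be shattered by $\perclass$. Using surjectivity of $\rep$, pick preimages $x_i$ with $\rep(x_i)=z_i$. Then the set $\{(j,x_i):j\in[\users],i\in[v]\}$ of $\users v$ points in $[\users]\times\featdom$ is shattered by $\perclass^{\otimes\users}\circ\rep$: given any target labeling $b_{j,i}$, for each task $j$ choose $\per_j\in\perclass$ realizing the pattern $i\mapsto b_{j,i}$ on $z_1,\dots,z_v$, and then $\conc(j,x_i)=\per_j(\rep(x_i))=\per_j(z_i)=b_{j,i}$. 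For the second lower bound $\VC(\perclass\circ\repclass)\le\VC(\perclass^{\otimes\users}\circ\repclass)$, note that any single composite $\per\circ\rep\in\perclass\circ\repclass$ is realized by the multitask concept that uses this same $\rep$ and sets every $\per_j=\per$; so a set $x_1,\dots,x_s\in\featdom$ shattered by $\perclass\circ\repclass$ becomes $\{(1,x_1),\dots,(1,x_s)\}$ shattered by $\perclass^{\otimes\users}\circ\repclass$ (all points assigned to task $1$). Neither of these lower-bound arguments uses the surjectivity hypothesis for the second inequality, but the first one does—this is where I expect to need care, and it is the main (mild) obstacle: without surjectivity one cannot guarantee that the shattered set of $\perclass$ is hit by $\rep$, so only the $\VC(\perclass\circ\repclass)$ lower bound and the upper bound survive, matching the lemma's last sentence.

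For the upper bound $\VC(\perclass^{\otimes\users}\circ\repclass)\le\users\cdot\VC(\perclass\circ\repclass)$, I would argue via the growth function and the Sauer--Shelah lemma (Lemma~\ref{lem:sauer}), or directly by a counting/pigeonhole argument. Suppose a set $T=\{(j_1,x_1),\dots,(j_N,x_N)\}$ of size $N$ in $[\users]\times\featdom$ is shattered by $\perclass^{\otimes\users}\circ\repclass$. Partition $T$ by task index: $T=\bigsqcup_{j\in[\users]}T_j$ with $|T_j|=N_j$ and $\sum_j N_j=N$. The key observation is that the labelings that $\perclass^{\otimes\users}\circ\repclass$ can induce on $T$ are exactly the product over $j$ of the labelings that $\perclass\circ\repclass$ can induce on the $\featdom$-projection of $T_j$ (since a single $\rep$ is shared, but the number of $(j,x)$-pairs per task is all that matters once we observe that choosing $\rep$ and then $\per_j$ independently per task gives all combinations—wait, the shared $\rep$ is a genuine constraint). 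So here the cleaner route is: if $T$ is shattered, then in particular for each fixed task $j$ the restriction $T_j$ must be shattered by $\{x\mapsto\per(\rep(x)):\per\in\perclass\}$ for the specific $\rep$ witnessing the shattering—but since shattering $T$ requires realizing all $2^N$ patterns and each pattern restricts to a pattern on $T_j$, each $T_j$ is shattered by $\perclass\circ\repclass$, hence $N_j\le\VC(\perclass\circ\repclass)$, and summing gives $N\le\users\cdot\VC(\perclass\circ\repclass)$.

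The cleanest version of the upper-bound argument is thus: if $T$ is shattered, the projection of $T_j$ onto $\featdom$ is shattered by $\perclass\circ\repclass$ (any desired pattern on $T_j$ extends to a pattern on all of $T$, which is realized by some $\per_1\circ\rep,\dots,\per_\users\circ\rep$, and then $\per_j\circ\rep$ realizes the desired pattern on $T_j$); therefore $N_j\le\VC(\perclass\circ\repclass)$ for every $j$, and $N=\sum_{j\in[\users]}N_j\le\users\cdot\VC(\perclass\circ\repclass)$. Two small caveats to handle: the projection of $T_j$ might have repeated $x$-values if the same $x$ appears with two different task indices—but within a single $T_j$ all points share the task index $j$, so distinctness of points in $T$ forces distinctness of the $x$-coordinates within $T_j$, and the argument goes through. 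I expect the write-up to be short; the only genuinely delicate point, as noted, is the role of surjectivity in the $\users\cdot\VC(\perclass)$ lower bound, which I would state explicitly and note is only needed there.
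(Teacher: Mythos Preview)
Your proposal is correct and follows essentially the same approach as the paper's proof: both arguments split a shattered set by task index and bound each piece by $\VC(\perclass\circ\repclass)$ for the upper bound, and both use a surjective $\rep$ to pull back a $\perclass$-shattered set and replicate it across tasks for the $\users\cdot\VC(\perclass)$ lower bound. The only cosmetic difference is that for the $\VC(\perclass\circ\repclass)$ lower bound you place all points in task~$1$ while the paper allows arbitrary task indices, and your explicit remark about distinctness of $x$-coordinates within each $T_j$ is a detail the paper leaves implicit.
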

\ifnum \coltshort = 1
\begin{proof}
    We will show the two parts of the statement separately.

    \medskip\emph{Proof of the upper bound on VC dimension.}
    Assume that we have a dataset $X= \left( (j_1, \feat_1), \ldots, (j_n,\feat_n)\right)$ of size $n = \VC(\perclass^{\otimes \users}\circ \repclass)$ which can be shattered by $\perclass^{\otimes \users}\circ \repclass$. We split it into $\users$ disjoint datasets $X_1,\ldots,X_\users$, where $X_j = \{ \feat_i: (j,\feat_i) \in X\}$. Each one of these datasets can be shattered by $\perclass\circ \repclass$.
    
    Let $n_j$ be the size of dataset $X_j$. Then, we have that $n_j \leq \VC(\perclass \circ \repclass)$. As a result, we obtain $\VC(\perclass^{\otimes \users}\circ \repclass) = \sum_{j \in [\users]}n_j \leq \users\VC(\perclass \circ \repclass)$.
    
    \medskip\emph{Proof of the lower bound on VC dimension.}
    We will first show that $\VC(\perclass^{\otimes \users}\circ \repclass) \geq \VC(\perclass \circ \repclass)$. Suppose $X = (\feat_1, \ldots, \feat_n)$ is a dataset of size $n$ that can be shattered by class $\perclass\circ \repclass$. 
    Then, for any $j_1, \ldots, j_n \in [\users]$, the dataset $((j_1, \feat_1), \ldots, (j_n, \feat_n))$ can be shattered by $\perclass^{\otimes \users}\circ \repclass$. 
    To see this, fix a labeling $(\lab_1, \ldots, \lab_n)$ of $((j_1, \feat_1), \ldots, (j_n, \feat_n))$.
    Since we assumed $X$ could be shattered by $\perclass\circ \repclass$, there exists $\per^* \in \perclass$ and $\rep^*\in\repclass$ such that $\forall i \in [n]$, $\lab_i = \per^*(\rep^*(\feat_i))$. 
    Thus, there is a function in $\perclass^{\otimes t}\circ \repclass$ (namely, the one with  representation $\rep^*$ and all $t$ personalization functions equal to $\per^*$) that realizes this labeling.
    Therefore, $\VC(\perclass^{\otimes \users}\circ \repclass) \geq \VC(\perclass \circ \repclass)$.

    Next, we will prove that $\VC(\perclass^{\otimes \users}\circ \repclass) \geq \users  \VC(\perclass)$. Let $(\map_1, \ldots, \map_n) \in \mapdom^n$ be a dataset that $\perclass$ can shatter. Since there exists an $\rep \in \repclass$ whose image is $\mapdom$, there exist $(\feat_1, \ldots, \feat_n) \in \featdom^n$ such that $\forall i \in [n]$ $\rep(\feat_i)=\map_i$. 
    We now construct a new dataset $\cup_{j \in [\users]}\left\{(j,\feat_1)\ldots, (j,\feat_n)\right\}$, which has $n\users$ datapoints. 
    Our function class $\perclass^{\otimes \users}\circ \repclass$ can shatter this dataset. 
    To see this: for any labeling we split the dataset to $\users$ parts according to the value of $j$, use $\rep$ to get $(\map_1, \ldots, \map_n)$ for each part. We then label each part using an $f \in \perclass$. This means that there exists a dataset of size $\users  \VC(\perclass)$ that $\perclass^{\otimes \users}\circ \repclass$ can shatter. Thus, $\VC(\perclass^{\otimes \users}\circ \repclass) \geq \users  \VC(\perclass)$.
\end{proof}
\fi
See \Cref{sec:finite_classes} for more precise bounds that hold for finite classes.

\fi
\section{Metalearning}
\label{sec:meta}
\ifnum \coltshort = 0
In this section, we bound the number of tasks and samples per task needed to metalearn general classes of representations and specialized classifiers. After laying out the basic definitions (\Cref{sec:meta-model}), we illustrate the techniques used to achieve these results via the special case of monotone thresholds applied to 1-dimensional representations (\Cref{sec:real-tech}).  This special case corresponds to a natural setting where the representation assigns a real-valued score to each example, but the threshold for converting that score into a binary label may vary from task to task.  We then provide the main theorems and their proofs for the realizable (\Cref{sec:metalearn-realizable}) and the agnostic cases (\Cref{sec:metalearning_agnostic}), respectively.
\else In this section, we define the metalearning model and outline our arguments bounding the number of tasks and samples per task needed to properly metalearn general classes of representations and specialized classifiers. 
For complete proofs and additional discussion see \Cref{sec:meta_app}.
\fi 

\ifnum \coltshort = 1

\subsection{The Proper Metalearning Model}
\label{sec:meta-model}
Suppose there exists a distribution $\metadist$ of different but related binary classification tasks. For each task, specified by a distribution $\dist$, we can draw a dataset of labeled examples in $\featdom \times \bits$. In proper metalearning, we draw $\users$ tasks from distribution $\metadist$ and pool together the data from these tasks. We exploit the relatedness between the tasks to learn a common representation $\rep: \featdom \to \mapdom$ in $\repclass$ that maps the features to a new domain, in the hope that the learned representation facilitates learning new tasks from the same distribution $\metadist$ with fewer samples compared to the baseline where we had to learn those tasks from scratch.

The accuracy of the representation is measured based on how well we can use it to solve a new binary classification task drawn from the same distribution as the ones we have seen in the metalearning phase. The data efficiency of the metalearning algorithm is measured based on two parameters, the number of tasks and the number of samples per task. The \ktext{proper} metalearning setting is particularly interesting when the number of samples per task is too small to learn a good classifier for each task independently, so data must be pooled across tasks to find a representation that can be specialized to new tasks using less data than we would need to learn from scratch.

\begin{defn}[Proper Metalearning]\label{def:metalearn}
Let $\repclass$ be a class of representation functions $\rep: \featdom \to \mapdom$ and $\perclass$ be a class of specialized classifiers $\per: \mapdom \to \bits$. We say that $(\repclass,\perclass)$
is \emph{$(\varepsilon, \delta)$-\ktext{properly} meta-learnable for $\users$ tasks and $\samples$ samples per task}, if there exists an algorithm $\alg$ that for all metadistributions $\metadist$ over distributions on $\featdom \times \bits$ has the following property:
Given $\samples$ i.i.d.\ samples per task distribution $\dist_j$, where $j \in [\users]$, that was drawn indepedently from $\metadist$, $\alg$  returns representation $\hat{\rep} \in \repclass$ such that with probability at least $1-\delta$ over the randomness of the samples and the algorithm 
\begin{align}\label{eq:metalearn}
\rer\paren{\metadist, \hat{\rep}, \perclass} \leq \min_{\rep \in \repclass}\rer\paren{\metadist, \rep, \perclass} + \varepsilon.
\end{align}

If there exists an algorithm $\alg$ such that the same guarantee holds only for all metadistributions $\metadist$ such that
$ \min_{\rep \in \repclass} \rer(\metadist, \rep, \perclass) = 0$
then we say that $(\repclass, \perclass)$ is \emph{$(\eps, \delta)$-\ktext{properly} meta-learnable for $\users$ tasks and $\samples$ samples per task in the realizable case}.
\end{defn}


Definition~\ref{def:metalearn} only requires us to learn a good representation $\hat\rep$, and does not explicitly discuss the number of samples required to learn a good specialized classifier for a new task.  However, the sample complexity for specialization is essentially determined by the complexity of $\perclass$.  That is, given a new task $\dist'$ drawn from $\metadist$ and a representation $\hat{h}$ that satisfies \eqref{eq:metalearn}, $n_{\text{spec}}=O(\VC(\perclass)/\eps^2)$ suffice in order to find an $\per\in\perclass$ such that $\per\circ \hat\rep$ has error $\rer(\metadist, \hat{\rep}, \perclass) + 2\eps$. The size $n_{\text{spec}}$ of the data used for specialization could be much larger or smaller than the per-task training set size $n$.
\fi

\ifnum \coltshort = 0
\subsection{Warm Up and Techniques Overview}
\label{sec:real-tech}

In this subsection, we describe the technical tools we use to bound the number of samples and tasks needed to find a good representation $\hat{\rep}$.
As a warm-up, we first prove Theorem~\ref{thm:met-mon}, which covers metalearning for the class of monotone thresholds over linear representations in realizable case with minimal sample complexity.  For this example, we prove a stronger result than what is implied by our general bounds. This example considers the class of linear representations that map from $\datadim$ dimensions to $1$ dimension, $\repclass_{\datadim,1} = \left\{\rep \mid  \rep(\mathbf{\feat}) = \mathbf{b} \cdot \mathbf{\feat}, \mathbf{b} \in \R^{\datadim}\right\}$, and the class of specialized monotone thresholds 
$\perclass_{\textrm{mon}} = \{\per \mid  \per(\map) = \textrm{sign}(\map - w),$\ $ w \in \R \}$.   Then, we sketch the techniques for metalearning with more samples per task. Finally, we discuss extending our techniques to the agnostic case.

\mypar{Metalearning in the realizable case.} A data set $S \in (\mapdom \times \bits)^*$ is \emph{realizable} by a concept class $\perclass$ if there is an $f\in \perclass$ such that $y_i = f(\map_i)$ for every $(\map_i,y_i)\in S$. A distribution $\distint$ over $\mapdom \times \bits$ is realizable by $\perclass$ if there exists  $f\in \perclass$ such that $y=f(\map)$ with probability 1 over $(\map,y)\sim\distint$. In our metalearning model, realizability has two ``layers,'' one for the representation and one for the specialized classifiers. Let $\cP$ be a family of data distributions $\dist$ over $\featdom \times \bits$. We say that $\cP$ is {\em meta-realizable} by $\perclass \circ \repclass$ if there exists a shared representation function $\rep^*$ for which, for all $\dist \in \cP$, the test error of  $\rep^*$, $\rer(\dist,\rep^*,\perclass)$, is zero. In other words, there exists an $\rep^*$ such that for all $\dist \in \cP$ there exists a specialized classifiers $\per \in \perclass$ such that $\er(\dist, \per \circ \rep^*) = 0$. We say a meta distribution is {\em meta-realizable} if its support is meta-realizable.

By these definitions we see that in the realizable case there exists a representation $\rep^*$ in $\repclass$ such that all tasks drawn from distribution $\metadist$ are realizable by $\per \circ \rep^*$ for an $\per$ in $\perclass$. Therefore, during the training process we want to find a representation that allows for perfect classification of all the points of all the seen tasks. Our technical approach to achieve this while getting generalization for unseen tasks depends on the number of samples per task we have.

 The key component of our analysis is a class of binary functions that we call \emph{realizability predicates} (recall Definition~\ref{def:realizability_predicate}). Given a representation $\rep$, a dataset of $n$ points and a set of specialized classifiers $\perclass$, the realizability predicate returns $+1$ if and only if the mapping of the dataset using $\rep$ is realizable by a function in $\perclass$.

For instance, in class $\realpclass_{2, \perclass_{\textrm{mon}}, \repclass}$ the realizability predicate $\realp_\rep$ takes two labeled datapoints $(\mathbf{x}_1, y_1)$ and $(\mathbf{x}_2, y_2)$ and returns $+1$ if $\rep$ orders them correctly. More formally, given an $\rep \in \repclass$ 
\[
\realp_\rep(\mathbf{x}_1, y_1, \mathbf{x}_2, y_2) = 
\begin{cases}
-1, & y_1 = +1, y_2 = -1 \textrm{ and } \rep(\mathbf{x}_1) < \rep(\mathbf{x}_2)\\
-1, & y_1 = -1, y_2 = +1 \textrm{ and } \rep(\mathbf{x}_1) > \rep(\mathbf{x}_2)\\
+1, & \textrm{otherwise.}
\end{cases}
\]
We can think of $\realp_\rep$ as a classifier on ``data points'' of the form $(\mathbf{x}_1, y_1, \mathbf{x}_2, y_2)$,  indicating whether $h$ is a good representation or not.  

Since $\metadist$ is meta-realizable by $\perclass\circ \repclass$, there exists a representation $\rep^* \in \repclass$ for which all tasks have a specialized classifier that perfectly classifies all samples. This implies that the corresponding realizability predicate $\realp_{\rep^*} \in \realpclass_{n, \perclass, \repclass}$ outputs $+1$ for any $n$ samples drawn from a distribution $\dist \sim \metadist$. 

At a high level, we exploit the above fact about $\realp_{\rep^*}$ and class $\realpclass_{n, \perclass, \repclass}$. 
We consider a new data distribution $\cD$ that generates samples of the form $\zeta_j := ((x_1^{(j)}, y_1^{(j)}, \ldots, x_n^{(j)}, y_n^{(j)}),+1)$. The feature part of the sample is a task training set of size $n$ generated by a data distribution $\dist_j \sim \metadist: \{(x_i^{(j)}, y_i^{(j)})\}_{i \in [n]}$. The label part of the sample is always $+1$. See \Cref{fig:newDataset}. Note that the realizability predicate $\realp_{\rep^*}$ also labels the task training sets of size $n$ with $+1$:$$\realp_{\rep^*}((x_1^{(j)},y_1^{(j)}),\ldots, (x_n^{(j)},y_n^{(j)})) = +1.$$
Hence, we can say that the new data distribution $\cD$ is realizable by the hypothesis class $\realpclass_{n, \perclass, \repclass}$. Now, if we have $\users$ samples from $\cD$ for a sufficiently large $\users$ (that is $\users$ tasks $\dist_1, \ldots, \dist_{\users}$ drawn from $\metadist$ and $\samples$ samples from each), by the fundamental theorem of PAC learning, Theorem~\ref{fact:pac-vc}, we can PAC learn the class $\realpclass_{n, \perclass, \repclass}$ with respect to $\cD$. More precisely, we can show that any $\realp_{\hat{\rep}} \in \realpclass_{n, \perclass, \repclass}$ that labels samples $\zeta_j$'s correctly has low mislabeling error under $\cD$. Recall that all the labels according to $\cD$ were always $+1$. Thus, mislabeling a $\zeta_j$ implies that the realizability predicate $\realp_{\hat{\rep}}$ of the training set corresponding to $\zeta_j$ is false. Thus, the representation function we have found, which has a low error probability for mislabeling a $\zeta$, allows the realizability of the dataset corresponding to $\zeta$ with high probability. Hence, $\hat{\rep}$ is an accurate shared representation function which we need to metalearn $\metadist$ by $\perclass \circ \repclass$.

Going back to our example of monotone thresholds over linear representations, suppose that for every task $j \in [\users]$ we see two points $(\mathbf{x}_1^{(j)}, y_1^{(j)})$ and $(\mathbf{x}_2^{(j)}, y_2^{(j)})$ drawn from $P_j$, which itself is drawn from $\metadist$. Since we are in the realizable case, there is a representation $\rep^* \in \repclass_{\datadim,1}$ that always orders points from the same task correctly on the real line. Therefore, for every task $j \in [\users]$ we have $\realp_{\rep^*}(\mathbf{x}_1^{(j)}, y_1^{(j)},\mathbf{x}_2^{(j)}, y_2^{(j)}) = +1$. 
Intuitively we want to learn an $\hat{\rep} \in \repclass_{d,1}$ which has low error: 
$$  \Exp_{P \sim \metadist}\left[\Pr[(\mathbf{x}_1,y_1), (\mathbf{x}_2, y_2) \sim \dist^2]{\realp_{\hat{\rep}}(\mathbf{x}_1, y_1, \mathbf{x}_2, y_2)\neq+1}\right] = \Pr[\zeta \sim \cD]{\realp_{\hat{h}}(\zeta_{\textrm{feat}}) \neq +1}.$$
In other words, we want to find a representation that will allow two points of a new task drawn from $\metadist$ to be classified correctly using a monotone threshold in $\perclass_{\textrm{mon}}$.
By Theorem~\ref{fact:pac-vc}, if we have $\users = O\left(\frac{\VC(\realpclass_{2, \perclass_{\textrm{mon}}, \repclass_{\datadim,1}})\ln(1/\eps)+\ln(1/\delta)}{\varepsilon^2}\right)$ tasks, each with a datapoint $\zeta_j = ((\mathbf{x}_1^{(j)}, y_1^{(j)},\mathbf{x}_2^{(j)}, y_2^{(j)}),+1)$ drawn from $\cD$, then by choosing  $\hat{\rep} \in \repclass_{d,1}$ which minimizes $\frac{1}{\users}\sum_{j \in [\users]}\ind\{\realp_\rep(\zeta_{j,\textrm{feat}})\neq+1\}$ we get that with probability at least $1-\delta$ over the dataset 
$$\Pr[\zeta \sim \cD]{\realp_{\hat{h}}(\zeta_{\textrm{feat}}) \neq +1} = \Exp_{\dist \sim \metadist}\left[\Pr[(\mathbf{x}_1,y_1), (\mathbf{x}_2, y_2) \sim \dist^2]{\realp_{\hat{h}}(\mathbf{x}_1, y_1, \mathbf{x}_2, y_2)\neq+1}\right] < \varepsilon^2.$$ 
In Lemma~\ref{lem:vc_mt} we show that the VC dimension of $\realpclass_{2, \perclass_{\textrm{mon}}, \repclass_{\datadim,1}}$ is at most $\datadim$. Thus, we can learn with $\users = O\left(\frac{\datadim\ln(1/\eps)+\ln(1/\delta)}{\varepsilon^2}\right)$ tasks.

\begin{lem}
\label{lem:vc_mt}
   We have $\VC(\realpclass_{2, \perclass_{\textrm{mon}}, \repclass_{\datadim,1}}) \leq d.$
\end{lem}
\begin{proof}
    Fix a set of inputs to $\realp_\rep$:
        $((\mathbf{x}_1^{(1)}, y_1^{(1)}, \mathbf{x}_2^{(1)}, y_2^{(1)}), \ldots, (\mathbf{x}_1^{(\samples)}, y_1^{(\samples)}, \mathbf{x}_2^{(\samples)}, y_2^{(\samples)}))$.
    Assume for all $i$ we have $y_1^{(i)}\neq y_2^{(i)}$, since otherwise we have $\realp_\rep(\mathbf{x}_1^{(i)}, y_1^{(i)}, \mathbf{x}_2^{(i)}, y_2^{(i)})=+1$ for all $\rep$, which means the overall set cannot be shattered.

    Because the thresholds are monotone, if $y_1^{(i)} > y_2^{(i)}$ then $\realp_\rep(\mathbf{x}_1^{(i)}, y_1^{(i)}, \mathbf{x}_2^{(i)}, y_2^{(i)})=+1$ when $\rep$ places $\rep(\mathbf{x}_1^{(i)})$ above $\rep(\mathbf{x}_2^{(i)})$.
    Equivalently, associating $\rep$ with $\mathbf{b}\in \mathbb{R}^d$, this holds when
    $\sign(\mathbf{b}\cdot(\mathbf{x}_1^{(i)} - \mathbf{x}_2^{(i)}))=+1$.
    The opposite holds for $y_1^{(i)} < y_2^{(i)}$.

    We can achieve a labeling $\mathbf{s}\in\{\pm 1\}^{\samples}$ if there exists $\mathbf{b}$ such that, for all $i$, 
    \begin{align*}
        \sign(\mathbf{b}\cdot(\mathbf{x}_1^{(i)} - \mathbf{x}_2^{(i)})) = \mathbf{s}_i \cdot \sign(y_1^{(i)} - y_2^{(i)}).
    \end{align*}
    From this, we see that the number of achievable labelings is determined by the number of possible signs of $\mathbf{b}\cdot(\mathbf{x}_1^{(i)} - \mathbf{x}_2^{(i)})$.
    This is at most $2^d$ by Theorem~\ref{thm:vc-hs}, which bounds the capacity of halfspaces passing through the origin.
    Thus, $\realpclass_{2, \perclass_{\textrm{mon}}, \repclass_{\datadim,1}}$ cannot shatter a set of size $d+1$.
\end{proof}

So far, we have described how to find a representation $\hat{\rep}$ that, evaluated on data sets of size $n$ drawn from new tasks, is likely to produce something which $\perclass$ can perfectly classify, i.e., the following expressions are small:
\begin{align*}
    &\Exp_{\dist \sim \metadist}\left[ \Pr[(x_1, y_1),\ldots, (x_n,y_n) \sim \dist^n]{ \realp_{\hat{\rep}}((x_1, y_1),\ldots, (x_n,y_n)) \neq +1}\right]=\\
    &\Exp_{\dist \sim \metadist}\left[ \Pr[(x_1, y_1),\ldots, (x_n,y_n) \sim \dist^n]{(\hat{\rep}(x_1), y_1),\ldots, (\hat{\rep}(x_n),y_n) \text{ is not realizable by }\perclass}\right].
\end{align*}
However, in metalearning we want to bound the following error 
$$\rer(\metadist, \hat{\rep}, \perclass) = \Exp_{\dist \sim \metadist}\left[\min_{\per \in \perclass} \Pr[(x,y) \sim \dist]{\per(\hat{\rep}(x))\neq y}\right].$$
The next step is to derive a bound on this error. 
We consider two cases based on the number of samples per task we have.

\textbf{Metalearning with $\NRC(\perclass)$ samples per task.} 
Often, for a given concept class $\perclass$ we can see that every dataset that is not realizable by $\perclass$ has a subset of size at most $\wit$ that is still not realizable. We call this subset a non-realizability certificate. The smallest $\wit$ for which this condition holds is the non-realizability-certificate complexity, which we denote by $\NRC(\perclass)$. 

For example, the non-realizabity-certificate complexity of the class of monotone thresholds is $2$. Every dataset with more than $2$ points that is not realizable by $\perclass_{\textrm{mon}}$ has a non-realizability witness of size $2$. To see this, suppose all subsets of two points are realizable by $\perclass_{\textrm{mon}}$. This means that any point with a positive label is greater than a point with a negative label. In this case, there exists a monotone threshold that labels these points correctly by placing the threshold between two consecutive points in the real line with opposite labels, which leads to a contradiction. Just seeing one point of a dataset does not suffice to show non-realizability cause every one-sample dataset is realizable.

For a fixed data distribution $\distint$ over $\mapdom \times \bits$ that is realizable by $\perclass$, we define the \textit{probability of nonrealizability} of a dataset from $\distint$ by functions in 
$\perclass$:
 \begin{align*}
    p_{\text{nr}}(\distint, \perclass, \wit)&\defeq \Pr[(\map_1,y_1),\ldots,(\map_\wit,y_\wit) \sim \distint^\wit]{(\map_1,y_1),\ldots,(\map_\wit,y_\wit)\textrm{ is not realizable by }\perclass},
 \end{align*}
and the \textit{population error} of $\perclass$ on $\distint$: 
 \begin{align*}
        \er(\distint, \perclass)&\defeq \min_{\per \in \perclass}\left\{ \Pr[(\textrm{\map},y)\sim \distint]{\per(\map)\neq y} \right \}.
 \end{align*}
  Notice that here distribution $\distint$ is over labeled points in the intermediate space, the codomain of the representation function. Given $\NRC(\perclass)$ points from $\distint$, we derive a bound of the form $\phi(\er(\distint, \perclass)) \leq  p_{\text{nr}}(\distint, \perclass, \NRC(\perclass))$, for a strictly increasing and convex $\phi$. 

 For the class of monotone thresholds $\perclass_{\textrm{mon}}$ with $2$ samples drawn from $\distint$, we show in Lemma~\ref{lem:mon-bnd} that $(\er(\distint, \perclass_{\textrm{mon}}))^2 \leq p_{\text{nr}}(\distint, \perclass_{\textrm{mon}}, 2)$. 
 The proof uses Fact~\ref{fact:auc} and a geometric argument linking $\er(\distint, \perclass_{\textrm{mon}})$ to $ p_{\text{nr}}(\distint, \perclass_{\textrm{mon}}, 2)$.

For a fixed distribution $\dist$, we analyze the ROC curve of classifiers with threshold $w$: $\sign(z-w)$.
The ROC curve plots the true positive rate (TPR) against the false positive rate (FPR), which are defined as
\begin{align*}
    \textrm{TPR}(w) &:= \Pr[(\map,y)\sim \distint]{\map\geq w\mid y=+1} \\
    \textrm{FPR}(w) &:= \Pr[(\map,y)\sim \distint]{\map\geq w\mid y=-1}.
\end{align*}
We need the following standard fact about ROC curves.

\begin{fact}
\label{fact:auc}
    For any distribution $\distint$,
    the area under the ROC curve is equal to 
    $$\Pr{\map_1>\map_2 \mid  y_1=+1, y_2=-1}.$$
\end{fact}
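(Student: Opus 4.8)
The plan is to rewrite both sides in terms of the two \emph{class-conditional} score distributions and to connect them by a change of variables together with Fubini's theorem. Let $Z^{+}$ be distributed as $\map$ conditioned on $y=+1$ under $\distint$, let $Z^{-}$ be distributed as $\map$ conditioned on $y=-1$, and take $Z^{+}$ and $Z^{-}$ to be independent. By the definition of conditional probability and the independence of the two draws $(\map_1,y_1),(\map_2,y_2)\sim\distint$, the right-hand side $\Pr{\map_1>\map_2\mid y_1=+1,y_2=-1}$ equals $\Pr{Z^{+}>Z^{-}}$. So it suffices to show that the area under the ROC curve equals $\Pr{Z^{+}>Z^{-}}$, up to the contribution of the tie event $\{Z^{+}=Z^{-}\}$, which I would dispose of either by assuming $\distint$ has no atoms in the score coordinate or by invoking the standard convention that fills the vertical jumps of the ROC curve with line segments, under which ties contribute exactly $\tfrac12\Pr{Z^{+}=Z^{-}}$.

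The first step is to observe that $\mathrm{TPR}$ and $\mathrm{FPR}$ are precisely the survival functions of $Z^{+}$ and $Z^{-}$, namely $\mathrm{TPR}(w)=\Pr{Z^{+}\ge w}$ and $\mathrm{FPR}(w)=\Pr{Z^{-}\ge w}$. In particular $\mathrm{FPR}$ is nonincreasing in $w$ and runs from $1$ at $w=-\infty$ to $0$ at $w=+\infty$, so as $w$ decreases the point $\big(\mathrm{FPR}(w),\mathrm{TPR}(w)\big)$ sweeps out the ROC curve from $(0,0)$ to $(1,1)$. I would then write the area under the curve as the Riemann--Stieltjes integral $\int_{0}^{1}\mathrm{TPR}\;d\,\mathrm{FPR}$ and substitute the parameter $w$ for $\mathrm{FPR}$: the reversal of orientation cancels the minus sign coming from $\mathrm{FPR}$ being decreasing in $w$, so the integral becomes $\int_{\R}\mathrm{TPR}(w)\,d\mu_{-}(w)$, where $\mu_{-}$ is the law of $Z^{-}$.

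Finally I would evaluate this integral probabilistically: $\int_{\R}\mathrm{TPR}(w)\,d\mu_{-}(w)=\Exp_{Z^{-}}\!\left[\mathrm{TPR}(Z^{-})\right]=\Exp_{Z^{-}}\Exp_{Z^{+}}\!\left[\ind\{Z^{+}\ge Z^{-}\}\right]=\Pr{Z^{+}\ge Z^{-}}$, where the middle equality uses $\mathrm{TPR}(w)=\Pr{Z^{+}\ge w}$ and the last uses Fubini's theorem (applicable since the integrand is a bounded, jointly measurable indicator). Discarding the measure-zero tie event then yields $\Pr{Z^{+}>Z^{-}}$, completing the argument. I expect the only genuinely delicate point to be the bookkeeping around atoms and the two endpoints of the Stieltjes integral — the geometry of the change of variables is otherwise entirely routine — so the cleanest writeup either states the no-ties hypothesis up front or carries the tie term $\tfrac12\Pr{Z^{+}=Z^{-}}$ through both sides consistently.
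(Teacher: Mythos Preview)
The paper does not prove this statement; it is stated as a standard fact about ROC curves and used without proof. Your argument is correct and is the standard derivation: identify $\mathrm{TPR}$ and $\mathrm{FPR}$ with the survival functions of the class-conditional score distributions, express the area as a Stieltjes integral $\int \mathrm{TPR}\,d\,\mathrm{FPR}$, change variables to the threshold $w$, and apply Fubini to obtain $\Pr{Z^{+}\ge Z^{-}}$. Your treatment of ties is also appropriate.
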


\begin{figure}
    \centering
    \includegraphics[width = 0.5\textwidth]{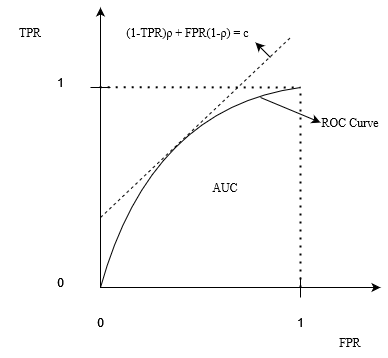}
    \caption{Example of an ROC curve and the line $(1-\textrm{TPR})\rho+\textrm{FPR}(1-\rho) = c$, which corresponds to points with error $c$. The line and curve intersect when $c=\er(\distint, \perclass_{\textrm{mon}})$.}
    \label{fig:auc}
\end{figure}

\begin{lem}
\label{lem:mon-bnd}
Fix a distribution $\distint$ over $ \mapdom \times \{\pm 1\}$.
We have
$ (\er(\distint, \perclass_{\textrm{mon}}))^2 \leq p_{\text{nr}}(\distint, \perclass_{\textrm{mon}}, 2)$.
\end{lem}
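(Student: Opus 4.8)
The plan is to rewrite both sides in terms of the ROC curve of the threshold classifiers in $\perclass_{\textrm{mon}}$ and then compare two areas inside the unit square. Write $\rho := \Pr{\lab = +1}$ for $(\map,\lab)\sim\distint$. If $\rho \in \{0,1\}$ then both sides of the claimed inequality equal $0$, so assume $\rho \in (0,1)$ and set $\alpha := \er(\distint,\perclass_{\textrm{mon}})$. First I would express $p_{\text{nr}}(\distint,\perclass_{\textrm{mon}},2)$ through the area under the ROC curve. A two-point sample $\{(\map_1,\lab_1),(\map_2,\lab_2)\}$ is realizable by $\perclass_{\textrm{mon}}$ unless the labels disagree and the scores are out of order; concretely $\{(\map_1,+1),(\map_2,-1)\}$ is non-realizable exactly when $\map_1 \le \map_2$. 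Conditioning on the two label patterns, each of probability $\rho(1-\rho)$, and using that the two draws are i.i.d.,
\[
  p_{\text{nr}}(\distint,\perclass_{\textrm{mon}},2) = 2\rho(1-\rho)\cdot\Pr{\map_1 \le \map_2 \mid \lab_1=+1,\ \lab_2=-1} = 2\rho(1-\rho)(1-A),
\]
where $A := \Pr{\map_1 > \map_2 \mid \lab_1=+1,\ \lab_2=-1}$ equals the area under the ROC curve by Fact~\ref{fact:auc}.

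Next I would bound $\alpha$ geometrically, following the picture in \Cref{fig:auc}. The error of the threshold classifier with parameter $w$ equals $\rho(1-\textrm{TPR}(w)) + (1-\rho)\textrm{FPR}(w)$, so every point $(u,v) = (\textrm{FPR}(w),\textrm{TPR}(w))$ of the ROC curve satisfies $\rho(1-v)+(1-\rho)u \ge \alpha$; that is, the whole curve lies on the side of the line $\ell_\alpha := \{(u,v) : \rho(1-v)+(1-\rho)u = \alpha\}$ away from the corner $(0,1)$. Since the constant classifiers ($w\to\pm\infty$) have errors $\rho$ and $1-\rho$, we have $\alpha \le \min(\rho,1-\rho)$, so $\ell_\alpha$ meets the boundary of $[0,1]^2$ at $(0,\,1-\alpha/\rho)$ and $(\alpha/(1-\rho),\,1)$. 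Let $T$ be the triangle with vertices $(0,1)$, $(0,\,1-\alpha/\rho)$, $(\alpha/(1-\rho),\,1)$; it lies inside $[0,1]^2$ on the $(0,1)$-side of $\ell_\alpha$, and one checks that every point of $T$ lies on or above the ROC curve — at abscissa $u$ the curve's ordinate is at most $1-(\alpha-(1-\rho)u)/\rho$, which is exactly the lower edge of $T$ at $u$. Hence $T$ is contained in the region above the curve, whose area is $1-A$, so
\[
  1 - A \ \ge\ \textrm{area}(T) \ =\ \tfrac12\cdot\tfrac{\alpha}{\rho}\cdot\tfrac{\alpha}{1-\rho} \ =\ \frac{\alpha^2}{2\rho(1-\rho)}.
\]
Combining the two steps yields $p_{\text{nr}}(\distint,\perclass_{\textrm{mon}},2) = 2\rho(1-\rho)(1-A) \ge \alpha^2 = (\er(\distint,\perclass_{\textrm{mon}}))^2$, which is the claim.

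The step I expect to need the most care is making the ``ROC curve'' and ``area above/under it'' rigorous when $\distint$ has atoms on $\mapdom$: then the achievable $(\textrm{FPR},\textrm{TPR})$ pairs form a discrete point set rather than a function graph, and ties $\map_1=\map_2$ occur with positive probability. The remedy is to keep the conventions consistent with Fact~\ref{fact:auc} (strict inequality $\map_1>\map_2$ in the definition of $A$, and $\map_1 \le \map_2$ for non-realizability) and to read the ROC curve as its upper envelope $\bar g(u) := \sup\{\textrm{TPR}(w) : \textrm{FPR}(w) \le u\}$, whose integral over $[0,1]$ is $A$; the containment $T \subseteq \{(u,v)\in[0,1]^2 : v \ge \bar g(u)\}$ uses only the inequality $\rho(1-v)+(1-\rho)u \ge \alpha$ valid at every curve point, which is insensitive to atoms.
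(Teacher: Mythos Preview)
Your proposal is correct and follows essentially the same approach as the paper: both express $p_{\text{nr}}$ as $2\rho(1-\rho)(1-A)$ via Fact~\ref{fact:auc}, observe that the ROC curve lies on one side of the isoerror line $\rho(1-v)+(1-\rho)u=\alpha$, and compare areas to obtain $1-A \ge \alpha^2/(2\rho(1-\rho))$. Your treatment is in fact a bit more careful, explicitly handling the degenerate cases $\rho\in\{0,1\}$ and flagging the subtlety with atoms and ties that the paper glosses over.
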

\begin{proof}
Let $\rho= \Pr[(\map,y)\sim \distint]{y=1}$ and $S$ be a dataset $\{(\map_1, y_1), (\map_2, y_2)\}$, where $(\map_1, y_1)$ and $(\map_2, y_2)$ were drawn independently from distribution $B$. We start by showing that
\begin{equation}
\label{eq:AUC}
\er(\distint, \perclass_{\textrm{mon}})^2 \leq 2\rho(1-\rho)(1-\Pr[S\sim \distint^2]{\map_1>\map_2 \mid  y_1=+1, y_2=-1)}).
\end{equation}
By the law of total probability, we have
\begin{align*}
\Pr[(\map,y)\sim \distint]{\textrm{sign}(\map-w)\neq y} 
&= \Pr[(\map,y)\sim \distint]{\map< w\mid y=+1}\rho + \Pr[(\map,y)\sim \distint]{\map\geq w\mid y=-1}(1-\rho)\\
&= (1-\textrm{TPR}(w))\rho+\textrm{FPR}(w)(1-\rho).
\end{align*}
Fact~\ref{fact:auc} says that the area under the curve equals $\Pr[S\sim \distint^2]{\map_1>\map_2 \mid  y_1=+1, y_2=-1}$.
Therefore, we can geometrically upper bound this quantity by the area of the space which is (i) under the isoerror line for error $c=\er(\distint, \perclass_{\mathrm{mon}})$ and (ii) under the line $\mathrm{TPR}=1$.
See \Cref{fig:auc} for an illustration.
Note that $\er(\distint, \perclass_{\mathrm{mon}})$ is exactly $\min_{w \in \R}\left\{ \Pr[(\map,y)\sim \distint]{\textrm{sign}(\map-w) \neq y} \right \}$.
This approach gives us the following bound:
\begin{align*}
    \Pr[S\sim \distint^2]{\map_1>\map_2 \mid  y_1=+1, y_2=-1} \leq 1-\frac{\left(
    \er(\distint, \perclass_{\mathrm{mon}})
    \right)^2}{2\rho(1-\rho)}.
\end{align*}
Rearranging yields \Cref{eq:AUC}.
Next, we show that the right-hand side of \Cref{eq:AUC} is exactly $p_{\text{nr}}(\distint, \perclass_{\textrm{mon}}, 2)$, the probability of non-realizability.

Let $E$ be the event that $(\map_1, y_1), (\map_2, y_2) \textrm{ is not realizable by }\perclass_{\textrm{mon}}.$
By the law of total probability, we have that 
\begin{align*}
    \Pr[S\sim \distint^2]{E}
    & =\sum_{i\in \bits} \sum_{j \in \bits}\Pr[S\sim \distint^2]{E\mid y_1=i, y_2=j } \Pr[S\sim \distint^2]{y_1 = i, y_2=j}  \\
    &=2\rho(1-\rho)\Pr[S\sim \distint^2]{\map_1\leq \map_2 \mid y_1=+1,y_2=-1} \\
    &=2\rho(1-\rho)(1-\Pr[S\sim \distint^2]{\map_1> \map_2 \mid y_1=+1,y_2=-1}).
\end{align*}
Therefore, $\er(\distint, \perclass_{\textrm{mon}})^2 \leq  p_{\text{nr}}(\distint, \perclass_{\textrm{mon}}, 2)$. 
\end{proof}

In our example, since both $\er(\distint,\perclass_{\textrm{mon}})$ and $ p_{\text{nr}}(\distint, \perclass_{\textrm{mon}}, 2)$ are non-negative, $\er(\distint, \perclass_{\textrm{mon}}) \leq$ $ \sqrt{ p_{\text{nr}}(\distint, \perclass_{\textrm{mon}}, 2)}$. Therefore, we conclude that with probability at least $1-\delta$ over the data (that we get by seeing $\users = O\left(\frac{\datadim \ln(1/\eps) + \ln(1/\delta)}{\eps^2}\right)$ tasks with data distributions $\dist_j$ drawn from $\metadist$ and $2$ samples from $\dist_j$ for every task $j \in [\users]$)
\begin{align*}
&\Exp_{P \sim Q}\left[\min_{w \in \R}\left\{ \Pr[(\textrm{x},y)\sim P]{\textrm{sign}(\hat{h}(\mathbf{x})-w)\neq y\}} \right \}\right] \leq\\&\sqrt{\Exp_{\dist \sim \metadist}\left[\Pr[(\mathbf{x}_1, y_1),(\mathbf{x}_2,y_2) \sim \dist^2]{\realp_{\hat{\rep}}(\mathbf{x}_1, y_1,\mathbf{x}_2,y_2)} \neq +1\right]} 
\leq  \varepsilon.
\end{align*}
This proves Theorem~\ref{thm:met-mon}. 

\begin{thm}
\label{thm:met-mon}
    \thmmetmon
\end{thm}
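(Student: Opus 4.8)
The plan is to reduce metalearning $(\repclass_{\datadim,1},\perclass_{\textrm{mon}})$ with two samples per task to ordinary realizable PAC learning of the realizability-predicate class $\realpclass_{2,\perclass_{\textrm{mon}},\repclass_{\datadim,1}}$, and then convert that guarantee into a bound on the meta-error $\rer(\metadist,\hat\rep,\perclass_{\textrm{mon}})$ via the ROC-curve inequality of Lemma~\ref{lem:mon-bnd}. Concretely, the algorithm draws $\users$ task distributions $\dist_1,\dots,\dist_\users\sim\metadist$ and two i.i.d.\ points $(\mathbf x_1^{(j)},\lab_1^{(j)}),(\mathbf x_2^{(j)},\lab_2^{(j)})$ from each $\dist_j$, forms the labeled meta-examples $\zeta_j=\big((\mathbf x_1^{(j)},\lab_1^{(j)},\mathbf x_2^{(j)},\lab_2^{(j)}),+1\big)$, and outputs the $\hat\rep\in\repclass_{\datadim,1}$ whose predicate $\realp_{\hat\rep}$ minimizes the empirical mislabeling error $\frac1\users\sum_{j\in[\users]} \ind\{\realp_{\rep}(\zeta_{j,\textrm{feat}})\neq+1\}$ (breaking ties arbitrarily).

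First I would verify that the induced learning problem is realizable. Since $\metadist$ is meta-realizable by $\perclass_{\textrm{mon}}\circ\repclass_{\datadim,1}$, there is $\rep^*\in\repclass_{\datadim,1}$ with $\rer(\dist,\rep^*,\perclass_{\textrm{mon}})=0$ for every $\dist$ in the support of $\metadist$, so every pair drawn from such a $\dist$ is ordered correctly by $\rep^*$ and $\realp_{\rep^*}(\zeta_{j,\textrm{feat}})=+1$ almost surely. Hence the $\zeta_j$ are i.i.d.\ draws from a distribution $\cD$ realizable by $\realpclass_{2,\perclass_{\textrm{mon}},\repclass_{\datadim,1}}$. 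By Lemma~\ref{lem:vc_mt} this class has VC dimension at most $\datadim$, so by the realizable-case sample complexity of Theorem~\ref{fact:pac-vc}, taking $\users=O\big((\datadim\ln(1/\eps')+\ln(1/\delta))/\eps'\big)$ guarantees that, with probability at least $1-\delta$ over the sample, the empirical risk minimizer $\hat\rep$ satisfies $\Pr[\zeta\sim\cD]{\realp_{\hat\rep}(\zeta_{\textrm{feat}})\neq+1}\le\eps'$. Unpacking the definition of $\cD$, this is exactly $\Exp_{\dist\sim\metadist}\big[p_{\text{nr}}(\hat\rep(\dist),\perclass_{\textrm{mon}},2)\big]\le\eps'$.

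Next I would pass to the meta-error. For a fixed task $\dist$, applying Lemma~\ref{lem:mon-bnd} to the distribution $\distint=\hat\rep(\dist)$ on $\mapdom\times\bits$ gives $\er(\hat\rep(\dist),\perclass_{\textrm{mon}})^2\le p_{\text{nr}}(\hat\rep(\dist),\perclass_{\textrm{mon}},2)$, i.e.\ $\rer(\dist,\hat\rep,\perclass_{\textrm{mon}})\le\sqrt{p_{\text{nr}}(\hat\rep(\dist),\perclass_{\textrm{mon}},2)}$. Taking expectation over $\dist\sim\metadist$ and using concavity of $\sqrt{\cdot}$ (Jensen's inequality),
\[
\rer(\metadist,\hat\rep,\perclass_{\textrm{mon}})=\Exp_{\dist\sim\metadist}\big[\rer(\dist,\hat\rep,\perclass_{\textrm{mon}})\big]\le\sqrt{\Exp_{\dist\sim\metadist}\big[p_{\text{nr}}(\hat\rep(\dist),\perclass_{\textrm{mon}},2)\big]}\le\sqrt{\eps'}.
\]
Choosing $\eps'=\eps^2$ yields $\rer(\metadist,\hat\rep,\perclass_{\textrm{mon}})\le\eps$, which is the realizable metalearning guarantee since the benchmark $\min_\rep\rer(\metadist,\rep,\perclass_{\textrm{mon}})$ equals $0$. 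Substituting $\eps'=\eps^2$ into the task bound gives $\users=O\big((\datadim\ln(1/\eps^2)+\ln(1/\delta))/\eps^2\big)=O\big(\datadim\ln(1/\eps)/\eps^2\big)$ for constant $\delta$, matching the claim.

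The only genuinely non-routine ingredients are the two lemmas already established in the excerpt: the VC bound on the realizability predicates (Lemma~\ref{lem:vc_mt}, reducing to the capacity of halfspaces through the origin) and the ROC-area inequality $\er^2\le p_{\text{nr}}$ for a random pair (Lemma~\ref{lem:mon-bnd}). Given those, the main thing to be careful about is the quadratic loss in the error parameter: two points certify non-realizability only with probability of order $\er^2$, so we must learn the predicate class to accuracy $\eps^2$, which is exactly what inflates the task count from $\datadim/\eps$ to $\datadim/\eps^2$; everything else is a direct application of the classical realizable PAC bound together with Jensen's inequality. One should also note that the meta-examples $\zeta_j$ are genuinely i.i.d.\ from $\cD$ because the tasks are i.i.d.\ from $\metadist$ and the two points from each task are i.i.d.\ conditioned on the task.
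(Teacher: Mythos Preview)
Your proposal is correct and follows essentially the same approach as the paper: construct the meta-examples $\zeta_j$, apply realizable PAC learning to the realizability-predicate class using the VC bound of Lemma~\ref{lem:vc_mt}, and convert the resulting bound on the expected non-realizability probability into a bound on $\rer(\metadist,\hat\rep,\perclass_{\textrm{mon}})$ via Lemma~\ref{lem:mon-bnd} and Jensen's inequality for $\sqrt{\cdot}$. The only cosmetic difference is that you introduce an auxiliary accuracy parameter $\eps'$ and then set $\eps'=\eps^2$, whereas the paper writes the PAC bound directly with $\eps^2$ in the denominator.
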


\paragraph{From Monotone Thresholds to Arbitrary Function Classes.}
Although the argument above relies on the particular structure of monotone function classes in several ways, the next few sections will show how the results can be generalized to arbitrary representations and specialized classifiers.  First, we study the realizable case with very few samples per task (\Cref{sec:metalearn-realizable}), and then we study the agnostic case (\Cref{sec:metalearning_agnostic}) with more samples per task. The results for the agnostic case also apply to the realizable case, yielding incomparable statements.  After establishing these general bounds in terms of properties of the representation and specialized classifiers, we use them to prove specific sample complexity bounds for linear classes in \Cref{sec:lin}.

\fi
\ifnum \coltshort = 0
\subsection{Sample and Task Complexity Bounds for the Realizable Case}
\else
\subsection{Sample and Task Complexity Bounds}
\fi
\label{sec:metalearn-realizable}
\ifnum \coltshort = 0
When the metadistribution is meta-realizable, \Cref{thm:met-samples} and \Cref{thm:met-real-samples} bound the number of tasks and samples per task we need to metalearn. Their proofs follow the structure described in \Cref{sec:real-tech}, using both the VC dimension of the realizability predicate class to bound the number of tasks needed to metalearn. When we have roughly $\VC(\perclass)/\eps$ samples per task, we bound the number of tasks via \Cref{thm:met-real-samples}. 
For a smaller number of samples per task, \Cref{thm:met-samples} shows that $\NRC(\perclass)$ examples per task suffice to metalearn $(\repclass, \perclass)$, where $\NRC(\perclass)$ is the \nrcc\ for $\perclass$. 
\else 
In this section we give generic sample complexity bounds in terms of two quantities, the VC dimension of the realizability predicate (see Definition~\ref{def:realizability_predicate}) and the pseudodimension of the empirical error predicate (see Definition~\ref{def:empirical_error_function}).  For the realizable case, Theorem~\ref{thm:met-samples} covers the case where we have very few samples per task, which we consider to be the most interesting case, and Theorem~\ref{thm:met-real-samples} covers the case where we have more samples per task.  The agnostic case is covered in Theorem~\ref{thm:agn-met-samples}.
\fi
\newcommand{\thmMetSamples}{Let $\perclass$ be a class of specialized classifiers $\per: \mapdom \to \bits$ with non-realizability certificate complexity $\NRC(\perclass) = \wit$
and $\repclass$ be a class of representation functions $\rep: \featdom \to \mapdom$.
Then, for every $\varepsilon$ and $\delta$ in $(0,1)$,  we can $(\varepsilon, \delta)$-properly metalearn $(\repclass, \perclass)$ in the realizable case for $\users$ tasks and $\samples$ samples per task when
$$\users  
    =
    \bparen{\VC(\realpclass_{\wit,\perclass,\repclass})+\ln(1/\delta)} 
    \cdot
    \paren{ \frac{O(\max(\VC(\perclass),\wit)\ln(1/\varepsilon))} {\wit \varepsilon }}^\wit
 \quad \text{ and } \samples = \wit   \, . 
$$
}
\begin{thm}[Complete statement of Theorem~\ref{thm:met-samples-intro}]
\label{thm:met-samples}
\thmMetSamples
\end{thm}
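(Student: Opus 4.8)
The plan is to reduce realizable metalearning of $(\repclass,\perclass)$ with $\samples=\wit=\NRC(\perclass)$ samples per task to ordinary realizable PAC learning of the class $\realpclass_{\wit,\perclass,\repclass}$ of realizability predicates. Since $\metadist$ is meta-realizable, fix a representation $\rep^*\in\repclass$ that makes every task in its support realizable by $\perclass$. Consider the auxiliary distribution $\cD$ over $(\featdom\times\bits)^\wit$ that draws a task $\dist\sim\metadist$ and then $\wit$ i.i.d.\ samples from $\dist$, declaring the label of every such $\wit$-point dataset to be $+1$. Because $\rep^*$ realizes each task's sample, the predicate $\realp_{\rep^*}$ outputs $+1$ on every $\zeta\sim\cD$, so $\cD$ is realizable by $\realpclass_{\wit,\perclass,\repclass}$; and the $\users$ tasks with $\wit$ samples each that the metalearner sees are precisely $\users$ i.i.d.\ draws from $\cD$. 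The algorithm performs ERM over $\realpclass_{\wit,\perclass,\repclass}$ --- equivalently, it outputs the $\hat\rep\in\repclass$ minimizing the fraction of training tasks whose $\wit$-sample is \emph{not} realizable under $\hat\rep$ --- and this attains zero empirical error, witnessed by $\rep^*$. Applying the realizable PAC bound (part~(3) of Theorem~\ref{fact:pac-vc}), with $\users = O\bigl(\frac{\VC(\realpclass_{\wit,\perclass,\repclass})\ln(1/\gamma)+\ln(1/\delta)}{\gamma}\bigr)$ we get, with probability at least $1-\delta$,
\[
  \Exp_{\dist\sim\metadist}\!\left[p_{\mathrm{nr}}\bigl(\hat\rep(\dist),\perclass,\wit\bigr)\right]
  \;=\;\Pr[\zeta\sim\cD]{\realp_{\hat\rep}(\zeta)\ne +1}\;\le\;\gamma ,
\]
where $\hat\rep(\dist)$ denotes the law of $(\hat\rep(\feat),\lab)$ for $(\feat,\lab)\sim\dist$.

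Next I would convert this into the required bound on $\rer(\metadist,\hat\rep,\perclass)=\Exp_{\dist\sim\metadist}[\er(\hat\rep(\dist),\perclass)]$ via Lemma~\ref{lem:real-to-error}. Rearranged, that lemma gives, for every distribution $\distint$ on $\mapdom\times\bits$ and $\wit=\NRC(\perclass)$,
\[
  \er(\distint,\perclass)\;\lesssim\;\frac{\max(\VC(\perclass),\wit)\,\ln(1/\er(\distint,\perclass))}{\wit}\cdot p_{\mathrm{nr}}(\distint,\perclass,\wit)^{1/\wit}.
\]
To neutralize the task-dependent $\ln(1/\er(\cdot))$, split the expectation over $\metadist$ into the tasks with $\alpha_\dist:=\er(\hat\rep(\dist),\perclass)\le\eps$ --- which contribute at most $\eps$ --- and those with $\alpha_\dist>\eps$, where $\ln(1/\alpha_\dist)<\ln(1/\eps)$; on the latter set,
\[
  \Exp_{\dist}\!\left[\alpha_\dist\,\ind\{\alpha_\dist>\eps\}\right]
  \;\lesssim\;\frac{\max(\VC(\perclass),\wit)\ln(1/\eps)}{\wit}\,\Exp_{\dist}\!\left[p_{\mathrm{nr}}(\hat\rep(\dist),\perclass,\wit)^{1/\wit}\right]
  \;\le\;\frac{\max(\VC(\perclass),\wit)\ln(1/\eps)}{\wit}\,\gamma^{1/\wit},
\]
using Jensen (concavity of $t\mapsto t^{1/\wit}$) together with the first display. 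Hence $\rer(\metadist,\hat\rep,\perclass)\lesssim\eps+\frac{\max(\VC(\perclass),\wit)\ln(1/\eps)}{\wit}\gamma^{1/\wit}$; choosing $\gamma=\bigl(\tfrac{\wit\eps}{c\max(\VC(\perclass),\wit)\ln(1/\eps)}\bigr)^{\wit}$ makes this $O(\eps)$, and substituting $1/\gamma$ and $\ln(1/\gamma)=\wit\ln\!\bigl(\tfrac{c\max(\VC(\perclass),\wit)\ln(1/\eps)}{\wit\eps}\bigr)$ into the PAC task bound, then absorbing the $\ln(1/\gamma)$ factor into the polylogarithmic term in the base, yields the stated $\users$. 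The output lies in $\repclass$, so the learner is proper; rescaling $\eps$ by the hidden constant finishes the argument.

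The one nontrivial ingredient, and the step I expect to be the main obstacle, is Lemma~\ref{lem:real-to-error} --- the lower bound on $p_{\mathrm{nr}}$, previewed informally in \eqref{eq:pnr-bound-intro}. It is proved by a subsampling thought experiment. Fix $\distint$ with $\alpha:=\er(\distint,\perclass)>0$ (the case $\alpha=0$ is trivial) and set $N=C\,\VC(\perclass)\ln(1/\alpha)/\alpha$, increased to $\wit$ if that is larger, for a suitable constant $C$. Every $\per\in\perclass$ has test error at least $\alpha$, so by the realizable VC-generalization bound (Theorem~\ref{fact: vc-gen}) with confidence $\tfrac12$ and error parameter $\alpha/2$, a sample $T\sim\distint^{N}$ is not realizable by $\perclass$ with probability at least $\tfrac12$. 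Whenever $T$ is not realizable it has, by definition of $\NRC(\perclass)=\wit$, a non-realizable subset of size at most $\wit$, hence --- since supersets of non-realizable sets are non-realizable and $N\ge\wit$ --- a non-realizable subset of size exactly $\wit$. Union-bounding over the $\binom{N}{\wit}$ size-$\wit$ subsets of $T$, each of which is itself a $\distint^{\wit}$-sample because the $N$ points are i.i.d., gives
\[
  \tfrac12\;\le\;\Pr[T\sim\distint^{N}]{T\text{ not realizable by }\perclass}\;\le\;\binom{N}{\wit}\,p_{\mathrm{nr}}(\distint,\perclass,\wit),
\]
so $p_{\mathrm{nr}}(\distint,\perclass,\wit)\ge\tfrac12\binom{N}{\wit}^{-1}\ge\tfrac12(\wit/(eN))^{\wit}$, which is the claimed bound since $N=\Theta\bigl(\max(\VC(\perclass),\wit)\ln(1/\alpha)/\alpha\bigr)$. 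The delicate points are getting the right dependence on $\VC(\perclass)$ versus $\wit$ (the source of the $\max$) and the task-dependent $\ln(1/\alpha_\dist)$ factor, handled by the case split above; everything else (the auxiliary distribution $\cD$, the PAC reduction, the Jensen step) is routine.
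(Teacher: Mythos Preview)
Your proposal is correct and follows essentially the same approach as the paper: the auxiliary distribution $\cD$ and realizable PAC reduction for $\realpclass_{\wit,\perclass,\repclass}$ is exactly the content of Lemma~\ref{lem:num-tasks}, and your subsampling/union-bound argument for $p_{\mathrm{nr}}$ is the same as the paper's proof of Lemma~\ref{lem:real-to-error}. The only cosmetic difference is in the conversion step: the paper packages the relationship as a single convex function $\phi(\varepsilon)=\tfrac12\bigl[\tfrac{m\varepsilon}{16ev\ln(16/\varepsilon)}\bigr]^m$ and applies Jensen to the concave $\phi^{-1}$, whereas you split on $\alpha_\dist\lessgtr\varepsilon$ to freeze the $\ln(1/\alpha_\dist)$ factor and then apply Jensen to $t\mapsto t^{1/m}$; both routes yield the same bound and the same absorption of the extra $\ln(1/\gamma)$ factor into the base.
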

\if \coltshort = 0
We first prove \Cref{lem:real-to-error} and \Cref{lem:num-tasks}, which we use in the proof of \Cref{thm:met-samples}.
For a fixed distribution $\dist$ and class $\perclass$, \Cref{lem:real-to-error} allows us to relate the probability of drawing a nonrealizable dataset to the error of the class. 
\else
We use Lemmas~\ref{lem:real-to-error} and \ref{lem:num-tasks} to prove Theorem~\ref{thm:met-samples}.
For a fixed distribution $\dist$ and class $\perclass$, Lemma~\ref{lem:real-to-error} allows us to relate the probability of drawing a nonrealizable dataset, $p_{nr}(\dist, \perclass, \NRC(\perclass))$, to the test error of class $\perclass$. 
\fi
\newcommand{\lemRealtoError}{ Let $\perclass$ be the class of specialized classifiers $\per: \mapdom \to \bits$ with $\NRC(\perclass) = \wit$. 
    Fix an arbitrary distribution $\distint$ over $\mapdom\times\bits$.
    If $\er(\distint, \perclass) > 0$, then $$ p_{nr}(\distint, \perclass, \wit)  \ge  \frac 1 2
    \left(\left[\frac{\wit \cdot \er(\distint, \perclass)}{16e\cdot \vc \ln{(16/\er(\distint, \perclass))}}\right]^{\wit}\right),$$ where $v = \max(\VC(\perclass),\wit)$.}
\begin{lem}
     \label{lem:real-to-error}
   \lemRealtoError
\end{lem}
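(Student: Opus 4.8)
Write $\alpha := \er(\distint, \perclass)$ and $v := \max(\VC(\perclass), \wit)$; we may assume $v < \infty$ and $\wit < \infty$, since otherwise the right-hand side of the claimed bound is $0$. The plan is to lower-bound $p_{nr}(\distint, \perclass, \wit)$ through a thought experiment: first draw a \emph{large} sample from $\distint$, large enough to be non-realizable by $\perclass$ with constant probability, and then observe only a uniformly random $\wit$-point subsample of it. Since such a subsample is again an i.i.d.\ sample of size $\wit$, the (opaque) probability that $\wit$ fresh i.i.d.\ points are non-realizable is at least the (easy) probability that this subsample singles out the $\NRC(\perclass)$-certificate that the large sample is guaranteed to contain.

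\emph{Step 1 (a large sample is almost surely non-realizable).} Since $\alpha > 0$, every $\per \in \perclass$ has $\er(\distint, \per) \ge \alpha$. Applying Theorem~\ref{fact: vc-gen} with confidence $\delta = 1/2$ and error parameter any $\eps < \alpha$ (then letting $\eps \nearrow \alpha$, as the quantity we bound does not depend on $\eps$), we obtain an integer $N$ with $N \le \frac{16\, v \ln(16/\alpha)}{\alpha}$ — after folding the additive $\ln(2/\delta)$ and rounding terms into $v\ln(16/\alpha)$, which is legitimate because $v \ge 1$ and $\alpha \le 1$ — such that, with probability at least $1/2$ over $S \sim \distint^N$, no $\per \in \perclass$ satisfies $\er(S, \per) = 0$; that is, $S$ is not realizable by $\perclass$. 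Note also $N \ge v \ge \wit$.

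\emph{Step 2 (subsampling discovers a witness).} Draw $z_1, \dots, z_N$ i.i.d.\ from $\distint$ and, independently, a uniformly random $\wit$-element subset $I \subseteq [N]$. By exchangeability of the $z_i$, the collection $\{z_i : i \in I\}$ has exactly the distribution of $\wit$ i.i.d.\ draws from $\distint$, and since realizability of a tuple depends only on its underlying set,
\[ p_{nr}(\distint, \perclass, \wit) = \Pr[z_1, \dots, z_N,\, I]{\{z_i : i \in I\} \text{ is not realizable by } \perclass}. \]
Now condition on the $z_i$'s. If $S := \{z_1, \dots, z_N\}$ is not realizable (an event of probability at least $1/2$ by Step~1), then $\NRC(\perclass) = \wit$ yields a non-realizable subset of $S$ of size at most $\wit$; since any superset of a non-realizable set is non-realizable and $N \ge \wit$, we enlarge it to a non-realizable index set $J = J(z)$ with $|J| = \wit$, chosen by a fixed deterministic rule. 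Whenever $I = J$ the set $\{z_i : i \in I\}$ is non-realizable, so $\Pr[I]{\{z_i : i \in I\} \text{ not realizable}} \ge \Pr[I]{I = J} = \binom{N}{\wit}^{-1}$.

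\emph{Step 3 (assembling the bound) and the main obstacle.} Combining the two steps, $p_{nr}(\distint, \perclass, \wit) \ge \tfrac12 \binom{N}{\wit}^{-1} \ge \tfrac12\,(\wit/(eN))^{\wit}$ using $\binom{N}{\wit} \le (eN/\wit)^{\wit}$, and substituting $N \le \frac{16\, v\ln(16/\alpha)}{\alpha}$ gives
\[ p_{nr}(\distint, \perclass, \wit) \;\ge\; \frac12 \left( \frac{\wit\, \alpha}{16 e\, v \ln(16/\alpha)} \right)^{\wit}, \]
as claimed. The one conceptually nontrivial move is Step~2 — deciding to route the argument through a larger sample at all, and recognizing that a uniformly random $\wit$-subset of $N$ i.i.d.\ points is itself a size-$\wit$ i.i.d.\ sample, so that the elusive probability $p_{nr}(\distint, \perclass, \wit)$ is bounded below by the tractable probability of ``catching'' the $\NRC$-witness inside the large, almost-surely-non-realizable sample. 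Everything else is routine: invoking Theorem~\ref{fact: vc-gen} and carrying the constant down to $16e$, the exchangeability identity, the deterministic selection of $J(z)$, and the elementary binomial estimate.
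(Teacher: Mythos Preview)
Your proof is correct and follows essentially the same approach as the paper: draw a large sample $N$, use Theorem~\ref{fact: vc-gen} to show it is non-realizable with probability at least $1/2$, then lower-bound the chance that a random $\wit$-subset hits the $\NRC$-certificate by $\binom{N}{\wit}^{-1}$. The only cosmetic difference is that the paper handles the regime $\wit > \frac{16}{\alpha}\VC(\perclass)\ln(16/\alpha)$ as a separate case (where $p_{nr} \ge 1/2$ directly), whereas you fold it in by working with $v = \max(\VC(\perclass), \wit)$ when bounding $N$, which automatically ensures $N \ge \wit$.
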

\ifnum \coltshort = 0
\begin{proof}
Let $g(\eps)  = \frac{16}{\eps}\VC(\perclass)\ln(16/\eps)$. 
Function $g$ is continuous and strictly decreasing in $\eps$ for $\eps \in (0,1]$. 
The proof analyzes two cases

In case one, if 
$\wit > g(\er(\distint,\perclass))$, then by Fact \ref{fact: vc-gen} the probability that a dataset of $\wit$ points drawn from $\distint$ is not realizable by $\perclass$ is  
\begin{align*}
    p_{nr}(\distint, \perclass, \wit) 
    &= \Pr[S_\wit\sim \distint^{\wit}]{S_\wit\text{ is not realizable by }\perclass} \\
    &= \Pr[S_\wit\sim \distint^{\wit}]{\min_{\per \in \perclass}\er(S_ \wit,\per)>0} \geq \frac{1}{2},
\end{align*}
because $\er(\distint,\perclass) >0$ and $\wit\ge \frac{8}{\eps}\left[\mathrm{VC}(\mathcal{F})\ln (16/\eps)+\ln (2/\delta)\right]$ for $\eps=\er(\distint,\perclass)$ and $\delta=\frac 1 2$.
This is stronger than the claimed lower bound.
To see this, recall $\vc \ge \wit$ and observe
\begin{align*}
    \frac 1 2 \left(\frac{\wit \cdot \er(\distint, \perclass)}{16e\cdot \vc \ln{(16/\er(\distint, \perclass))}}\right)^{\wit}
    &\le \frac 1 2
    \left(\frac{\er(\distint, \perclass)}{16e\cdot \ln{(16/\er(\distint, \perclass))}}\right)^{\wit} \\
    &\le \frac 1 2 \left(\frac{1}{16e\cdot \ln{(16)}}\right)^{\wit},
\end{align*}
which is less than $\frac{1}{2}$.

In case two, suppose $\wit \leq g(\er(\distint,\perclass))$.
Drawing $\wit$  i.i.d.\ samples from $\distint$ is equivalent to drawing a larger dataset $S_n =\{(\map_i,\lab_i)\}_{i \in [n]}$ from $\distint$ of some size $n\geq m$, set later in the proof, and picking a uniformly random subset $S_\wit\subseteq S_n$ of size $\wit$.
More formally, 
\begin{align*}
     p_{nr}(\distint, \perclass, \wit) &=
    \Pr[S_\wit \sim \distint^{\wit}]{S_\wit \text{ is not realizable}} \\
    &= \Pr[\substack{S_n\sim \distint^n \\ S_\wit \sim \binom{S_n}{\wit}}]{S_\wit \text{ is not realizable} }.
\end{align*}
Furthermore, we notice that  if $S_n$ is labeled correctly by some $\per \in \perclass$, then $\per$ labels $S_\wit$ correctly, too.
Hence, 
\begin{align}
    p_{nr}(\distint, \perclass, \wit) &= \Pr{S_\wit \text{ is not realizable} } \nonumber \\
        &= \Pr{S_\wit \text{ is not realizable}  \mid S_n \text{ is not realizable} } \cdot \Pr{S_n \text{ is not realizable}}. \label{eq:not_realizable_breakdown}
\end{align}
We know that if $S_n$ is not realizable then there exists a non-realizability certificate of size $m$.
Since there are $\binom{n}{\wit}$ subsets, $S_\wit$ is exactly this certificate with probability at least $1/\binom{n}{\wit}$.

We now provide a lower bound on the probability that $S_n$ is not realizable.
We set 
$$n := \frac{16}{\er(\distint,\perclass)}\VC(\perclass)\ln(16/\er(\distint,\perclass))$$,
which satisfies $n\geq m$ by hypothesis. Then, by Fact \ref{fact: vc-gen} the probability that dataset $S_n$ is not realizable by $\perclass$ is  
\begin{align*}
    \Pr[S_n\sim \distint^{n}]{S_n\text{ is not realizable by }\perclass} = 
    \Pr[S_n\sim \distint^{n}]{\min_{\per \in \perclass}\er(S_n,\per)>0} \geq \frac{1}{2},
\end{align*}
again because $\er(\distint,\perclass) >0$ and $n$ is sufficiently large.

Thus, continuing from \Cref{eq:not_realizable_breakdown} and using a bound on the binomial coefficient, we see that $p_{nr}(\distint, \perclass, \wit) \ge \frac{1}{2\binom{n}{\wit}} \geq \frac{1 }{2(\frac{en}{\wit})^{\wit}}$.
%
Plugging in $n=\frac{16}{\er(\distint,\perclass)}\VC(\perclass)\ln(16/\er(\distint,\perclass))$, we get that
\begin{align}
    p_{\text{nr}}(\distint,\perclass,\wit) \geq \frac{1}{2}\left[\frac{m \cdot \er(\distint,\perclass)}{16e\cdot \VC(\perclass)\ln{(16/\er(\distint,\perclass))}}\right]^m.
\end{align}
As $\VC(\perclass)\le \vc$, this is stronger than the claim in the lemma.
This concludes the proof.
\end{proof}
\fi
\if \coltshort = 0
The prior lemma established a quantitative relationship between error and the probability.
The following lemma shows how we can exploit this relationship for provable metalearning.
\else 
\begin{figure}[ht] 
    \centering
        \resizebox{\textwidth}{!}{
            \tikzset{every picture/.style={line width=0.75pt}} 

\begin{tikzpicture}[x=0.75pt,y=0.75pt,yscale=-1,xscale=1]

\draw  [line width=1.5]  (16,97.4) .. controls (16,87.79) and (23.79,80) .. (33.4,80) -- (87.6,80) .. controls (97.21,80) and (105,87.79) .. (105,97.4) -- (105,149.6) .. controls (105,159.21) and (97.21,167) .. (87.6,167) -- (33.4,167) .. controls (23.79,167) and (16,159.21) .. (16,149.6) -- cycle ;

\draw [line width=1.5]    (112,122) -- (166,122) ;
\draw [shift={(169,122)}, rotate = 180] [color={rgb, 255:red, 0; green, 0; blue, 0 }  ][line width=1.5]    (14.21,-4.28) .. controls (9.04,-1.82) and (4.3,-0.39) .. (0,0) .. controls (4.3,0.39) and (9.04,1.82) .. (14.21,4.28)   ;
\draw  [line width=1.5]  (175,97.4) .. controls (175,87.79) and (182.79,80) .. (192.4,80) -- (246.6,80) .. controls (256.21,80) and (264,87.79) .. (264,97.4) -- (264,149.6) .. controls (264,159.21) and (256.21,167) .. (246.6,167) -- (192.4,167) .. controls (182.79,167) and (175,159.21) .. (175,149.6) -- cycle ;

\draw  [line width=1.5]  (341,61) .. controls (341,50.51) and (349.51,42) .. (360,42) -- (417,42) .. controls (427.49,42) and (436,50.51) .. (436,61) -- (436,181) .. controls (436,191.49) and (427.49,200) .. (417,200) -- (360,200) .. controls (349.51,200) and (341,191.49) .. (341,181) -- cycle ;

\draw [line width=1.5]    (272,121) -- (326,121) ;
\draw [shift={(329,121)}, rotate = 180] [color={rgb, 255:red, 0; green, 0; blue, 0 }  ][line width=1.5]    (14.21,-4.28) .. controls (9.04,-1.82) and (4.3,-0.39) .. (0,0) .. controls (4.3,0.39) and (9.04,1.82) .. (14.21,4.28)   ;
\draw [line width=1.5]    (446,119) -- (556,120) ;

\draw  [color={rgb, 255:red, 65; green, 117; blue, 5 }  ,draw opacity=1 ][line width=1.5]  (8,62.4) .. controls (8,40.64) and (25.64,23) .. (47.4,23) -- (516.6,23) .. controls (538.36,23) and (556,40.64) .. (556,62.4) -- (556,180.6) .. controls (556,202.36) and (538.36,220) .. (516.6,220) -- (47.4,220) .. controls (25.64,220) and (8,202.36) .. (8,180.6) -- cycle ;
\draw  [color={rgb, 255:red, 65; green, 117; blue, 5 }  ,draw opacity=1 ][fill={rgb, 255:red, 255; green, 255; blue, 255 }  ,fill opacity=1 ][line width=1.5]  (45,4) -- (176,4) -- (176,44) -- (45,44) -- cycle ;

\draw [color={rgb, 255:red, 65; green, 117; blue, 5 }  ,draw opacity=1 ][line width=1.5]    (557,120) -- (581,120) ;
\draw [shift={(584,120)}, rotate = 180] [color={rgb, 255:red, 65; green, 117; blue, 5 }  ,draw opacity=1 ][line width=1.5]    (14.21,-4.28) .. controls (9.04,-1.82) and (4.3,-0.39) .. (0,0) .. controls (4.3,0.39) and (9.04,1.82) .. (14.21,4.28)   ;

\draw (27.23,103.5) node [anchor=north west][inner sep=0.75pt]   [align=left] {\begin{minipage}[lt]{46.94pt}\setlength\topsep{0pt}
\begin{center}
$\metadist$\\meta dist.
\end{center}

\end{minipage}};
\draw (185.23,103.5) node [anchor=north west][inner sep=0.75pt]   [align=left] {\begin{minipage}[lt]{49.27pt}\setlength\topsep{0pt}
\begin{center}
$\dist_j$\\task $\displaystyle j$ dist.
\end{center}

\end{minipage}};
\draw (58,14) node [anchor=north west][inner sep=0.75pt]   [align=left] {\textcolor[rgb]{0.25,0.46,0.02}{Distribution $\cD$}};
\draw (358.66,57) node [anchor=north west][inner sep=0.75pt]   [align=left] {\begin{minipage}[lt]{41.86pt}\setlength\topsep{0pt}
\begin{center}
Labeled \\samples\\$\displaystyle (\feat_{1}^{(j)},\, \lab_{1}^{(j)})$\\$\displaystyle (\feat_{2}^{(j)},\, \lab_{2}^{(j)})$\\$\displaystyle \vdots $\\$\displaystyle (\feat_{n}^{(j)},\,\lab_{n}^{(j)})$
\end{center}

\end{minipage}};
\draw (596,108) node [anchor=north west][inner sep=0.75pt]   [align=left] {$\zeta _{j} := ((\feat_1^{(j)},\lab_1^{(j)},\ldots, \feat_{\wit}^{(j)}, \lab_{\wit}^{(j)}), +1)$};
\draw (273,129) node [anchor=north west][inner sep=0.75pt]   [align=left] {$n$ times};
\draw (274,96) node [anchor=north west][inner sep=0.75pt]   [align=left] {sample};
\draw (114,97) node [anchor=north west][inner sep=0.75pt]   [align=left] {sample};
\draw (449,127) node [anchor=north west][inner sep=0.75pt]   [align=left] {\& add label +1};
\draw (450,94) node [anchor=north west][inner sep=0.75pt]   [align=left] {Concatenate};

\end{tikzpicture}
        }
        \caption{Constructing a datapoint drawn from distribution $\cD$.}
    
    \label{fig:newDataset}
\end{figure}
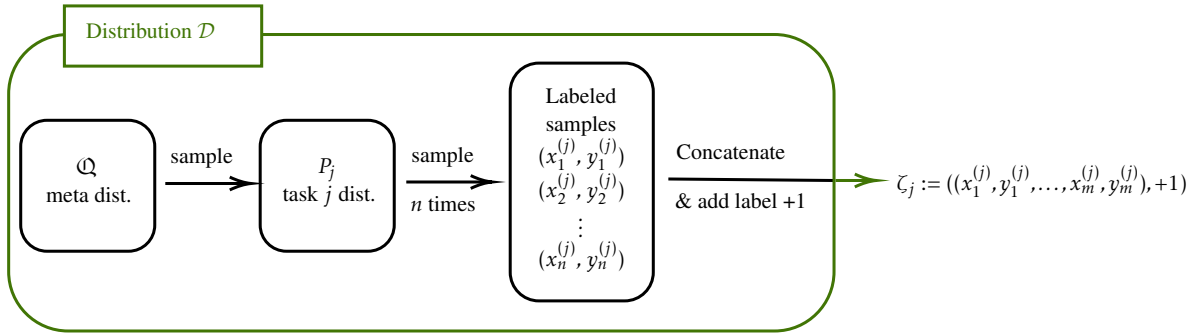
The prior lemma established a quantitative relationship between the test error of $\perclass$ and the probability of non-realizability. The following lemma shows how we can exploit this relationship for provable metalearning. For its proof we view the dataset of every task $j$ as a single ``data point'' $\zeta_j$ constructed as depicted in Figure~\ref{fig:newDataset}. Our goal is to learn a realizability predicate that labels these points correctly. We prove that such a predicate corresponds to a representation with bounded meta-error.
\fi
 \newcommand{\lemNumTasks}{Let $\repclass$ be a class of representation functions $\rep: \featdom \to \mapdom$, $\perclass$ be a class of specialized classifiers $\per: \mapdom \to \bits$ and $\wit$ be a positive integer.
 Suppose there exists a strictly increasing convex function $\phi: (0,1]\to[0,1]$ such that for all metadistributions $\metadist$ satisfying $\min_{\rep \in \repclass} \rer(\metadist, \rep, \perclass) = 0$, all data distributions $\dist$ in the support of $\metadist$ and all representations $\rep \in \repclass$, if $\rer(\dist, \rep, \perclass) > 0$ the following holds:
\begin{equation*} 
    \phi\left(\rer(\dist, \rep, \perclass)\right) \leq   \Pr[(\feat_1,\lab_1),\ldots ,(\feat_{\wit},\lab_{\wit})\sim \dist^{\wit}]{(\rep(\feat_1),\lab_1), \dots, (\rep(\feat_{\wit}),\lab_{\wit})\text{ not realizable by }\perclass} \, .
\end{equation*}
 Then, we can $(\eps,\delta)$-properly metalearn  $(\repclass,\perclass)$ with $\users = O\left(\frac{\VC(\realpclass_{\wit, \perclass, \repclass})+\ln(1/\delta)}{\phi(\varepsilon)}\right)$ tasks and $m$ samples per task in the realizable case.}
\begin{lem}
 \lemNumTasks
 \label{lem:num-tasks}
\end{lem}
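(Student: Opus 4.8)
The plan is to run the ``thought experiment'' sketched after \Cref{thm:met-samples-intro}: reduce proper realizable metalearning of $(\repclass,\perclass)$ to ordinary realizable PAC learning of the realizability-predicate class $\realpclass_{\wit,\perclass,\repclass}$ (\Cref{def:realizability_predicate}), and then use the hypothesized inequality together with Jensen's inequality to turn the error guarantee on the learned predicate into a meta-error guarantee on the corresponding representation.

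\emph{The reduction.} Fix any realizable metadistribution $\metadist$ (one with $\min_{\rep\in\repclass}\rer(\metadist,\rep,\perclass)=0$) and let $\rep^*$ attain the minimum. Since $\rer(\dist,\rep^*,\perclass)\ge 0$, we get $\rer(\dist,\rep^*,\perclass)=0$ for $\metadist$-almost-every $\dist$, so for almost every $\dist$ there is $\per_\dist\in\perclass$ with $\per_\dist\circ\rep^*$ labeling $\dist$ perfectly, and hence $\realp_{\rep^*}$ outputs $+1$ on $\dist^{\wit}$-almost-every $\wit$-tuple. Define $\cD$ on $(\featdom\times\bits)^{\wit}\times\{+1\}$ by: draw $\dist\sim\metadist$, draw $(\feat_1,\lab_1),\dots,(\feat_{\wit},\lab_{\wit})\sim\dist^{\wit}$, and append the label $+1$. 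Then $\cD$ is realizable by $\realpclass_{\wit,\perclass,\repclass}$, witnessed by $\realp_{\rep^*}$, and the metalearning sample --- $\users$ tasks with $\wit$ examples each --- is precisely $\users$ i.i.d.\ draws $\zeta_1,\dots,\zeta_\users$ from $\cD$. The algorithm that outputs $\hat\rep$ minimizing the fraction of non-realizable training datasets is exactly ERM over $\realpclass_{\wit,\perclass,\repclass}$ on this sample (a mistake on $\zeta_j$ means $\realp_\rep(\zeta_{j,\textrm{feat}})\neq+1$, and all $\cD$-labels equal $+1$), and since $\realp_{\rep^*}$ is consistent, so is the returned $\realp_{\hat\rep}$. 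Thus, by the classical realizable-case generalization bound for consistent hypotheses~\cite{shalev2014understanding}, it suffices to take $\users=O\!\big((\VC(\realpclass_{\wit,\perclass,\repclass})+\ln(1/\delta))/\phi(\eps)\big)$ tasks to ensure, with probability at least $1-\delta$,
\[
\er(\cD,\realp_{\hat\rep})\;=\;\Pr[\zeta\sim\cD]{\realp_{\hat\rep}(\zeta_{\textrm{feat}})\neq+1}\;=\;\Exp_{\dist\sim\metadist}\big[p_{nr}(\hat\rep(\dist),\perclass,\wit)\big]\;\le\;\phi(\eps).
\]

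\emph{From the predicate back to the representation.} Condition on this event. For every $\dist$ in the support of $\metadist$, if $\rer(\dist,\hat\rep,\perclass)>0$ the hypothesis gives $\phi(\rer(\dist,\hat\rep,\perclass))\le p_{nr}(\hat\rep(\dist),\perclass,\wit)$, hence $\rer(\dist,\hat\rep,\perclass)\le\phi^{-1}(p_{nr}(\hat\rep(\dist),\perclass,\wit))$; when $\rer(\dist,\hat\rep,\perclass)=0$ this also holds (with $\phi^{-1}(0):=0$, noting $\phi^{-1}$ extends continuously since $\phi$ is strictly increasing). As $\phi$ is convex and increasing, $\phi^{-1}$ is concave and increasing, so Jensen's inequality yields
\begin{align*}
\rer(\metadist,\hat\rep,\perclass)&=\Exp_{\dist\sim\metadist}\big[\rer(\dist,\hat\rep,\perclass)\big]
\;\le\;\Exp_{\dist\sim\metadist}\big[\phi^{-1}\big(p_{nr}(\hat\rep(\dist),\perclass,\wit)\big)\big]\\
&\le\;\phi^{-1}\!\Big(\Exp_{\dist\sim\metadist}\big[p_{nr}(\hat\rep(\dist),\perclass,\wit)\big]\Big)
\;=\;\phi^{-1}\big(\er(\cD,\realp_{\hat\rep})\big)\;\le\;\phi^{-1}\big(\phi(\eps)\big)\;=\;\eps,
\end{align*}
where the last inequality uses monotonicity of $\phi^{-1}$. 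Since $\min_{\rep}\rer(\metadist,\rep,\perclass)=0$ this is the desired $(\eps,\delta)$ proper-metalearning guarantee, achieved by a single algorithm and task count that depend neither on $\metadist$ nor on $\phi$.

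\emph{Main obstacle.} The delicate step is the averaging: the PAC bound controls only the task-averaged non-realizability probability $\Exp_{\dist\sim\metadist}[p_{nr}(\hat\rep(\dist),\perclass,\wit)]$, while the metalearning objective is the task-averaged error $\Exp_{\dist\sim\metadist}[\rer(\dist,\hat\rep,\perclass)]$, and the two are connected only through the nonlinear per-task inequality $\phi(\rer)\le p_{nr}$. Inverting per task and recombining is exactly where convexity of $\phi$ (equivalently concavity of $\phi^{-1}$) is used, via Jensen; without it the argument breaks. The remaining points --- measurability of $\realp_\rep$, the $\rer=0$ boundary case, and the logarithmic-in-accuracy factor of the ERM generalization bound (which is absorbed into the $O(\cdot)$ once $\phi$ is instantiated, e.g.\ via \Cref{lem:real-to-error}) --- are routine.
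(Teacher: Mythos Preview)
Your proof is correct and follows essentially the same approach as the paper: construct the distribution $\cD$ over $(S,+1)$ pairs, apply realizable PAC learning of $\realpclass_{\wit,\perclass,\repclass}$ via ERM to bound $\Exp_{\dist\sim\metadist}[p_{nr}(\hat\rep(\dist),\perclass,\wit)]\le\phi(\eps)$, then invert the per-task inequality and apply Jensen's inequality using concavity of $\phi^{-1}$. Your treatment is slightly more careful in a couple of places (you explicitly note the $\rer=0$ boundary case and flag the $\ln(1/\phi(\eps))$ factor from the realizable PAC bound that both you and the paper absorb into the $O(\cdot)$), but the argument is the same.
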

\ifnum \coltshort = 0
\begin{proof}
Our goal is to show that there exists a shared representation that achieves small representation error for $\metadist$.
The proof proceeds in two stages.
First, we use our bound on the VC dimension of $\realpclass_{\wit, \perclass, \repclass}$ to show that we can find a representation $\hat h$ that, when applied to data from a new task, admits a perfect specialized classifier with high probability.
Second, we connect this to $\rer(\metadist,\hat{\rep}, \perclass)$, the error of $\hat \rep$ on the meta-distribution.

Recall our approach for monotone thresholds in \Cref{sec:real-tech}:
for each task, we receive a data set $S^{(j)}=(\feat_1^{(j)},\lab_1^{(j)},\ldots, \feat_{\wit}^{(j)}, \lab_{\wit}^{(j)})$ and construct a single ``data point'' $\zeta_j$:
\begin{align*}
    \zeta_j \defeq (S^{(j)}, +1)\,.
\end{align*}
We set the label to ``$+1$'' because, by the meta-realizability assumption, there exists an $h^*$ such that $r_{h^*}(S^{(j)})=+1$ for all $j$.
Since each task distribution $\dist_j$ is drawn independently from $\metadist$, these $\zeta_j$ observations are drawn i.i.d.\ from some distribution $\cD$.
For an illustration of this process, see Figure~\ref{fig:newDataset}.

We find an $\hat h$ such that $r_{\hat h}$ has zero error on the dataset $\zeta_1,\ldots,\zeta_t$.
Because we set $t = \Theta\left(\frac{\VC(\realpclass_{\wit, \perclass, \repclass})+\ln(1/\delta)}{\phi(\varepsilon)}\right)$, by \Cref{fact:pac-vc} we know that $r_{\hat h}$ generalizes.
That is, with probability at least $1-\delta$, we have
\begin{equation}
    \Pr[\zeta=(S,+1)\sim \cD]{\realp_{\hat{\rep}}(S)\neq +1}\leq \phi(\varepsilon)\,.
    \label{eq:pr_nr_err_erm}
\end{equation}

For a data set $S=((\feat_1,\lab_1),\ldots ,(\feat_{\wit},\lab_{\wit}))$ and representation $\rep$, define  $S_h =((\rep(\feat_1),\lab_1),\ldots ,$ $(\rep(\feat_{\wit}),\lab_{\wit}))$.
Recall that by the assumption of this lemma, for $\hat{\rep}$ and all $\dist$ in the support of $\metadist$, if $\rer(\dist, \hat{\rep}, \perclass)>0$ we have:
\begin{equation}
\phi\left(\rer(\dist, \hat{\rep}, \perclass)\right) \leq  \Pr[S \sim \dist^{\wit}]{S_{\hat h} \text{ is not realizable by }\perclass}. 
\end{equation}
Note that $\phi$ is a strictly increasing function and, thus, it has an inverse function $\phi^{-1}$ that is also strictly increasing. Therefore, the above bound implies that
\begin{equation} \label{eq:phi_assumption}
\rer(\dist, \hat{\rep}, \perclass)\leq  \phi^{-1}\left( \Pr[S\sim \dist^{\wit}]{S_{\hat h}\text{ is not realizable by }\perclass}\right). 
\end{equation}

Now we are ready to bound the meta-error of $\hat{\rep}$: $\rer(\metadist, \hat{\rep}, \perclass)$. We start by bounding $\phi\left(\rer(\metadist, \hat{\rep}, \perclass)\right)$. Since $\phi$ is convex, $\phi^{-1}$ is concave and we can apply Jensen's inequality. We get that:
\begin{align*}
    \rer(\metadist,\hat{\rep}, \perclass) 
    &= \Exp_{\dist \sim \metadist}\left[\rer(\dist, \hat{\rep}, \perclass)\right]\\
    & \leq \Exp_{\dist \sim \metadist}\left[\rer(\dist, \hat{\rep}, \perclass)\mid\rer(\dist, \hat{\rep}, \perclass)>0\right]\\
    &\leq \Exp_{\dist \sim \metadist}\left[\phi^{-1}\left(\Pr[S\sim \dist^{\wit}]{S_{\hat h}\text{ is not realizable by }\perclass}\right)\right] \tag{by Eq. \ref{eq:phi_assumption}}\\
    & \leq \phi^{-1} \left(\Exp_{\dist \sim \metadist}\left[\Pr[S\sim \dist^{\wit}]{S_{\hat h}\text{ is not realizable by }\perclass}\right]\right)\tag{Jensen's inequality}\\
    & \leq \phi^{-1}(\phi(\eps))=\eps \tag{by Eq. \ref{eq:pr_nr_err_erm}}
\end{align*}

The above bound implies: 
$$\rer(\metadist, \hat{\rep}, \perclass) \leq \varepsilon = \min_{\rep \in \repclass}\rer\paren{\metadist, \rep, \perclass} + \varepsilon\,.$$

As a result, 
using $\users = O\left(\frac{\VC(\realpclass_{\wit, \perclass, \repclass})+\log(1/\delta)}{\phi(\varepsilon)}\right)$ tasks and $\wit$ samples from each task,  we have found a representation function $\hat{\rep}$ that has the desired error bound for metalearning with probability $1-\delta$. Hence, the proof is complete.
%
\end{proof}


We can now prove \Cref{thm:met-samples} by combining the results of \Cref{lem:real-to-error} and \Cref{lem:num-tasks}.
 \begin{proof}[Proof of Theorem \ref{thm:met-samples}] 
Let $$\phi(\varepsilon) = \frac{1}{2} \left[\frac{\wit \varepsilon}{16e\cdot \max(\VC(\perclass),\wit)\ln(16/\varepsilon)}\right]^{\wit}.$$ 
By differentiating $\phi$ twice we see that it is a strictly increasing convex function in $\varepsilon$, for $\eps \in (0,1)$ and $\wit \geq 1$.

Applying \Cref{lem:real-to-error}, we obtain that for every representation $\hat\rep$ and distribution $P$ if \\$\min_{\per \in \perclass}\Pr[(\feat, \lab) \sim \dist]{\per(\hat{\rep}(\feat))\neq y}>0$, then
\begin{align*}
    \phi\biggl(\min_{\per \in \perclass} & \Pr[(\feat, \lab) \sim \dist]{\per(\hat{\rep}(\feat))\neq y}\biggr) \\
        &\leq   \Pr[(\feat_1,\lab_1),\ldots ,(\feat_{\wit},\lab_{\wit})\sim \dist^{\wit}]{(\hat{\rep}(\feat_1),\lab_1), \dots, (\hat{\rep}(\feat_{\wit}),\lab_{\wit})\text{ is not realizable by }\perclass}.
\end{align*}

 This satisfies the assumptions of \Cref{lem:num-tasks} which says that we can metalearn $(\repclass, \perclass)$ with $\users= O\left(\frac{\VC(\realpclass_{\wit, \perclass, \repclass})+\ln(1/\delta)}{\phi(\varepsilon)}\right)$ tasks and $\wit$ samples per task. This concludes our proof.
 \end{proof}

Our next result analyzes metalearning with fewer tasks and more samples per task.
Formally, \Cref{thm:met-real-samples} show that we can metalearn $(\repclass, \perclass)$ for a meta-realizable $\metadist$ with $\Tilde{O}(\VC(\realpclass_{\samples, \perclass, \repclass})/\eps)$ tasks and $\Tilde{O}(\VC(\perclass)/\eps)$ samples per task.
\fi
\newcommand{\thmMetRealSamples}{Let $\repclass$ be a class of representation functions $\rep: \featdom \to \mapdom$ and $\perclass$ be a class of specialized classifiers $\per: \mapdom \to \bits$. For every $\eps, \delta \in (0,1)$, we can $(\varepsilon, \delta)$-properly metalearn $(\repclass, \perclass)$ in the realizable case with $\users$ tasks and $\samples$ samples per task when 
$\users = O\left(\frac{\VC(\realpclass_{\samples, \perclass, \repclass})\ln(1/\eps)+\ln(1/\delta)}{\eps}\right)\quad $ and $\quad\samples = O\left(\frac{\VC(\perclass) \ln(1/\eps)}{\eps}\right) .$}
 \begin{thm}
\label{thm:met-real-samples}
\thmMetRealSamples
\end{thm}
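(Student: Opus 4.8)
The plan is to follow exactly the structural template used for Theorem~\ref{thm:met-samples}, but replacing the size-$\wit$ non-realizability certificate machinery with a direct uniform-convergence argument at sample size $\samples = O(\VC(\perclass)\ln(1/\eps)/\eps)$. First I would fix the meta-realizable metadistribution $\metadist$ and let $\rep^* \in \repclass$ witness $\min_{\rep\in\repclass}\rer(\metadist,\rep,\perclass)=0$. As in Section~\ref{sec:real-tech}, for each task $j\in[\users]$ I take the $\samples$-point data set $S^{(j)}$ drawn from $\dist_j\sim\metadist$ and form the labeled ``meta-point'' $\zeta_j = (S^{(j)},+1)$; these are i.i.d.\ from a distribution $\cD$ over $(\featdom\times\bits)^\samples \times \{\pm1\}$, and $\cD$ is realizable by $\realpclass_{\samples,\perclass,\repclass}$ because $\realp_{\rep^*}(S^{(j)})=+1$ almost surely. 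The algorithm is proper ERM on the realizability predicate class: output $\hat\rep$ with $\realp_{\hat\rep}(S^{(j)})=+1$ for all $j$ (one exists, namely $\rep^*$, modulo a measure-zero/no-error caveat that is handled exactly as in Lemma~\ref{lem:num-tasks}).

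The key step is choosing the right generalization bound. Since we are in the \emph{realizable} case for the predicate class $\realpclass_{\samples,\perclass,\repclass}$, I would invoke the realizable PAC bound of Theorem~\ref{fact:pac-vc}(3) (equivalently Theorem~\ref{fact: vc-gen}), which gives sample complexity $O((\vc\ln(1/\eps)+\ln(1/\delta))/\eps)$ in terms of $\vc = \VC(\realpclass_{\samples,\perclass,\repclass})$ — this is the source of the $1/\eps$ (rather than $1/\eps^2$) in the task bound. Setting $\users = O((\VC(\realpclass_{\samples,\perclass,\repclass})\ln(1/\eps)+\ln(1/\delta))/\eps)$ guarantees that with probability $\ge 1-\delta$, $\Pr_{\zeta=(S,+1)\sim\cD}[\realp_{\hat\rep}(S)\neq +1]\le \eps$, i.e.
\[
    \Exp_{\dist\sim\metadist}\!\Big[\Pr_{S\sim\dist^\samples}\big[(\hat\rep(S))\text{ not realizable by }\perclass\big]\Big] \le \eps.
\]

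The remaining step converts this ``per-task non-realizability at sample size $\samples$'' into a bound on $\rer(\metadist,\hat\rep,\perclass)$, and this is where the choice $\samples = O(\VC(\perclass)\ln(1/\eps)/\eps)$ enters. I would use Theorem~\ref{fact: vc-gen} applied to $\perclass$ on the induced distribution $\hat\rep(\dist)$ over $\mapdom\times\bits$: if $\rer(\dist,\hat\rep,\perclass) = \er(\hat\rep(\dist),\perclass) > \eps$, then with $\samples = \Theta(\VC(\perclass)\ln(1/\eps)/\eps)$ samples the probability that the sample is realizable by $\perclass$ is at most $\delta'$, which we can drive below, say, $1/2$ (in fact the statement of Theorem~\ref{fact: vc-gen} gives $\Pr[\text{some bad }f\text{ has zero sample error}]\ge 1-\delta'$, and with zero sample error the sample is certainly realizable). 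Contrapositively, for any $\dist$ the event ``$S$ realizable by $\perclass$'' forces $\rer(\dist,\hat\rep,\perclass)\le\eps$ with probability close to $1$; feeding this into a Markov-type averaging over $\dist\sim\metadist$ together with the displayed bound above yields $\rer(\metadist,\hat\rep,\perclass) = O(\eps)$, and rescaling $\eps$ by a constant gives the theorem. The main obstacle is bookkeeping the two nested failure probabilities — the uniform-convergence failure over $\realpclass_{\samples,\perclass,\repclass}$ (controlled by $\users$) and the per-task concentration failure over $\perclass$ (controlled by $\samples$) — and arguing that a constant per-task failure probability suffices because it only inflates the expected meta-error by an additive constant times $\eps$; this is the same subtlety as in the split used in the proof of part~3 of Theorem~\ref{thm:mtl-smpls}, and I would handle it with the identical conditioning-and-Markov argument rather than demanding high-probability correctness on every task.
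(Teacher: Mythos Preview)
Your plan is correct and matches the paper's proof essentially step for step: construct the meta-points $\zeta_j$, apply the realizable PAC bound for $\realpclass_{\samples,\perclass,\repclass}$ to get the $1/\eps$ task scaling, then use Theorem~\ref{fact: vc-gen} with $\delta'=1/2$ per task together with Markov over $\dist\sim\metadist$ to convert to a bound on $\rer(\metadist,\hat\rep,\perclass)$, with an $\eps/3$-type split and a final constant rescaling. (One small slip: in your parenthetical about Theorem~\ref{fact: vc-gen} you have the direction reversed---the theorem says that with probability $\ge 1-\delta'$ all bad $f$ have \emph{nonzero} sample error, so the sample is \emph{not} realizable---but your stated conclusion that the realizability probability is at most $1/2$ is correct.)
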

\if \coltshort = 0 
\begin{proof}
    First, we set $\eps_1:=\eps/3$. Set the number of tasks $$\users := O\left(\frac{\VC(\realpclass_{\samples, \perclass, \repclass})\ln(1/\eps_1)+\ln(1/\delta)}{\eps_1}\right)$$.
    Each task $j$ has $\samples $ samples $S_j = \{(\feat_i^{(j)}, \lab_i^{(j)})\}_{i \in [\samples]}$, we construct a dataset $Z$ of $\users$ points $\zeta_j = ((\feat_1^{(j)},\lab_1^{(j)},\ldots, \feat_{\samples}^{(j)}, \lab_{\samples}^{(j)}),+1)$, one for each task $j$, as in \Cref{fig:newDataset}. Each $\zeta_j$ is drawn i.i.d.\ from data distribution $\cD$ (where we first draw $\dist$ from $\metadist$ and then $\samples$ points from $\dist$). 
    Since we are in the realizable case, there exists a representation $\rep$ such that $\realp_\rep$ returns $+1$ for every sample drawn from $\cD$. By Fact~\ref{fact:pac-pd} we have that for $\hat{\rep} = \arg \min_{\rep \in \repclass} \er(Z,\realp_\rep)$ with probability at least $1-\delta$ over dataset $Z$
\begin{align*}
   \er(\cD, \realp_{\hat{\rep}})\leq \frac{\eps}{3}.
\end{align*}

Fix a distribution $\dist$ in the support of $\metadist$. We start by assuming that $\rer(\dist, \hat{\rep}, \perclass) > \eps/3$. By \Cref{fact: vc-gen} for a dataset $S$ of $$\samples := \frac{24}{\eps} \left(\VC(\perclass) \ln(48/\eps) + \ln(4)\right)$$ samples drawn from $\dist$, with probability at least $1/2$ over $S$ all specialized classifiers $\per \in \perclass$ with $\er(\dist, \per \circ \hat{\rep}) >\eps/3$ have $\er(S, \per \circ \hat{\rep})>0$. Therefore,  
\begin{align*}
    \Pr[S \sim \dist^\samples]{\min_{\per \in \perclass} \er(S, \per\circ\hat{\rep})> 0} =
    \Pr[S \sim \dist^\samples]{\realp_{\hat{\rep}}(S) \neq -1}   \geq \frac{1}{2}.
\end{align*}
By Markov's inequality, we see that
\begin{align*}
    \Pr[\dist \sim \metadist]{\Pr[S\sim \dist^n]{\realp_{\hat{\rep}}(S) \neq +1} \geq 1/2} \leq 2 \Exp_{\dist \sim \metadist}\left[\Pr[S\sim \dist^n]{\realp_{\hat{\rep}}(S) \neq +1}\right] = 2~\er(\cD, \realp_{\hat{\rep}})\leq \frac{2\eps}{3}.
\end{align*}
So far we have shown that for a fixed $\dist$ if $\rer(\dist, \hat{\rep}, \perclass) > \eps/3$, then $\Pr[S\sim \dist^n]{\realp_{\hat{\rep}}(S) \neq +1} \geq 1/2$. As a result, we have that
\begin{align*}
    \Pr[\dist \sim \metadist]{\rer(\dist, \hat{\rep}, \perclass) > \eps/3} \leq   \Pr[\dist \sim \metadist]{\Pr[S\sim \dist^n]{\realp_{\hat{\rep}} \neq +1} \geq 1/2} \leq   \frac{2\eps}{3}.
\end{align*}
We can use this to bound the meta-error of representation $\hat{\rep}$ as follows. We see that with probability at least $1-\delta$ over the datasets of the $\users$ tasks
\begin{align*}
    \rer(\metadist, \hat{\rep}, \perclass) &=\Exp_{\dist \sim \metadist}\left[ \min_{\per \in \perclass} \Pr[(x,y) \sim \dist]{ \per(\hat{\rep}(x)) \neq y}\right]\\
    & \leq \frac{\eps}{3} + \Pr[\dist \sim \metadist]{\min_{\per \in \perclass} \Pr[(\feat, \lab) \sim \dist]{\per(\hat{\rep}(\feat))\neq \lab} > \eps/3} \\
    &\leq \frac{\eps}{3} + 2\frac{\eps}{3} = \eps.
\end{align*}
This concludes our proof.
\end{proof}
\fi

\ifnum \coltshort = 0
\subsection{Sample and Task Complexity Bounds for the Agnostic Case}
\label{sec:metalearning_agnostic}

In this section, we consider metalearning when the underlying metadistribution is not realizable. Our proof uses the pseudodimension of the class of empirical error functions to bound the number of tasks needed to metalearn. 

\begin{thm}[Restatement of \Cref{thm:agn-met-samples-intro}]
\label{thm:agn-met-samples}
\thmagnmet
\end{thm}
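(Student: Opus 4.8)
The natural learner is \emph{proper ERM over $\repclass$ for the average specialized empirical error}: given the $\users$ datasets $S_1,\dots,S_\users$ (with $S_j$ consisting of $\samples$ i.i.d.\ draws from $\dist_j\sim\metadist$), output $\hat\rep \in \arg\min_{\rep\in\repclass} \frac1\users\sum_{j\in[\users]}\erind_\rep(S_j)$, where $\erind_\rep$ is the $(\samples,\perclass)$-empirical error function of \Cref{def:empirical_error_function}. The analysis splits the error into two ``layers'' of generalization: an \emph{across-tasks} layer, controlled by $\pd(\erindclass_{\samples,\perclass,\repclass})$ and the number of tasks $\users$, and a \emph{within-task} layer, controlled by $\VC(\perclass)$ and the number of samples per task $\samples$. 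Write $\mu(\rep) := \Exp_{\dist\sim\metadist}\Exp_{S\sim\dist^\samples}\left[\erind_\rep(S)\right]$ and $\hat\mu(\rep):=\frac1\users\sum_{j\in[\users]}\erind_\rep(S_j)$.

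\emph{Across-tasks layer.} View each task's dataset $\zeta_j := S_j$ as a single i.i.d.\ draw from the distribution $\cD$ on $(\featdom\times\bits)^\samples$ obtained by drawing $\dist\sim\metadist$ and then $\samples$ i.i.d.\ points from $\dist$ (as in the realizable warm-up, \Cref{fig:newDataset}, but now keeping the real-valued information). Each $\erind_\rep$ is a $[0,1]$-valued function on $(\featdom\times\bits)^\samples$, and $\erindclass_{\samples,\perclass,\repclass}$ has finite pseudodimension, so \Cref{fact:pac-pd} applied to this class with the i.i.d.\ sample $\zeta_1,\dots,\zeta_\users$ gives: for $\users = O\!\left(\frac{\pd(\erindclass_{\samples,\perclass,\repclass})\ln(1/\eps)+\ln(1/\delta)}{\eps^2}\right)$, with probability at least $1-\delta$ over the tasks and their samples, $\sup_{\rep\in\repclass}\left|\hat\mu(\rep)-\mu(\rep)\right|\le\eps$.

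\emph{Within-task layer.} I claim that for \emph{every} $\rep\in\repclass$ we have $\rer(\metadist,\rep,\perclass)-\eps \le \mu(\rep)\le \rer(\metadist,\rep,\perclass)$, provided $\samples = O\!\left(\frac{\VC(\perclass)+\ln(1/\eps)}{\eps^2}\right)$. Fix $\rep$ and a task $\dist$, write $\distint := \rep(\dist)$ for the pushforward on $\mapdom\times\bits$, and note $\erind_\rep(S)=\er(\rep(S),\perclass)$ with $\rep(S)$ being $\samples$ i.i.d.\ points from $\distint$. Since the minimum of expectations dominates the expectation of the minimum, $\Exp_{S\sim\dist^\samples}\left[\erind_\rep(S)\right]\le \min_{\per\in\perclass}\Exp_{S}\left[\er(\rep(S),\per)\right]=\er(\distint,\perclass)=\rer(\dist,\rep,\perclass)$. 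For the reverse direction, uniform convergence of $\perclass$ on $\distint$ with $\samples$ samples (\Cref{fact:pac-vc}) gives $\Exp_{S\sim\dist^\samples}\left[\sup_{\per\in\perclass}\left|\er(\rep(S),\per)-\er(\distint,\per)\right|\right]\le\eps$ (take the failure probability to be $\eps$ and bound the failure event's contribution by $1$); since $\left|\er(\rep(S),\perclass)-\er(\distint,\perclass)\right|\le \sup_\per\left|\er(\rep(S),\per)-\er(\distint,\per)\right|$, this yields $\Exp_S\left[\erind_\rep(S)\right]\ge \rer(\dist,\rep,\perclass)-\eps$. Taking expectations over $\dist\sim\metadist$ proves the claim.

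\emph{Combining.} Let $\rep^*$ (nearly) attain $\min_{\rep}\rer(\metadist,\rep,\perclass)$. On the good event of the across-tasks layer,
\[
\rer(\metadist,\hat\rep,\perclass)\le \mu(\hat\rep)+\eps\le \hat\mu(\hat\rep)+2\eps\le \hat\mu(\rep^*)+2\eps\le \mu(\rep^*)+3\eps\le \rer(\metadist,\rep^*,\perclass)+3\eps,
\]
where the first and fourth inequalities use the within-task claim, and the second and third use the across-tasks uniform convergence together with the optimality of $\hat\rep$ for $\hat\mu$. Rescaling $\eps\mapsto\eps/3$ (and absorbing constants into $\users$ and $\samples$) gives $(\eps,\delta)$-proper metalearning; the learner is proper since it outputs $\hat\rep\in\repclass$.

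\emph{Main obstacle.} The delicate point is the interaction of the two uniform-convergence layers: the across-tasks bound is inherently high-probability, while the within-task inequality must hold for \emph{all} $\rep$ at once (in particular for the random $\hat\rep$). The clean fix, used above, is to phrase the within-task step purely in expectation over the per-task sample, so that after taking that expectation it becomes a deterministic inequality $\mu(\rep)\ge\rer(\metadist,\rep,\perclass)-\eps$ holding for every $\rep$, and to reserve the only probabilistic argument for the across-tasks layer. A secondary nuisance is bookkeeping the several $\eps$'s and converting the VC high-probability bound to an expectation bound, but neither is deep.
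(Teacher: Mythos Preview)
Your proposal is correct and follows essentially the same approach as the paper: ERM over the empirical-error functions, pseudodimension uniform convergence for the across-tasks layer, the $\Exp[\min]\le\min[\Exp]$ inequality for the upper bound on $\mu$, and VC uniform convergence (with failure probability set to $\eps$ and the bad event bounded by $1$) for the lower bound. The paper organizes the within-task step slightly differently—applying it only to $\hat\rep$ rather than stating it as a deterministic sandwich $\rer-\eps\le\mu\le\rer$ valid for all $\rep$—but the content is identical, and your formulation arguably makes the interaction between the two layers cleaner.
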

\begin{proof} 
We begin by setting our parameters: Set $\eps_1 \coloneqq \eps/3$ and $\eps_2 \coloneqq \eps/3$. Let the number of tasks be the following for a sufficiently large constant in the $O$ notation:
\begin{equation} \label{eq:uniform_convergence_pdim}
    \users \coloneqq O\left( \frac{\pd(\erindclass_{\samples, \perclass, \repclass })\ln(1/\eps_1)+\ln(1/\delta))}{\eps_1^2}\right)\,.
\end{equation} 
Suppose for each task $j \in [t]$ has $\samples $ samples $S_j =\{(\feat_i^{(j)}, \lab_i^{(j)})\}_{i \in [\samples]}$, we construct a dataset $Z$ of $\users$ points $\zeta_j = ((\feat_1^{(j)},\lab_1^{(j)},\ldots, \feat_{\samples}^{(j)}, \lab_{\samples}^{(j)}))$, one for each task $j$. Each $\zeta_j$ is drawn i.i.d.\ from the data distribution $\cD$ for which we first draw $\dist$ from $\metadist$ and then $\samples$ points from $\dist$. By Fact~\ref{fact:pac-pd}, we have that for $\hat{\rep} = \arg \min_{\rep \in \repclass} \frac{1}{\users}\sum_{j=1}^\users\erind_{\rep}(\zeta_j)$ with probability at least $1-\delta$:
\begin{align*}
    \Exp_{\zeta \sim \cD}\left[\erind_{\hat{\rep}}(\zeta)\right]\leq \min_{\rep \in \repclass}\Exp_{\zeta \sim \cD}\left[\erind_\rep (\zeta)\right] + \frac{\varepsilon}{3}\,.
\end{align*}
Fix a task distribution $\dist$ and a representation $\rep$. 
For any fixed specialized classifier $\per' \in \perclass$, we have: 
$$\Exp_{(\feat_1,\lab_1),\ldots,(\feat_n, \lab_n) \sim \dist^n} \left[
\min_{\per \in \perclass} \frac{1}{\samples}\sum_{i \in [\samples]} \ind\{\per(\rep(\feat_i)) \neq \lab_i\} 
\right]
\leq 
\Exp_{(\feat_1,\lab_1),\ldots,(\feat_n, \lab_n) \sim \dist^n} \left[ 
\frac{1}{\samples}\sum_{i \in [\samples]} \ind\{\per'(\rep(\feat_i)) \neq \lab_i\}\right]\,.$$
Since the inequality holds for any $\per'$ in $\perclass$, it holds when we take minimum over all $\per'$. Hence, we obtain:
\begin{equation}\label{eq:swap_min}
    \Exp_{(\feat_1,\lab_1),\ldots,(\feat_n, \lab_n) \sim \dist^n} \left[ \min_{\per \in \perclass} \frac{1}{\samples}\sum_{i \in [\samples]} \ind\{\per(\rep(\feat_i)) \neq \lab_i\}\right] \leq \min_{\per \in \perclass} \Exp_{(\feat_1,\lab_1),\ldots,(\feat_n, \lab_n) \sim \dist^n}\left[\frac{1}{\samples}\sum_{i \in [\samples]} \ind\{\per(\rep(\feat_i)) \neq \lab_i\}\right]\,.
\end{equation}  
Next, we use the above inequality to continue bounding the expectation of  $\erind_{\hat{\rep}}$:
\begin{align*}
    \Exp_{\zeta \sim \cD}\left[\erind_{\hat{\rep}}(\zeta)\right]&\leq
    \min_{\rep \in \repclass}\Exp_{\zeta \sim \cD}\left[\erind_\rep (\zeta)\right] + \frac{\varepsilon}{3} \tag{Eq.~\eqref{eq:uniform_convergence_pdim}}
    \\
    &=\min_{\rep \in \repclass} \Exp_{\dist \sim \metadist}\left[\Exp_{(\feat_1,\lab_1),\ldots,(\feat_n, \lab_n) \sim \dist^n}\left[\min_{\per \in \perclass}\frac{1}{n}\sum_{i=1}^n \ind \{\per(\rep(\feat_i))\neq \lab_i\}\right]\right] +\frac{\varepsilon}{3} \\
    &\leq
    \min_{\rep \in \repclass} \Exp_{\dist \sim \metadist}\left[\min_{\per \in \perclass}\Exp_{(\feat_1,\lab_1),\ldots,(\feat_n, \lab_n) \sim \dist^n}\left[\frac{1}{n}\sum_{i=1}^n \ind \{\per(\rep(\feat_i))\neq \lab_i\}\right]\right] + \frac{\varepsilon}{3}
    \tag{Eq.~\ref{eq:swap_min}}
    \\
    &= \min_{\rep \in \repclass} \rer(\metadist, \rep, \perclass)+\frac{\varepsilon}{3}.
\end{align*}
Consider an arbitrary task distribution $\dist$. Suppose we have a dataset $S$ of $\samples$ labeled sample from $\dist$ where $n$ is the following with a sufficiently large constant in the $O$ notation:
$$\samples \coloneqq O\left( \frac{\VC(\perclass)+\ln(1/\eps_2)}{\eps_2^2}\right)\,.$$ 
Since $n$ is sufficiently large, we have uniform convergence of the empirical error for all $\per \in \perclass$ by \Cref{fact:pac-vc}. Furthermore, the specialized classifier $\hat{\per}$ minimizing the empirical error over $S$, (i.e. $\hat{\per} = \arg \min_{\per \in \perclass} \er(S, \per \circ\hat{\rep})$) must have low true error as well. Therefore, with probability $1-\frac{\eps}{3}$ over the randomness in $S$, we get:

\begin{align*}
    \rer(\dist, \hat{\rep}, \perclass) \leq \Pr[(\feat, \lab) \sim \dist]{ \hat{\per}(\hat{\rep}(\feat))\neq \lab} \leq \rer(S, \hat{\rep}, \perclass)+\frac{\varepsilon}{3}\,.
\end{align*}
Thus, if we take the expectation over the dataset $S$, we see that:
\begin{align*}
    \rer(\dist, \hat{\rep},\perclass)& =\Exp_{S \sim \dist^\samples}\left[\rer(\dist, \hat{\rep},\perclass)\right] \\
    &= \Exp_{S \sim \dist^\samples}\left[\rer(\dist, \hat{\rep},\perclass)\left|\rer(\dist, \hat{\rep},\perclass) \leq \rer(S, \hat{\rep}, \perclass)+\frac{\varepsilon}{3} \right.\right]\\
    &\quad\times \Pr[S \sim \dist^\samples]{\rer(\dist, \hat{\rep},\perclass) \leq \rer(S, \hat{\rep}, \perclass)+\frac{\varepsilon}{3}}\\
    &\quad+ \Exp_{S \sim \dist^\samples}\left[\rer(\dist, \hat{\rep},\perclass)\left|\rer(\dist, \hat{\rep},\perclass) > \rer(S, \hat{\rep}, \perclass)+\frac{\varepsilon}{3}\right.\right]\\
    &\quad\times \Pr[S \sim \dist^\samples]{\rer(\dist, \hat{\rep},\perclass) >\rer(S, \hat{\rep}, \perclass)+\frac{\varepsilon}{3}}\\
    &\leq \left(\Exp_{S \sim \dist^\samples}\left[\rer(S, \hat{\rep}, \perclass)\right] +\frac{\eps}{3}\right)\cdot 1 +1 \cdot \frac{\varepsilon}{3} \\
    &= \Exp_{S \sim \dist^\samples}\left[\rer(S, \hat{\rep}, \perclass)\}\right] +\frac{2\varepsilon}{3}.
\end{align*}
Now, we take the expectation over $\dist \sim \metadist$ and obtain that:
\begin{align*}
    \rer(\metadist, \hat{\rep}, \perclass) &\leq  \Exp_{\dist \sim \metadist}\left[\Exp_{S \sim \dist^\samples}\left[\rer(S, \hat{\rep}, \perclass)\}\right]\right] +\frac{2\varepsilon}{3}\\
    & = \Exp_{\zeta \sim \cD}\left[\erind_{\hat{\rep}}(\zeta)\right]+\frac{2\varepsilon}{3}\,.
\end{align*}
Note that in the last line above, the $S$ dataset drawn from a random $\dist$ can be viewed as a random data set according to $\cD$. Combining with the bound we have derived earlier for the expected value of $\erind_{\hat{\rep}}$, we get the following that holds with probability at least $1-\delta$:
\begin{align*}
    \rer(\metadist, \hat{\rep}, \perclass) \leq \min_{\rep \in \repclass} \rer(\metadist, \rep, \perclass) + \varepsilon
    \,.
\end{align*}
Hence, the proof is complete. 
\end{proof}

\subsection{Sample and Task Complexity Bounds for General Metalearning}

For general metalearning we obtain the following corollary of Theorem~\ref{thm: meta-to-mtl} by applying Theorem~\ref{thm:mtl-smpls} for agnostic multitask learning. It is important to note that this method does not return a representation $\rep\in \repclass$, but a more general specialization algorithm that uses the datasets of the already seen tasks to learn a classifier for the new task. 

\begin{cor}
\label{cor:met-vc}
    For any $(\repclass, \perclass)$, any $\eps>0$, constant $c>0$ and any $\users, \samples$, $(\repclass, \perclass)$ is $(\eps,2/c)$-meta-learnable in the agnostic case with $\users$ tasks, $\samples$ samples per task and $\samples$ specialization samples when
    \[
    \samples \users = O\left(\frac{\VC(\perclass^{\otimes \users+1}\circ \repclass)\ln(1/\eps)}{\eps^2}\right).
    \]
\end{cor}
\begin{proof}
    By Theorem~\ref{thm: meta-to-mtl} we can $(\eps/c, \eps/c)$-multitask learn $(\repclass, \perclass)$ for $\users+1$ tasks and $\samples$ per task when \[\samples \users+ \samples= \frac{c'c^2\VC(\perclass^{\otimes \users+1}\circ \repclass)\ln(c/\eps)}{\eps^2},\] for a constant $c'>0$. As a result, the reduction of Theorem~\ref{thm: meta-to-mtl} implies that we can $(\eps, 2/c)$-metalearn $(\repclass, \perclass)$ for $\users$ tasks, $\samples$ samples per task and $\samples$ specialization samples when $$\samples \users = O\left(\frac{\VC(\perclass^{\otimes \users+1}\circ \repclass)\ln(1/\eps)}{\eps^2}\right).$$ This concludes the proof.
\end{proof}

\else
Next, we consider metalearning when the underlying metadistribution is not realizable, and show that the pseudodimension of the empirical error predicate (Definition~\ref{def:empirical_error_function}) bounds the number of tasks needed. 

\begin{thm}[Restatement of Theorem~\ref{thm:agn-met-samples-intro}]
\label{thm:agn-met-samples}
\thmagnmet
\end{thm}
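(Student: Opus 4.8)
The plan is to adapt the argument for the realizable case (\Cref{thm:met-samples}) to the agnostic setting by replacing the binary realizability predicate $\realp_\rep$ with the real-valued empirical error function $\erind_\rep$ of \Cref{def:empirical_error_function}, and by replacing VC-dimension generalization with pseudodimension generalization. Concretely, I would set $\eps_1 := \eps/3$ and $\eps_2 := \eps/3$, fix $\users = O\big((\pd(\erindclass_{\samples,\perclass,\repclass})\ln(1/\eps_1)+\ln(1/\delta))/\eps_1^2\big)$ and $\samples = O\big((\VC(\perclass)+\ln(1/\eps_2))/\eps_2^2\big)$ with sufficiently large hidden constants, and view the $\samples$-point data set $(\feat_1^{(j)},\lab_1^{(j)},\dots,\feat_\samples^{(j)},\lab_\samples^{(j)})$ of each task $j$ as a single (unlabeled) observation $\zeta_j$ drawn i.i.d.\ from the distribution $\cD$ defined by first sampling $\dist\sim\metadist$ and then $\samples$ i.i.d.\ points from $\dist$. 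The metalearner runs ERM over $\erindclass_{\samples,\perclass,\repclass}$, outputting $\hat\rep := \arg\min_{\rep\in\repclass}\frac1\users\sum_{j\in[\users]}\erind_\rep(\zeta_j)$.

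The core of the argument is a two-sided comparison of $\Exp_{\zeta\sim\cD}[\erind_{\hat\rep}(\zeta)]$ with the meta-error $\rer(\metadist,\hat\rep,\perclass)$. \emph{Upper side.} Since $\pd(\erindclass_{\samples,\perclass,\repclass})$ is finite, the choice of $\users$ and the pseudodimension uniform-convergence bound (\Cref{fact:pac-pd}) give, with probability at least $1-\delta$ over the $\users$ training tasks, $\Exp_{\zeta\sim\cD}[\erind_{\hat\rep}(\zeta)]\le \min_{\rep\in\repclass}\Exp_{\zeta\sim\cD}[\erind_\rep(\zeta)]+\eps/3$; moreover, for every fixed $\rep$, since an expectation of a minimum is at most the minimum of the expectations, $\Exp_{\zeta\sim\cD}[\erind_\rep(\zeta)]\le \Exp_{\dist\sim\metadist}[\min_{\per\in\perclass}\er(\dist,\per\circ\rep)]=\rer(\metadist,\rep,\perclass)$, so the right-hand side is at most $\min_{\rep\in\repclass}\rer(\metadist,\rep,\perclass)+\eps/3$. \emph{Lower side.} I would fix an arbitrary task $\dist$; since composing $\perclass$ with the fixed map $\hat\rep$ cannot increase VC dimension, $\samples$ samples suffice for uniform convergence of $\perclass$ on the intermediate distribution $(\hat\rep(\feat),\lab)$ with $(\feat,\lab)\sim\dist$, so by \Cref{fact:pac-vc} with probability at least $1-\eps/3$ over a fresh sample $S$ of size $\samples$ the empirical risk minimizer over $\perclass$ on $S$ has population error within $\eps/3$ of $\rer(S,\hat\rep,\perclass)$, giving $\rer(\dist,\hat\rep,\perclass)\le\rer(S,\hat\rep,\perclass)+\eps/3$ on that event; bounding the complementary ($\le\eps/3$-probability) event by the trivial bound $\rer(\cdot)\le 1$ and taking expectation over $S$ yields $\rer(\dist,\hat\rep,\perclass)\le\Exp_{S\sim\dist^\samples}[\rer(S,\hat\rep,\perclass)]+2\eps/3$. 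Averaging over $\dist\sim\metadist$ and using $\Exp_{\dist\sim\metadist}\Exp_{S\sim\dist^\samples}[\rer(S,\hat\rep,\perclass)]=\Exp_{\zeta\sim\cD}[\erind_{\hat\rep}(\zeta)]$ gives $\rer(\metadist,\hat\rep,\perclass)\le\Exp_{\zeta\sim\cD}[\erind_{\hat\rep}(\zeta)]+2\eps/3$. Chaining the two sides produces $\rer(\metadist,\hat\rep,\perclass)\le\min_{\rep\in\repclass}\rer(\metadist,\rep,\perclass)+\eps$ with probability at least $1-\delta$, which is exactly $(\eps,\delta)$-proper metalearning.

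I expect the only delicate point to be the lower side. There one must ensure that the size-$\samples$ sample used to certify the specialization quality is independent of $\hat\rep$, so that the VC uniform-convergence guarantee applies to the \emph{fixed} map $\hat\rep$ for every realization of it, and one must check that the $\le\eps/3$-probability failure event in that step contributes only an additive $O(\eps)$, which is where boundedness of the $[0,1]$-valued error is used. The remaining ingredients—the pseudodimension generalization step and the two uses of ``expectation of a minimum $\le$ minimum of expectations''—are routine and parallel the realizable argument exactly.
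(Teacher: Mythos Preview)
Your proposal is correct and follows essentially the same approach as the paper: the same $\eps/3$--$\eps/3$ split, the same ERM over $\erindclass_{\samples,\perclass,\repclass}$, the same pseudodimension uniform-convergence step for the upper bound (combined with the ``expectation of a min $\le$ min of expectations'' inequality), and the same per-task VC uniform-convergence argument with the trivial bound on the $\eps/3$-probability failure event for the lower bound. The delicate point you flag about independence is exactly the one the paper handles implicitly by treating $S$ as a fresh sample after $\hat\rep$ is fixed.
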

\fi

\ifnum \coltshort = 0
\section{Bounds for Halfspaces over Linear Representations}
\else
\section{Metalearning of Halfspaces over Linear Representations}
\fi
\label{sec:lin}

In this section, we focus on \if \coltshort = 0 multitask learning and\fi metalearning of halfspaces with a shared linear representation:
$$
\repclass_{\datadim,\repdim} = \left\{\rep \mid \rep(\mathbf{\feat}) = \mathbf{B} \mathbf{\feat}, \mathbf{B} \in \R^{\repdim\times \datadim}\right\} \,,
 \text{ and }\hspace{3pt}
\perclass_{\repdim} = \left\{\per\mid  \per(\mathbf{\map}) = \textrm{sign}(\mathbf{a}\cdot \mathbf{\map} - w),  \mathbf{a} \in \R^{\repdim}, w \in \R \right\}\,.$$
More precisely, the representation is a linear projection that maps vectors in $d$ dimensions to vectors in $k$ dimensions. The specialized classifiers are halfspaces over the $k$-dimensional representation.

\if \coltshort = 0
We use our general results from Sections \ref{sec:mtl} and \ref{sec:meta} to bound the number of tasks and the number of samples per task we need to\ifnum\coltshort=0multitask learn and \fi metalearn $(\repclass_{\datadim,\repdim},\perclass_{\repdim})$. Our\ifnum\coltshort=0 multitask learning result (\Cref{thm:mtl-vc-hl}) relies on bounding the VC dimension of class $\perclass_{\repdim}^{\otimes \users}\circ \repclass_{\datadim,\repdim}$.\fi
Our results for metalearning (\Cref{cor:met-lin-real1} for realizable learning and \Cref{cor:met-lin-agn} for agnostic) rely on bounding the VC dimension of the class of realizability predicates and the pseudodimension of the class of empirical error functions. 
\else 
We use our results for general classes from Section \ref{sec:meta} to bound the number of tasks and the number of samples per task we need to properly metalearn $(\repclass_{\datadim,\repdim},\perclass_{\repdim})$. 
Our results (Corollary~\ref{cor:met-lin-real1} for realizable learning and Corollary~\ref{cor:met-lin-agn} for agnostic) rely on bounding the VC dimension of the class of realizability predicates and the pseudodimension of the class of empirical error functions. 
\ifnum \coltshort = 1 See \Cref{sec:hspaces_linear_bounds_app} for proofs and extensions.\fi 
\ifnum \coltshort = 0
\subsection{VC Dimension of  \texorpdfstring{$\perclass_{\repdim}^{\otimes \users}\circ \repclass_{\datadim,\repdim}$}{Lg}}

The general bounds of \Cref{lem:vc-bounds} give us that the VC dimension of $\perclass_{\repdim}^{\otimes \users}\circ \repclass_{\datadim, \repdim}$ is in the range
$$\max(\repdim \users + \users, \datadim+1) \leq \VC(\perclass_{\repdim}^{\otimes \users}\circ \repclass_{\datadim, \repdim}) \leq \datadim \users + \users.$$ 
We know the VC dimension of the class of composite functions  $\perclass_{\repdim}\circ \repclass_{\datadim,\repdim}$ because $\perclass_{\repdim}\circ \repclass_{\datadim,\repdim}$ and $\perclass_{\datadim}$, the class of $d$-dimensional halfspaces, are the same class.

In \Cref{thm:mtl-vc-hl} we characterize the VC dimension of class $\perclass_{\repdim}^{\otimes \users}\circ \repclass_{\datadim,\repdim}$ up to a constant. 
The bound we give matches the intuition from counting the number of parameters,
which yields $\datadim\repdim+\repdim\users + \users$ when $\users > \repdim$ and $\datadim\users + \users$ when $\users \leq \repdim$. 

\begin{thm}[Restatement of \Cref{thm:mtl-vc-hl-intro}]
\label{thm:mtl-vc-hl}
    \thmMtlVcHLText
\end{thm}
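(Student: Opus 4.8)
The plan is to treat the two regimes $\users \le \repdim$ and $\users > \repdim$ separately, and within the second regime to establish matching bounds of order $\datadim\repdim + \repdim\users$.

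\textbf{The case $\users \le \repdim$.} For the upper bound I would first identify $\perclass_\repdim \circ \repclass_{\datadim,\repdim}$ with the class $\perclass_\datadim$ of $\datadim$-dimensional halfspaces: a composed classifier $\mathbf{x}\mapsto\sgn(\mathbf{a}\cdot\mathbf{B}\mathbf{x}-w)$ equals $\mathbf{x}\mapsto\sgn((\mathbf{B}^\top\mathbf{a})\cdot\mathbf{x}-w)$, and as $\mathbf{a},\mathbf{B}$ vary the vector $\mathbf{B}^\top\mathbf{a}$ ranges over all of $\R^\datadim$ (take $\mathbf{B}$ with first row $\mathbf{v}^\top$ and $\mathbf{a}=\mathbf{e}_1$ to realize direction $\mathbf{v}$). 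Thus $\VC(\perclass_\repdim\circ\repclass_{\datadim,\repdim}) = \VC(\perclass_\datadim) = \datadim+1$ by Theorem~\ref{thm:vc-hs}, and Lemma~\ref{lem:vc-bounds} gives $\VC(\perclass_\repdim^{\otimes\users}\circ\repclass_{\datadim,\repdim}) \le \users(\datadim+1)$. For the matching lower bound I would take $\mathbf{x}_1,\dots,\mathbf{x}_{\datadim+1}$ shattered by $\perclass_\datadim$ and shatter the $\users(\datadim+1)$-point set $\{(j,\mathbf{x}_i): j\in[\users],\ i\in[\datadim+1]\}$: given a target labeling, pick $(\mathbf{b}_j,w_j)$ realizing the labels of task $j$, let $\mathbf{B}$ have rows $\mathbf{b}_1^\top,\dots,\mathbf{b}_\users^\top$ (padding with zero rows, possible since $\users\le\repdim$), and set $\mathbf{a}_j=\mathbf{e}_j$, so that $\mathbf{a}_j\cdot\mathbf{B}\mathbf{x}=\mathbf{b}_j\cdot\mathbf{x}$. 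This yields the exact value $\users(\datadim+1)$.

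\textbf{The case $\users > \repdim$, lower bound.} I would combine two shattering constructions and take the maximum. First, since $\users>\repdim$, the construction above still applies using only $\repdim$ of the available tasks, giving $\VC \ge \repdim(\datadim+1)$. Second, Lemma~\ref{lem:vc-bounds} gives $\VC \ge \users\cdot\VC(\perclass_\repdim) = \users(\repdim+1)$, using that $\repclass_{\datadim,\repdim}$ contains a surjection $\R^\datadim\to\R^\repdim$ in the regime $\datadim\ge\repdim$ of interest. Hence $\VC \ge \max\{\repdim(\datadim+1),\ \users(\repdim+1)\} \ge \tfrac12(\datadim\repdim+\repdim\users) = \Omega(\datadim\repdim+\repdim\users)$.

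\textbf{The case $\users > \repdim$, upper bound.} This is where the shared representation genuinely beats the trivial $\users(\datadim+1)$ bound, and it is the step I expect to be the main obstacle. I would write a classifier as $g(\mathbf{e},\mathbf{x}) = \sgn(\mathbf{e}^\top\mathbf{A}\mathbf{B}\mathbf{x} - \mathbf{e}^\top\mathbf{w})$, where $\mathbf{e}$ is the one-hot encoding of the task index, $\mathbf{A}\in\R^{\users\times\repdim}$ has rows $\mathbf{a}_j^\top$, $\mathbf{B}\in\R^{\repdim\times\datadim}$, and $\mathbf{w}\in\R^\users$. For any fixed data set of $\samples$ points $(\mathbf{e}_i,\mathbf{x}_i)$, the $i$-th label is $\sgn(p_i(\mathbf{A},\mathbf{B},\mathbf{w}))$, where $p_i$ is a polynomial of degree $2$ in the $N := \users\repdim + \repdim\datadim + \users$ coordinates of $(\mathbf{A},\mathbf{B},\mathbf{w})$ (bilinear in the entries of $\mathbf{A}$ and $\mathbf{B}$, affine in $\mathbf{w}$). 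By Warren's bound~\cite{Warren68} on sign patterns of bounded-degree polynomial systems, the number of distinct labelings realizable over all parameter choices is at most $(O(\samples/N))^N$ once $\samples\ge N$, which is strictly less than $2^\samples$ as soon as $\samples \ge cN$ for an absolute constant $c$ (using that the degree is at most $2$). Therefore $\VC \le cN = O(\datadim\repdim + \repdim\users + \users) = O(\datadim\repdim+\repdim\users)$, which combined with the lower bound gives $\Theta(\datadim\repdim+\repdim\users)$. The delicate points in this last step are: writing down the polynomials and verifying their degree; reconciling the convention $\sgn(0)=+1$ with Warren's count of strict sign patterns (a standard perturbation or polynomial-doubling argument); and checking that the counting inequality $(O(\samples/N))^N < 2^\samples$ indeed kicks in at $\samples = \Theta(N)$ given the bounded degree.
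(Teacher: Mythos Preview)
Your proposal is correct and follows essentially the same approach as the paper: the same identification $\perclass_\repdim\circ\repclass_{\datadim,\repdim}=\perclass_\datadim$ together with Lemma~\ref{lem:vc-bounds} for the $\users\le\repdim$ case, the same shattering constructions for the lower bounds when $\users>\repdim$, and the same reduction to degree-$2$ polynomials in $\users\repdim+\repdim\datadim+\users$ variables followed by Warren's theorem for the upper bound. Your explicit remark that the $\users\cdot\VC(\perclass_\repdim)$ lower bound relies on $\repclass_{\datadim,\repdim}$ containing a surjection (i.e., $\datadim\ge\repdim$) is a point the paper leaves implicit.
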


The proof for the upper bound we use Warren's Theorem, which we state here as a lemma:

\begin{lem}[Warren's Theorem,~\cite{Warren68}]
\label{fct:vc-polynomials}
Let $p_1,\ldots, p_{n}$ be real polynomials in $v$ variables, each of degree at most $\ell \geq 1$. If $n \geq v$, then the number of distinct sequences $\textrm{sign}(p_1(x)), \ldots,$ $ \textrm{sign}(p_n(x))$ for all $x$ does not exceed $(4e\ell n/v)^v$. In particular, if $\ell \geq 2$ and $n \geq 8v\log_2 \ell$, then the number of distinct sequences of $+1,-1$ taken by $\textrm{sign}(p_1(x)), \ldots, $ $\textrm{sign}(p_n(x))$ is less than $2^n$.
\end{lem}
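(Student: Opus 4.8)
The statement is Warren's classical bound on the number of sign patterns of polynomials, so the plan is to reproduce the standard proof: translate realized sign sequences into cells of an arrangement of real algebraic hypersurfaces, reduce to a general-position configuration by a small perturbation, and bound the number of cells by a Bézout-type count of the arrangement's vertices. The closing ``in particular'' clause is then a routine arithmetic estimate.

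First I would reduce to counting cells of an arrangement. With the usual tie-breaking at $0$, each realized sequence $(\mathrm{sign}(p_1(x)),\dots,\mathrm{sign}(p_n(x)))$ is a coarsening of a realized \emph{sign condition} in $\{-1,0,+1\}^n$, so it suffices to bound the number of realized sign conditions; a sign condition with no zero coordinate is realized on a union of connected components of $\R^v\setminus\bigcup_{i=1}^n Z(p_i)$, where $Z(p_i)=\{x:p_i(x)=0\}$, and a condition with some zero coordinates is a limit of these. Hence, up to a bounded induction on the ambient dimension, everything is controlled by the number $N$ of connected components of $\R^v\setminus\bigcup_{i=1}^n Z(p_i)$.

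Next I would perturb the coefficients of the $p_i$ generically: a small enough perturbation cannot destroy a nonempty open cell, so it does not decrease $N$, and therefore it suffices to bound $N$ when the $Z(p_i)$ are smooth hypersurfaces in general position (no $v+1$ of them share a point, and any $j\le v$ of them meet transversally in finitely many points). The core is then an incremental count, adding $Z(p_1),\dots,Z(p_n)$ one at a time: the number of new cells created when $Z(p_j)$ is inserted is the number of pieces into which the earlier hypersurfaces subdivide $Z(p_j)$, which by induction on dimension is governed by the number of points of $Z(p_j)$ lying on $v-1$ of the earlier hypersurfaces. Summing, $N$ is bounded, up to lower-order terms, by the number of ``vertices'' of the arrangement (points lying on exactly $v$ of the hypersurfaces); Bézout's theorem gives at most $\ell^v$ vertices per $v$-subset, hence at most $\binom{n}{v}\ell^v$ vertices in all. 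Carrying out the recursion gives $N\le c_v\sum_{i=0}^{v}\binom{n}{i}\ell^i$ for an absolute multiplicative factor, and then $\binom{n}{i}\le(en/i)^i$ together with bounding the (near-geometric) sum by a constant times its top term yields $\sum_{i=0}^v\binom{n}{i}\ell^i\le (4e\ell n/v)^v$ for $n\ge v$, which is the claimed bound.

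I expect the main obstacle to be exactly this incremental cell-count for \emph{algebraic} rather than linear hypersurfaces — establishing a clean ``zone theorem'' in that setting and keeping the multiplicative constant down to $4e$ — together with making the general-position perturbation fully rigorous (both that every nonempty strict cell survives a small perturbation, and that degenerate sign conditions are genuinely the limiting case of generic ones, so that nothing is undercounted). Finally, the ``in particular'' clause is elementary: taking base-$2$ logarithms, $(4e\ell n/v)^v<2^n$ is equivalent to $\log_2(4e)+\log_2\ell+\log_2(n/v)<n/v$, and under $\ell\ge2$ and $n\ge 8v\log_2\ell$ we have $n/v\ge 8$ and $\log_2\ell\le(n/v)/8$, so each of the three terms on the left is at most a constant fraction of $n/v$ and their sum is strictly smaller, giving the inequality.
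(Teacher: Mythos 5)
The paper never proves this lemma: it is imported verbatim from Warren (1968) and used as a black box (the only in-paper work connected to it is the arithmetic in Lemma~\ref{lem:vc-bool-pol}, which your closing ``in particular'' computation reproduces correctly, including the $\log_2(4e)<t/2$, $\log_2 t\le 3t/8$ split for $t=n/v\ge 8$). So there is no internal proof to compare against; the question is only whether your blind reconstruction of Warren's argument stands on its own.

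As written it does not, because the entire quantitative content of the theorem is located exactly in the steps you defer as ``expected obstacles.'' Bézout in general position bounds the number of \emph{vertices} of the arrangement, but the incremental count needs, at every level of the induction, a bound on the number of connected pieces into which the earlier hypersurfaces cut $Z(p_j)$ — these are connected components of real algebraic sets of positive dimension, and controlling them is not a Bézout statement; it is where an Oleinik--Petrovsky/Milnor--Thom-type component bound (or Warren's own recursion) must be invoked and proved, and it is also where the factor $4e$ and the restriction $n\ge v$ come from. Two further steps are asserted rather than established: that a small generic perturbation of the coefficients cannot decrease the number of nonempty open cells while achieving transversality, and that sign sequences containing zeros (recall the paper's convention $\sgn(0)=+1$) are covered — the clean fix is the witness-point trick (replace $p_i$ by $p_i+\epsilon$ for $\epsilon$ smaller than the minimum nonzero $|p_i|$ over a finite set of witnesses, so each realized sequence becomes a strict sign condition of polynomials of the same degree), not a limiting argument, which as phrased could in principle conflate distinct degenerate conditions. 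Since the paper treats the lemma as a citation, the efficient course is to do the same; if you do want a self-contained proof, the component-counting lemma for algebraic hypersurface arrangements is the part you must actually supply.
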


\begin{proofsk}[Proof Sketch (\Cref{thm:mtl-vc-hl})]
    For $\users \leq \repdim$ tasks we can see that $\perclass_\repdim^{\otimes \users}\circ \repclass_{\datadim,\repdim} = \perclass_{\datadim}^{\otimes \users}$ and, thus, $\VC\left(\perclass_{\repdim}^{\otimes \users}\circ \repclass_{\datadim,\repdim}\right )  = \datadim \users +\users$. For more details about this case see the full proof in \Cref{sec:mtl-vc-hl-proof}.
    
    The novelty of the proof for this theorem lies in getting an upper bound for the case where we have more than $\repdim$ tasks. First, we rewrite functions $\conc \in \perclass_{\repdim}^{\otimes \users}\circ \repclass_{\datadim,\repdim}$ as $\conc(j,\mathbf{\feat}) = \textrm{sign}(\mathbf{a}_j\mathbf{B}\mathbf{\feat}-w_j)$.
    Observe that this is equivalent to
    \begin{align*}
        g(j, \mathbf{\feat}) = \textrm{sign}(\mathbf{e_j}^T \mathbf{A} \mathbf{B} \mathbf{\feat} - \mathbf{e_j}^T\mathbf{w}),
    \end{align*}
    where $\mathbf{e}_j \in \{0,1\}^{\users}$ is the one-hot encoding of $j$ and 
    \[
    \mathbf{A} = \begin{pmatrix}
        \mathbf{a}_1^T \\
        \vdots \\
        \mathbf{a}_{\users}^T
    \end{pmatrix} \textrm{ and } \mathbf{w} = \begin{pmatrix}
        w_1 \\
        \vdots \\
        w_{\users}
    \end{pmatrix}.
    \]
    Every combination of $\mathbf{A} \in \R^{\users\times \repdim}, \mathbf{B} \in R^{\repdim \times \datadim}$ and $\mathbf{w} \in \R^{\users}$ gives us a specific labeling function $\conc$. 
    Let $\samples \geq 8(\users \repdim + \repdim \datadim+\users)$. 
    Take a dataset $((j_1,\mathbf{\feat}_1),\ldots,(j_n, \mathbf{\feat}_{\samples})) \in ([\users] \times \R^{\datadim})^{\samples}$.
    We will show that $\perclass_{\repdim}^{\otimes \users}\circ \repclass_{\datadim,\repdim}$ does not shatter this data set.
    For each $i\in[\samples]$, we define a polynomial $p_i(\mathbf{A},\mathbf{B}, \mathbf{w}) = \mathbf{e}_{j_i}^T \mathbf{A} \mathbf{B}\mathbf{\feat}_i - \mathbf{e}_{j_i}^T\mathbf{w}$.
    Each of these is a degree-2 polynomial in $\users \repdim + \repdim \datadim+\users$ variables. By construction, $g(j_i,\mathbf{x}_i) =\sgn(p_i(\mathbf{A},\mathbf{B},\mathbf{w}))$. By \Cref{fct:vc-polynomials}, $\VC\left(\perclass_{\repdim}^{\otimes \users}\circ \repclass_{\datadim,\repdim}\right ) \leq 8 (\datadim \repdim +\repdim\users + \users)$.
    
    By \Cref{lem:vc-bounds} we have $\VC(\perclass_{\repdim}^{\otimes \users}\circ \repclass_{\datadim,\repdim}) \geq \repdim \users + \users$. Additionally, for $\users > \repdim$ any dataset shattered by $\perclass_\repdim^{\otimes \repdim}\circ \repclass_{\datadim,\repdim}$ can be shattered by $\perclass_\repdim^{\otimes \users}\circ \repclass_{\datadim,\repdim}$. As a result, $\VC(\perclass_\repdim^{\otimes \users}\circ \repclass_{\datadim,\repdim}) \geq \VC(\perclass_\repdim^{\otimes \repdim}\circ \repclass_{\datadim,\repdim})$. We proved above that $\VC(\perclass_{\repdim}^{\otimes \repdim}\circ \repclass_{\datadim,\repdim}) = \datadim \repdim + \repdim$.  Therefore, $\VC(\perclass_{\repdim}^{\otimes \users}\circ \repclass_{\datadim,\repdim}) \geq \Omega (\datadim \repdim + \repdim \users).$
\end{proofsk}

Recall that we can reduce metalearning to multitask learning and obtain the task and sample complexity in \Cref{cor:met-vc}. Therefore, the characterization of the VC dimension of class $\perclass_{\repdim}^{\otimes \users}\circ \repclass_{\datadim,\repdim}$ implies the following bound for the task and sample complexity of metalearning.
\begin{cor}
\label{cor:imp_meta_hlr}
    For $\eps>0$ and constant $c > 0$, we can $(\eps, c/2)$-metalearn $(\repclass_{\datadim,\repdim}, \perclass_{\repdim})$ with $\users$ tasks and $\samples$ samples per task when \[
    \samples = O\left( \frac{ \repdim \ln(1/\eps)}{\eps^2}\right) \text{ and }\users \geq \datadim.
    \]
\end{cor}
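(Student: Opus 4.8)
The plan is to derive this corollary directly from the metalearning\nobreakdash-to\nobreakdash-multitask reduction packaged in Corollary~\ref{cor:met-vc}, combined with the VC\nobreakdash-dimension characterization of Theorem~\ref{thm:mtl-vc-hl}. Recall that Corollary~\ref{cor:met-vc} says $(\repclass,\perclass)$ is $(\eps,2/c)$\nobreakdash-metalearnable with $\users$ tasks, $\samples$ samples per task, and $\samples$ specialization samples whenever $\samples\users = O\paren{\VC(\perclass^{\otimes \users+1}\circ\repclass)\,\ln(1/\eps)/\eps^2}$ (a relabeling of the constant $c$ yields the confidence $c/2$ in the present statement). So it suffices to (i) evaluate $\VC(\perclass_\repdim^{\otimes \users+1}\circ\repclass_{\datadim,\repdim})$ in the regime of interest, and (ii) check that the induced sample budget $\samples\users$ can be met with $\samples = O(\repdim\ln(1/\eps)/\eps^2)$ once $\users \ge \datadim$.

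For step (i), since $\users \ge \datadim \ge \repdim$ (we are in the regime where the linear representation genuinely reduces dimension), we have $\users+1 > \repdim$, so Theorem~\ref{thm:mtl-vc-hl} gives $\VC(\perclass_\repdim^{\otimes \users+1}\circ\repclass_{\datadim,\repdim}) = \Theta(\datadim\repdim + \repdim(\users+1)) = \Theta(\datadim\repdim + \repdim\users)$. The key observation for step (ii) is that once $\users \ge \datadim$ the per\nobreakdash-task term dominates the fixed representation cost: $\datadim\repdim + \repdim\users = \Theta(\repdim\users)$. Substituting into the condition of Corollary~\ref{cor:met-vc} turns $\samples\users = O(\repdim\users\ln(1/\eps)/\eps^2)$ into $\samples = O(\repdim\ln(1/\eps)/\eps^2)$, with no dependence on $\users$ — exactly the claimed bound. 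The same quantity is the number of specialization samples, which is consistent with the $O(\VC(\perclass_\repdim)/\eps^2) = O(\repdim/\eps^2)$ samples needed to fit a halfspace in $\R^\repdim$.

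There is no genuinely hard step here; the content lives entirely in Theorem~\ref{thm:mtl-vc-hl} and Corollary~\ref{cor:met-vc}. The only things requiring care are bookkeeping: (a) the $\users+1$ versus $\users$ mismatch between the multitask learner and the metalearner, absorbed into constants since $\repdim(\users+1)=\Theta(\repdim\users)$; (b) noting that the resulting metalearner is \emph{improper} — the reduction outputs a specialization procedure that re\nobreakdash-runs the multitask algorithm on the stored task datasets rather than a fixed matrix in $\repclass_{\datadim,\repdim}$ — which is exactly the ``via an improper learner'' option in Corollary~\ref{cor:met-lin-agn-intro}; and (c) the degenerate case $\datadim < \repdim$, where instead $\VC(\perclass_\repdim^{\otimes\users+1}\circ\repclass_{\datadim,\repdim}) = (\users+1)(\datadim+1)$ and one obtains $\samples = O(\datadim\ln(1/\eps)/\eps^2) = O(\repdim\ln(1/\eps)/\eps^2)$, so the stated bound still holds. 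I would write the argument assuming $\datadim \ge \repdim$ and remark that the other case only strengthens it.
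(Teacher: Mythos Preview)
Your proposal is correct and follows essentially the same approach as the paper: apply Corollary~\ref{cor:met-vc} together with the VC bound of Theorem~\ref{thm:mtl-vc-hl}, noting that when $\users\ge\datadim$ the composite VC dimension is $O(\repdim\users)$ so the per-task sample requirement becomes $O(\repdim\ln(1/\eps)/\eps^2)$. Your treatment is in fact more careful than the paper's two-line proof, explicitly handling the $\users$ versus $\users+1$ discrepancy and the degenerate case $\datadim<\repdim$.
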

\begin{proof}
    Based on Theorem~\ref{thm:mtl-vc-hl} when $\users \geq \datadim$, we know that $\VC(\perclass_{\repdim}^{\otimes \users}\circ \repclass_{\datadim,\repdim}) = O(\repdim \users)$. Hence, by Corollary~\ref{cor:met-vc} we can metalearn $(\repclass_{\datadim,\repdim}, \perclass_{\repdim})$ up to error $\eps$ when $\samples = O\left( \frac{ \repdim \ln(1/\eps)}{\eps^2}\right)$.
\end{proof}

\fi
\ifnum \coltshort = 0
\subsection{Bounding the VC Dimension of Boolean Functions of Polynomials} 
\fi

Before presenting our results for proper metalearning, we provide a lemma that generalizes Warren's theorem (Lemma~\ref{fct:vc-polynomials}). 
We construct a parameterized class of hypotheses based on a given Boolean function $g$ whose inputs are signs of polynomials. 
Appealing to Warren's theorem, we show that the VC dimension of this class cannot be too large. 

\newcommand{\lemVCBoolPol}{
     Let $w, \ell, d\in\mathbb{N}$ with $\ell\ge 2$ and fix a domain of features $\featdom$ and a function $g: \{\pm 1\}^w \to \{\pm 1\}$.
For each $x\in\featdom$, let $p_x^{(1)},\ldots,p_x^{(w)}$ be degree-$\ell$ polynomials in $d$ variables.
Consider the hypothesis class $\class$ consisting of, for all $\mathbf{v}\in \R^d$, the functions $c_{\mathbf{v}}:\featdom\to\{\pm 1\}$ defined as
\begin{align}
     c_{\mathbf{v}}(x) := g(\sgn(p_x^{(1)}(\mathbf{v})),\ldots,\sgn(p_x^{(w)}(\mathbf{v}))).
\end{align}
We have $\mathrm{VC}(\class)\le 8 d \log_2 (\ell w)$.
}
\begin{lem}
 \label{lem:vc-bool-pol}
\lemVCBoolPol
%
 \end{lem}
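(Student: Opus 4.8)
The plan is to bound, for an arbitrary finite set $X=\{x_1,\dots,x_n\}\subseteq\featdom$, the number of labelings that $\class$ can realize on $X$, and then show this quantity drops below $2^n$ as soon as $n$ exceeds $8d\log_2(\ell w)$, which gives $\mathrm{VC}(\class)\le 8d\log_2(\ell w)$. The crucial observation is that for a fixed parameter $\mathbf{v}\in\R^d$ the entire labeling $(c_{\mathbf{v}}(x_1),\dots,c_{\mathbf{v}}(x_n))$ is a fixed, $\mathbf{v}$-independent function of the length-$nw$ sign vector $\big(\sgn(p^{(j)}_{x_i}(\mathbf{v}))\big)_{i\in[n],\,j\in[w]}$: indeed $c_{\mathbf{v}}(x_i)$ is obtained by applying $g$ to the $i$-th block of $w$ coordinates of that vector. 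Hence the number of distinct labelings of $X$ realizable as $\mathbf{v}$ ranges over $\R^d$ is at most the number of distinct sign vectors produced by this collection of $nw$ polynomials, each of degree at most $\ell$ in the $d$ coordinates of $\mathbf{v}$.

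First I would dispose of the trivial regime $n\le d$, where $n\le d\le 8d\log_2(\ell w)$ already holds because $\ell w\ge 2$. For $n>d$ we have $nw\ge n>d$, so Warren's theorem (\Cref{fct:vc-polynomials}), applied to these $nw$ polynomials of degree $\le\ell$ in $d$ variables, bounds the number of sign vectors by $(4e\ell nw/d)^d$. If $X$ is shattered then all $2^n$ labelings must occur, so $2^n\le(4e\ell nw/d)^d$; taking $d$-th roots and writing $m:=n/d$ collapses this to the clean inequality $2^m\le 4e\ell w\,m$.

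It then remains to check that $2^m\le 4e\ell w\,m$ forces $m\le 8\log_2(\ell w)$, and hence $n\le 8d\log_2(\ell w)$. Since $t\mapsto 2^t/t$ is increasing for $t\ge 2$, it suffices to verify that at $m_0:=8\log_2(\ell w)$ one already has $2^{m_0}>4e\ell w\,m_0$; with $K:=\ell w\ge 2$ this reads $K^8>32eK\log_2 K$, i.e.\ $K^7>32e\log_2 K$, which holds at $K=2$ (as $128>32e\approx 87$) and is preserved for all larger $K$ because the left side is polynomial while the right side is only logarithmic. Consequently no $m>m_0$, equivalently no $n>8d\log_2(\ell w)$, can be shattered, and the claimed VC bound follows. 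The only step requiring real care is this last numerical endgame — in particular, choosing the substitution $m=n/d$ so that the additive $d$-dependence is absorbed and the constant $8$ still survives the spurious $\log_2(n/d)$ term that Warren's bound introduces — together with the boundary case $n\le d$; the rest is a direct application of Warren's estimate plus the block-wise interpretation of $g$.
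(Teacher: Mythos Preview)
Your proposal is correct and follows essentially the same approach as the paper: both observe that the labeling of $X$ is a fixed function of the $nw$-long sign vector of the polynomials, invoke Warren's bound $(4e\ell nw/d)^d$ on the number of such sign vectors, and then carry out an elementary calculation to show this falls below $2^n$ once $n>8d\log_2(\ell w)$. The only cosmetic difference is in the numerical endgame---the paper directly splits $\log_2(4e\ell w\cdot n/d)$ into three pieces each bounded by a fraction of $n/d$, whereas you substitute $m=n/d$ and use monotonicity of $2^t/t$ together with a check at $m_0=8\log_2(\ell w)$---but the substance is the same.
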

 \ifnum \coltshort = 0
\begin{proof}
    We bound the growth function of $\class$.  
    Fix $n$ datapoints $x_1, \ldots, x_n \in \featdom$. By the definition of hypothesis class $\class$, the sequence $c_{\mathbf{v}}(x_1),$ $ \ldots, c_{\mathbf{v}}(x_n)$ is exactly the sequence
    $$
        g\bigl(\sign (p_{x_1}^{(1)}(\mathbf{v})), \ldots, \sign (p_{x_1}^{(w)}(\mathbf{v}))\bigr),\ldots, g\bigl(\sign(p_{x_n}^{(1)}(\mathbf{v})), \ldots, \sign (p_{x_n}^{(w)}(\mathbf{v}))\bigr)
    $$ 
    By \Cref{fct:vc-polynomials}, we know that the number of distinct sequences 
    $$
        \sign (p_{x_1}^{(1)}(\mathbf{v})), \ldots, \sign (p_{x_1}^{(w)}(\mathbf{v})),\ldots, \sign(p_{x_n}^{(1)}(\mathbf{v})), \ldots, \sign (p_{x_n}^{(w)}(\mathbf{v}))
    $$ 
    for all $\mathbf{v} \in \R^d$ does not exceed  $(4e\ell w n/d)^d$. The number of distinct outputs of a function is upper bounded by the number of distinct inputs it gets. As we saw above, the number of distinct inputs is at most $(4e\ell w n/d)^d$ and, thus, $|\{(c_{\mathbf{v}}(x_1), \ldots, c_{\mathbf{v}}(x_n))\mid \mathbf{v} \in \R^d\}| \leq (4e\ell wn/d)^d$. Since this holds for fixed $x_1, \ldots, x_n \in \featdom$, the growth function of $\class$ is at most $(4e\ell wn/d)^d$.

    To bound the VC dimension of $\class$ it suffices to show that for $n > 8d\log_2(\ell w)$ points, the growth function $\growth_{\class}(n)$ is less than $2^n$. If $n > 8d\log_2(\ell w)$, then $n > 8 d$ because $\ell \geq 2$ and $ w \geq 1$. We know that when $n/d > 8$, $\log_2(4e) < n/(2d)$ and $\log_2(n/d) < 3n/(8d)$. Therefore, we see that 
    \[
        \log_2\left(\frac{4e\ell w n}{d}\right) = \log_2(4e) + \log_2(\ell w) + \log_2\left(\frac{n}{d}\right) <  \frac{n}{2d}+\frac{n}{8d} + \frac{3n}{8d} = \frac{n}{d}.
    \]
   Combining the above steps, we have proven that for $n> 8\log_2(\ell w)$, $\growth_{\class}(n) \leq (4e\ell wn/d)^d < 2^n$.
\end{proof}
\fi
\if \coltshort = 0
\subsection{VC Dimension of the Realizability Predicate Class}
\fi
Our central result on halfspaces over linear representations in the realizable setting is Theorem~\ref{thm:vc-real-lin}, which bounds the VC dimension of the realizability predicates (see Definition~\ref{def:realizability_predicate}). 
This VC dimension bound, combined with our metalearning bounds for general classes in \Cref{sec:meta}, immediately yields our statement about metalearning presented in Corollary~\ref{cor:met-lin-real1}.

\newcommand{\thmVCRealLin}{
     For every $\datadim, \repdim, \samples$ with $\samples\ge \repdim+2$, 
        $\VC(\realpclass_{\samples,\perclass_{\repdim},\repclass_{\datadim, \repdim}}) \leq O(d\repdim n+dk\log(\repdim n))$.
}
\begin{thm}
    \thmVCRealLin
    \label{thm:vc-real-lin}
\end{thm}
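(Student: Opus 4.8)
\emph{Proof plan.} The plan is to express the realizability predicate $\realp_\rep$ of a linear representation $\rep(\mathbf{x}) = \mathbf{B}\mathbf{x}$ as a Boolean function of the signs of polynomials of degree $O(\repdim)$ in the $\datadim\repdim$ entries of $\mathbf{B}$, and then invoke Lemma~\ref{lem:vc-bool-pol}. The first ingredient is a combinatorial reduction. Writing a data set as $D = ((\mathbf{x}_1,y_1),\dots,(\mathbf{x}_{\samples},y_{\samples}))$, we have $\realp_\rep(D) = +1$ iff $\{(\mathbf{B}\mathbf{x}_i,y_i)\}_{i\in[\samples]}$ is realizable by a halfspace in $\perclass_\repdim$. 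Setting $\mathbf{q}_i(\mathbf{B}) := y_i\cdot(\mathbf{B}\mathbf{x}_i \,\|\, 1)\in\R^{\repdim+1}$, one checks that this is equivalent to $\mathbf{0}\notin\mathrm{conv}\{\mathbf{q}_i(\mathbf{B}):i\in[\samples]\}$ (here the boundary convention $\sgn(0)=+1$ is immaterial: the signed-lifted positive points all have last coordinate $+1$, so a separating hyperplane can always be perturbed to make the weak inequalities strict). Since $\NRC(\perclass_\repdim) = \repdim+2$ (Kirchberger's theorem~\cite{Kirchberger1903}) --- equivalently, by Carathéodory's theorem in $\R^{\repdim+1}$ applied to the $\mathbf{q}_i$ --- and using $\samples\ge\repdim+2$, this becomes
\[
\realp_\rep(D) = +1 \iff \bigwedge_{T\in\binom{[\samples]}{\repdim+2}} \Bigl[\,\mathbf{0}\notin\mathrm{conv}\{\mathbf{q}_i(\mathbf{B}):i\in T\}\,\Bigr].
\]

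Next, for each fixed $(\repdim+2)$-subset $T$, I would express the condition $\mathbf{0}\in\mathrm{conv}\{\mathbf{q}_i(\mathbf{B}):i\in T\}$ as a \emph{fixed} Boolean function of signs of polynomials in $\mathbf{B}$. This condition is the feasibility of the linear system $\sum_{i\in T}\lambda_i\mathbf{q}_i(\mathbf{B})=\mathbf{0}$, $\sum_{i\in T}\lambda_i=1$, $\lambda\ge\mathbf{0}$, i.e.\ $M_T(\mathbf{B})\lambda = e_{\repdim+2}$, $\lambda\ge\mathbf{0}$, where $M_T(\mathbf{B})$ is the $(\repdim+2)\times(\repdim+2)$ matrix with columns $(\mathbf{q}_i(\mathbf{B})\,\|\,1)$. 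By the standard theory of basic feasible solutions and Cramer's rule, the feasibility of such a system is a Boolean function of the signs of the minors of the augmented matrix $[\,M_T(\mathbf{B})\mid e_{\repdim+2}\,]$; there are at most $2^{O(\repdim)}$ such minors, and each is a polynomial in $\mathbf{B}$ of degree at most $\repdim$, since every entry of $M_T$ is either a constant or linear in $\mathbf{B}$ and only $\repdim$ of the $\repdim+2$ rows are non-constant. Crucially the labels $y_i$ are absorbed into the coefficients of these polynomials, so the resulting Boolean function is the same for every $T$ and every $D$ --- it depends only on $\samples,\datadim,\repdim$. The convention $\sgn(0)=+1$ in Lemma~\ref{lem:vc-bool-pol} is reconciled by also tracking the sign of $-p$ alongside each minor $p$, which lets the Boolean function detect exact vanishing; together with a short case analysis this also covers the degenerate subsets where $\det M_T(\mathbf{B})=0$.

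Combining the two ingredients, $\realp_\rep(D)$ is a Boolean function $g:\bits^w\to\bits$ of the signs of $w$ polynomials in the $\datadim\repdim$ entries of $\mathbf{B}$, each of degree at most $\ell := \max(\repdim,2)$, with $w \le \binom{\samples}{\repdim+2}\cdot 2^{O(\repdim)} \le 2^{\samples}\cdot 2^{O(\repdim)}$, where the last bound uses $\binom{\samples}{\repdim+2}\le 2^{\samples}$. Lemma~\ref{lem:vc-bool-pol} (with $\datadim\repdim$ variables, degree $\ell$, and $w$ polynomials) then yields
\[
\VC(\realpclass_{\samples,\perclass_\repdim,\repclass_{\datadim,\repdim}}) \le 8\,(\datadim\repdim)\,\log_2(\ell w) \le 8\datadim\repdim\bigl(\samples + O(\repdim) + O(\log\repdim)\bigr) = O(\datadim\repdim\samples),
\]
using $\repdim\le\samples$; this is within the claimed $O(\datadim\repdim\samples + \datadim\repdim\log(\repdim\samples))$.

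I expect the main obstacle to be the middle step: giving a fully rigorous, \emph{label-independent} expression of ``$\{(\mathbf{B}\mathbf{x}_i,y_i)\}_{i\in T}$ is separable'' as a Boolean function of polynomial signs in $\mathbf{B}$, in particular pushing the polynomial degrees down to $O(\repdim)$ and handling the $\sgn(0)$ convention and the rank-deficient configurations cleanly. The other point requiring care is the bookkeeping that keeps the final bound \emph{linear} in $\samples$: the $\binom{\samples}{\repdim+2}$ subsets would naively contribute a factor $\samples^{\repdim+2}$, but bounding this by $2^{\samples}$ costs only an additive $\samples$ in the logarithm, and the hypothesis $\samples\ge\repdim+2$ then absorbs all the $\repdim$-dependent lower-order terms.
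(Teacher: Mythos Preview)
Your proposal is correct and reaches the same bound via a genuinely different route. The paper characterizes strict separability of the full $n$-point set directly (Corollary~\ref{cor:sep-cond}): separable iff some subset $I\subseteq[n]$ of at most $k+1$ linearly independent rows yields the specific candidate $\hat{\mathbf{a}} = \mathbf{Z}_I^+\mathbf{1}$ that satisfies all $n$ margin constraints; justifying that such an $I$ always exists requires a short LP argument (Lemma~\ref{lem:full_rank_separator}). The paper then iterates over the $\le 2^n$ subsets $I$, using for each one rank-determinant plus $n$ Cramer-type polynomials of degree $\le 4(k+1)$, giving $w=(n+1)2^n$ in Lemma~\ref{lem:vc-bool-pol}. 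Your route is dual: instead of enumerating candidate \emph{separators} via support sets, you enumerate candidate \emph{non-realizability certificates} via Kirchberger, reducing to $\binom{n}{k+2}\le 2^n$ fixed-size convex-hull membership tests in $\R^{k+1}$, each encoded by the $2^{O(k)}$ minors of a $(k+2)\times(k+3)$ matrix, with degree $\le k$ (slightly better than the paper's $4(k+1)$, though this only affects a $\log k$ term). The ``main obstacle'' you flag --- rigorously encoding LP feasibility via minor signs in the rank-deficient case --- is real but standard (basic feasible solutions, or Carath\'eodory for cones applied to sub-subsets of $T$); the paper's Lemma~\ref{lem:full_rank_separator} is essentially the same work in different clothing. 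One practical advantage of the paper's approach is that its separability characterization (Corollary~\ref{cor:sep-cond}) is reused verbatim for the pseudodimension bound (Theorem~\ref{thm:pdim-acc} via Lemma~\ref{lem:acc-a}), where one must handle non-realizable data sets by flipping labels; your Kirchberger-based reduction does not directly extend to that setting, so the paper's investment pays off twice.
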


\newcommand{\textMetLinReal}{
    We can $(\varepsilon, \delta)$-\ktext{properly} metalearn $(\repclass_{\datadim,\repdim}, \perclass_\repdim)$ in the realizable case with $\users$ tasks and $\samples$ samples per task when 
     $\users  
    =\left(\datadim\repdim^2+\ln\left(1/\delta\right)\right) \cdot O\left(\frac{\ln(1/\eps)}{\eps}\right)^{\repdim+2}$ and $\samples = \repdim+2$, or
    $\users = O\left(\frac{\datadim \repdim^2 \ln^2(1/\eps)}{\eps^2}+\frac{\ln(1/\delta)}{\eps}\right)$ and $\samples = O\left(\frac{\repdim \ln(1/\eps)}{\eps}\right) .$
}

\begin{cor}[Detailed version of Corollary~\ref{cor:met-lin-real1}]
    \label{cor:met-lin-real1}\adamnote{Maybe incorporate corollary of general reduction here, too.}
    \textMetLinReal
\end{cor}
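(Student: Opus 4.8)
The plan is to obtain both bounds as mechanical instantiations of the generic realizable-metalearning theorems (Theorem~\ref{thm:met-samples} and Theorem~\ref{thm:met-real-samples}) composed with the VC bound on realizability predicates for linear classes (Theorem~\ref{thm:vc-real-lin}). For the first bound I would run the pipeline at the minimal per-task sample size $\samples=\NRC(\perclass_\repdim)$; for the second I would use roughly $\samples\approx\VC(\perclass_\repdim)/\eps$ samples per task. In both cases the only work is to plug in the relevant complexity parameters of $(\repclass_{\datadim,\repdim},\perclass_\repdim)$ and simplify.

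\textbf{The $\samples=\repdim+2$ bound.} First I would record the two relevant parameters of $\perclass_\repdim$: by Kirchberger's theorem~\cite{Kirchberger1903}, $\NRC(\perclass_\repdim)=\repdim+2$, and by Theorem~\ref{thm:vc-hs}, $\VC(\perclass_\repdim)=\repdim+1$; hence $\max(\VC(\perclass_\repdim),\NRC(\perclass_\repdim))=\repdim+2$. Plugging $\wit=\repdim+2$ into Theorem~\ref{thm:met-samples}, the base of the exponential factor is $O((\repdim+2)\ln(1/\eps))/((\repdim+2)\eps)$, in which the $\repdim+2$ cancels, leaving $O(\ln(1/\eps)/\eps)$ with exponent $\repdim+2$; so
\[
\users=\bigl(\VC(\realpclass_{\repdim+2,\perclass_\repdim,\repclass_{\datadim,\repdim}})+\ln(1/\delta)\bigr)\cdot O\!\left(\tfrac{\ln(1/\eps)}{\eps}\right)^{\repdim+2},\qquad \samples=\repdim+2.
\]
Then I would apply Theorem~\ref{thm:vc-real-lin} at $\samples=\repdim+2$, which gives $\VC(\realpclass_{\repdim+2,\perclass_\repdim,\repclass_{\datadim,\repdim}})\le O(\datadim\repdim(\repdim+2)+\datadim\repdim\log(\repdim(\repdim+2)))=O(\datadim\repdim^2)$, using $\log\repdim\le\repdim$. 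Substituting yields exactly $\users=(\datadim\repdim^2+\ln(1/\delta))\cdot O(\ln(1/\eps)/\eps)^{\repdim+2}$.

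\textbf{The $\samples=O(\repdim\ln(1/\eps)/\eps)$ bound.} Here I would instead invoke Theorem~\ref{thm:met-real-samples}, which yields a proper realizable metalearner with $\samples=O(\VC(\perclass_\repdim)\ln(1/\eps)/\eps)=O(\repdim\ln(1/\eps)/\eps)$ and $\users=O\bigl((\VC(\realpclass_{\samples,\perclass_\repdim,\repclass_{\datadim,\repdim}})\ln(1/\eps)+\ln(1/\delta))/\eps\bigr)$. Feeding this $\samples$ into Theorem~\ref{thm:vc-real-lin} gives $\VC(\realpclass_{\samples,\perclass_\repdim,\repclass_{\datadim,\repdim}})\le O(\datadim\repdim\samples+\datadim\repdim\log(\repdim\samples))=O(\datadim\repdim^2\ln(1/\eps)/\eps)$, the term $\datadim\repdim\samples$ dominating. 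Hence $\users=O\bigl((\datadim\repdim^2\ln^2(1/\eps)/\eps+\ln(1/\delta))/\eps\bigr)=O(\datadim\repdim^2\ln^2(1/\eps)/\eps^2+\ln(1/\delta)/\eps)$.

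\textbf{Where the difficulty is.} Within this corollary there is essentially no obstacle — it is arithmetic bookkeeping on top of results proved earlier. The only points requiring care are the exact cancellation of the $\repdim+2$ factor in the base of the exponential term (which is what lets the per-task count be just $\repdim+2$ rather than paying an extra $1/\repdim$ factor per sample), and checking that the $\datadim\repdim\log(\cdots)$ lower-order term of Theorem~\ref{thm:vc-real-lin} is dominated by $\datadim\repdim\samples$ for both choices of $\samples$. The genuinely hard work is upstream: in Theorem~\ref{thm:vc-real-lin} (expressing $\realp_\rep$ via signs of low-degree polynomials \emph{in the representation} and applying the Warren-type bound of Lemma~\ref{lem:vc-bool-pol}) and in Lemma~\ref{lem:real-to-error} (the subsampling argument lower-bounding the non-realizability probability by a power of the error, which supplies the convex function $\phi$ needed by Lemma~\ref{lem:num-tasks}).
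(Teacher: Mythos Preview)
Your proposal is correct and follows essentially the same approach as the paper's proof: invoke Kirchberger's theorem and Theorem~\ref{thm:vc-hs} for the parameters of $\perclass_\repdim$, then plug into Theorems~\ref{thm:met-samples} and~\ref{thm:met-real-samples} together with the VC bound of Theorem~\ref{thm:vc-real-lin}. You simply spell out the arithmetic (the cancellation in the exponential base and the dominance of the $\datadim\repdim\samples$ term) more explicitly than the paper does.
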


 \begin{proof}[of Corollary~\ref{cor:met-lin-real1}]
For the class of halfspaces, the following result of Kirchberger gives the exact 
\nrcc:
\begin{fact}[
\cite{Kirchberger1903}]\label{fact:kirchberger}
    Every dataset of size at least $\repdim+2$ that is not realizable by $k$-dimensional halfspaces has a non-realizability certificate of size $\repdim+2$.  Therefore, $\NRC(\perclass_{\repdim}) = k+2$.
\end{fact}

Theorem~\ref{thm:vc-hs} states that the VC dimension of the linear separators for $k$-dimensional points is $k+1$. Using these facts, Theorem~\ref{thm:met-samples}, and Theorem~\ref{thm:vc-real-lin}, we obtain the first task and sample complexities in the statement of the corollary.
Similarly, the second line of complexities is derived when we use Theorem~\ref{thm:met-real-samples} instead of Theorem~\ref{thm:met-samples}. 
\end{proof}


\ifnum \coltshort = 0
 Given a dataset $D =\{(\mathbf{\map}_i,\lab_i)\}_{i \in [\samples]}$ of $\samples$ points in $\R^{\repdim}\times \bits$, we define $\mathbf{Z}$ as the matrix whose $i$-th row is the $\repdim+1$-dimensional vector $\mathbf{z}_i' = \lab_i(\mathbf{\map}_i \| 1)$, for $i \in [\samples]$. 
 For $I\subseteq [\samples]$ a set of indices, we define $\mathbf{Z}_I$ to be the matrix whose rows are the vectors $\mathbf{z}_i'$ for $i$ in $I$.

To prove \Cref{{thm:vc-real-lin}}, we establish a set of conditions on a data set which allow us to check for separability. 
These conditions can be expressed via a limited number of low-degree polynomials, which allows us to apply Warren's Theorem.

Our first lemma in this section equates separability of a data set with the existence of a special subset of points.
If these points are ``on the margin'' of a linear separator defined by vector $\mathbf{a}$, then we know that $\mathbf{a}$ is a strict separator.
 \begin{lem}
    Dataset $D=\{(\mathbf{\map}_i,\lab_i)\}_{i \in [\samples]}$ is strictly separable if and only if there exists a subset $I$ of linearly independent rows of $\mathbf{Z}$ such that for all $\mathbf{a} \in \R^{\repdim+1}$ if 
     $\mathbf{\map}_i' \cdot\mathbf{a} = 1$ for all $i \in I$, then $\mathbf{\map}_i' \cdot \mathbf{a} \geq 1$ for all $i \in [\samples]$.
    \label{lem:strict-sep}
\end{lem}

\begin{proof}
    By definition, $D$ is strictly separable iff there exists a linear separator $\mathbf{a} \in \R^{\repdim+1}$ such that for all $i \in [\samples]$ we have $\lab_i(\mathbf{\map}_i \| 1)\cdot \mathbf{a} = \mathbf{\map}_i' \cdot \mathbf{a} \geq 1$. We define $A = \{\mathbf{a}\mid \mathbf{\map}_i' \cdot \mathbf{a} \geq 1, \forall i \in [\samples]\}$ as the set of all linear separators of $D$ and, for any $\mathbf{a}$, $I_{\mathbf{a}}$ as the set of tight constraints for $\mathbf{a}$, i.e., $I_{\mathbf{a}} = \{i\mid \mathbf{\map}_i' \cdot \mathbf{a}= 1\}$. 
    Note that $A$ and $I_{\mathbf{a}}$ might be empty.

    $\Leftarrow)$ 
    Assume that $I\subseteq [\samples]$ defines a subset of linearly independent rows such that for all $\mathbf{a} \in \R^{\repdim+1}$ if $\mathbf{\map}_i' \cdot\mathbf{a} = 1$ for all $i\in I$, then $\mathbf{\map}_i' \cdot \mathbf{a} \geq 1$ for all $i \in [\samples]$. 
    We find a vector $\mathbf{a}$ that makes the constraints in $I$ tight, that is, solve the linear system $\mathbf{Z}_I a = \mathbf{1}$, where $\mathbf{1}$ is the $|I|$-dimensional all-ones vector.
    Since $\mathbf{Z}_I$ has linearly independent rows, this system has at least one solution 
    $\hat{\mathbf{a}} = \mathbf{Z}_I^+ \mathbf{1}$, where $\mathbf{Z}_I^+ = \mathbf{Z}_I^T(\mathbf{Z}_I\mathbf{Z}_I^T)^{-1}$ is the pseudoinverse.
    Thus, for all $i\in I$ we have $\mathbf{\map}_i'\cdot \hat{\mathbf{a}}=1$, which implies $\mathbf{\map}_i' \cdot \hat{\mathbf{a}} \geq 1$ for all $i \in [\samples]$ by our assumption on $I$.
    Therefore, the points in $D$ are strictly separable. 
    
    $\Rightarrow$)
    Assume that $D$ is strictly separable.
    Then, by \Cref{lem:full_rank_separator}, there exists a strict separator $\mathbf{a}$ such that $\mathrm{rank}(\mathbf{Z}_{I_{\mathbf{a}}})=\mathrm{rank}(\mathbf{Z})$ and $\mathbf{Z} \cdot \mathbf{a} \ge \mathbf{1}$, where the inequality holds entry-wise.
    If needed, we can remove indices from $I_{\mathbf{a}}$ to produce $I$, a subset with the same rank but linearly independent rows.
    
    Now suppose that $\mathbf{a}'$ satisfies $\mathbf{\map}_i' \cdot \mathbf{a}' = 1$ for all $i\in I$.
    In other words, $\mathbf{a}'$ satisfies $\mathbf{Z}_{I} \cdot \mathbf{a}'= \mathbf{1}$.
    This means we can write $\mathbf{a}'=\mathbf{a}+\mathbf{u}$, where $\mathbf{u}$ is in the right nullspace of $\mathbf{Z}_{I}$.
    Since $\mathbf{Z}$ and $\mathbf{Z}_{I}$ share a rowspace, they also share a right nullspace.
    Thus $\mathbf{Z} \cdot \mathbf{a}' = \mathbf{Z} \cdot \mathbf{a} \ge \mathbf{1}$, where the inequality holds entry-wise.
    This completes the proof.
\end{proof}

The proof of \Cref{lem:strict-sep} uses the following lemma, which says that every strictly separable data set admits a separator whose set of tight constraints is full rank.
\begin{lem}\label{lem:full_rank_separator}
    For a data set $D=\{(\mathbf{\map}_i,y_i)\}_{i\in[\samples]}$, let $\mathbf{Z}$ be its associated matrix and, for a vector $\mathbf{a}$, let $I_{\mathbf{a}}\subseteq [\samples]$ be the set of tight constraints (i.e., the largest set $I$ such that $\mathbf{Z}_I \cdot \mathbf{a} = \mathbf{1}$).
    If $D$ is strictly separable, then there exists a vector $\mathbf{a}^*$ such that $\mathbf{\map}_i\cdot \mathbf{a}^*\ge 1$ for all $i\in [\samples]$ and $\mathrm{rank}(\mathbf{Z})=\mathrm{rank}(\mathbf{Z}_{I_{\mathbf{a}^*}})$.
\end{lem}
\begin{proof}
    \newcommand{\iao}{I_{\mathbf{a}_0}}
    \newcommand{\ziao}{\mathbf{Z}_{\iao}}

    $D$ is strictly separable, so by definition there exists a vector $\mathbf{a}_0$ such that $\mathbf{\map}_i \cdot \mathbf{a}_0 \ge 1$ for all $i\in [\samples]$.
    Suppose $\mathrm{rank}(\mathbf{Z}) > \mathrm{rank}(\ziao)$, since otherwise we are done.
     We will construct another separator $\mathbf{a}_1$ that satisfies $\mathrm{rank}(\ziao) < \mathrm{rank}(\mathbf{Z}_{I_{\mathbf{a}_1}})$.
     Since $\mathrm{rank}(\mathbf{Z})$ is finite, repeating this process will yield a separator $\mathbf{a}^*$ with $\mathrm{rank}(\mathbf{Z}_{I_{\mathbf{a}^*}}) = \mathrm{rank}(\mathbf{Z})$.
     
    If $\mathbf{a}$ is a solution to $\ziao \cdot \mathbf{a} = \mathbf{1}$, we can write it as $\mathbf{a}'=\mathbf{a}_0 + \mathbf{u}$, where $\mathbf{u}$ is in the right nullspace of $\ziao$.
    Since the rank of $\ziao$ is strictly less than the rank of $\mathbf{Z}$, there exists a vector $\mathbf{v}$ that lies in the right nullspace of $\ziao$ but \emph{not} in the right nullspace of $\mathbf{Z}$.
    Our separator $\mathbf{a}_1$ will be of the form $\mathbf{a}_0 + c\cdot \mathbf{v}$ for some real value $c$.
    Note that halfspaces of this form keep the constraints in $\iao$ tight: by construction we have $\mathbf{\map}_i' \cdot (\mathbf{a}_0 + c\cdot \mathbf{v}) = 1$ for all $i\in \iao$.

    Let $I^\perp\subseteq \bar{I}_{\mathbf{a}_0}$ be the subset of rows which are not in the right rowspace of $\ziao$.
    This set is nonempty, since $\mathrm{rank}(\ziao)<\mathrm{rank}(\mathbf{Z})$.
    For all $i\in I^\perp$, let $m_i(c) = \langle\mathbf{\map}_i',\mathbf{a}_0\rangle + c\cdot \langle \mathbf{\map}_i', \mathbf{u}\rangle$.
    Each of these is a linear function in $c$ and, by the definition of $\iao$ and the fact that $\mathbf{a}_0$ is a linear separator, we see that $m_i(0)>1$ for all $i\in I^\perp$.

     For each $i\in I^\perp$, there exists an interval $[L_i,R_i]$ such that, if $c\in [L_i,R_i]$, then $m_i(c)\ge 1$, i.e., point $i$ lies on the correct side of $\mathbf{a}_0 + c\cdot \mathbf{v}$.
    Because $m_i(0)>1$ for all $i\notin\iao$, we see that $L_i < 0 < R_i$.
     The intersection of these intervals, $[\max L_i, \min R_i]$, is nonempty.
     For any $c$ in the intersection, $\mathbf{a}_0 + c\cdot \mathbf{v}$ is a strict separator.
     (This holds for $i\in I^\perp$ by construction; if $i\notin I^\perp$ then $i$ is in the rowspace of $\ziao$ and we have $\mathbf{\map}_i' \cdot (\mathbf{a}_0 + c\cdot \mathbf{v}) = \mathbf{\map}_i'\cdot \mathbf{a}_0\ge 1$, as $\mathbf{v}$ is in the nullspace of $\ziao$.)

    It remains to select $c$ so that at least one additional constraint is tight. 
    This is easy: if $\max L_i$ is finite then we have $m_i (\max L_i)=1$.
    The same holds for $\min R_i$.
    Since we know at least one of $\max L_i$ and $\min R_i$ are finite, we can select a finite one as our value of $c^*$.

    We have constructed a vector $\mathbf{a}_1 = \mathbf{a}_0 + c^*\cdot \mathbf{v}$ that strictly separates $D$ and makes constraint $i^*$ tight for some $i^*\in I^\perp$, where $I^\perp$ is the set of constraints not in the rowspace of $\ziao$.
    Thus, $\mathrm{rank}(\ziao)<\mathrm{rank}(\mathbf{Z}_{I_{\mathbf{a}_1}})$.
    This concludes the proof.
\end{proof}

A simple corollary to \Cref{lem:strict-sep} says that, on separable data sets, the special subset of points allows us to identify a specific separator.
\begin{cor}
   Dataset $D=\{(\mathbf{\map}_i,\lab_i)\}_{i \in [\samples]}$ is strictly separable if and only if there exists a subset of points $I$ for which
\begin{enumerate}
    \item $\mathbf{Z}_I$ is full rank, and
    \item for all $i \in[\samples]$, we have that $\mathbf{\map}_i' \cdot \hat{\mathbf{a}} \geq 1$, where $\hat{\mathbf{a}} = \mathbf{Z}_I^{+}\mathbf{1}_{|I|}$.
\end{enumerate}
\label{cor:sep-cond}
\end{cor}
\begin{proof}
    $\Rightarrow$) 
    Assume $D$ is strictly separable.
    By \Cref{lem:strict-sep} we know that there exists a subset $I$ of linearly independent rows such that for all $\mathbf{a} \in \R^{\repdim+1}$ if  $\mathbf{\map}_i' \cdot\mathbf{a} = 1$ for all $i \in I$ then  $\mathbf{\map}_i' \cdot \mathbf{a} \geq 1$ for all $i \in [\samples]$. Since the rows in subset $I$ are linearly independent, $Z_I$ is full rank. 
    By construction, $\hat{\mathbf{a}}$ satisfies $Z_I \cdot \hat{\mathbf{a}} = \mathbf{1}_{|I|}$. 
    Thus $\mathbf{z}_i' \cdot \hat{\mathbf{a}}\geq 1$ for all $i\in[\samples]$.
    
    $\Leftarrow$) 
    Assume that such a subset $I$ exists.
    We see that $\hat{\mathbf{a}}$ strictly separates $D$.
\end{proof}

We now show how to express this characterization of separability in the language of polynomials. 
With this lemma in hand, the proof of \Cref{{thm:vc-real-lin}} will be a direct application of \Cref{lem:vc-bool-pol}, our extension of Warren's Theorem. 
\begin{lem}
   Let $w=(\samples+1)\cdot 2^{\samples}$.
   For a data set $D=\{(\feat_i,\lab_i)\}_{i\in[\samples]}$, there exists a Boolean function $g:\{\pm 1\}^w\to\{\pm 1\}$ and a list of polynomials $p_D^1(h),\ldots, p_D^w(h)$, each of degree at most $4(k+1)$, such that we can express the $(\samples, \perclass_{\repdim})$-realizability predicate $r_h$ as
   \begin{align*}
       r_h(D) = g\left( \sign\left(p_D^{(1)}(h)\right),\ldots, \sign\left(p_D^{(w)}(h)\right) \right).
   \end{align*}
   \label{lem:realp-pol}
\end{lem}
\begin{proof}
    A representation $\rep$ induces a labeled dataset in the representation space $D_{\rep} = \{(\rep(\feat_i),\lab_i)\}_{i \in [\samples]}$.
    By \Cref{cor:sep-cond}, we can check whether $D_{\rep}$ is linearly separable by checking whether any of the $\sum_{i=1}^{\repdim+1}\binom{\samples}{i} \leq 2^{\samples}$ subsets of $D_{\rep}$ of size at most $\repdim+1$ satisfies the two conditions of the corollary. 
    In the remainder of the proof, we fix a subset $I$ and construct a Boolean function $g_I$ over signs of polynomials that checks if $I$ satisfies these conditions.
    This function will use at most $\samples+1$ polynomials, each of degree at most $4(k+1)$.
    The proof is finished by taking $g$ to be the OR of these functions for each subset $I$.

     We write $D_{\rep}$ as a matrix $\mathbf{Z}$ whose $i$-th row is the $\repdim+1$ dimensional vector $\mathbf{z}_i' = \lab_i (h(\feat_i)\|1)$. 
     We construct a polynomial $p_I^{(0)}(h)$ that is \emph{negative} iff $Z_I$ is full rank (so that $\sign~p_I^{(0)}=+1$ indicates rank deficiency).
     For each $i\in[\samples]$, we construct a polynomial $p_I^{(i)}(h)$ that, when $Z_I$ is full rank, is nonnegative iff $\map_i'\cdot\hat{\mathbf{a}} \ge 1$,
     where $\hat{\mathbf{a}} = \mathbf{Z}_I^{+}\mathbf{1}_{|I|}$.
     We then take $g_I$ to be
     \begin{align*}
         g_I(h) := \left(\lnot~\sign~p_I^{(0)}(h)\right) \wedge \left(\bigwedge_{i\in[\samples]} \sign ~ p_I^{(i)}(h)\right).
     \end{align*}
    Recall that we interpret $+1$ as logical ``true.''     

    Matrix $\mathbf{Z}_I$ is full rank if and only if $\mathrm{det}(\mathbf{Z}_I\mathbf{Z}_I^T) \neq 0$. Therefore, we set $p_I^{(0)}(h) = -\mathrm{det}(\mathbf{Z}_I\mathbf{Z}_I^T)^2$, which is a polynomial in $\repdim \times d$ variables of degree $4(\repdim+1)$. 
    To see this, observe that each entry in $\mathbf{Z}_I \mathbf{Z}_I^T$ is a polynomial of degree $2$ in the variables of $\rep$. Then, $\mathrm{det}(\mathbf{Z}_I \mathbf{Z}_I^T)$ is a polynomial of degree $2I$. Finally, $\mathrm{det}(\mathbf{Z}_I\mathbf{Z}_I^T)^2$ is a polynomial of degree $4I \leq 4(\repdim+1)$.
    Taking the negation gives us the desired polynomial $p_I^{(0)}$.

    We now check that $\map_i'\cdot \hat{\mathbf{a}} -1 \ge 0$ for a given $i$.
    Let $\Delta = \mathrm{det}(\mathbf{Z}_I\mathbf{Z}_I^T)$, a polynomial of degree $2(\repdim+1)$. 
    Then $\hat{\mathbf{a}} =  \mathbf{Z}_I^T(\mathbf{Z}_I\mathbf{Z}_I^T)^{-1}\mathbf{1}_{|I|} = \frac{\mathbf{Z}_I^T \mathrm{adj}(\mathbf{Z}_I\mathbf{Z}_I^T)\mathbf{1}_{|I|}}{\Delta}$, where $\mathrm{adj}$ denotes the adjugate matrix.
    We have $\map_i'\cdot \hat{\mathbf{a}} -1 \ge 0$ when $\Delta\neq 0$ and $\map_i'\cdot \Delta\hat{\mathbf{a}} - \Delta\ge 0$.
    Each entry in $\Delta \hat{\mathbf{a}}$ is a polynomial of degree $2|I|-1\le 2\repdim+1$, so setting $p_I^{(i)}(h)= \map_i'\cdot \Delta\hat{\mathbf{a}} - \Delta$ we have a polynomial of degree at most $2(\repdim+1)$ that is, when $\mathbf{Z}_I$ is full rank, is nonnegative iff the constraint is satisfied.
    %
\end{proof}

\begin{proof}[Proof of \Cref{thm:vc-real-lin}]
    By \Cref{lem:realp-pol}, we have that the $(\samples, \perclass_{\repdim})$-realizability predicate is a Boolean function of $(\samples+1)\cdot 2^{\samples}$ signs of polynomials in $d\repdim$ variables of degree $4(\repdim+1)$. \Cref{lem:vc-bool-pol} gives us that the VC dimension of the class of realizability predicates $\realpclass_{\samples,\perclass_{\repdim},\repclass_{\datadim, \repdim}}$ is upper-bounded by $8d\repdim \log_2(4(\repdim+1) \samples 2^{\samples}) = O(d\repdim \samples+dk\log(\repdim \samples))$.
\end{proof}
\fi

\ifnum \coltshort = 0
\subsection{Pseudodimension of the Empirical Error Function Class}
\fi
We now turn to bounding the pseudodimension (Definition~\ref{def:pseudodimension}) of the class of empirical error functions (Definition~\ref{def:empirical_error_function}).\ifnum \coltshort =0 Recall our earlier statement: \else ~This bound on the pseudodimension immediately gives the statement about the task complexity of metalearning linear representations.\fi

\begin{thm}[Thm.~\ref{thm:pdim-acc-intro}, Restated]
    \thmpdimacc
  \label{thm:pdim-acc}
\end{thm}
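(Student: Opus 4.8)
\textbf{Proof proposal for Theorem~\ref{thm:pdim-acc}.}

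The plan is to mirror the structure used for the realizability predicates in Theorem~\ref{thm:vc-real-lin}, but now accounting for the fact that $\erind_{\rep}$ is a \emph{real-valued} function whose value is a fraction $j/n$ for some integer $j\in\{0,1,\ldots,n\}$. First, I would reduce pseudoshattering to a sign-pattern counting problem. Recall that $\textrm{PDim}(\erindclass_{n,\perclass_{\repdim},\repclass_{\datadim,\repdim}})$ is the VC dimension of the class of thresholded functions; equivalently, it is the largest number $N$ of ``super-datapoints'' $\zeta_1,\ldots,\zeta_N$ (each $\zeta_j$ a labeled sample of size $n$) together with thresholds $r_1,\ldots,r_N\in\R$ such that every sign pattern of $\bigl(\erind_{\rep}(\zeta_j)-r_j\bigr)_{j\in[N]}$ is achieved by some $\rep\in\repclass_{\datadim,\repdim}$. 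Since $\erind_{\rep}(\zeta_j)$ takes only the values $0,1/n,\ldots,1$, we may assume each $r_j$ is of the form $(\ell_j-\tfrac12)/n$ for some integer $\ell_j$, and then $\sgn(\erind_{\rep}(\zeta_j)-r_j)=+1$ iff at least $\ell_j$ of the $n$ points in $\zeta_j$ are misclassified by the best halfspace over $\rep$.

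Second, I would express the predicate ``$\erind_{\rep}(\zeta_j)\ge \ell_j/n$'' as a low-complexity Boolean combination of signs of low-degree polynomials in the entries of $\mathbf B$ (viewing $\rep(\mathbf x)=\mathbf B\mathbf x$, so the variables are the $dk$ entries of $\mathbf B$). The key observation: $\erind_{\rep}(\zeta_j)\le (\ell_j-1)/n$ iff there exists a subset $T\subseteq[n]$ with $|T|=n-\ell_j+1$ such that the labeled points of $\zeta_j$ indexed by $T$, mapped through $\rep$, are realizable (in the \emph{non-strict} separability sense) by a $k$-dimensional halfspace. I can reuse the machinery of Lemma~\ref{lem:strict-sep} and Corollary~\ref{cor:sep-cond} (up to the usual closure/limiting argument to handle non-strict separability, which introduces at most a constant-factor increase in the number of polynomials and their degree), exactly as in Lemma~\ref{lem:realp-pol}: separability of a fixed $(n-\ell_j+1)$-point subset is checkable by iterating over all $\le 2^{n}$ sub-subsets of size $\le k+1$, and for each one writing down $O(n)$ polynomials of degree $O(k)$ in the entries of $\mathbf B$. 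Taking an OR over the choice of $T$ and then over sub-subsets, the event ``$\erind_{\rep}(\zeta_j)\ge \ell_j/n$'' becomes $g_j(\sgn(p^{(1)}_{\zeta_j,\ell_j}(\mathbf B)),\ldots)$ for a Boolean function $g_j$ of $w'=O(n\cdot 2^{O(n)})$ signs of degree-$O(k)$ polynomials in $dk$ variables. Concatenating over all $j\in[N]$, the whole sign-pattern vector is a fixed Boolean function of $N\cdot w'$ signs of degree-$O(k)$ polynomials in $dk$ variables.

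Third, I would apply the generalization of Warren's theorem, Lemma~\ref{lem:vc-bool-pol}, which bounds the VC dimension of a class of Boolean functions of $w$ signs of degree-$\ell$ polynomials in $v$ variables by $8v\log_2(\ell w)$. Here $v=dk$, $\ell=O(k)$, and $w = N\cdot w' = N\cdot 2^{O(n)}\cdot O(n)$. This yields that $N$ can shatter (in the pseudo sense) only if $N \le 8 dk\log_2\bigl(O(k)\cdot N\cdot 2^{O(n)}\cdot O(n)\bigr) = O\bigl(dk\cdot(\log N + n + \log k + \log n)\bigr)$. Solving this transcendental inequality for $N$ — the $\log N$ term only costs a further $\log(dk)$ factor — gives $N = \tilde O(dkn)$, where the $\tilde O$ absorbs logarithmic factors in $d,k,n$; this is exactly the claimed bound. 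The requirement $n\ge k+2$ enters only to ensure that non-realizability is a meaningful event and that the subset sizes $n-\ell_j+1$ can be nontrivial, matching the Kirchberger regime.

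The main obstacle I anticipate is the careful bookkeeping in the second step: handling \emph{non-strict} separability (the natural notion for $\erind$, since ties in the $\ge$ constraints matter) rather than the strict separability treated by Lemma~\ref{lem:strict-sep}, and verifying that the ``best subset'' characterization of $\erind_\rep(\zeta_j)$ is correctly encoded — in particular that an OR over exponentially many (in $n$) subsets $T$ and sub-subsets is still fine, because Lemma~\ref{lem:vc-bool-pol} only pays $\log_2 w$, so an extra $2^{O(n)}$ factor in $w$ costs only an extra additive $O(n)$ inside the VC bound, which is absorbed by $\tilde O(dkn)$. Once that encoding is pinned down, the rest is a direct invocation of the already-established Lemma~\ref{lem:vc-bool-pol} together with a routine manipulation to extract $N=\tilde O(dkn)$.
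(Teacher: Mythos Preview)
Your proposal is correct and follows essentially the same route as the paper: reduce pseudodimension to the VC dimension of the thresholded class, express each threshold predicate as a Boolean function of $2^{O(n)}$ signs of degree-$O(k)$ polynomials in the $dk$ entries of $\mathbf B$, and invoke Lemma~\ref{lem:vc-bool-pol}. The only cosmetic differences are that the paper encodes ``error $=\alpha$'' via a label-flipping vector $\pmb\sigma\in\{\pm1\}^n$ (Lemma~\ref{lem:acc-a}) rather than your subset $T$ of correctly classified points, and it applies Lemma~\ref{lem:vc-bool-pol} with $w$ equal to the per-datapoint polynomial count directly (avoiding your extra $\log N$ step); both packagings yield the same $\tilde O(dkn)$ bound, and the non-strict-separability issue you flag is handled (implicitly) the same way in the paper.
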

\ifnum \coltshort=0
This bound on the pseudodimension immediately gives the following statement about the task complexity of metalearning linear representations.\fi
\begin{cor}
\ifnum\coltshort=0
    [Detailed version of Corollary~\ref{cor:met-lin-agn-intro}]
\fi
\label{cor:met-lin-agn}
    We can $(\varepsilon, \delta)$-properly metalearn $(\repclass_{\datadim,\repdim}, \perclass_\repdim)$ with $\users$ tasks and $n$ samples per task when 
    $\users = O\left(\frac{\datadim \repdim^2\ln(1/\varepsilon)+\datadim \repdim\ln(1/\varepsilon)^2}{\varepsilon^4} +\frac{\ln(1/\delta)}{\eps^2}\right) \quad\text{and}\quad \samples = O\left( \frac{\repdim+\ln(1/\varepsilon)}{\varepsilon^2}\right).$
\end{cor}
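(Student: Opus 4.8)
The plan is to obtain this corollary by instantiating the general agnostic metalearning bound of Theorem~\ref{thm:agn-met-samples} with the two class-specific quantities we have already established for linear representations and halfspaces: the VC dimension of $\perclass_\repdim$ (Theorem~\ref{thm:vc-hs}) and the pseudodimension of the empirical error function class $\erindclass_{\samples,\perclass_\repdim,\repclass_{\datadim,\repdim}}$ (Theorem~\ref{thm:pdim-acc}). Since $(\repclass_{\datadim,\repdim},\perclass_\repdim)$ is a particular instance of a generic pair $(\repclass,\perclass)$, Theorem~\ref{thm:agn-met-samples} applies verbatim and says that $(\varepsilon,\delta)$-proper metalearning is possible with
\[
\samples = O\!\left(\frac{\VC(\perclass_\repdim)+\ln(1/\varepsilon)}{\varepsilon^2}\right)
\quad\text{and}\quad
\users = O\!\left(\frac{\pd(\erindclass_{\samples,\perclass_\repdim,\repclass_{\datadim,\repdim}})\ln(1/\varepsilon)+\ln(1/\delta)}{\varepsilon^2}\right).
\]

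First I would substitute $\VC(\perclass_\repdim)=\repdim+1$ from Theorem~\ref{thm:vc-hs}, which immediately yields the claimed sample bound $\samples = O((\repdim+\ln(1/\varepsilon))/\varepsilon^2)$. One bookkeeping point: Theorem~\ref{thm:pdim-acc} requires $\samples\ge\repdim+2$, which could fail only for constant-sized $\varepsilon$; in that regime I would simply take $\samples=\max(\repdim+2,\,O((\repdim+\ln(1/\varepsilon))/\varepsilon^2))$, since drawing extra samples per task never weakens the guarantee of Theorem~\ref{thm:agn-met-samples} and leaves the stated bound unchanged up to constants.

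Next I would feed this value of $\samples$ into the pseudodimension estimate. Theorem~\ref{thm:pdim-acc} gives $\pd(\erindclass_{\samples,\perclass_\repdim,\repclass_{\datadim,\repdim}}) = \tilde O(\datadim\repdim\samples)$, so plugging in $\samples = O((\repdim+\ln(1/\varepsilon))/\varepsilon^2)$ yields
\[
\pd(\erindclass_{\samples,\perclass_\repdim,\repclass_{\datadim,\repdim}}) = \tilde O\!\left(\frac{\datadim\repdim^2+\datadim\repdim\ln(1/\varepsilon)}{\varepsilon^2}\right).
\]
Substituting this into the task bound above and absorbing the polylogarithmic factors hidden by $\tilde O$ gives
\[
\users = O\!\left(\frac{\datadim\repdim^2\ln(1/\varepsilon)+\datadim\repdim\ln(1/\varepsilon)^2}{\varepsilon^4}+\frac{\ln(1/\delta)}{\varepsilon^2}\right),
\]
which is exactly the claimed task complexity, and the algorithm is the one from Theorem~\ref{thm:agn-met-samples} (pick $\hat\rep$ minimizing the average empirical error function over the $\users$ task data sets), so it is proper.

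I do not expect a genuine obstacle here: all of the difficulty has already been absorbed into the proofs of Theorem~\ref{thm:agn-met-samples} and Theorem~\ref{thm:pdim-acc}, and the corollary is a routine arithmetic substitution. The only points requiring mild care are (i) verifying the hypothesis $\samples\ge\repdim+2$ of Theorem~\ref{thm:pdim-acc}, handled by the padding described above, and (ii) keeping track of which logarithmic factors are genuinely of the form $\ln(1/\varepsilon)$ as opposed to those (such as $\log(\datadim\repdim)$) that are legitimately swept into the $\tilde O$ in Theorem~\ref{thm:pdim-acc}; the final statement is understood to suppress the latter. The improper variant with $\users=O(\datadim)$ mentioned in Corollary~\ref{cor:met-lin-agn-intro} is instead obtained from the reduction (Corollary~\ref{cor:imp_meta_hlr}) and is not part of this statement.
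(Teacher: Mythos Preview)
Your proposal is correct and follows exactly the approach the paper intends: the paper states that the pseudodimension bound of Theorem~\ref{thm:pdim-acc} ``immediately gives'' this corollary, and your derivation makes that implicit step explicit by plugging $\VC(\perclass_\repdim)=\repdim+1$ and $\pd(\erindclass_{\samples,\perclass_\repdim,\repclass_{\datadim,\repdim}})=\tilde O(\datadim\repdim\samples)$ into Theorem~\ref{thm:agn-met-samples}. Your attention to the $\samples\ge\repdim+2$ hypothesis and the handling of polylogarithmic factors is more careful than the paper itself, which simply suppresses these details.
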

\ifnum \coltshort = 0
Our first step toward proving these results is the following lemma, which follows almost immediately from \Cref{cor:sep-cond}.
The proof uses the fact that a data set which can be classified with accuracy at least $\alpha$ can be classified perfectly if we change $1-\alpha$ labels. 
 \begin{lem}
    Dataset $D=\{(\mathbf{\map}_i,\lab_i)\}_{i \in [n]}$ can be linearly separated with accuracy $\alpha$ if and only if there exists a subset of points $I$ and a vector $\pmb{\sigma} \in \{\pm 1\}^{n}$ with $\sum_{i=1}^n \frac{\ind\{\sigma_i = +1\}}{n} = \alpha$ such that 
    \begin{enumerate}
        \item $\mathbf{Z}_I$ is full rank
        \item for all $i \in [n]$, we have that $(\mathbf{z}_i'\cdot \hat{\mathbf{a}}_{\pmb{\sigma}})\sigma_i\geq 1$, where $\hat{\mathbf{a}}_{\pmb{\sigma}} = Z_I^+\pmb{\sigma}_I$.
    \end{enumerate}
    \label{lem:acc-a}
\end{lem}
\begin{proof}
  By definition, dataset $D$ can be linearly separated with accuracy $\alpha$ if and only if there exists a vector $\pmb{\sigma} \in \{\pm 1\}^{n}$ with $\sum_{i=1}^n \frac{\ind\{\sigma_i = 1\}}{n} = \alpha$ such that dataset $D_{\sigma} = \{(\mathbf{\map}_i,\sigma_i\lab_i)\}_{i \in [n]}$ is strictly separable. Now, we can apply \Cref{cor:sep-cond} to dataset $D_{\sigma}$. Let $\hat{\mathbf{Z}}$ be the matrix whose $i$-th row is the vector $\hat{\mathbf{z}}'_i = \sigma_i y_i(\mathbf{z}_i\|1)$. 
  The rank of $\hat{\mathbf{Z}}_I$ remains the same as the rank of $\mathbf{Z}_I$ since every row is a scalar multiple of the corresponding row in $Z_I$. 
  Additionally, observe that $Z_I^+\pmb{\sigma}_I = \hat{\mathbf{Z}}_I^+ \mathbf{1}_{|I|}$.
\end{proof}

The next lemma shows how to express statements about the empirical error function in the language of low-degree polynomials.
\begin{lem}
   Given a dataset $D = \{(\feat_i,\lab_i)\}_{i \in [n]}$ and an $\alpha \in [0,1]$, the predicate $\ind_{\pm}\{q_h(x_1,y_1,\ldots,$ $x_n,y_n) =\alpha\}$ is a Boolean function of $(n+1) \cdot 2^{2n}$ signs of polynomials in the variables of $\rep$ of degree $4(\repdim+1)$.
   \label{lem:empacc-pol}
\end{lem}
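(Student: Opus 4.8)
The plan is to adapt the polynomial encoding of \Cref{lem:realp-pol}, using \Cref{lem:acc-a} in place of \Cref{cor:sep-cond}. Fix $D=(\feat_1,\lab_1,\ldots,\feat_{\samples},\lab_{\samples})$ and $\alpha\in[0,1]$. \Cref{lem:acc-a} says that a representation $\rep$ makes the induced set $D_{\rep}=\{(\rep(\feat_i),\lab_i)\}_{i\in[\samples]}$ linearly separable with empirical accuracy $1-\beta$ exactly when there is a pair $(I,\pmb{\sigma})$, with $I\subseteq[\samples]$ of size at most $\repdim+1$ and $\pmb{\sigma}\in\{\pm1\}^{\samples}$ having exactly $\beta\samples$ coordinates equal to $-1$, such that (i) $\mathbf{Z}_I$ is full rank and (ii) $(\mathbf{z}_i'\cdot\hat{\mathbf{a}}_{\pmb{\sigma}})\sigma_i\ge 1$ for all $i\in[\samples]$, where $\hat{\mathbf{a}}_{\pmb{\sigma}}=\mathbf{Z}_I^{+}\pmb{\sigma}_I$. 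As noted in the proof of \Cref{lem:acc-a}, if $\hat{\mathbf{Z}}$ denotes the matrix with rows $\hat{\mathbf{z}}_i'=\sigma_i\lab_i(\rep(\feat_i)\|1)$, then (i)--(ii) are literally conditions (1)--(2) of \Cref{cor:sep-cond} for $\hat{\mathbf{Z}}$, so the polynomial construction of \Cref{lem:realp-pol} applies verbatim once we substitute $\hat{\mathbf{Z}}$ for $\mathbf{Z}$.

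Thus for each pair $(I,\pmb{\sigma})$ --- there are at most $\bigl(\sum_{j\le\repdim+1}\binom{\samples}{j}\bigr)\cdot 2^{\samples}\le 2^{2\samples}$ of them --- I would introduce the polynomials $p^{(0)}_{I,\pmb{\sigma}}(\rep)=-\det(\hat{\mathbf{Z}}_I\hat{\mathbf{Z}}_I^{T})^{2}$ (negative iff $\hat{\mathbf{Z}}_I$, equivalently $\mathbf{Z}_I$, is full rank) and, for $i\in[\samples]$, $p^{(i)}_{I,\pmb{\sigma}}(\rep)=\hat{\mathbf{z}}_i'\cdot(\Delta\,\hat{\mathbf{a}}_{\pmb{\sigma}})-\Delta$ with $\Delta=\det(\hat{\mathbf{Z}}_I\hat{\mathbf{Z}}_I^{T})$ (nonnegative, when $\hat{\mathbf{Z}}_I$ is full rank, iff constraint $i$ holds). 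Since the $\sigma_i$ are fixed $\pm1$ scalars, each $\hat{\mathbf{z}}_i'$ is linear in the $\datadim\repdim$ entries of $\rep$, and the degree bookkeeping of \Cref{lem:realp-pol} carries over unchanged: $\deg p^{(0)}\le 4(\repdim+1)$ and $\deg p^{(i)}\le 2(\repdim+1)$. From the at most $(\samples+1)\cdot 2^{2\samples}$ resulting sign bits I would form, for each pair, the validity bit $b_{I,\pmb{\sigma}}:=(\lnot\,\sign p^{(0)}_{I,\pmb{\sigma}})\wedge\bigwedge_{i\in[\samples]}\sign p^{(i)}_{I,\pmb{\sigma}}$, which equals $+1$ precisely when $(I,\pmb{\sigma})$ certifies a halfspace of empirical error $e(I,\pmb{\sigma}):=|\{i:\sigma_i=-1\}|/\samples$ on $D_{\rep}$. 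Note that a single pool of polynomials serves all accuracy levels at once, which is what keeps the count at $(\samples+1)\cdot 2^{2\samples}$ rather than multiplying by the number of levels.

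Finally, since $\erind_{\rep}(D)$ is the minimum empirical error over halfspaces on $D_{\rep}$, we have $\erind_{\rep}(D)=\alpha$ iff some valid pair has $e(I,\pmb{\sigma})=\alpha$ while no valid pair has $e(I,\pmb{\sigma})<\alpha$, i.e.
\[
\ind_{\pm}\{\erind_{\rep}(D)=\alpha\}=\ind_{\pm}\Bigl\{\bigl(\textstyle\bigvee_{e(I,\pmb{\sigma})=\alpha} b_{I,\pmb{\sigma}}\bigr)\wedge\bigl(\textstyle\bigwedge_{e(I,\pmb{\sigma})<\alpha}\lnot\,b_{I,\pmb{\sigma}}\bigr)\Bigr\},
\]
which exhibits $\ind_{\pm}\{\erind_{\rep}(D)=\alpha\}$ as a Boolean function $g$ (depending only on the fixed $D$ and $\alpha$) of $(\samples+1)\cdot 2^{2\samples}$ polynomial signs in the variables of $\rep$, each of degree at most $4(\repdim+1)$, as claimed. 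There is no deep obstacle here --- it is a faithful variation on \Cref{lem:realp-pol} --- and the two points that need care are (a) the combinatorial assembly above, where one must double-check that the certificates express ``$\erind_{\rep}(D)=\alpha$'' and not merely ``$\erind_{\rep}(D)\le\alpha$'', and (b) confirming that \Cref{lem:acc-a}'s notion of ``separable with accuracy $\beta$'' matches the $\sign(\mathbf{a}\cdot\mathbf{z}-w)$ tie-breaking used to define $\erind_{\rep}$ even in degenerate cases where an optimal halfspace places points on its boundary --- a subtlety already absorbed into the strict-separation arguments behind \Cref{cor:sep-cond} and \Cref{lem:full_rank_separator}, which I would invoke rather than reprove. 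Substituting this bound into \Cref{lem:vc-bool-pol} (with $\ell=4(\repdim+1)$, $w=(\samples+1)2^{2\samples}$, and $\datadim\repdim$ variables), exactly as \Cref{lem:realp-pol} is used to prove \Cref{thm:vc-real-lin}, then yields the $\tilde{O}(\datadim\repdim\samples)$ pseudodimension bound of \Cref{thm:pdim-acc}.
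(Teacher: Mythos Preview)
Your proposal is correct and follows essentially the same route as the paper: reduce to \Cref{lem:acc-a}, invoke the polynomial encoding of \Cref{lem:realp-pol} on each $\sigma$-flipped dataset, and count $(\samples+1)\cdot 2^{\samples}$ polynomials per $\sigma$ times $\le 2^{\samples}$ sign vectors. Your formulation is in fact a bit more careful than the paper's---you explicitly conjoin ``no valid pair achieves error $<\alpha$'' to get equality rather than just achievability---but the underlying decomposition, polynomials, degree bound, and count are identical.
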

\begin{proof}
By \Cref{lem:acc-a} we have seen that $\ind_{\pm}\{q_h(x_1,y_1,\ldots, x_n,y_n)\ktext{=}
\alpha\} = \ind_{\pm}\{\exists \pmb{\sigma} \in \{\pm 1\}^{n} $ with $\sum_{i=1}^n \ind\{\sigma_i = +1\} = \alpha n \textrm{ s.t. } r_h(\feat_1,\sigma_1 y_1, \ldots, \feat_n, \sigma_n y_n) = +1\}$. Therefore, by \Cref{lem:realp-pol} for every value of $\pmb{\sigma}$ that has $\alpha$ fraction of ones we can check a Boolean function of $(n+1)\cdot 2^n$ signs of polynomials in the variables of $h$ of degree $4(k+1)$. 
There are no more than $2^n$ such vectors $\mathbf{\sigma}$.
In total, we need to evaluate $(n+1)\cdot 2^{2n}$ signs of polynomials.
\end{proof}

We are now ready to prove the final result in this section.
\begin{proof}[Proof of \Cref{thm:pdim-acc}]

    A well-known fact about pseudodimension (see, e.g.,~\cite{AnthonyB1999}) is that it equals the VC dimension of the class of subgraphs. 
    As a function $q_h\in Q_{n,\perclass,\repclass}$ maps data sets $D=\{(x_i,y_i)\}_{i\in[n]}$ to the interval $[0,1]$ (corresponding to error), the object $(D,\tau)$ lies in the subgraph of $q_h$ if $q_h(D) \ge \tau$.
    In our case,
    \begin{align}
        \mathrm{PDim}(Q_{n,\perclass,\repclass}) = \mathrm{VC}\left( \{ \ind_{\pm}\{q_h(D)\ge \tau\} \mid q_h \in Q_{n,\perclass,\repclass}\} \right).
    \end{align}
    To use the tools we have previously established, we will show that, for a fixed $(D,\tau)$, we can write the indicator $\ind_{\pm}\{q_h(D) \ge \tau\}$ as a Boolean function of signs of polynomials.

    But this is easy: the indicator is +1 exactly when there exists an $\alpha\ge \tau$ such that $q_h(D) = \alpha$, which we analyzed in \Cref{lem:empacc-pol}.
    We take an OR over the $\samples+1$ possible values of $\alpha \in \{0,1/\samples ,\ldots, 1\}$:
    \begin{align*}
        \ind_{\pm}\{q_h(D)\ge \tau\} 
            = \bigwedge_{\alpha}\bigl( \ind_{\pm}\{q_h(D) = \alpha~\text{and}~\alpha\ge \tau\}\bigr).
    \end{align*}
    (Recall that we interpret $+1$ as logical ``true.'')
    When $\alpha <\tau$, we can represent this as $\sign(-1)$, a degree-0 polynomial.
    When $\alpha\ge \tau$, we apply \Cref{lem:empacc-pol} to see that each term can be written as a Boolean function of $(n+1)\cdot 2^{2n}$ signs of polynomials in the variables of $h$ of degree $4(k+1)$.
    Together, we see that $\ind_{\pm}\{q_h(D)\ge\tau\}$ can be written as a Boolean function of at most $(n+1)^2 2^{2n}$ signs of polynomials, each of (at most) the same degree.
    By \Cref{lem:vc-bool-pol}, the VC dimension (and thus the pseudodimension of $Q_{n,\perclass,\repclass}$) is at most $8 dk \log_2 ( 4(k+1) \cdot (n+1)^2 2^{2n}) =O(dkn + dk\log (kn))$.
\end{proof}

\fi 

\addcontentsline{toc}{section}{References}%
\bibliographystyle{sty/alpha-no-nonsense.bst}%
\bibliography{refs}

\appendix
\crefalias{section}{appendix}

\ifnum \coltshort = 1
\section{Notation and Background}
\label{app:back}
In this paper we use bold lowercase letters for vectors, e.g. $\mathbf{a}$, and bold uppercase letters for matrices, e.g. $\mathbf{A}$. Given a $\datadim$-dimensional vector $\mathbf{a}$ and a value $b$, $(\mathbf{a}\|b)$ is a $\datadim+1$-dimensional vector, where we have appended value $b$ to vector $\mathbf{a}$. For distributions over labeled data points we use the regular math font, e.g. $D$, whereas for metadistributions (distributions over task data distributions) we use fraktur letters, e.g. $\mathfrak{D}$. We define the indicator function of a predicate $p$ as
\begin{align*}
    \ind\{p\} = \begin{cases}
        1 & \text{if }p\text{ is true} \\
        0 & \text{o.w.}
    \end{cases}.
\end{align*}
Occasionally, we use the alternative indicator $\ind_{\pm}\{p\}$, which takes value $+1$ if $p$ is true and $-1$ otherwise.
We also use the following sign function 
\begin{align*}
    \sgn(x) = \begin{cases}
        +1 & \text{if }x \geq 0 \\
        -1 & \text{o.w.}
    \end{cases}.
\end{align*}
Contrary to the convention in Boolean analysis, when necessary we interpret $+1$ as logical ``true'' and $-1$ as logical ``false.''

\mypar{VC dimension.} We start by recalling the standard definition of VC dimension and the Sauer-Shelah Lemma.
\begin{defn}\label{def:vc_dimension}
    Let $\mathcal{F}$ be a set of functions mapping from a domain $\mathcal{X}$ to $\{\pm 1\}$ and suppose that $X=(x_1,\ldots,x_n)\subseteq \mathcal{X}$.
    We say that $\mathcal{F}$ \emph{shatters} $X$ if, for all $b\in \{\pm 1\}^n$, there is a function $f_b\in \mathcal{F}$ with $f_b(x_i)=b_i$ for each $i\in[n]$.  
    
    The \emph{VC dimension} of $\mathcal{F}$, denoted $\mathrm{VC}(\mathcal{F})$, is the size of the largest set $X$ that is shattered by $\mathcal{F}$.
\end{defn}

VC dimension is closely related to the \emph{growth function}, which bounds the number of distinct labelings a hypothesis class can produce on any fixed data set.
\begin{defn}
    Let $\perclass$ be a class of functions $\per: \mathcal{Z}\to\{\pm 1\}$ and $S = \{z_1, \ldots,z_n\}$ be a set of points in $\mathcal{Z}$. The \emph{restriction} of $\perclass$ to $S$ is the set of functions 
    $$\perclass_S = \{(\per(z_1), \ldots, \per(z_n))\mid\per \in \perclass\}.$$
    The \emph{growth function} of $\perclass$, denoted $\growth_\perclass(n)$, is 
    $
        \growth_\perclass(n) := \sup_{S: |S|=n} \left|\perclass_S \right|.
    $
\end{defn}

The Sauer-Shelah Lemma, which bounds the growth function in terms of the VC dimension.
\begin{lem}
    If $\VC(\perclass) = \vc$, then for every $\samples > \vc$, $\growth_{\perclass}(\samples) \leq (e\samples/\vc)^\vc.$
    \label{lem:sauer}
\end{lem}

\mypar{VC dimension and PAC learning.}  Now we recall the relationship between VC dimension and the sample complexity of distribution-free PAC learning.  Here we refer to the textbook notion of PAC learning without giving a formal definition. 
\begin{thm}
\label{fact:pac-vc}
    Let $\class$ be a hypothesis class of functions $f: \featdom \to \bits$ with $\VC(\class) = \vc < \infty$, then for every $\eps,\delta \in (0,1)$
    \begin{enumerate}
        \item $\class$ has uniform convergence with sample complexity $O(\frac{\vc+\ln(1/\delta)}{\varepsilon^2})$
        \item $\class$ is agnostic PAC learnable with sample complexity $O(\frac{\vc+\ln(1/\delta)}{\varepsilon^2})$
        \item $\class$ is PAC learnable with sample complexity $O(\frac{\vc\ln(1/\varepsilon)+\ln(1/\delta)}{\varepsilon})$.
    \end{enumerate}
\end{thm} 
While uniform convergence requires $O(1/\eps^2)$ samples, with just $O(\ln(1/\eps)/\eps)$ samples we will have the property that every hypothesis that has error $\eps$ on the distribution has non-zero error on the sample.
\begin{thm}
    Let $\class$ be a hypothesis class with VC dimension $\vc$. Let $D$ be a probability distribution over $\featdom\times \bits$. For any $\varepsilon,\delta > 0$ if we draw a sample $S$ from $D$ of size $n$ satisfying 
    \begin{align*}
        n \geq \frac{8}{\varepsilon}\left(\vc \ln{\left(\frac{16}{\varepsilon}\right)+\ln{\left(\frac{2}{\delta}\right)}}\right)
    \end{align*}
    then with probability at least $1-\delta$, all hypotheses $\per$ in $\perclass$ with $\er(D,f)>\varepsilon$ have $\er(S,f)>0$.
    \label{fact: vc-gen}
\end{thm}

\mypar{VC dimension of halfspaces.} For a significant part of this paper, we work with linear classifiers of the form $$\perclass_{\datadim} = \left\{\per \left| \per(\textbf{x}) = \sign(\mathbf{a}\cdot \mathbf{x}-w), \mathbf{a} \in \R^\datadim, w \in \R \right.\right\}.$$ We also consider the class of linear classifiers that pass through the origin $$\tilde{\perclass}_{\datadim} = \left\{\per \left| \per(\textbf{x}) = \sign(\mathbf{a}\cdot \mathbf{x}), \mathbf{a} \in \R^\datadim \right.\right\}.$$

\begin{thm}
    \label{thm:vc-hs}
    We have $\VC(\perclass_{\datadim}) = \datadim+1$ and $\VC(\tilde{\perclass}_{\datadim}) = \datadim$.
\end{thm}

\mypar{Pseudodimension.} For real-valued functions we use generalization bounds based on a generalization of VC dimension called the \emph{pseudodimension.}
\begin{defn}\label{def:pseudodimension}
    Let $\mathcal{F}$ be a set of functions mapping from a domain $\mathcal{X}$ to $\mathbb{R}$ and suppose that $X = (x_1,\ldots,x_n) \subseteq \mathcal{X}$.
    We say that $X$ is \emph{pseudoshattered} by $\mathcal{F}$ if there are real numbers $r_1,\ldots,r_n$ such that for each $b\in \{\pm 1\}^n$ there is a function $f_b\in\mathcal{F}$ with $\sign(f_b(x_i)-r_i)=b_i$ for each $i\in [n]$.
    
    The \emph{pseudodimension} of $\mathcal{F}$, denoted $\mathrm{Pdim}(\mathcal{F})$, is the size of the largest set $X$ that is pseudoshattered by $\mathcal{F}$.
\end{defn}

\begin{thm}
\label{fact:pac-pd}
    Let $\class$ be a hypothesis class of functions $f: \featdom \to \R$ with $\textrm{PDim}(\class) = \pseudod < \infty$, then $\class$ has uniform convergence with sample complexity $O(\frac{\pseudod\ln(1/\varepsilon)+\ln(1/\delta)}{\varepsilon^2})$.
\end{thm}
\fi

\ifnum \coltshort = 0
\section{Proofs from Sections \ref{sec:models},  \ref{sec:mtl} and \ref{sec:lin}}

\subsection{Proof of \texorpdfstring{Lemma~\ref{lem:balls-and-bins}}{Lg}}
\label{sec:balls-and-bins-app}
\begin{customlem}{\ref*{lem:balls-and-bins}}[Restated]
Suppose we draw $t$ samples from a uniform distribution over $[t]$. Let $F_i$ denote the frequency of each $i \in [t]$ in the sample set. With probability $1-\delta$, $\max_i F_i$ is at most $\frac{\ln ((t/\delta)^2) }{\ln \ln ((t/\delta)^2)}$
\end{customlem}
\begin{proof}
Using the Chernoff bound for any $\epsilon > 0$, given the probability that a sample becomes equal to $i$ is $1/t$, we can bound the probability that the number of samples $i$ being too large as follows:
        \begin{align*}
            & \Pr{\frac{F_i}{t} \geq (1 + \epsilon) \cdot \frac{1}{t}} \leq \left(\frac{e^\epsilon}{(1 + \epsilon)^{(1 + \epsilon)}}\right)^{(\frac{m}{n})} 
            = \left(\frac{e^\epsilon}{(1 + \epsilon)^{(1 + \epsilon)}}\right) 
            \\
            & \quaaad 
            = \left(\frac{1}{1 + \epsilon}\right)\left(\frac{e^\epsilon}{(1 + \epsilon)^\epsilon}\right)  
            \leq \exp
            \left(- \ln(1+\epsilon) + \epsilon - \epsilon \ln(1+\epsilon) \right)
            \\
            & \quaaad 
            \leq \exp\left(\frac{-\epsilon\, \ln \epsilon}{2} \right)
            \,.
        \end{align*} The last inequality above holds for any $\epsilon \geq 1$. Now, set $\epsilon$ as follows: 
        $$\epsilon \coloneqq \frac{\ln ((t/\delta)^2) }{\ln \ln ((t/\delta)^2)}\,.$$

        It is not hard to see that: 

        \begin{align*}
            \epsilon \cdot \ln \epsilon & = \frac{\ln  ((t/\delta)^2) }{\ln ((\ln  ((t/\delta)^2))^2)} \cdot \left( \ln \ln  ((t/\delta)^2) - \ln \ln ((\ln  ((t/\delta)^2))^2)\right) \\ & \leq \ln  ((t/\delta)^2) \cdot \left(\frac{\ln \ln  ((t/\delta)^2) }{2 \ln \ln  ((t/\delta)^2)}\right) < \ln  ((t/\delta)^2)\,.
        \end{align*}

        Therefore, we can conclude that:
        \begin{align*}
            \Pr{\frac{F_i}{t} \geq (1 + \epsilon) \cdot \frac{1}{t}} & \leq \exp\left(-\frac{\epsilon\ln\epsilon}{2}\right) \leq \exp\left(-\frac{\ln  ((t/\delta)^2)}{2}\right) \leq \frac{\delta}{t}\,.
        \end{align*}

        Thus, for a fixed $i$, we have shown that the probability that $F_i$ is greater than $\epsilon$ is bounded from above by  $\frac{\delta}{t}$. By the union bound, the probability that any frequency is greater than $\epsilon$ is at most:
        \begin{align*}
            & \Pr{\exists i : F_i > \epsilon} \leq t \cdot \sum_{i = 1}^{t} \Pr{\frac{F_i}{t} \geq \frac{(1+\epsilon)}{t}} \leq t \frac{\delta}{t} = \delta\,.
        \end{align*}
Hence, the proof is complete. 
\end{proof}

\subsection{Proof of \texorpdfstring{\Cref{thm:mtl-smpls}}{Lg}} \label{sec:mtl-smples-proof}
\begin{customthm}{\ref*{thm:mtl-smpls}}[Restated]
\thmmtlsmpls
\label{thm:mtl-smpls-app}
\end{customthm}

\begin{proof}
We’ll start by proving part 1, which covers the realizable case.  We will omit the proof of part 2, which generalizes the realizable case in a standard way. Finally, we will prove part 3.

\medskip\emph{Proof of 1.}
For every task $j \in [\users]$ we have $\samples$ i.i.d.\ samples $S_j = \{(\feat_i^{(j)}, \lab_i^{(j)})\}_{i \in [\samples]}$ drawn from $\dist_j$. Our dataset is equivalent to dataset $S = \{(j, \feat_i^{(j)}, \lab_i^{(j)})\}_{i \in [\samples], j \in [\users]}$, where the $j$s have fixed values. In standard PAC learning we assume that all the datapoints are i.i.d. However, in our case the samples are independent but not identically distributed because different tasks have (potentially) different distributions.

As in standard PAC learning, our proof follows the ``double-sampling trick''. We want to bound the probability of bad event 
\begin{align*}
    B: \exists g \in \perclass^{\otimes}\circ\repclass \textrm{ s.t. } \er(S, g)=0,\textrm{ but }\frac{1}{\users} \sum_{j \in [\users]} \er(\dist_j, g(j,\cdot)) > \varepsilon.
\end{align*}

We consider an auxiliary dataset $\hat{S} = \{(j, \hat{\feat}_i^{(j)}, \hat{\lab}_i^{(j)})\}_{i \in [\samples], j \in [\users]}$, where $(\hat{\feat}_i^{(j)},  \hat{\lab}_i^{(j)})$ are drawn independently from $\dist_j$. In our proof we will bound the probability of $B$ by bounding the probability of event 
\begin{align*}
    B': \exists g \in \perclass^{\otimes}\circ\repclass \textrm{ s.t. } \er(S, g)=0,\textrm{ and }  \er(\hat{S}, g) > \frac{ \varepsilon}{2}.
\end{align*}

We can show that if $\samples \users > \frac{8}{\varepsilon}$, then $\Pr[S]{B} \leq 2 \Pr[S, \hat{S}]{B'}$. We can show this by applying a multiplicative Chernoff bound on $\er(\hat{S},g)$, which is an average of independent random variables. Due to this step, it suffices to bound $\Pr[S, \hat{S}]{B'}$.

We also define a third event $B''$ as follows. We give $S$ and $\hat{S}$ as inputs to randomized process \textit{Swap} which iterates over $j \in [\users]$ and $i \in [\samples]$ and at every step it swaps $(\feat_i^{(j)}, \lab_i^{(j)})$ with $(\hat{\feat}_i^{(j)}, \hat{\lab}_i^{(j)})$ with probability $1/2$. Let $T$ and $\hat{T}$ be the two datasets this process outputs. We define event $$B'': \exists g \in \perclass^{\otimes \users}\circ \repclass \textrm{ s.t. } \er(T,g) = 0 \textrm{ and } \er(\hat{T}, g) > \frac{\varepsilon}{2}.$$

We see that $\Pr[S, \hat{S}, \textit{Swap}]{B''} = \Pr[S, \hat{S}]{B'}$. This happens because $T$, $\hat{T}$, $S$ and $\hat{S}$ are identically distributed. Thus, what we need to do now is bound $\Pr[S, \hat{S}, \textit{Swap}]{B''}$. 

 We start by showing that for a fixed $g$ $$\Pr[\textit{Swap}]{ \er(T,g) = 0 \textrm{ and } \er(\hat{T},g) > \frac{\varepsilon}{2}\mid S, \hat{S}} \leq 2^{-nt\varepsilon/2}.$$ Given $S$ and $\hat{S}$, $B''$ happens if for every $j \in [\users]$ and $i \in [\samples]$ $g$ predicts the label of $\feat_i^{(j)}$ or $\hat{\feat}_i^{(j)}$ correctly and makes $m > \varepsilon \samples \users/2$ mistakes overall. Additionally, all $m$ mistakes $g$ makes are in dataset $\hat{T}$. This means that \textit{Swap} assigns all these points to $\hat{T}$, which happens with probability $1/2^m \leq 1/2^{\varepsilon \samples \users/2}$. 

 Let $\left(\perclass^{\otimes \users}\circ \repclass\right) (S \cup \hat{S}) \subset \perclass^{\otimes \users}\circ \repclass$ be a set of hypotheses which contains one hypothesis for every labeling of $S\cup\hat{S}$. Then, 

\begin{align*}
    \Pr[S, \hat{S}, \textit{Swap}]{B''}& = \Exp_{S,\hat{S}}\left[\Pr[\textit{Swap}]{ \exists g \in  \perclass^{\otimes \users}\circ \repclass \textrm{ s.t. }\er(T,g) = 0 \textrm{ and } \er(\hat{T},g) > \frac{\varepsilon}{2}\mid S, \hat{S}}\right]\\
    &\leq \Exp_{S,\hat{S}}\left[\sum_{g \in\perclass^{\otimes \users}\circ \repclass (S \cup \hat{S}) }\Pr[\textit{Swap}]{ \er(T,g) = 0 \textrm{ and } \er(\hat{T},g) > \frac{\varepsilon}{2}\mid S, \hat{S}}\right]\\
    & \leq \growth_{\perclass^{\otimes \users}\circ \repclass}(2\samples \users) 2^{-\varepsilon \samples \users/2}.
\end{align*}

For the probability of bad event $B$ happening to be at most $\delta$, by the steps above we need $\samples \users \geq 2 \frac{\log_2 \growth_{\perclass^{\otimes \users}\circ \repclass}(2\samples \users) + \log_2(2/\delta)}{\varepsilon}$ samples in total. 

By Lemma \ref{lem:sauer},  for $\samples\users > \VC(\perclass^{\otimes \users}\circ \repclass)$,
$$\log_2\paren{\growth_{\perclass^{\otimes \users}\circ \repclass}(2nt)}
\leq 
\VC(\perclass^{\otimes \users}\circ \repclass)
\log_2\left(\frac{e 2\samples \users}{\VC(\perclass^{\otimes \users}\circ \repclass)}\right)
.$$
One can show that if $\samples \users \geq \frac{\VC(\perclass^{\otimes \users}\circ \repclass)}{\varepsilon}\log_2\left(\frac{2e}{\varepsilon}\right)$, then $\samples \users \geq \log_2\left(\frac{e 2\samples \users}{\VC(\perclass^{\otimes \users}\circ \repclass)}\right)\frac{\VC(\perclass^{\otimes \users}\circ \repclass)}{\varepsilon}.$ Therefore, we get that for $\samples \users \geq \frac{\VC(\perclass^{\otimes \users}\circ \repclass)\log_2(2e/\varepsilon)+\log_2(2/\delta)}{\varepsilon}$ samples in total the probability of bad event $B$ is at most $\delta$. 

\medskip\emph{Proof of 3.}
    Our goal is to construct distributions $\dist_1,\ldots, \dist_{\users}$ over $\featdom \times \{\pm 1\}$. We will first build their support and then define the probability distributions. 
    
    Let $\vc = \VC(\perclass^{\otimes \users}\circ \repclass)$. Since
    $4\samples \users \leq \vc$, there exists a dataset $S = \{(j_i,x_i)\}_{i \in [4\samples \users]}$ that can be shattered by $\perclass^{\otimes \users}\circ \repclass$. 
    Let $S_j = \{(j_i,x_i) \in S\mid j_i = j\}$. In general for every task $j$ the size of $S_j$, i.e.\ $|S_j|$, will be different. 
    We want to use $S$ to get a new dataset $S'$ that can be shattered by $\perclass^{\otimes \users}\circ \repclass$, but also has at least $2\samples$  
    points for every task. 
    For every $j \in [\users]$ we throw away points from $S_j$ until we have a multiple of $2\samples$. After doing this, we have thrown away at most $2\samples$ per task, which is at most $2\samples \users$ points in total. We can redistribute points so that we have at least $2\samples$ points per task as follows. For every task $j \in [\users]$, if there are no points in this task, there must be another task $k$ with at least $4\samples \users$ points. In this case, we move $\samples \users$ points from task $k$ to task $j$ by replacing $k$ with $j$ in $(k,x_i)$. 
    We denote
    this new dataset by $S'$ and the subset for task $j$, by $S_j'$. 
    
    We claim that $S'$
    , which has at least $2\samples$ points per task, can also be shattered by $\perclass^{\otimes \users}\circ \repclass$. To see this, suppose that $S$ was shattered by $\tilde{g}$. Now, if task $j$ did not lose all its points, the remaining points can be shattered by $\tilde{g}(j,\cdot)$. Otherwise, we are in the case where $j$'s points were initially assigned to task $k$, so they can be shattered by $\tilde{g}(k,\cdot)$.

    Let $\paren{\perclass^{\otimes \users}\circ \repclass}(S')$ be a set of hypotheses that contains one function for each labeling of $S'$. We choose a labeling function $g$ uniformly at random from $\paren{\perclass^{\otimes \users}\circ \repclass}(S')$. For all tasks $j \in [\users]$ we define $\dist_j$ 
    as the distribution of $(x,y)$ obtained by sampling $x$ uniformly from $S'_j$ (ignoring the task index in the sample) and labeling it according to $g$.

    Suppose that the (potentially randomized) learning algorithm $\alg$ returns $\hat{g}$ after seeing datasets $\hat{S}_1, \ldots, \hat{S}_\users$, where every $\hat{S}_j$ has $\samples$ points drawn i.i.d.\ from $\dist_j$. For a fixed task $j$, the probability that $\hat{g}$ makes a mistake on a new point $(x,y)$ drawn from $\dist_j$ is 
    \begin{align*}
        \Pr[g,\hat{S}_j, (x,y) \sim \dist_j]{\hat{g}(j,x) \neq y} & \geq \Pr[g,\hat{S}_j, (x,y)\sim \dist_j]{\hat{g}(j,x) \neq y \textrm{ and } x \notin \hat{S}_j}\\
        & = \Pr[g,\hat{S}_j, (x,y)\sim \dist_j]{x \notin \hat{S}}\Pr[g,\hat{S}, (x,y)\sim \dist_j]{\hat{g}(j,x) \neq y\mid x \notin \hat{S}_j}.
    \end{align*}
    
    We know that $\Pr[\hat{S}_j, (x,y)\sim \dist_j]{x \notin \hat{S}}\geq 1/2$ because $x$ is chosen uniformly at random out of more than $2\samples$ points that are in $S_j'$ and $\hat{S}_j$ has only $\samples$
    points. When $x \notin \hat{S}_j$ the algorithm has not seen the label that corresponds to this $x$ and, thus, $y=g(j,x)$ is independent of $\hat{g}(j,x)$. We have picked $g$ uniformly at random from a class with exactly one function per labeling, which means that for each $(j,x)$ we see $+1$ and $-1$ with equal probability. Therefore, $\Pr[g,\hat{S}_j, (x,y)\sim \dist_j]{\hat{g}(j,x) \neq y\mid x \notin \hat{S}_j} = 1/2$. Thus, $\Pr[g,\hat{S}_j, (x,y) \sim \dist_j]{\hat{g}(j,x) \neq y} \geq 1/4$.

    The average error is 
    \begin{align*}
        \frac{1}{\users} \sum_{j \in [\users]} \Pr[g,\hat{S}_j, (x,y) \sim \dist_j]{\hat{g}(j,x) \neq y} \geq \frac{1}{4}.
    \end{align*}
    In expectation over the labeling functions $g$ and the randomness of algorithm $\alg$, we have 
    \begin{align*}
        \Exp_{g, \alg}\left[\frac{1}{\users} \sum_{j \in [\users]} \Pr[\hat{S}_j, (x,y) \sim \dist_j]{\hat{g}(j,x) \neq y}\right] \geq \frac{1}{4}
    \end{align*}
    Hence, there exists a labeling function $g$ in $\perclass^{\otimes \users}\circ \repclass(S')$ such that $ \Exp_{\alg}\left[\frac{1}{\users} \sum_{j \in [\users]} \Pr[\hat{S}_j, (x,y) \sim \dist_j]{\hat{g}(j,x) \neq y}\right] \geq \frac{1}{4}$. For this $g$ we have that 
    \begin{align*}
        &\Exp_{\alg}\left[\frac{1}{\users} \sum_{j \in [\users]} \Pr[\hat{S}_j, (x,y) \sim \dist_j]{\hat{g}(j,x) \neq y}\right]\\
        &= \Exp_{\alg}\left[\frac{1}{\users} \sum_{j \in [\users]} \Exp_{\hat{S}_j}[\er(\dist_j,\hat{g}(j,\cdot))]\right]\\
        & = \Exp_{\alg,\hat{S}_1, \ldots, \hat{S}_\users}\left[\frac{1}{\users} \sum_{j \in [\users]}\er(\dist_j,\hat{g}(j,\cdot))\right]\\
        & = \Pr[\alg,\hat{S}_1, \ldots, \hat{S}_\users]{\frac{1}{\users} \sum_{j \in [\users]}\er(\dist_j,\hat{g}(j,\cdot))>\frac{1}{8}}\Exp_{\substack{\alg,\\\hat{S}_1, \ldots, \hat{S}_\users}}\left[\frac{1}{\users} \sum_{j \in [\users]}\er(\dist_j,\hat{g}(j,\cdot)) \left| \sum_{j \in [\users]}\er(\dist_j,\hat{g}(j,\cdot)) > \frac{1}{8}\right.\right]\\
        &+\Pr[\alg,\hat{S}_1, \ldots, \hat{S}_\users]{\frac{1}{\users} \sum_{j \in [\users]}\er(\dist_j,\hat{g}(j,\cdot))<\frac{1}{8}}\Exp_{\substack{\alg,\\\hat{S}_1, \ldots, \hat{S}_\users}}\left[\frac{1}{\users} \sum_{j \in [\users]}\er(\dist_j,\hat{g}(j,\cdot)) \left| \sum_{j \in [\users]}\er(\dist_j,\hat{g}(j,\cdot)) < \frac{1}{8}\right.\right]\\
        &\leq \Pr[\alg,\hat{S}_1, \ldots, \hat{S}_\users]{\frac{1}{\users} \sum_{j \in [\users]}\er(\dist_j,\hat{g}(j,\cdot))>\frac{1}{8}} + \frac{1}{8}
    \end{align*}
    Thus, we showed that there exist $\dist_1, \ldots, \dist_\users$ such that $\Pr[\alg,\hat{S}_1, \ldots, \hat{S}_\users]{\frac{1}{\users} \sum_{j \in [\users]}\er(\dist_j,\hat{g}(j,\cdot))>\frac{1}{8}} \geq \frac{1}{8}$.
\end{proof}

\subsection{Proof of  \texorpdfstring{\Cref{lem:vc-bounds}}{Lg}} \label{sec:vc-bounds-proof}
\begin{customlem}{\ref*{lem:vc-bounds}}[Restated]
\lemvcbounds
\label{lem:vc-bounds-app}
\end{customlem}
\begin{proof}
    We will show the two parts of the statement separately.

    \medskip\emph{Proof of the upper bound on VC dimension.}
    Assume that we have a dataset $X=  ((j_1, \feat_1), \ldots,$ $ (j_n,\feat_n))$ of size $n = \VC(\perclass^{\otimes \users}\circ \repclass)$ which can be shattered by $\perclass^{\otimes \users}\circ \repclass$. We split it into $\users$ disjoint datasets $X_1,\ldots,X_\users$, where $X_j = \{ \feat_i: (j,\feat_i) \in X\}$. Each one of these datasets can be shattered by $\perclass\circ \repclass$.
    
    Let $n_j$ be the size of dataset $X_j$. Then, we have that $n_j \leq \VC(\perclass \circ \repclass)$. As a result, we obtain $\VC(\perclass^{\otimes \users}\circ \repclass) = \sum_{j \in [\users]}n_j \leq \users\VC(\perclass \circ \repclass)$.
    
    \medskip\emph{Proof of the lower bound on VC dimension.}
    We will first show that $\VC(\perclass^{\otimes \users}\circ \repclass) \geq \VC(\perclass \circ \repclass)$. Suppose $X = (\feat_1, \ldots, \feat_n)$ is a dataset of size $n$ that can be shattered by class $\perclass\circ \repclass$. 
    Then, for any $j_1, \ldots, j_n \in [\users]$, the dataset $((j_1, \feat_1), \ldots, (j_n, \feat_n))$ can be shattered by $\perclass^{\otimes \users}\circ \repclass$. 
    To see this, fix a labeling $(\lab_1, \ldots, \lab_n)$ of $((j_1, \feat_1), \ldots, (j_n, \feat_n))$.
    Since we assumed $X$ could be shattered by $\perclass\circ \repclass$, there exists $\per^* \in \perclass$ and $\rep^*\in\repclass$ such that $\forall i \in [n]$, $\lab_i = \per^*(\rep^*(\feat_i))$. 
    Thus, there is a function in $\perclass^{\otimes t}\circ \repclass$ (namely, the one with  representation $\rep^*$ and all $t$ personalization functions equal to $\per^*$) that realizes this labeling.
    Therefore, $\VC(\perclass^{\otimes \users}\circ \repclass) \geq \VC(\perclass \circ \repclass)$.

    Next, we will prove that $\VC(\perclass^{\otimes \users}\circ \repclass) \geq \users  \VC(\perclass)$. Let $(\map_1, \ldots, \map_n) \in \mapdom^n$ be a dataset that $\perclass$ can shatter. Since there exists an $\rep \in \repclass$ whose image is $\mapdom$, there exist $(\feat_1, \ldots, \feat_n) \in \featdom^n$ such that $\forall i \in [n]$ $\rep(\feat_i)=\map_i$. 
    We now construct a new dataset $\cup_{j \in [\users]}\left\{(j,\feat_1)\ldots, (j,\feat_n)\right\}$, which has $n\users$ datapoints. 
    Our function class $\perclass^{\otimes \users}\circ \repclass$ can shatter this dataset. 
    To see this: for any labeling we split the dataset to $\users$ parts according to the value of $j$, use $\rep$ to get $(\map_1, \ldots, \map_n)$ for each part. We then label each part using an $f \in \perclass$. This means that there exists a dataset of size $\users  \VC(\perclass)$ that $\perclass^{\otimes \users}\circ \repclass$ can shatter. Thus, $\VC(\perclass^{\otimes \users}\circ \repclass) \geq \users  \VC(\perclass)$.
\end{proof}
\fi
\ifnum \coltshort = 0
\subsection{Proof of  \texorpdfstring{\Cref{thm:mtl-vc-hl}}{Lg}}\label{sec:mtl-vc-hl-proof}

\begin{customthm}{\ref*{thm:mtl-vc-hl}}[Restated]
\thmMtlVcHLText
\end{customthm}
\begin{proof}
    We first show that for $\users \leq \repdim$ tasks $\VC\left(\perclass_{\repdim}^{\otimes \users}\circ \repclass_{\datadim,\repdim}\right )  = \datadim \users +\users$. To lower-bound the VC dimension, we will show that there exists a dataset of size $\datadim\users+\users$ which can be shattered by $\perclass_{\repdim}^{\otimes \users}\circ \repclass_{\datadim,\repdim}$. We know that the VC dimension of the class of $\datadim$-dimensional thresholds $\perclass_{\datadim}$ is $\datadim+1$, which means that there exists a dataset $(\mathbf{\feat}_1, \ldots, \mathbf{\feat}_{\datadim+1})$ which can be shattered by $\perclass_{\datadim}$. 
    Consider the dataset $\cup_{j \in [\users]} \{(j,\mathbf{\feat}_1), \ldots, (j, \mathbf{\feat}_{\datadim+1})\}$. This dataset can be shattered by $\perclass_{\repdim}^{\otimes \users}\circ \repclass_{\datadim,\repdim}$. To see this, fix a labeling $\mathbf{\lab} \in \bits^{\users \times (\datadim+1)}$, where $\lab_{j,i}$ is the label of datapoint $(j, \mathbf{\feat}_{i})$. For a $j \in [\users]$ we know that there exist $\mathbf{b}_j \in \R^{\datadim}$ and $w_j \in \R$ such that for all $i \in [\datadim+1]$ we have $\textrm{sign}(\mathbf{b}_j  \mathbf{\feat}_{i}-w_j) = \lab_{j,i}$. Since $ \users \leq \repdim$, we set $\mathbf{B}$ to be the matrix with rows $b_j^T$ for the first $\users$ rows and all zeros everywhere else and $\mathbf{a}_j \in \R^{\repdim}$ to be the one-hot encoding of $j$. We pick $\rep(\mathbf{\feat}) = \mathbf{B}\mathbf{\feat}$ and $\per_j(\mathbf{\map}) = \textrm{sign}(\mathbf{a}_j\mathbf{\map}-w_j)$. Therefore, we have that for all $j \in [\users]$ and $i \in [\datadim+1]$, $\lab_{j,i} = \per_j(h(\mathbf{\feat}_i))$. As a result, 
    \[\VC(\perclass_{\repdim}^{\otimes \users}\circ \repclass_{\datadim,\repdim})\geq \datadim\users+\users. \]

    For the upper bound \Cref{lem:vc-bounds} says that $\VC(\perclass_{\repdim}^{\otimes \users}\circ \repclass_{d,k})\leq \users\VC(\perclass_{\repdim}\circ \repclass_{\datadim,\repdim})$. Furthermore, since the composition of linear functions is linear the class of composite functions $\perclass_{\repdim}\circ \repclass_{\datadim,\repdim}$ and $\perclass_{\datadim}$ are the same. Hence,
    \[
    \VC(\perclass_{\repdim}^{\otimes \users}\circ \repclass_{\datadim,\repdim})\leq \users\VC(\perclass_{\datadim}) = \datadim\users+\users.
    \] 
    
    We now look at the case where we have more than $\repdim$ tasks. For the upper bound we rewrite functions $\conc \in \perclass_{\repdim}^{\otimes \users}\circ \repclass_{\datadim,\repdim}$ as $\conc(j,\mathbf{\feat}) = \textrm{sign}(\mathbf{a}_j\mathbf{B}\mathbf{\feat}-w_j)$.
    Observe that this is equivalent to
    \begin{align*}
        g(j, \mathbf{\feat}) = \textrm{sign}(\mathbf{e_j}^T \mathbf{A} \mathbf{B} \mathbf{\feat} - \mathbf{e_j}^T\mathbf{w}),
    \end{align*}
    where $\mathbf{e}_j \in \{0,1\}^{\users}$ is the one-hot encoding of $j$ and 
    \[
    \mathbf{A} = \begin{pmatrix}
        \mathbf{a}_1^T \\
        \vdots \\
        \mathbf{a}_{\users}^T
    \end{pmatrix} \textrm{ and } \mathbf{w} = \begin{pmatrix}
        w_1 \\
        \vdots \\
        w_{\users}
    \end{pmatrix}.
    \]
    Every combination of $\mathbf{A} \in \R^{\users\times \repdim}, \mathbf{B} \in R^{\repdim \times \datadim}$ and $\mathbf{w} \in \R^{\users}$ gives us a specific labeling function $\conc$. 
    Let $\samples \geq 8(\users \repdim + \repdim \datadim+\users)$. 
    Take a dataset $((j_1,\mathbf{\feat}_1),\ldots,(j_n, \mathbf{\feat}_{\samples})) \in ([\users] \times \R^{\datadim})^{\samples}$. 
    We will show that $\perclass_{\repdim}^{\otimes \users}\circ \repclass_{\datadim,\repdim}$ does not shatter this data set.
    For each $i\in[\samples]$, we define a polynomial $p_i(\mathbf{A},\mathbf{B}, \mathbf{w}) = \mathbf{e}_{j_i}^T \mathbf{A} \mathbf{B}\mathbf{\feat}_i - \mathbf{e}_{j_i}^T\mathbf{w}$.
    Each of these is a degree-2 polynomial in $\users \repdim + \repdim \datadim+\users$ variables. By construction, $g(j_i,\mathbf{x}_i) =\sgn(p_i(\mathbf{A},\mathbf{B},\mathbf{w}))$. By \Cref{fct:vc-polynomials}, $\VC\left(\perclass_{\repdim}^{\otimes \users}\circ \repclass_{\datadim,\repdim}\right ) \leq 8 (\datadim \repdim +\repdim\users + \users)$.
    
    By \Cref{lem:vc-bounds} we have that $\VC(\perclass_{\repdim}^{\otimes \users}\circ \repclass_{\datadim,\repdim}) \geq \repdim \users + \users$. Additionally, for $\users > \repdim$ any dataset that is shattered by $\perclass_\repdim^{\otimes \repdim}\circ \repclass_{\datadim,\repdim}$ can be shattered by $\perclass_\repdim^{\otimes \users}\circ \repclass_{\datadim,\repdim}$. As a result, $\VC(\perclass_\repdim^{\otimes \users}\circ \repclass_{\datadim,\repdim}) \geq \VC(\perclass_\repdim^{\otimes \repdim}\circ \repclass_{\datadim,\repdim})$. We proved above that $\VC(\perclass_{\repdim}^{\otimes \repdim}\circ \repclass_{\datadim,\repdim}) = \datadim \repdim + \repdim$.  Therefore, 
    \[
        \VC(\perclass_{\repdim}^{\otimes \users}\circ \repclass_{\datadim,\repdim}) \geq \Omega (\datadim \repdim + \repdim \users). 
    \]
\end{proof}
\fi
\section{Basics on \NRCC}
\label{sec:nrcc}

For halfspaces $\perclass_\repdim$, the \nrcc{} is exactly $\VC(\perclass_{\repdim}) +1$. In general, though, \nrcc{}  (Definition~\ref{def:non-realizability-complexity}) can be arbitrarily larger or smaller than VC dimension. 

\begin{lem}[VC dimension and \nrcc]
    \label{lem:nrcc-to-vc}
    For every integer $\ell\geq 2$, 
    \begin{enumerate}
        \item There exists a class $\perclass_{\ell, \VC}$ with VC dimension $\ell$ and \nrcc{} 2.
        \item There exists a class $\perclass_{\ell,NR}$ with \nrcc{} $\ell$ and VC dimension 1.
        \item It is possible to add one more function to the class $\perclass_{\ell,NR}$ above and reduce its \nrcc{} to 2. This complexity measure is therefore not monotone under inclusion.
    \end{enumerate} 
\end{lem}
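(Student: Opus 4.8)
The plan is to construct three concrete classes witnessing the three claims. For part (1), I would pick the simplest possible class with VC dimension $\ell$ but tiny non-realizability certificates, namely the class of all \emph{monotone} threshold functions on $\ell$ disjoint copies of the real line (or, even more directly, the class of monotone thresholds on $\R$ composed with a fixed surjection onto $\ell$ blocks). Actually the cleanest choice: let $\mapdom = \R$ and let $\perclass_{\ell,\VC}$ be the class of functions that are monotone threshold functions on each of $\ell$ fixed intervals independently — this has VC dimension exactly $\ell$ (one shatterable point per interval), and any nonrealizable set must already be nonrealizable on two points lying in the same interval with reversed order, so $\NRC = 2$. I would verify the VC lower bound by exhibiting $\ell$ shattered points (one per block) and the upper bound by a counting/pigeonhole argument, then verify $\NRC = 2$ by the same ``out-of-order pair'' argument used for $\perclass_{\textrm{mon}}$ in the excerpt.

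For part (2), I want a class with \nrcc{} $\ell$ but VC dimension $1$. The natural candidate is a class where the only obstruction to realizability is a ``cyclic'' or ``alternating'' pattern of length $\ell$. Concretely, take $\mapdom = \{1,\dots,\ell\}$ (a cycle $C_\ell$) and let $\perclass_{\ell,NR}$ be the set of indicator functions of ``arcs'' (contiguous segments) of the cycle of every length, i.e. intervals in cyclic order. A set of labeled points on the cycle is nonrealizable precisely when the $\pm$ labels around the cycle alternate in a way no single arc can capture — and the minimal such obstruction is the full set of $\ell$ vertices labeled alternately (for $\ell$ even) or some length-$\ell$ configuration in general; crucially every proper subset of size $<\ell$ \emph{is} realizable by an arc. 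Meanwhile VC dimension is $1$: no two points on the cycle can be shattered by arcs because the labeling ``both $+$'' requires an arc covering both but then the labels ``$+$'' on one and ``$-$'' on the other fail... I need to double-check the exact VC computation; arcs on a cycle actually shatter $2$ points, so I may instead need the class of arcs \emph{of a fixed length} $\ell-1$, or the class $\{[\emptyset], [\text{single shift-invariant pattern}]\}$. The safest construction is probably: $\mapdom=\{0,1,\dots,\ell-1\}$ and $\perclass_{\ell,NR} = \{f_0,\dots,f_{\ell-1}\}$ where $f_i$ is the threshold-like function that is $+1$ on exactly $\{i\}$ (singletons) — but that has VC dimension $1$ and \nrcc{} $2$, not $\ell$. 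The genuinely correct object is the class realizing ``parity of a sliding window'' or the $\ell$-point example where realizability = ``labels form a single cyclic run''; I would define it carefully and then prove (a) any set of $\le \ell-1$ points is realizable, (b) the specific $\ell$-point set is not, giving $\NRC = \ell$, and (c) no $2$ points are shattered, giving VC $= 1$. The main obstacle in the whole proof is getting this second construction exactly right — balancing a \emph{large} certificate complexity against a \emph{small} VC dimension is the delicate part, since most natural ``low VC'' classes also have small certificates.

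For part (3), once $\perclass_{\ell,NR}$ is fixed, I would observe that the $\ell$-point nonrealizable witness $W$ (with its specific labeling) is a single labeled set, and I would add to the class exactly one new function $f_W$ designed so that $f_W$ ``almost realizes'' $W$ — more precisely, add a function that realizes every $(\ell-1)$-subset-consistent pattern, so that the \emph{only} remaining obstruction to realizability becomes detectable on $2$ points. The idea: after adding $f_W$, a set is nonrealizable iff it already contains an out-of-order pair with respect to some fixed new linear order induced by $f_W$, hence $\NRC$ drops to $2$; but $f_W$ adds only one function so VC dimension changes by at most $1$ and the class is still a strict superset of $\perclass_{\ell,NR}$. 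This directly shows non-monotonicity: $\perclass_{\ell,NR} \subsetneq \perclass_{\ell,NR}\cup\{f_W\}$ yet $\NRC$ went from $\ell$ down to $2$. I would state this as: exhibit $f_W$ explicitly, recompute $\NRC$ of the enlarged class by a short case analysis, and remark that this contradicts any monotonicity of $\NRC(\cdot)$ under class inclusion.

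Overall the proof is ``three examples plus verification,'' with no deep machinery — the whole difficulty is concentrated in nailing down construction (2), and I would spend most of the writeup there, presenting the class explicitly (most likely as a class of functions on a finite totally-ordered or cyclically-ordered domain engineered so that realizability fails only on a ``long'' alternating configuration), then doing the two short arguments for its VC dimension and its \nrcc, and finally the one-line augmentation argument for (3).
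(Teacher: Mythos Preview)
Your proposal has a genuine error in part (2), and ironically it is precisely where you \emph{had} the right construction and then discarded it. You write that the class of point functions on $\{0,\dots,\ell-1\}$ (each $f_i$ equal to $+1$ only at $i$) ``has VC dimension $1$ and \nrcc{} $2$, not $\ell$.'' The VC dimension is correct, but the \nrcc{} claim is wrong. You are implicitly thinking only of the obstruction ``two points labeled $+1$,'' which is indeed a size-$2$ certificate. But consider the labeled set $S=\{(x,-1):x\in[\ell]\}$: it is unrealizable (every $f_i$ labels \emph{some} point $+1$), yet every proper subset of $S$ \emph{is} realizable --- if you drop $(x_0,-1)$ then $f_{x_0}$ realizes the rest. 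So $\NRC$ of this class is exactly $\ell$, not $2$. This is precisely the paper's construction for $\perclass_{\ell,NR}$.

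Once part (2) is fixed this way, part (3) becomes a one-liner rather than the vague ``add $f_W$'' plan you sketch: simply adjoin the constant $-1$ function. The enlarged class (functions that are $+1$ on \emph{at most} one point) now realizes the all-$-1$ set, and every remaining nonrealizable set must contain either a point with both labels or two distinct points labeled $+1$ --- in both cases a certificate of size $2$. Your part (1) construction (independent monotone thresholds on $\ell$ blocks) would work, but it is more elaborate than necessary; the paper just takes all functions $f:\{0,\dots,\ell\}\to\{\pm 1\}$ with $f(0)=+1$, where the only obstructions are a contradictory pair or the single point $(0,-1)$. The moral: for (2) and (3) you were circling the right object and talked yourself out of it --- recompute $\NRC$ for the singleton class by looking at the all-negative configuration.
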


\begin{proof}
    Fix $\ell\geq 2$. Let $\perclass_{\ell,\VC}$ be the class of all functions  $f:\set{0,1,...,\ell}\to \bits$ such that $f(0)=1$. This class has VC dimension $\ell$ since shatters the set $\{1,...,\ell\}$ but not the entire domain. However, it has very short certificates of nonrealizability: if a set cannot be realized, it must either contain the same value labeled with different points, or the example  $(0,-1)$. Either way, there is a subset of 1 or 2 points that is not realizable. 

    Let  $\perclass_{\ell,NR}$ denote the class of point functions on $[\ell]$; that is, the functions $f_x$ that take the value 1 at \emph{exactly} one point $x \in [\ell]$ (and -1 elsewhere). This class has VC dimension 1. However, the set $S=\set{(x,-1): x \in [\ell]}$ is unrealizable (since no value is labeled with 1), but all of its  subsets of size $\ell-1$
    are realizable (by $f_x$, where $(x,-1)$ is the point that was removed from $S$). 

    Now, we can extend $\perclass_{\ell,NR}$ to a larger class by adding the constant $-1$ function, so that it now consists of the functions $f:[\ell]\to\bits$ that take the value 1 at \emph{at most} one point $x \in [\ell]$. This class has \nrcc{} 2, since the any unrealizable set must either have a point labeled twice with different values, or two labeled points $(x,1)$ and $(y,1)$ for $x\neq y$. 
\end{proof}
\ifnum \coltshort = 1
\section{Multitask Learning and Metalearning Models}
In this section we introduce two other learning models we also consider in this work, one for multitask learning and one for general (improper) metalearning.
\label{sec:models_red}
\subsection{The Multitask Learning Model}
In multitask learning, we pool data together from $\users$ related tasks with the aim of finding one classifier per task so that the average test error per task is low. When these tasks are related, we may need fewer samples per task than if we learn them separately, because samples from one task inform us about the distribution of another task. In this work, we consider classifiers that use a shared representation to map features to an intermediate space in which the specialized classifiers are defined.  We want to achieve low error on tasks that are related by a single shared representation that can be specialized to obtain a low-error classifier for most tasks.

\begin{defn}[Multitask learning]\label{def:multitask_learning}
Let $\repclass$ be a class of representation functions $\rep: \featdom \to \mapdom$ and $\perclass$ be a class of specialized classifiers $\per: \mapdom \to \bits$. We say that $(\repclass,\perclass)$ is \emph{distribution-free $(\varepsilon, \delta)$-multitask learnable for $\users$ tasks with $\samples$ samples per task} if there exists an algorithm $\alg$ such that for every $\users$ probability distributions $\dist_1, \ldots, \dist_{\users}$ over $\featdom \times \bits$, for every $\rep \in \repclass$ and every $\per_1, \ldots, \per_{\users} \in \perclass$, given $\samples$ i.i.d. samples from each $\dist_i$, returns  hypothesis $\conc:[\users]\times \featdom \to \bits$ such that with probability at least $1-\delta$ over the randomness of the samples and the algorithm 
\begin{align*}
    \frac{1}{\users}\sum_{j\in [\users]}\er(\dist_j, g(j,\cdot)) \leq \min_{\rep \in \repclass, \per_1, \ldots, \per_{\users} \in \perclass} \frac{1}{\users}\sum_{j\in [\users]}\er(\dist_j, \per_j\circ \rep)+ \varepsilon.
\end{align*}

If there exists an algorithm $\alg$ such that the same guarantee holds except that we only quantify over all distributions $\dist_1,\dots,\dist_{\users}$ such that
\begin{align*}
    \min_{\rep \in \repclass, \per_1, \ldots, \per_{\users} \in \perclass} \frac{1}{\users}\sum_{j\in [\users]}\er(\dist_j, \per_j\circ \rep) = 0
\end{align*}
then we say that $(\repclass,\perclass)$ is \emph{distribution-free $(\varepsilon, \delta)$-multitask learnable for $\users$ tasks with $\samples$ samples per task in the realizable case}.

For brevity, we will typically omit the term ``distribution-free,'' which applies to all of the results in this paper.
\end{defn}

The natural approach to learning $(\repclass,\perclass)$ is to learn a classifier that, given the index of a task and the features of a sample, computes the representation of the sample and then labels it using the task-specific specialized classifier. We call the class of these classifiers $\perclass^{\otimes \users}\circ \repclass$.

\begin{defn}
Let $\repclass$ be a class of representations $\rep: \featdom \to \mapdom$ and $\per$ be a class of specialized classifiers $\per: \mapdom \to \bits $. We define the class of 
composed classifiers
for multitask learning with $\users$ tasks as 
$$\perclass^{\otimes \users}\circ \repclass = \left\{\conc: [\users]\times \featdom \to \bits \mid \exists \rep \in \repclass, \per_1, \ldots, \per_{\users} \in \perclass \textrm{ s.t. } \conc(j,x) = \per_j(\rep(x))\right\}.$$
\end{defn}

\section{Metalearning}
\label{sec:meta_app}

In this section, we illustrate the techniques used to achieve the results in \Cref{sec:meta} via the special case of monotone thresholds applied to 1-dimensional representations (\Cref{sec:real-tech}). This special case corresponds to a natural setting where the representation assigns a real-valued score to each example, but the threshold for converting that score into a binary label may vary from task to task. We then restate the main theorems and provide their complete proofs for the realizable (\Cref{sec:metalearn-realizable-app}) and the agnostic cases (\Cref{sec:metalearning_agnostic}), respectively. Finally, we include a sample and task complexity bound for general metalearning (\Cref{sec:sample-gen-meta}).

\subsection{Sample and Task Complexity Bounds for the Realizable Case}

\label{sec:metalearn-realizable-app}

When the metadistribution is meta-realizable, Theorems~\ref{thm:met-samples} and \ref{thm:met-real-samples} bound the number of tasks and samples per task we need to metalearn. Their proofs follow the structure described in \Cref{sec:real-tech}.  

\begin{customthm}{\ref*{thm:met-samples}}[Restated]
\label{thm:met-samples-app}
\thmMetSamples

\end{customthm}

We first prove Lemma~\ref{lem:real-to-error} and Lemma~\ref{lem:num-tasks}, which we use in the proof of Theorem~\ref{thm:met-samples-app}.

\begin{customlem}{\ref*{lem:real-to-error}}[Restated]
     \label{lem:real-to-error-app}
    Let $\perclass$ be the class of specialized classifiers $\per: \mapdom \to \bits$ with $\NRC(\perclass) = \wit$. 
    Fix an arbitrary distribution $\distint$ over $\mapdom\times\bits$.
    If $\er(\distint, \perclass) > 0$, then $$ p_{nr}(\distint, \perclass, \wit)  \ge  \frac 1 2
    \left(\left[\frac{\wit \cdot \er(\distint, \perclass)}{16e\cdot \vc \ln{(16/\er(\distint, \perclass))}}\right]^{\wit}\right),$$ where $v = \max(\VC(\perclass),\wit)$.
\end{customlem}

\begin{proof}
Let $g(\eps)  = \frac{16}{\eps}\VC(\perclass)\ln(16/\eps)$. 
Function $g$ is continuous and strictly decreasing in $\eps$ for $\eps \in (0,1]$. 
The proof analyzes two cases

In case one, if 
$\wit > g(\er(\distint,\perclass))$, then by Theorem \ref{fact: vc-gen} the probability that a dataset of $\wit$ points drawn from $\distint$ is not realizable by $\perclass$ is  
\begin{align*}
    p_{nr}(\distint, \perclass, \wit) 
    &= \Pr[S_\wit\sim \distint^{\wit}]{S_\wit\text{ is not realizable by }\perclass} \\
    &= \Pr[S_\wit\sim \distint^{\wit}]{\min_{\per \in \perclass}\er(S_ \wit,\per)>0} \geq \frac{1}{2},
\end{align*}
because $\er(\distint,\perclass) >0$ and $\wit\ge \frac{8}{\eps}\left[\mathrm{VC}(\mathcal{F})\ln (16/\eps)+\ln (2/\delta)\right]$ for $\eps=\er(\distint,\perclass)$ and $\delta=\frac 1 2$.
This is stronger than the claimed lower bound.
To see this, recall $\vc \ge \wit$ and observe
\begin{align*}
    \frac 1 2 \left(\frac{\wit \cdot \er(\distint, \perclass)}{16e\cdot \vc \ln{(16/\er(\distint, \perclass))}}\right)^{\wit}
    &\le \frac 1 2
    \left(\frac{\er(\distint, \perclass)}{16e\cdot \ln{(16/\er(\distint, \perclass))}}\right)^{\wit} \\
    &\le \frac 1 2 \left(\frac{1}{16e\cdot \ln{(16)}}\right)^{\wit},
\end{align*}
which is less than $\frac{1}{2}$.

In case two, suppose $\wit \leq g(\er(\distint,\perclass))$.
Drawing $\wit$  i.i.d.\ samples from $\distint$ is equivalent to drawing a larger dataset $S_n =\{(\map_i,\lab_i)\}_{i \in [n]}$ from $\distint$ of some size $n\geq m$, set later in the proof, and picking a uniformly random subset $S_\wit\subseteq S_n$ of size $\wit$.
More formally, 
\begin{align*}
     p_{nr}(\distint, \perclass, \wit) &=
    \Pr[S_\wit \sim \distint^{\wit}]{S_\wit \text{ is not realizable}} \\
    &= \Pr[\substack{S_n\sim \distint^n \\ S_\wit \sim \binom{S_n}{\wit}}]{S_\wit \text{ is not realizable} }.
\end{align*}
Furthermore, we notice that  if $S_n$ is labeled correctly by some $\per \in \perclass$, then $\per$ labels $S_\wit$ correctly, too.
Hence, 
\begin{align}
    p_{nr}(\distint, \perclass, \wit) &= \Pr{S_\wit \text{ is not realizable} } \nonumber \\
        &= \Pr{S_\wit \text{ is not realizable}  \mid S_n \text{ is not realizable} } \cdot \Pr{S_n \text{ is not realizable}}. \label{eq:not_realizable_breakdown}
\end{align}
We know that if $S_n$ is not realizable then there exists a non-realizability certificate of size $m$.
Since there are $\binom{n}{\wit}$ subsets, $S_\wit$ is exactly this certificate with probability at least $1/\binom{n}{\wit}$.

We now provide a lower bound on the probability that $S_n$ is not realizable.
We set 
$n := \frac{16}{\er(\distint,\perclass)}\VC(\perclass)\ln(16/\er(\distint,\perclass))$,
which satisfies $n\geq m$ by hypothesis. Then, by Theorem \ref{fact: vc-gen} the probability that dataset $S_n$ is not realizable by $\perclass$ is  
\begin{align*}
    \Pr[S_n\sim \distint^{n}]{S_n\text{ is not realizable by }\perclass} = 
    \Pr[S_n\sim \distint^{n}]{\min_{\per \in \perclass}\er(S_n,\per)>0} \geq \frac{1}{2},
\end{align*}
again because $\er(\distint,\perclass) >0$ and $n$ is sufficiently large.

Thus, continuing from \Cref{eq:not_realizable_breakdown} and using a bound on the binomial coefficient, we see that $p_{nr}(\distint, \perclass, \wit) \ge \frac{1}{2\binom{n}{\wit}} \geq \frac{1 }{2(\frac{en}{\wit})^{\wit}}$.
%
Plugging in $n=\frac{16}{\er(\distint,\perclass)}\VC(\perclass)\ln(16/\er(\distint,\perclass))$, we get that
\begin{align}
    p_{\text{nr}}(\distint,\perclass,\wit) \geq \frac{1}{2}\left[\frac{m \cdot \er(\distint,\perclass)}{16e\cdot \VC(\perclass)\ln{(16/\er(\distint,\perclass))}}\right]^m.
\end{align}
As $\VC(\perclass)\le \vc$, this is stronger than the claim in the lemma.
This concludes the proof.
\end{proof}

\begin{customlem}{\ref*{lem:num-tasks}}[Restated]
\lemNumTasks
 \label{lem:num-tasks-app}
\end{customlem}

\begin{proof}
Our goal is to show that there exists a shared representation that achieves small representation error for $\metadist$.
The proof proceeds in two stages.
First, we use our bound on the VC dimension of $\realpclass_{\wit, \perclass, \repclass}$ to show that we can find a representation $\hat h$ that, when applied to data from a new task, admits a perfect specialized classifier with high probability.
Second, we connect this to $\rer(\metadist,\hat{\rep}, \perclass)$, the error of $\hat \rep$ on the meta-distribution.

Recall our approach for monotone thresholds in \Cref{sec:real-tech}:
for each task, we receive a data set $S^{(j)}=(\feat_1^{(j)},\lab_1^{(j)},\ldots, \feat_{\wit}^{(j)}, \lab_{\wit}^{(j)})$ and construct a single ``data point'' $\zeta_j$:
\begin{align*}
    \zeta_j \defeq (S^{(j)}, +1)\,.
\end{align*}
We set the label to ``$+1$'' because, by the meta-realizability assumption, there exists an $h^*$ such that $r_{h^*}(S^{(j)})=+1$ for all $j$.
Since each task distribution $\dist_j$ is drawn independently from $\metadist$, these $\zeta_j$ observations are drawn i.i.d.\ from some distribution $\cD$.
For an illustration of this process, see Figure~\ref{fig:newDataset}.

We find an $\hat h$ such that $r_{\hat h}$ has zero error on the dataset $\zeta_1,\ldots,\zeta_t$.
Because we set $t = \Theta\left(\frac{\VC(\realpclass_{\wit, \perclass, \repclass})+\ln(1/\delta)}{\phi(\varepsilon)}\right)$, by Theorem~\ref{fact:pac-vc} we know that $r_{\hat h}$ generalizes.
That is, with probability at least $1-\delta$, we have
\begin{equation}
    \Pr[\zeta=(S,+1)\sim \cD]{\realp_{\hat{\rep}}(S)\neq +1}\leq \phi(\varepsilon)\,.
    \label{eq:pr_nr_err_erm}
\end{equation}

For a data set $S=((\feat_1,\lab_1),\ldots ,(\feat_{\wit},\lab_{\wit}))$ and representation $\rep$, define $S_h$ to be \newline$ ((\rep(\feat_1),\lab_1),\ldots ,(\rep(\feat_{\wit}),\lab_{\wit}))$.
Recall that by the assumption of this lemma, for $\hat{\rep}$ and all $\dist$ in the support of $\metadist$, if $\rer(\dist, \hat{\rep}, \perclass)>0$ we have:
\begin{equation}
\phi\left(\rer(\dist, \hat{\rep}, \perclass)\right) \leq  \Pr[S \sim \dist^{\wit}]{S_{\hat h} \text{ is not realizable by }\perclass}. 
\end{equation}
Note that $\phi$ is a strictly increasing function and, thus, it has an inverse function $\phi^{-1}$ that is also strictly increasing. Therefore, the above bound implies that
\begin{equation} \label{eq:phi_assumption}
\rer(\dist, \hat{\rep}, \perclass)\leq  \phi^{-1}\left( \Pr[S\sim \dist^{\wit}]{S_{\hat h}\text{ is not realizable by }\perclass}\right). 
\end{equation}

Now we are ready to bound the meta-error of $\hat{\rep}$: $\rer(\metadist, \hat{\rep}, \perclass)$. We start by bounding $\phi\left(\rer(\metadist, \hat{\rep}, \perclass)\right)$. Since $\phi$ is convex, $\phi^{-1}$ is concave and we can apply Jensen's inequality. We get that:
\begin{align*}
    \rer(\metadist,\hat{\rep}, \perclass) 
    &= \Exp_{\dist \sim \metadist}\left[\rer(\dist, \hat{\rep}, \perclass)\right]\\
    & \leq \Exp_{\dist \sim \metadist}\left[\rer(\dist, \hat{\rep}, \perclass)\mid\rer(\dist, \hat{\rep}, \perclass)>0\right]\\
    &\leq \Exp_{\dist \sim \metadist}\left[\phi^{-1}\left(\Pr[S\sim \dist^{\wit}]{S_{\hat h}\text{ is not realizable by }\perclass}\right)\right] \tag{by Eq. \ref{eq:phi_assumption}}\\
    & \leq \phi^{-1} \left(\Exp_{\dist \sim \metadist}\left[\Pr[S\sim \dist^{\wit}]{S_{\hat h}\text{ is not realizable by }\perclass}\right]\right)\tag{Jensen's inequality}\\
    & \leq \phi^{-1}(\phi(\eps))=\eps \tag{by Eq. \ref{eq:pr_nr_err_erm}}
\end{align*}

The above bound implies: 
$$\rer(\metadist, \hat{\rep}, \perclass) \leq \varepsilon = \min_{\rep \in \repclass}\rer\paren{\metadist, \rep, \perclass} + \varepsilon\,.$$

As a result, 
using $\users = O\left(\frac{\VC(\realpclass_{\wit, \perclass, \repclass})+\log(1/\delta)}{\phi(\varepsilon)}\right)$ tasks and $\wit$ samples from each task,  we have found a representation function $\hat{\rep}$ that has the desired error bound for metalearning with probability $1-\delta$. Hence, the proof is complete.
%
\end{proof}


We can now prove Theorem~\ref{thm:met-samples-app} by combining the results of Lemma~\ref{lem:real-to-error-app} and Lemma~\ref{lem:num-tasks-app}.
 \begin{proof}[Proof of Theorem \ref{thm:met-samples-app}] 
Let $$\phi(\varepsilon) = \frac{1}{2} \left[\frac{\wit \varepsilon}{16e\cdot \max(\VC(\perclass),\wit)\ln(16/\varepsilon)}\right]^{\wit}.$$ 
By differentiating $\phi$ twice we see that it is a strictly increasing convex function in $\varepsilon$, for $\eps \in (0,1)$ and $\wit \geq 1$.

Applying Lemma~\ref{lem:real-to-error-app}, we obtain that for every representation $\hat\rep$ and distribution $P$ if \\$\min_{\per \in \perclass}\Pr[(\feat, \lab) \sim \dist]{\per(\hat{\rep}(\feat))\neq y}>0$, then
\begin{align*}
    \phi\biggl(\min_{\per \in \perclass} & \Pr[(\feat, \lab) \sim \dist]{\per(\hat{\rep}(\feat))\neq y}\biggr) \\
        &\leq   \Pr[(\feat_1,\lab_1),\ldots ,(\feat_{\wit},\lab_{\wit})\sim \dist^{\wit}]{(\hat{\rep}(\feat_1),\lab_1), \dots, (\hat{\rep}(\feat_{\wit}),\lab_{\wit})\text{ is not realizable by }\perclass}.
\end{align*}

 This satisfies the assumptions of Lemma~\ref{lem:num-tasks-app} which says that we can metalearn $(\repclass, \perclass)$ with $\users= O\left(\frac{\VC(\realpclass_{\wit, \perclass, \repclass})+\ln(1/\delta)}{\phi(\varepsilon)}\right)$ tasks and $\wit$ samples per task. This concludes our proof.
 \end{proof}

Our next result analyzes metalearning with fewer tasks and more samples per task.
Formally, Theorem~\ref{thm:met-real-samples} shows that we can metalearn $(\repclass, \perclass)$ for a meta-realizable $\metadist$ with $\Tilde{O}(\VC(\realpclass_{\samples, \perclass, \repclass})/\eps)$ tasks and $\Tilde{O}(\VC(\perclass)/\eps)$ samples per task.

 \begin{customthm}{\ref*{thm:met-real-samples}}[Restated]
\label{thm:met-real-samples-app}
\thmMetRealSamples
\end{customthm}
 
\begin{proof}
    First, we set $\eps_1:=\eps/3$. Set the number of tasks $$\users := O\left(\frac{\VC(\realpclass_{\samples, \perclass, \repclass})\ln(1/\eps_1)+\ln(1/\delta)}{\eps_1}\right).$$
    Each task $j$ has $\samples $ samples $S_j = \{(\feat_i^{(j)}, \lab_i^{(j)})\}_{i \in [\samples]}$, we construct a dataset $Z$ of $\users$ points $\zeta_j = ((\feat_1^{(j)},\lab_1^{(j)},\ldots, \feat_{\samples}^{(j)}, \lab_{\samples}^{(j)}),+1)$, one for each task $j$, as in \Cref{fig:newDataset}. Each $\zeta_j$ is drawn i.i.d.\ from data distribution $\cD$ (where we first draw $\dist$ from $\metadist$ and then $\samples$ points from $\dist$). 
    Since we are in the realizable case, there exists a representation $\rep$ such that $\realp_\rep$ returns $+1$ for every sample drawn from $\cD$. By Theorem~\ref{fact:pac-pd} we have that for $\hat{\rep} = \arg \min_{\rep \in \repclass} \er(Z,\realp_\rep)$ with probability at least $1-\delta$ over dataset $Z$
\begin{align*}
   \er(\cD, \realp_{\hat{\rep}})\leq \frac{\eps}{3}.
\end{align*}

Fix a distribution $\dist$ in the support of $\metadist$. We start by assuming that $\rer(\dist, \hat{\rep}, \perclass) > \eps/3$. By Theorem~\ref{fact: vc-gen} for a dataset $S$ of $$\samples := \frac{24}{\eps} \left(\VC(\perclass) \ln(48/\eps) + \ln(4)\right)$$ samples drawn from $\dist$, with probability at least $1/2$ over $S$ all specialized classifiers $\per \in \perclass$ with $\er(\dist, \per \circ \hat{\rep}) >\eps/3$ have $\er(S, \per \circ \hat{\rep})>0$. Therefore,  
\begin{align*}
    \Pr[S \sim \dist^\samples]{\min_{\per \in \perclass} \er(S, \per\circ\hat{\rep})> 0} =
    \Pr[S \sim \dist^\samples]{\realp_{\hat{\rep}}(S) \neq -1}   \geq \frac{1}{2}.
\end{align*}
By Markov's inequality, we see that
\begin{align*}
    \Pr[\dist \sim \metadist]{\Pr[S\sim \dist^n]{\realp_{\hat{\rep}}(S) \neq +1} \geq 1/2} \leq 2 \Exp_{\dist \sim \metadist}\left[\Pr[S\sim \dist^n]{\realp_{\hat{\rep}}(S) \neq +1}\right] = 2~\er(\cD, \realp_{\hat{\rep}})\leq \frac{2\eps}{3}.
\end{align*}
So far we have shown that for a fixed $\dist$ if $\rer(\dist, \hat{\rep}, \perclass) > \eps/3$, then $\Pr[S\sim \dist^n]{\realp_{\hat{\rep}}(S) \neq +1} \geq 1/2$. As a result, we have that
\begin{align*}
    \Pr[\dist \sim \metadist]{\rer(\dist, \hat{\rep}, \perclass) > \eps/3} \leq   \Pr[\dist \sim \metadist]{\Pr[S\sim \dist^n]{\realp_{\hat{\rep}} \neq +1} \geq 1/2} \leq   \frac{2\eps}{3}.
\end{align*}
We can use this to bound the meta-error of representation $\hat{\rep}$ as follows. We see that with probability at least $1-\delta$ over the datasets of the $\users$ tasks
\begin{align*}
    \rer(\metadist, \hat{\rep}, \perclass) &=\Exp_{\dist \sim \metadist}\left[ \min_{\per \in \perclass} \Pr[(x,y) \sim \dist]{ \per(\hat{\rep}(x)) \neq y}\right]\\
    & \leq \frac{\eps}{3} + \Pr[\dist \sim \metadist]{\min_{\per \in \perclass} \Pr[(\feat, \lab) \sim \dist]{\per(\hat{\rep}(\feat))\neq \lab} > \eps/3} \\
    &\leq \frac{\eps}{3} + 2\frac{\eps}{3} = \eps.
\end{align*}
This concludes our proof.
\end{proof}

\subsection{Sample and Task Complexity Bounds for the Agnostic Case}
\label{sec:metalearning_agnostic}

In this section, we consider metalearning in the agnostic case. 

\begin{customthm}{\ref*{thm:agn-met-samples}}[Restated]
\label{thm:agn-met-samples-app}
\thmagnmet
\end{customthm}
\begin{proof} 
We begin by setting our parameters: Set $\eps_1 \coloneqq \eps/3$ and $\eps_2 \coloneqq \eps/3$. Let the number of tasks be the following for a sufficiently large constant in the $O$ notation:
\begin{equation} \label{eq:uniform_convergence_pdim}
    \users \coloneqq O\left( \frac{\pd(\erindclass_{\samples, \perclass, \repclass })\ln(1/\eps_1)+\ln(1/\delta))}{\eps_1^2}\right)\,.
\end{equation} 
Suppose for each task $j \in [t]$ has $\samples $ samples $S_j =\{(\feat_i^{(j)}, \lab_i^{(j)})\}_{i \in [\samples]}$, we construct a dataset $Z$ of $\users$ points $\zeta_j = ((\feat_1^{(j)},\lab_1^{(j)},\ldots, \feat_{\samples}^{(j)}, \lab_{\samples}^{(j)}))$, one for each task $j$. Each $\zeta_j$ is drawn i.i.d.\ from the data distribution $\cD$ for which we first draw $\dist$ from $\metadist$ and then $\samples$ points from $\dist$. By Theorem~\ref{fact:pac-pd}, we have that for $\hat{\rep} = \arg \min_{\rep \in \repclass} \frac{1}{\users}\sum_{j=1}^\users\erind_{\rep}(\zeta_j)$ with probability at least $1-\delta$:
\begin{align*}
    \Exp_{\zeta \sim \cD}\left[\erind_{\hat{\rep}}(\zeta)\right]\leq \min_{\rep \in \repclass}\Exp_{\zeta \sim \cD}\left[\erind_\rep (\zeta)\right] + \frac{\varepsilon}{3}\,.
\end{align*}
Fix a task distribution $\dist$ and a representation $\rep$. 
For any fixed specialized classifier $\per' \in \perclass$, we have: 
$$\Exp_{(\feat_1,\lab_1),\ldots,(\feat_n, \lab_n) \sim \dist^n} \left[
\min_{\per \in \perclass} \frac{1}{\samples}\sum_{i \in [\samples]} \ind\{\per(\rep(\feat_i)) \neq \lab_i\} 
\right]
\leq 
\Exp_{(\feat_1,\lab_1),\ldots,(\feat_n, \lab_n) \sim \dist^n} \left[ 
\frac{1}{\samples}\sum_{i \in [\samples]} \ind\{\per'(\rep(\feat_i)) \neq \lab_i\}\right]\,.$$
Since the inequality holds for any $\per'$ in $\perclass$, it holds when we take minimum over all $\per'$. Hence, we obtain:
\begin{equation}\label{eq:swap_min}
    \Exp_{(\feat_1,\lab_1),\ldots,(\feat_n, \lab_n) \sim \dist^n} \left[ \min_{\per \in \perclass} \frac{1}{\samples}\sum_{i \in [\samples]} \ind\{\per(\rep(\feat_i)) \neq \lab_i\}\right] \leq \min_{\per \in \perclass} \Exp_{(\feat_1,\lab_1),\ldots,(\feat_n, \lab_n) \sim \dist^n}\left[\frac{1}{\samples}\sum_{i \in [\samples]} \ind\{\per(\rep(\feat_i)) \neq \lab_i\}\right]\,.
\end{equation}  
Next, we use the above inequality to continue bounding the expectation of  $\erind_{\hat{\rep}}$:
\begin{align*}
    \Exp_{\zeta \sim \cD}\left[\erind_{\hat{\rep}}(\zeta)\right]&\leq
    \min_{\rep \in \repclass}\Exp_{\zeta \sim \cD}\left[\erind_\rep (\zeta)\right] + \frac{\varepsilon}{3} \tag{Eq.~\eqref{eq:uniform_convergence_pdim}}
    \\
    &=\min_{\rep \in \repclass} \Exp_{\dist \sim \metadist}\left[\Exp_{(\feat_1,\lab_1),\ldots,(\feat_n, \lab_n) \sim \dist^n}\left[\min_{\per \in \perclass}\frac{1}{n}\sum_{i=1}^n \ind \{\per(\rep(\feat_i))\neq \lab_i\}\right]\right] +\frac{\varepsilon}{3} \\
    &\leq
    \min_{\rep \in \repclass} \Exp_{\dist \sim \metadist}\left[\min_{\per \in \perclass}\Exp_{(\feat_1,\lab_1),\ldots,(\feat_n, \lab_n) \sim \dist^n}\left[\frac{1}{n}\sum_{i=1}^n \ind \{\per(\rep(\feat_i))\neq \lab_i\}\right]\right] + \frac{\varepsilon}{3}
    \tag{Eq.~\ref{eq:swap_min}}
    \\
    &= \min_{\rep \in \repclass} \rer(\metadist, \rep, \perclass)+\frac{\varepsilon}{3}.
\end{align*}
Consider an arbitrary task distribution $\dist$. Suppose we have a dataset $S$ of $\samples$ labeled sample from $\dist$ where $n$ is the following with a sufficiently large constant in the $O$ notation:
$$\samples \coloneqq O\left( \frac{\VC(\perclass)+\ln(1/\eps_2)}{\eps_2^2}\right)\,.$$ 
Since $n$ is sufficiently large, we have uniform convergence of the empirical error for all $\per \in \perclass$ by Theorem~\ref{fact:pac-vc}. Furthermore, the specialized classifier $\hat{\per}$ minimizing the empirical error over $S$, (i.e. $\hat{\per} = \arg \min_{\per \in \perclass} \er(S, \per \circ\hat{\rep})$) must have low true error as well. Therefore, with probability $1-\frac{\eps}{3}$ over the randomness in $S$, we get:

\begin{align*}
    \rer(\dist, \hat{\rep}, \perclass) \leq \Pr[(\feat, \lab) \sim \dist]{ \hat{\per}(\hat{\rep}(\feat))\neq \lab} \leq \rer(S, \hat{\rep}, \perclass)+\frac{\varepsilon}{3}\,.
\end{align*}
Thus, if we take the expectation over the dataset $S$, we see that:
\begin{align*}
    \rer(\dist, \hat{\rep},\perclass)& =\Exp_{S \sim \dist^\samples}\left[\rer(\dist, \hat{\rep},\perclass)\right] \\
    &= \Exp_{S \sim \dist^\samples}\left[\rer(\dist, \hat{\rep},\perclass)\left|\rer(\dist, \hat{\rep},\perclass) \leq \rer(S, \hat{\rep}, \perclass)+\frac{\varepsilon}{3} \right.\right]\\
    &\quad\times \Pr[S \sim \dist^\samples]{\rer(\dist, \hat{\rep},\perclass) \leq \rer(S, \hat{\rep}, \perclass)+\frac{\varepsilon}{3}}\\
    &\quad+ \Exp_{S \sim \dist^\samples}\left[\rer(\dist, \hat{\rep},\perclass)\left|\rer(\dist, \hat{\rep},\perclass) > \rer(S, \hat{\rep}, \perclass)+\frac{\varepsilon}{3}\right.\right]\\
    &\quad\times \Pr[S \sim \dist^\samples]{\rer(\dist, \hat{\rep},\perclass) >\rer(S, \hat{\rep}, \perclass)+\frac{\varepsilon}{3}}\\
    &\leq \left(\Exp_{S \sim \dist^\samples}\left[\rer(S, \hat{\rep}, \perclass)\right] +\frac{\eps}{3}\right)\cdot 1 +1 \cdot \frac{\varepsilon}{3} \\
    &= \Exp_{S \sim \dist^\samples}\left[\rer(S, \hat{\rep}, \perclass)\}\right] +\frac{2\varepsilon}{3}.
\end{align*}
Now, we take the expectation over $\dist \sim \metadist$ and obtain that:
\begin{align*}
    \rer(\metadist, \hat{\rep}, \perclass) &\leq  \Exp_{\dist \sim \metadist}\left[\Exp_{S \sim \dist^\samples}\left[\rer(S, \hat{\rep}, \perclass)\}\right]\right] +\frac{2\varepsilon}{3}\\
    & = \Exp_{\zeta \sim \cD}\left[\erind_{\hat{\rep}}(\zeta)\right]+\frac{2\varepsilon}{3}\,.
\end{align*}
Note that in the last line above, the $S$ dataset drawn from a random $\dist$ can be viewed as a random data set according to $\cD$. Combining with the bound we have derived earlier for the expected value of $\erind_{\hat{\rep}}$, we get the following that holds with probability at least $1-\delta$:
\begin{align*}
    \rer(\metadist, \hat{\rep}, \perclass) \leq \min_{\rep \in \repclass} \rer(\metadist, \rep, \perclass) + \varepsilon
    \,.
\end{align*}
Hence, the proof is complete. 
\end{proof}

\ktext{
\subsection{Sample and Task Complexity Bounds for General Metalearning}
\label{sec:sample-gen-meta}

For general metalearning we obtain the following corollary of Theorem~\ref{thm: meta-to-mtl} by applying Theorem~\ref{thm:mtl-smpls}. It is important to note that this method does not return a representation $\rep\in \repclass$, but a more general specialization algorithm that uses the datasets of the already seen tasks to learn a classifier for the new task. 

\begin{cor}
\label{cor:met-vc}
    For any $(\repclass, \perclass)$, any $\eps>0$, constant $c>0$ and any $\users, \samples$, $(\repclass, \perclass)$ is $(\eps,2/c)$-meta-learnable in the agnostic case with $\users$ tasks, $\samples$ samples per task and $\samples$ specialization samples when
    \[
    \samples \users = O\left(\frac{\VC(\perclass^{\otimes \users+1}\circ \repclass)\ln(1/\eps)}{\eps^2}\right).
    \]
\end{cor}
}

\section{Bounds for Halfspaces over Linear Representations}
\label{sec:hspaces_linear_bounds_app}
In this section, we provide results on multitask learning and metalearning of linear projections (as representations) and halfspaces (as specialized classifiers):
$$
\repclass_{\datadim,\repdim} = \left\{\rep \mid \rep(\mathbf{\feat}) = \mathbf{B} \mathbf{\feat}, \mathbf{B} \in \R^{\repdim\times \datadim}\right\} \,,
 \text{ and }\hspace{3pt}
\perclass_{\repdim} = \left\{\per\mid  \per(\mathbf{\map}) = \textrm{sign}(\mathbf{a}\cdot \mathbf{\map} - w),  \mathbf{a} \in \R^{\repdim}, w \in \R \right\}\,.$$

\subsection{VC Dimension of \texorpdfstring{$\perclass_{\repdim}^{\otimes \users}\circ \repclass_{\datadim,\repdim}$}{Lg}}
\label{sec:vc_mtl_app}
The general bounds of Lemma~\ref{lem:vc-bounds} give us that the VC dimension of $\perclass_{\repdim}^{\otimes \users}\circ \repclass_{\datadim, \repdim}$ is in the range
$$\max(\repdim \users + \users, \datadim+1) \leq \VC(\perclass_{\repdim}^{\otimes \users}\circ \repclass_{\datadim, \repdim}) \leq \datadim \users + \users.$$ 
We know the VC dimension of the class of composite functions  $\perclass_{\repdim}\circ \repclass_{\datadim,\repdim}$ because $\perclass_{\repdim}\circ \repclass_{\datadim,\repdim}$ and $\perclass_{\datadim}$, the class of $d$-dimensional halfspaces, are the same class.

In Theorem~\ref{thm:mtl-vc-hl} we characterize the VC dimension of class $\perclass_{\repdim}^{\otimes \users}\circ \repclass_{\datadim,\repdim}$ up to a constant. 
The bound we give matches the intuition from counting the number of parameters,
which yields $\datadim\repdim+\repdim\users + \users$ when $\users > \repdim$ and $\datadim\users + \users$ when $\users \leq \repdim$. 

\begin{customthm}{\ref*{thm:mtl-vc-hl-intro}}[Restated]
\label{thm:mtl-vc-hl}
    \thmMtlVcHLText
\end{customthm}

In the proof for the upper bound we use Warren's Theorem, which we state here as a lemma:

\begin{lem}[Warren's Theorem,~\cite{Warren68}]
\label{fct:vc-polynomials}
Let $p_1,\ldots, p_{n}$ be real polynomials in $v$ variables, each of degree at most $\ell \geq 1$. If $n \geq v$, then the number of distinct sequences $\textrm{sign}(p_1(x)), \ldots,$ $\textrm{sign}(p_n(x))$ for all $x$ does not exceed $(4e\ell n/v)^v$. In particular, if $\ell \geq 2$ and $n \geq 8v\log_2 \ell$, then the number of distinct sequences of $+1,-1$ taken by $\textrm{sign}(p_1(x)), \ldots, $ $\textrm{sign}(p_n(x))$ is less than $2^n$.
\end{lem}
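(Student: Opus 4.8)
I would prove Warren's theorem by the classical route: reduce the combinatorial count of sign sequences to a topological count of connected components of the complement of a real algebraic hypersurface, and bound that count by a Bézout-type estimate. First I would dispose of the sign convention: with $\sgn(x)=+1$ for $x\ge 0$, a realized sequence may come from a measure-zero set, so a standard perturbation argument (replace each $p_i$ by $p_i+\eta_i$ for generic small $\eta$, so the zero sets become smooth and mutually transverse, and check that the number of realized sequences of the original family is at most the number of nonempty \emph{open} sign cells $U_\epsilon=\{x:\epsilon_i\,p_i(x)>0 \ \forall i\}$ of the perturbed family) lets us count only open sign cells. These cells are pairwise disjoint open sets, and each is a union of connected components of $\R^v\setminus Z$ with $Z=\bigcup_{i=1}^n\{p_i=0\}$; hence the number of realized sign sequences is at most the number of connected components of $\R^v\setminus Z$.

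\textbf{Step 2 (bounding the number of components), the hard part.} I would bound the number of components of $\R^v\setminus Z$ by a critical-point count. After intersecting with a large ball (so every relevant cell has compact closure, at the cost of one extra bounded-degree hypersurface), a bounded cell $C$ on which $P:=\prod_i p_i$ is nonzero but vanishes on $\partial C$ admits an interior maximum of $P^2$, i.e.\ a zero of $\nabla(P^2)$ — a system of $v$ polynomial equations of degree $\le 2n\ell-1$, with isolated nondegenerate solutions after the generic perturbation, so by Bézout there are at most $(2n\ell)^v$ of them. This already gives a bound of roughly $(2n\ell)^v$ on the number of sign sequences. Extracting the \emph{sharp} constant $4e\ell n/v$ is where the real work lies: one uses Warren's refinement, observing that in general position a given cell is bordered by at most $v$ of the hypersurfaces $\{p_i=0\}$, so instead of the product of all $n$ polynomials one works subset-by-subset, paying a factor $\sum_{j\le v}\binom{n}{j}\le (en/v)^v$ for the choice of relevant subset and a degree-$\ell$ Milnor–Thom factor per group; combining these yields $(4e\ell n/v)^v$. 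I expect this constant-chasing, together with making the perturbation/general-position hypotheses rigorous, to be the main obstacle; since the paper cites \cite{Warren68}, in practice I would simply invoke that reference for this step and supply Steps 1 and 3 in detail.

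\textbf{Step 3 (the ``in particular'' clause).} Given $N\le(4e\ell n/v)^v$, it remains to show $(4e\ell n/v)^v<2^n$ when $\ell\ge 2$ and $n\ge 8v\log_2\ell$. Taking $\log_2$, this is $v\log_2(4e\ell n/v)<n$. Put $f(n)=n-v\log_2(4e\ell n/v)$; then $f'(n)=1-v/(n\ln 2)>0$ for $n\ge 2v$, and $n\ge 8v\log_2\ell\ge 8v\ge 2v$, so $f$ is increasing on the relevant range and it suffices to check $f(8v\log_2\ell)\ge 0$. Substituting, $f(8v\log_2\ell)=v\bigl(7\log_2\ell-\log_2(32e)-\log_2\log_2\ell\bigr)$, which is nonnegative for every $\ell\ge 2$: it equals $v\bigl(7-\log_2(32e)\bigr)>0$ at $\ell=2$, and for $\ell>2$ the $7\log_2\ell$ term dominates the double-logarithm. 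Hence $N<2^n$, which is the claimed ``in particular.'' (If one only wants a weaker threshold such as $n\ge c\,v\log_2(\ell v)$ for an absolute constant $c$, the crude $(2n\ell)^v$ bound from Step 2 already suffices, and the whole argument becomes short.)
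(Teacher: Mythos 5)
The paper does not prove this lemma at all: it is imported verbatim from \cite{Warren68} and used as a black box (as in \cite{Baxter2000}), so there is no internal proof to compare yours against. Your proposal is consistent with that treatment — the quantitatively hard core (the sharp $(4e\ell n/v)^v$ bound via counting connected components of the complement of the arrangement) is exactly what you also defer to Warren, and your sketch of it (perturb to put the zero sets in general position, count open sign cells by connected components, bound components by critical points of a product and a B\'ezout/Milnor--Thom estimate, then refine subset-by-subset to get the $\binom{n}{\le v}$ factor) is the standard route. The two pieces you do carry out in detail are correct and are, in fact, the only parts not literally contained in the citation: the reduction handling the convention $\mathrm{sign}(0)=+1$ by a small perturbation (one should pick the perturbation relative to finitely many representative points, one per realized sign vector, so that each realized vector of the original family becomes a strictly realized vector of the perturbed family — your sketch is easily completed this way), and the ``in particular'' clause. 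For the latter, your computation checks out: monotonicity of $f(n)=n-v\log_2(4e\ell n/v)$ for $n\ge 2v$, the value $f(8v\log_2\ell)=v\bigl(7\log_2\ell-\log_2(32e)-\log_2\log_2\ell\bigr)\ge v\bigl(7-\log_2(32e)\bigr)>0$, and the observation that $n\ge 8v\log_2\ell\ge v$ guarantees the first part of the lemma applies. So there is no gap relative to what the paper itself relies on; the only caveat is that your Step 2 constant-chasing is a sketch, not a proof, but you flag this explicitly and invoke the same reference the paper does.
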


\begin{proof}
    We first show that for $\users \leq \repdim$ tasks $\VC\left(\perclass_{\repdim}^{\otimes \users}\circ \repclass_{\datadim,\repdim}\right )  = \datadim \users +\users$. To lower-bound the VC dimension, we will show that there exists a dataset of size $\datadim\users+\users$ which can be shattered by $\perclass_{\repdim}^{\otimes \users}\circ \repclass_{\datadim,\repdim}$. We know that the VC dimension of the class of $\datadim$-dimensional thresholds $\perclass_{\datadim}$ is $\datadim+1$, which means that there exists a dataset $(\mathbf{\feat}_1, \ldots, \mathbf{\feat}_{\datadim+1})$ which can be shattered by $\perclass_{\datadim}$. 
    Consider the dataset $\cup_{j \in [\users]} \{(j,\mathbf{\feat}_1), \ldots, (j, \mathbf{\feat}_{\datadim+1})\}$. This dataset can be shattered by $\perclass_{\repdim}^{\otimes \users}\circ \repclass_{\datadim,\repdim}$. To see this, fix a labeling $\mathbf{\lab} \in \bits^{\users \times (\datadim+1)}$, where $\lab_{j,i}$ is the label of datapoint $(j, \mathbf{\feat}_{i})$. For a $j \in [\users]$ we know that there exist $\mathbf{b}_j \in \R^{\datadim}$ and $w_j \in \R$ such that for all $i \in [\datadim+1]$ we have $\textrm{sign}(\mathbf{b}_j  \mathbf{\feat}_{i}-w_j) = \lab_{j,i}$. Since $ \users \leq \repdim$, we set $\mathbf{B}$ to be the matrix with rows $b_j^T$ for the first $\users$ rows and all zeros everywhere else and $\mathbf{a}_j \in \R^{\repdim}$ to be the one-hot encoding of $j$. We pick $\rep(\mathbf{\feat}) = \mathbf{B}\mathbf{\feat}$ and $\per_j(\mathbf{\map}) = \textrm{sign}(\mathbf{a}_j\mathbf{\map}-w_j)$. Therefore, we have that for all $j \in [\users]$ and $i \in [\datadim+1]$, $\lab_{j,i} = \per_j(h(\mathbf{\feat}_i))$. As a result, 
    \[\VC(\perclass_{\repdim}^{\otimes \users}\circ \repclass_{\datadim,\repdim})\geq \datadim\users+\users. \]

    For the upper bound Lemma~\ref{lem:vc-bounds} says that $\VC(\perclass_{\repdim}^{\otimes \users}\circ \repclass_{d,k})\leq \users\VC(\perclass_{\repdim}\circ \repclass_{\datadim,\repdim})$. Furthermore, since the composition of linear functions is linear the class of composite functions $\perclass_{\repdim}\circ \repclass_{\datadim,\repdim}$ and $\perclass_{\datadim}$ are the same. Hence,
    \[
    \VC(\perclass_{\repdim}^{\otimes \users}\circ \repclass_{\datadim,\repdim})\leq \users\VC(\perclass_{\datadim}) = \datadim\users+\users.
    \] 
    
    We now look at the case where we have more than $\repdim$ tasks. For the upper bound we rewrite functions $\conc \in \perclass_{\repdim}^{\otimes \users}\circ \repclass_{\datadim,\repdim}$ as $\conc(j,\mathbf{\feat}) = \textrm{sign}(\mathbf{a}_j\mathbf{B}\mathbf{\feat}-w_j)$.
    Observe that this is equivalent to
    \begin{align*}
        g(j, \mathbf{\feat}) = \textrm{sign}(\mathbf{e_j}^T \mathbf{A} \mathbf{B} \mathbf{\feat} - \mathbf{e_j}^T\mathbf{w}),
    \end{align*}
    where $\mathbf{e}_j \in \{0,1\}^{\users}$ is the one-hot encoding of $j$ and 
    \[
    \mathbf{A} = \begin{pmatrix}
        \mathbf{a}_1^T \\
        \vdots \\
        \mathbf{a}_{\users}^T
    \end{pmatrix} \textrm{ and } \mathbf{w} = \begin{pmatrix}
        w_1 \\
        \vdots \\
        w_{\users}
    \end{pmatrix}.
    \]
    Every combination of $\mathbf{A} \in \R^{\users\times \repdim}, \mathbf{B} \in R^{\repdim \times \datadim}$ and $\mathbf{w} \in \R^{\users}$ gives us a specific labeling function $\conc$. 
    Let $\samples \geq 8(\users \repdim + \repdim \datadim+\users)$. 
    Take a dataset $((j_1,\mathbf{\feat}_1),\ldots,(j_n, \mathbf{\feat}_{\samples})) \in ([\users] \times \R^{\datadim})^{\samples}$. 
    We will show that $\perclass_{\repdim}^{\otimes \users}\circ \repclass_{\datadim,\repdim}$ does not shatter this data set.
    For each $i\in[\samples]$, we define a polynomial $p_i(\mathbf{A},\mathbf{B}, \mathbf{w}) = \mathbf{e}_{j_i}^T \mathbf{A} \mathbf{B}\mathbf{\feat}_i - \mathbf{e}_{j_i}^T\mathbf{w}$.
    Each of these is a degree-2 polynomial in $\users \repdim + \repdim \datadim+\users$ variables. By construction, $g(j_i,\mathbf{x}_i) =\sgn(p_i(\mathbf{A},\mathbf{B},\mathbf{w}))$. By Lemma~\ref{fct:vc-polynomials}, $\VC\left(\perclass_{\repdim}^{\otimes \users}\circ \repclass_{\datadim,\repdim}\right ) \leq 8 (\datadim \repdim +\repdim\users + \users)$.
    
    By Lemma~\ref{lem:vc-bounds} we have that $\VC(\perclass_{\repdim}^{\otimes \users}\circ \repclass_{\datadim,\repdim}) \geq \repdim \users + \users$. Additionally, for $\users > \repdim$ any dataset that is shattered by $\perclass_\repdim^{\otimes \repdim}\circ \repclass_{\datadim,\repdim}$ can be shattered by $\perclass_\repdim^{\otimes \users}\circ \repclass_{\datadim,\repdim}$. As a result, $\VC(\perclass_\repdim^{\otimes \users}\circ \repclass_{\datadim,\repdim}) \geq \VC(\perclass_\repdim^{\otimes \repdim}\circ \repclass_{\datadim,\repdim})$. We proved above that $\VC(\perclass_{\repdim}^{\otimes \repdim}\circ \repclass_{\datadim,\repdim}) = \datadim \repdim + \repdim$.  Therefore, 
    \[
        \VC(\perclass_{\repdim}^{\otimes \users}\circ \repclass_{\datadim,\repdim}) \geq \Omega (\datadim \repdim + \repdim \users). 
    \]
\end{proof}

\subsection{Bounding the VC Dimension of Boolean Functions of Polynomials} 
Here we restate and prove Lemma~\ref{lem:vc-bool-pol}.
\begin{customlem}{\ref*{lem:vc-bool-pol}}[Restated]
 \label{lem:vc-bool-pol-app}
\lemVCBoolPol
 \end{customlem}

\begin{proof}
    We bound the growth function of $\class$.  
    Fix $n$ datapoints $x_1, \ldots, x_n \in \featdom$. By the definition of hypothesis class $\class$, the sequence $c_{\mathbf{v}}(x_1),$ $ \ldots, c_{\mathbf{v}}(x_n)$ is exactly the sequence
    $$
        g\bigl(\sign (p_{x_1}^{(1)}(\mathbf{v})), \ldots, \sign (p_{x_1}^{(w)}(\mathbf{v}))\bigr),\ldots, g\bigl(\sign(p_{x_n}^{(1)}(\mathbf{v})), \ldots, \sign (p_{x_n}^{(w)}(\mathbf{v}))\bigr)
    $$ 
    By Lemma~\ref{fct:vc-polynomials}, we know that the number of distinct sequences 
    $$
        \sign (p_{x_1}^{(1)}(\mathbf{v})), \ldots, \sign (p_{x_1}^{(w)}(\mathbf{v})),\ldots, \sign(p_{x_n}^{(1)}(\mathbf{v})), \ldots, \sign (p_{x_n}^{(w)}(\mathbf{v}))
    $$ 
    for all $\mathbf{v} \in \R^d$ does not exceed  $(4e\ell w n/d)^d$. The number of distinct outputs of a function is upper bounded by the number of distinct inputs it gets. As we saw above, the number of distinct inputs is at most $(4e\ell w n/d)^d$ and, thus, $|\{(c_{\mathbf{v}}(x_1), \ldots, c_{\mathbf{v}}(x_n))\mid \mathbf{v} \in \R^d\}| \leq (4e\ell wn/d)^d$. Since this holds for fixed $x_1, \ldots, x_n \in \featdom$, the growth function of $\class$ is at most $(4e\ell wn/d)^d$.

    To bound the VC dimension of $\class$ it suffices to show that for $n > 8d\log_2(\ell w)$ points, the growth function $\growth_{\class}(n)$ is less than $2^n$. If $n > 8d\log_2(\ell w)$, then $n > 8 d$ because $\ell \geq 2$ and $ w \geq 1$. We know that when $n/d > 8$, $\log_2(4e) < n/(2d)$ and $\log_2(n/d) < 3n/(8d)$. Therefore, we see that 
    \[
        \log_2\left(\frac{4e\ell w n}{d}\right) = \log_2(4e) + \log_2(\ell w) + \log_2\left(\frac{n}{d}\right) <  \frac{n}{2d}+\frac{n}{8d} + \frac{3n}{8d} = \frac{n}{d}.
    \]
   Combining the above steps, we have proven that for $n> 8\log_2(\ell w)$, $\growth_{\class}(n) \leq (4e\ell wn/d)^d < 2^n$.
\end{proof}

\subsection{VC Dimension of the Realizability Predicate Class}

In this section we provide the full proof of Theorem~\ref{thm:vc-real-lin}.

\begin{customthm}{\ref*{thm:vc-real-lin}}[Restated]
    \thmVCRealLin
    \label{thm:vc-real-lin-app}
\end{customthm}

 Given a dataset $D =\{(\mathbf{\map}_i,\lab_i)\}_{i \in [\samples]}$ of $\samples$ points in $\R^{\repdim}\times \bits$, we define $\mathbf{Z}$ as the matrix whose $i$-th row is the $\repdim+1$-dimensional vector $\mathbf{z}_i' = \lab_i(\mathbf{\map}_i \| 1)$, for $i \in [\samples]$. 
 For $I\subseteq [\samples]$ a set of indices, we define $\mathbf{Z}_I$ to be the matrix whose rows are the vectors $\mathbf{z}_i'$ for $i$ in $I$.

To prove Theorem~\ref{thm:vc-real-lin-app}, we establish a set of conditions on a data set which allow us to check for separability. 
These conditions can be expressed via a limited number of low-degree polynomials, which allows us to apply Warren's Theorem.

Our first lemma in this section equates separability of a data set with the existence of a special subset of points.
If these points are ``on the margin'' of a linear separator defined by vector $\mathbf{a}$, then we know that $\mathbf{a}$ is a strict separator.
 \begin{lem}
    Dataset $D=\{(\mathbf{\map}_i,\lab_i)\}_{i \in [\samples]}$ is strictly separable if and only if there exists a subset $I$ of linearly independent rows of $\mathbf{Z}$ such that for all $\mathbf{a} \in \R^{\repdim+1}$ if 
     $\mathbf{\map}_i' \cdot\mathbf{a} = 1$ for all $i \in I$, then $\mathbf{\map}_i' \cdot \mathbf{a} \geq 1$ for all $i \in [\samples]$.
    \label{lem:strict-sep}
\end{lem}

\begin{proof}
    By definition, $D$ is strictly separable iff there exists a linear separator $\mathbf{a} \in \R^{\repdim+1}$ such that for all $i \in [\samples]$ we have $\lab_i(\mathbf{\map}_i \| 1)\cdot \mathbf{a} = \mathbf{\map}_i' \cdot \mathbf{a} \geq 1$. We define $A = \{\mathbf{a}\mid \mathbf{\map}_i' \cdot \mathbf{a} \geq 1, \forall i \in [\samples]\}$ as the set of all linear separators of $D$ and, for any $\mathbf{a}$, $I_{\mathbf{a}}$ as the set of tight constraints for $\mathbf{a}$, i.e., $I_{\mathbf{a}} = \{i\mid \mathbf{\map}_i' \cdot \mathbf{a}= 1\}$. 
    Note that $A$ and $I_{\mathbf{a}}$ might be empty.

    $\Leftarrow)$ 
    Assume that $I\subseteq [\samples]$ defines a subset of linearly independent rows such that for all $\mathbf{a} \in \R^{\repdim+1}$ if $\mathbf{\map}_i' \cdot\mathbf{a} = 1$ for all $i\in I$, then $\mathbf{\map}_i' \cdot \mathbf{a} \geq 1$ for all $i \in [\samples]$. 
    We find a vector $\mathbf{a}$ that makes the constraints in $I$ tight, that is, solve the linear system $\mathbf{Z}_I a = \mathbf{1}$, where $\mathbf{1}$ is the $|I|$-dimensional all-ones vector.
    Since $\mathbf{Z}_I$ has linearly independent rows, this system has at least one solution 
    $\hat{\mathbf{a}} = \mathbf{Z}_I^+ \mathbf{1}$, where $\mathbf{Z}_I^+ = \mathbf{Z}_I^T(\mathbf{Z}_I\mathbf{Z}_I^T)^{-1}$ is the pseudoinverse.
    Thus, for all $i\in I$ we have $\mathbf{\map}_i'\cdot \hat{\mathbf{a}}=1$, which implies $\mathbf{\map}_i' \cdot \hat{\mathbf{a}} \geq 1$ for all $i \in [\samples]$ by our assumption on $I$.
    Therefore, the points in $D$ are strictly separable. 
    
    $\Rightarrow$)
    Assume that $D$ is strictly separable.
    Then, by Lemma~\ref{lem:full_rank_separator}, there exists a strict separator $\mathbf{a}$ such that $\mathrm{rank}(\mathbf{Z}_{I_{\mathbf{a}}})=\mathrm{rank}(\mathbf{Z})$ and $\mathbf{Z} \cdot \mathbf{a} \ge \mathbf{1}$, where the inequality holds entry-wise.
    If needed, we can remove indices from $I_{\mathbf{a}}$ to produce $I$, a subset with the same rank but linearly independent rows.
    
    Now suppose that $\mathbf{a}'$ satisfies $\mathbf{\map}_i' \cdot \mathbf{a}' = 1$ for all $i\in I$.
    In other words, $\mathbf{a}'$ satisfies $\mathbf{Z}_{I} \cdot \mathbf{a}'= \mathbf{1}$.
    This means we can write $\mathbf{a}'=\mathbf{a}+\mathbf{u}$, where $\mathbf{u}$ is in the right nullspace of $\mathbf{Z}_{I}$.
    Since $\mathbf{Z}$ and $\mathbf{Z}_{I}$ share a rowspace, they also share a right nullspace.
    Thus $\mathbf{Z} \cdot \mathbf{a}' = \mathbf{Z} \cdot \mathbf{a} \ge \mathbf{1}$, where the inequality holds entry-wise.
    This completes the proof.
\end{proof}

The proof of lemma~\ref{lem:strict-sep} uses the following lemma, which says that every strictly separable data set admits a separator whose set of tight constraints is full rank.
\begin{lem}\label{lem:full_rank_separator}
    For a data set $D=\{(\mathbf{\map}_i,y_i)\}_{i\in[\samples]}$, let $\mathbf{Z}$ be its associated matrix and, for a vector $\mathbf{a}$, let $I_{\mathbf{a}}\subseteq [\samples]$ be the set of tight constraints (i.e., the largest set $I$ such that $\mathbf{Z}_I \cdot \mathbf{a} = \mathbf{1}$).
    If $D$ is strictly separable, then there exists a vector $\mathbf{a}^*$ such that $\mathbf{\map}_i\cdot \mathbf{a}^*\ge 1$ for all $i\in [\samples]$ and $\mathrm{rank}(\mathbf{Z})=\mathrm{rank}(\mathbf{Z}_{I_{\mathbf{a}^*}})$.
\end{lem}
\begin{proof}
    \newcommand{\iao}{I_{\mathbf{a}_0}}
    \newcommand{\ziao}{\mathbf{Z}_{\iao}}

    $D$ is strictly separable, so by definition there exists a vector $\mathbf{a}_0$ such that $\mathbf{\map}_i \cdot \mathbf{a}_0 \ge 1$ for all $i\in [\samples]$.
    Suppose $\mathrm{rank}(\mathbf{Z}) > \mathrm{rank}(\ziao)$, since otherwise we are done.
     We will construct another separator $\mathbf{a}_1$ that satisfies $\mathrm{rank}(\ziao) < \mathrm{rank}(\mathbf{Z}_{I_{\mathbf{a}_1}})$.
     Since $\mathrm{rank}(\mathbf{Z})$ is finite, repeating this process will yield a separator $\mathbf{a}^*$ with $\mathrm{rank}(\mathbf{Z}_{I_{\mathbf{a}^*}}) = \mathrm{rank}(\mathbf{Z})$.
     
    If $\mathbf{a}$ is a solution to $\ziao \cdot \mathbf{a} = \mathbf{1}$, we can write it as $\mathbf{a}'=\mathbf{a}_0 + \mathbf{u}$, where $\mathbf{u}$ is in the right nullspace of $\ziao$.
    Since the rank of $\ziao$ is strictly less than the rank of $\mathbf{Z}$, there exists a vector $\mathbf{v}$ that lies in the right nullspace of $\ziao$ but \emph{not} in the right nullspace of $\mathbf{Z}$.
    Our separator $\mathbf{a}_1$ will be of the form $\mathbf{a}_0 + c\cdot \mathbf{v}$ for some real value $c$.
    Note that halfspaces of this form keep the constraints in $\iao$ tight: by construction we have $\mathbf{\map}_i' \cdot (\mathbf{a}_0 + c\cdot \mathbf{v}) = 1$ for all $i\in \iao$.

    Let $I^\perp\subseteq \bar{I}_{\mathbf{a}_0}$ be the subset of rows which are not in the right rowspace of $\ziao$.
    This set is nonempty, since $\mathrm{rank}(\ziao)<\mathrm{rank}(\mathbf{Z})$.
    For all $i\in I^\perp$, let $m_i(c) = \langle\mathbf{\map}_i',\mathbf{a}_0\rangle + c\cdot \langle \mathbf{\map}_i', \mathbf{u}\rangle$.
    Each of these is a linear function in $c$ and, by the definition of $\iao$ and the fact that $\mathbf{a}_0$ is a linear separator, we see that $m_i(0)>1$ for all $i\in I^\perp$.

     For each $i\in I^\perp$, there exists an interval $[L_i,R_i]$ such that, if $c\in [L_i,R_i]$, then $m_i(c)\ge 1$, i.e., point $i$ lies on the correct side of $\mathbf{a}_0 + c\cdot \mathbf{v}$.
    Because $m_i(0)>1$ for all $i\notin\iao$, we see that $L_i < 0 < R_i$.
     The intersection of these intervals, $[\max L_i, \min R_i]$, is nonempty.
     For any $c$ in the intersection, $\mathbf{a}_0 + c\cdot \mathbf{v}$ is a strict separator.
     (This holds for $i\in I^\perp$ by construction; if $i\notin I^\perp$ then $i$ is in the rowspace of $\ziao$ and we have $\mathbf{\map}_i' \cdot (\mathbf{a}_0 + c\cdot \mathbf{v}) = \mathbf{\map}_i'\cdot \mathbf{a}_0\ge 1$, as $\mathbf{v}$ is in the nullspace of $\ziao$.)

    It remains to select $c$ so that at least one additional constraint is tight. 
    This is easy: if $\max L_i$ is finite then we have $m_i (\max L_i)=1$.
    The same holds for $\min R_i$.
    Since we know at least one of $\max L_i$ and $\min R_i$ are finite, we can select a finite one as our value of $c^*$.

    We have constructed a vector $\mathbf{a}_1 = \mathbf{a}_0 + c^*\cdot \mathbf{v}$ that strictly separates $D$ and makes constraint $i^*$ tight for some $i^*\in I^\perp$, where $I^\perp$ is the set of constraints not in the rowspace of $\ziao$.
    Thus, $\mathrm{rank}(\ziao)<\mathrm{rank}(\mathbf{Z}_{I_{\mathbf{a}_1}})$.
    This concludes the proof.
\end{proof}

A simple corollary to Lemma~\ref{lem:strict-sep} says that, on separable data sets, the special subset of points allows us to identify a specific separator.
\begin{cor}
   Dataset $D=\{(\mathbf{\map}_i,\lab_i)\}_{i \in [\samples]}$ is strictly separable if and only if there exists a subset of points $I$ for which
\begin{enumerate}
    \item $\mathbf{Z}_I$ is full rank, and
    \item for all $i \in[\samples]$, we have that $\mathbf{\map}_i' \cdot \hat{\mathbf{a}} \geq 1$, where $\hat{\mathbf{a}} = \mathbf{Z}_I^{+}\mathbf{1}_{|I|}$.
\end{enumerate}
\label{cor:sep-cond}
\end{cor}
\begin{proof}
    $\Rightarrow$) 
    Assume $D$ is strictly separable.
    By Lemma~\ref{lem:strict-sep} we know that there exists a subset $I$ of linearly independent rows such that for all $\mathbf{a} \in \R^{\repdim+1}$ if  $\mathbf{\map}_i' \cdot\mathbf{a} = 1$ for all $i \in I$ then  $\mathbf{\map}_i' \cdot \mathbf{a} \geq 1$ for all $i \in [\samples]$. Since the rows in subset $I$ are linearly independent, $Z_I$ is full rank. 
    By construction, $\hat{\mathbf{a}}$ satisfies $Z_I \cdot \hat{\mathbf{a}} = \mathbf{1}_{|I|}$. 
    Thus $\mathbf{z}_i' \cdot \hat{\mathbf{a}}\geq 1$ for all $i\in[\samples]$.
    
    $\Leftarrow$) 
    Assume that such a subset $I$ exists.
    We see that $\hat{\mathbf{a}}$ strictly separates $D$.
\end{proof}

We now show how to express this characterization of separability in the language of polynomials. 
With this lemma in hand, the proof of Theorem~\ref{thm:vc-real-lin-app} will be a direct application of Lemma~\ref{lem:vc-bool-pol-app}, our extension of Warren's Theorem. 
\begin{lem}
   Let $w=(\samples+1)\cdot 2^{\samples}$.
   For a data set $D=\{(\feat_i,\lab_i)\}_{i\in[\samples]}$, there exists a Boolean function $g:\{\pm 1\}^w\to\{\pm 1\}$ and a list of polynomials $p_D^1(h),\ldots, p_D^w(h)$, each of degree at most $4(k+1)$, such that we can express the $(\samples, \perclass_{\repdim})$-realizability predicate $r_h$ as
   \begin{align*}
       r_h(D) = g\left( \sign\left(p_D^{(1)}(h)\right),\ldots, \sign\left(p_D^{(w)}(h)\right) \right).
   \end{align*}
   \label{lem:realp-pol}
\end{lem}
\begin{proof}
    A representation $\rep$ induces a labeled dataset in the representation space $D_{\rep} = \{(\rep(\feat_i),\lab_i)\}_{i \in [\samples]}$.
    By Corollary~\ref{cor:sep-cond}, we can check whether $D_{\rep}$ is linearly separable by checking whether any of the $\sum_{i=1}^{\repdim+1}\binom{\samples}{i} \leq 2^{\samples}$ subsets of $D_{\rep}$ of size at most $\repdim+1$ satisfies the two conditions of the corollary. 
    In the remainder of the proof, we fix a subset $I$ and construct a Boolean function $g_I$ over signs of polynomials that checks if $I$ satisfies these conditions.
    This function will use at most $\samples+1$ polynomials, each of degree at most $4(k+1)$.
    The proof is finished by taking $g$ to be the OR of these functions for each subset $I$.

     We write $D_{\rep}$ as a matrix $\mathbf{Z}$ whose $i$-th row is the $\repdim+1$ dimensional vector $\mathbf{z}_i' = \lab_i (h(\feat_i)\|1)$. 
     We construct a polynomial $p_I^{(0)}(h)$ that is \emph{negative} iff $Z_I$ is full rank (so that $\sign~p_I^{(0)}=+1$ indicates rank deficiency).
     For each $i\in[\samples]$, we construct a polynomial $p_I^{(i)}(h)$ that, when $Z_I$ is full rank, is nonnegative iff $\map_i'\cdot\hat{\mathbf{a}} \ge 1$,
     where $\hat{\mathbf{a}} = \mathbf{Z}_I^{+}\mathbf{1}_{|I|}$.
     We then take $g_I$ to be
     \begin{align*}
         g_I(h) := \left(\lnot~\sign~p_I^{(0)}(h)\right) \wedge \left(\bigwedge_{i\in[\samples]} \sign ~ p_I^{(i)}(h)\right).
     \end{align*}
    Recall that we interpret $+1$ as logical ``true.''     

    Matrix $\mathbf{Z}_I$ is full rank if and only if $\mathrm{det}(\mathbf{Z}_I\mathbf{Z}_I^T) \neq 0$. Therefore, we set $p_I^{(0)}(h) = -\mathrm{det}(\mathbf{Z}_I\mathbf{Z}_I^T)^2$, which is a polynomial in $\repdim \times d$ variables of degree $4(\repdim+1)$. 
    To see this, observe that each entry in $\mathbf{Z}_I \mathbf{Z}_I^T$ is a polynomial of degree $2$ in the variables of $\rep$. Then, $\mathrm{det}(\mathbf{Z}_I \mathbf{Z}_I^T)$ is a polynomial of degree $2I$. Finally, $\mathrm{det}(\mathbf{Z}_I\mathbf{Z}_I^T)^2$ is a polynomial of degree $4I \leq 4(\repdim+1)$.
    Taking the negation gives us the desired polynomial $p_I^{(0)}$.

    We now check that $\map_i'\cdot \hat{\mathbf{a}} -1 \ge 0$ for a given $i$.
    Let $\Delta = \mathrm{det}(\mathbf{Z}_I\mathbf{Z}_I^T)$, a polynomial of degree $2(\repdim+1)$. 
    Then $\hat{\mathbf{a}} =  \mathbf{Z}_I^T(\mathbf{Z}_I\mathbf{Z}_I^T)^{-1}\mathbf{1}_{|I|} = \frac{\mathbf{Z}_I^T \mathrm{adj}(\mathbf{Z}_I\mathbf{Z}_I^T)\mathbf{1}_{|I|}}{\Delta}$, where $\mathrm{adj}$ denotes the adjugate matrix.
    We have $\map_i'\cdot \hat{\mathbf{a}} -1 \ge 0$ when $\Delta\neq 0$ and $\map_i'\cdot \Delta\hat{\mathbf{a}} - \Delta\ge 0$.
    Each entry in $\Delta \hat{\mathbf{a}}$ is a polynomial of degree $2|I|-1\le 2\repdim+1$, so setting $p_I^{(i)}(h)= \map_i'\cdot \Delta\hat{\mathbf{a}} - \Delta$ we have a polynomial of degree at most $2(\repdim+1)$ that is, when $\mathbf{Z}_I$ is full rank, is nonnegative iff the constraint is satisfied.
    %
\end{proof}

\begin{proof}[Proof of Theorem~\ref{thm:vc-real-lin-app}]
    By Lemma~\ref{lem:realp-pol}, we have that the $(\samples, \perclass_{\repdim})$-realizability predicate is a Boolean function of $(\samples+1)\cdot 2^{\samples}$ signs of polynomials in $d\repdim$ variables of degree $4(\repdim+1)$. Lemma~\ref{lem:vc-bool-pol-app} gives us that the VC dimension of the class of realizability predicates $\realpclass_{\samples,\perclass_{\repdim},\repclass_{\datadim, \repdim}}$ is upper-bounded by $8d\repdim \log_2(4(\repdim+1) \samples 2^{\samples}) = O(d\repdim \samples+dk\log(\repdim \samples))$.
\end{proof}

\subsection{Pseudodimension of the Empirical Error Function Class}

We can now also bound the pseudodimension (Definition~\ref{def:pseudodimension}) of the class of empirical error functions (Definition~\ref{def:empirical_error_function}).

\begin{customthm}{\ref*{thm:pdim-acc}}[Restated]
    \thmpdimacc
  \label{thm:pdim-acc-app}
\end{customthm}

Our first step toward proving these results is the following lemma, which follows almost immediately from Corollary~\ref{cor:sep-cond}.
The proof uses the fact that a data set which can be classified with accuracy at least $\alpha$ can be classified perfectly if we change $1-\alpha$ labels. 
 \begin{lem}
    Dataset $D=\{(\mathbf{\map}_i,\lab_i)\}_{i \in [n]}$ can be linearly separated with accuracy $\alpha$ if and only if there exists a subset of points $I$ and a vector $\pmb{\sigma} \in \{\pm 1\}^{n}$ with $\sum_{i=1}^n \frac{\ind\{\sigma_i = +1\}}{n} = \alpha$ such that 
    \begin{enumerate}
        \item $\mathbf{Z}_I$ is full rank
        \item for all $i \in [n]$, we have that $(\mathbf{z}_i'\cdot \hat{\mathbf{a}}_{\pmb{\sigma}})\sigma_i\geq 1$, where $\hat{\mathbf{a}}_{\pmb{\sigma}} = Z_I^+\pmb{\sigma}_I$.
    \end{enumerate}
    \label{lem:acc-a}
\end{lem}
\begin{proof}
  By definition, dataset $D$ can be linearly separated with accuracy $\alpha$ if and only if there exists a vector $\pmb{\sigma} \in \{\pm 1\}^{n}$ with $\sum_{i=1}^n \frac{\ind\{\sigma_i = 1\}}{n} = \alpha$ such that dataset $D_{\sigma} = \{(\mathbf{\map}_i,\sigma_i\lab_i)\}_{i \in [n]}$ is strictly separable. Now, we can apply Corollary~\ref{cor:sep-cond} to dataset $D_{\sigma}$. Let $\hat{\mathbf{Z}}$ be the matrix whose $i$-th row is the vector $\hat{\mathbf{z}}'_i = \sigma_i y_i(\mathbf{z}_i\|1)$. 
  The rank of $\hat{\mathbf{Z}}_I$ remains the same as the rank of $\mathbf{Z}_I$ since every row is a scalar multiple of the corresponding row in $Z_I$. 
  Additionally, observe that $Z_I^+\pmb{\sigma}_I = \hat{\mathbf{Z}}_I^+ \mathbf{1}_{|I|}$.
\end{proof}

The next lemma shows how to express statements about the empirical error function in the language of low-degree polynomials.
\begin{lem}
   Given a dataset $D = \{(\feat_i,\lab_i)\}_{i \in [n]}$ and an $\alpha \in [0,1]$, the predicate $\ind_{\pm}\{q_h(x_1,y_1,\ldots, x_n,y_n) =\alpha\}$ is a Boolean function of $(n+1) \cdot 2^{2n}$ signs of polynomials in the variables of $\rep$ of degree $4(\repdim+1)$.
   \label{lem:empacc-pol}
\end{lem}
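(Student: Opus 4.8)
The plan is to express the predicate $\ind_{\pm}\{\erind_\rep(D)=\alpha\}$ (recall $\erind_\rep$ denotes the empirical error function $q_\rep$) as a Boolean combination of realizability predicates evaluated on sign-flipped copies of $D$, and then to invoke \Cref{lem:realp-pol} to rewrite each of those realizability predicates as a Boolean function of signs of degree-$4(\repdim+1)$ polynomials in the $d\repdim$ entries of the representation matrix $\mathbf{B}$. For a sign vector $\pmb\sigma\in\{\pm1\}^n$, write $D_{\pmb\sigma}$ for the dataset $\{(\feat_i,\sigma_i\lab_i)\}_{i\in[n]}$ and $\realp_\rep(D_{\pmb\sigma})$ for $\realp_\rep(\feat_1,\sigma_1\lab_1,\dots,\feat_n,\sigma_n\lab_n)$.

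The reduction goes through \Cref{lem:acc-a}: for a fixed fraction $\gamma$, the mapped dataset $\{(\rep(\feat_i),\lab_i)\}_{i\in[n]}$ can be linearly classified with accuracy exactly $\gamma$ if and only if there is some $\pmb\sigma$ with exactly $\gamma n$ ones such that $D_{\pmb\sigma}$ is realizable by $\perclass_\repdim\circ\rep$, i.e.\ $\realp_\rep(D_{\pmb\sigma})=+1$. Hence ``classifiable with accuracy $\gamma$'' is the disjunction $\bigvee_{\pmb\sigma:\,|\{i:\sigma_i=+1\}|=\gamma n}\realp_\rep(D_{\pmb\sigma})$. Since $\erind_\rep(D)$ is the minimum empirical error, $\erind_\rep(D)=\alpha$ exactly when the best achievable accuracy on the mapped dataset equals $1-\alpha$: that is, the mapped dataset is classifiable with accuracy $1-\alpha$ and is not classifiable with any accuracy in $\{1-\alpha+1/n,\dots,1\}$. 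Combining these, $\ind_{\pm}\{\erind_\rep(D)=\alpha\}$ is a Boolean function (whose structure depends only on $n$ and $\alpha$) of the family $\{\realp_\rep(D_{\pmb\sigma})\}_{\pmb\sigma}$, where $\pmb\sigma$ ranges over the at most $2^n$ vectors with at least $(1-\alpha)n$ ones.

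Finally I would apply \Cref{lem:realp-pol} to each $\realp_\rep(D_{\pmb\sigma})$: it is a Boolean function of $(n+1)\cdot 2^n$ signs of polynomials in the variables of $\rep$, each of degree at most $4(\repdim+1)$ (the coefficients of these polynomials now depend on $D$ and $\pmb\sigma$, but their degree in $\rep$ is unaffected). A Boolean function of Boolean functions of signs of polynomials is again a Boolean function of signs of those polynomials, so $\ind_{\pm}\{\erind_\rep(D)=\alpha\}$ is such a Boolean function over the union of these families; counting at most $2^n$ sign vectors $\pmb\sigma$, each contributing $(n+1)\cdot 2^n$ polynomials, gives at most $(n+1)\cdot 2^{2n}$ signs of polynomials of degree at most $4(\repdim+1)$, as claimed. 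I do not anticipate a genuine obstacle: the lemma is bookkeeping layered on \Cref{lem:acc-a} and \Cref{lem:realp-pol}. The only point needing care is the bookkeeping around ``accuracy $1-\alpha$'' versus ``error $\alpha$'' and the fact that pinning $\erind_\rep(D)=\alpha$ exactly (rather than merely $\le\alpha$) requires the extra conjunction ruling out strictly smaller error; because the union of all relevant sign vectors is still of size at most $2^n$, this extra clause does not change the stated count.
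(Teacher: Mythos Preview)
Your proposal is correct and follows essentially the same approach as the paper: reduce via \Cref{lem:acc-a} to realizability of sign-flipped copies $D_{\pmb\sigma}$, apply \Cref{lem:realp-pol} to each, and count $2^n$ sign vectors times $(n+1)\cdot 2^n$ polynomials. In fact you are more careful than the paper on one point: the paper's proof simply asserts that $\ind_{\pm}\{q_h(D)=\alpha\}$ equals the disjunction over $\pmb\sigma$ with the appropriate fraction of ones, without explicitly ruling out strictly smaller error as you do; your extra conjunction is the right way to make ``$=\alpha$'' (rather than ``$\le\alpha$'') precise, and as you observe it does not change the polynomial count.
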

\begin{proof}
By Lemma~\ref{lem:acc-a} we have seen that $\ind_{\pm}\{q_h(x_1,y_1,\ldots, x_n,y_n)\ktext{=}
\alpha\} = \ind_{\pm}\{\exists \pmb{\sigma} \in \{\pm 1\}^{n}$ with $\sum_{i=1}^n \ind\{\sigma_i = +1\} = \alpha n \textrm{ s.t. } r_h(\feat_1,\sigma_1 y_1, \ldots, \feat_n, \sigma_n y_n) = +1\}$. Therefore, by Lemma~\ref{lem:realp-pol} for every value of $\pmb{\sigma}$ that has $\alpha$ fraction of ones we can check a Boolean function of $(n+1)\cdot 2^n$ signs of polynomials in the variables of $h$ of degree $4(k+1)$. 
There are no more than $2^n$ such vectors $\mathbf{\sigma}$.
In total, we need to evaluate $(n+1)\cdot 2^{2n}$ signs of polynomials.
\end{proof}

We are now ready to prove the final result in this section.
\begin{proof}[Proof of Theorem~\ref{thm:pdim-acc-app}]

    A well-known fact about pseudodimension (see, e.g.,~\cite{AnthonyB1999}) is that it equals the VC dimension of the class of subgraphs. 
    As a function $q_h\in Q_{n,\perclass,\repclass}$ maps data sets $D=\{(x_i,y_i)\}_{i\in[n]}$ to the interval $[0,1]$ (corresponding to error), the object $(D,\tau)$ lies in the subgraph of $q_h$ if $q_h(D) \ge \tau$.
    In our case,
    \begin{align}
        \mathrm{PDim}(Q_{n,\perclass,\repclass}) = \mathrm{VC}\left( \{ \ind_{\pm}\{q_h(D)\ge \tau\} \mid q_h \in Q_{n,\perclass,\repclass}\} \right).
    \end{align}
    To use the tools we have previously established, we will show that, for a fixed $(D,\tau)$, we can write the indicator $\ind_{\pm}\{q_h(D) \ge \tau\}$ as a Boolean function of signs of polynomials.

    But this is easy: the indicator is +1 exactly when there exists an $\alpha\ge \tau$ such that $q_h(D) = \alpha$, which we analyzed in Lemma~\ref{lem:empacc-pol}.
    We take an OR over the $\samples+1$ possible values of $\alpha \in \{0,1/\samples ,\ldots, 1\}$:
    \begin{align*}
        \ind_{\pm}\{q_h(D)\ge \tau\} 
            = \bigwedge_{\alpha}\bigl( \ind_{\pm}\{q_h(D) = \alpha~\text{and}~\alpha\ge \tau\}\bigr).
    \end{align*}
    (Recall that we interpret $+1$ as logical ``true.'')
    When $\alpha <\tau$, we can represent this as $\sign(-1)$, a degree-0 polynomial.
    When $\alpha\ge \tau$, we apply Lemma~\ref{lem:empacc-pol} to see that each term can be written as a Boolean function of $(n+1)\cdot 2^{2n}$ signs of polynomials in the variables of $h$ of degree $4(k+1)$.
    Together, we see that $\ind_{\pm}\{q_h(D)\ge\tau\}$ can be written as a Boolean function of at most $(n+1)^2 2^{2n}$ signs of polynomials, each of (at most) the same degree.
    By Lemma~\ref{lem:vc-bool-pol}, the VC dimension (and thus the pseudodimension of $Q_{n,\perclass,\repclass}$) is at most $8 dk \log_2 ( 4(k+1) \cdot (n+1)^2 2^{2n}) =O(dkn + dk\log (kn))$.
\end{proof}

\fi
\section{Bounds for Finite Specialized Classifiers and Representations} \label{sec:finite_classes}
In this section, we study multitask learning and metalearning for a finite class of representations $\repclass$ and a finite class of specialized classifiers $\perclass$. Specifically, we use our results from \Cref{sec:mtl} and \Cref{sec:meta} to provide bounds for the number of tasks and the number of samples per task needed to multitask learn and metalearn $(\repclass, \perclass)$. 
Recall that the VC dimension of a finite class $\cG$ is at most $\log_2|\cG|$.

By Theorem~\ref{thm:mtl-smpls}, we can $(\eps, \delta)$-multitask learn $(\repclass, \perclass)$ when the number of tasks $\users$ and the number of samples per task $\samples$ satisfy $\samples \users= O(\frac{\VC(\perclass^{\otimes \users}\circ \repclass)\cdot \ln(1/\eps) +\ln(1/\delta)}{\eps})$ in the realizable case and $\samples \users= O(\frac{\VC(\perclass^{\otimes \users}\circ \repclass)\cdot\ln(1/\delta)}{\eps^2})$ in the agnostic case.
For finite classes, we can bound the VC dimension of $\perclass^{\otimes \users}\circ \repclass$ directly by counting:
\begin{equation*}
    \VC(\perclass ^{\otimes \users}\circ \repclass) \leq \log_2|\repclass| + \users \log_2|\perclass|.
\end{equation*}
(For specific classes, the VC dimension can be significantly smaller.)


To apply our general metalearning theorems, we need to bound the VC dimension of the realizability predicate class $\realpclass_{\samples, \perclass, \repclass}$, the pseudodimension of the empirical error function class $\erindclass_{\samples, \perclass, \repclass}$, and the non-realizability certificate complexity $\NRC(\perclass)$. 
All of these bounds are direct since the underlying classes are finite.

Corollaries~\ref{cor:met-fin-real1} and \ref{cor:met-fin-agn} present our results for metalearning in the realizable and agnostic settings, respectively.
Note that for finite classes of specialized classifiers $\perclass$, the best possible bound on $\NRC(\perclass)$ that depends only on $|\perclass|$ is simply $\NRC(\perclass)\le |\perclass|$. 
(Lemma~\ref{lem:nrcc-to-vc} shows that this is tight, but for many classes we have $\NRC(\perclass)\ll |\perclass|$.)
Since the number of tasks depends exponentially on  $\NRC(\perclass)$, the first result in Corollary~\ref{cor:met-fin-real1} is only useful for $\eps$ small enough so that $|\perclass|$ is smaller than $O(\frac{\log_2|\perclass|\ln(1/\eps)}{\eps})$. 

\begin{cor}
    \label{cor:met-fin-real1}
    If $\repclass, \perclass$ are finite, we can $(\varepsilon, \delta)$-metalearn $(\repclass, \perclass)$ in the realizable case with $\users$ tasks and $\samples$ samples per task when 
    
     $$\users  
    =\left(\log_2 |\repclass| +\ln\left(1/\delta\right)\right) \cdot O\left(\frac{\ln(1/\eps)}{\eps}\right)^{|\perclass|} \quad\text{and}\quad \samples = |\perclass|
    $$
    \begin{center}
        or
    \end{center}
    $$\users = O\left(\frac{ \log_2|\repclass| \cdot \ln(1/\eps)+\ln(1/\delta)}{\eps}\right)\quad\text{and}\quad \samples = O\left(\frac{\log_2|\perclass| \cdot \ln(1/\eps)}{\eps}\right) .$$
\end{cor}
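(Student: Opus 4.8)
\textbf{Proof proposal for Corollary~\ref{cor:met-fin-real1}.}

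The plan is to apply Theorem~\ref{thm:met-samples} and Theorem~\ref{thm:met-real-samples} directly, after first recording the trivial bounds on the relevant complexity measures for finite classes. First I would note the three elementary facts: (i) for a finite class $\cG$ of Boolean functions, $\VC(\cG)\le\log_2|\cG|$; (ii) the realizability predicate class $\realpclass_{\samples,\perclass,\repclass}$ has at most $|\repclass|$ members (one $\realp_\rep$ per $\rep\in\repclass$), hence $\VC(\realpclass_{\samples,\perclass,\repclass})\le\log_2|\repclass|$ for every $\samples$; and (iii) $\NRC(\perclass)\le|\perclass|$, since any nonrealizable set $S$ has, for each $\per\in\perclass$, at least one point of $S$ witnessing $\per$'s failure, and collecting one such witness per $\per$ gives a nonrealizable subset of size at most $|\perclass|$.

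For the first pair of bounds, set $\wit:=\NRC(\perclass)\le|\perclass|$ and apply Theorem~\ref{thm:met-samples}. That theorem gives $\samples=\wit$ samples per task and
\[
\users=\bparen{\VC(\realpclass_{\wit,\perclass,\repclass})+\ln(1/\delta)}\cdot\paren{\frac{O(\max(\VC(\perclass),\wit)\ln(1/\eps))}{\wit\,\eps}}^{\wit}\,.
\]
Substituting $\VC(\realpclass_{\wit,\perclass,\repclass})\le\log_2|\repclass|$, $\VC(\perclass)\le\log_2|\perclass|\le|\perclass|$, and $\wit\le|\perclass|$, and crudely bounding $\samples=\wit\le|\perclass|$ (we may always pad with extra samples, or note that the statement asks us to exhibit \emph{some} valid parameter pair), the second factor is at most $\paren{O(\ln(1/\eps)/\eps)}^{|\perclass|}$ since $\max(\VC(\perclass),\wit)/\wit$ is absorbed into the constant; this yields the stated bound $\users=\bparen{\log_2|\repclass|+\ln(1/\delta)}\cdot O(\ln(1/\eps)/\eps)^{|\perclass|}$ and $\samples=|\perclass|$. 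One small point to check is that $\max(\VC(\perclass),\wit)=O(\wit)$ need not hold, so I would instead keep $\max(\VC(\perclass),\wit)\le|\perclass|$ and observe that $(|\perclass|/\wit)^\wit\le$ a constant raised to the $|\perclass|$ power is false in general — so the honest move is to bound the whole second factor by $\paren{O(|\perclass|\ln(1/\eps)/(\wit\eps))}^\wit\le\paren{O(\ln(1/\eps)/\eps)}^{|\perclass|}$ using $x^{c/x}$ bounded and $\wit\le|\perclass|$; I expect this elementary inequality-juggling to be the only mildly fiddly step.

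For the second pair of bounds, apply Theorem~\ref{thm:met-real-samples}, which gives
\[
\users=O\!\paren{\frac{\VC(\realpclass_{\samples,\perclass,\repclass})\ln(1/\eps)+\ln(1/\delta)}{\eps}}\quad\text{and}\quad\samples=O\!\paren{\frac{\VC(\perclass)\ln(1/\eps)}{\eps}}\,.
\]
Substituting $\VC(\realpclass_{\samples,\perclass,\repclass})\le\log_2|\repclass|$ and $\VC(\perclass)\le\log_2|\perclass|$ gives exactly the second displayed claim, $\users=O((\log_2|\repclass|\cdot\ln(1/\eps)+\ln(1/\delta))/\eps)$ and $\samples=O(\log_2|\perclass|\cdot\ln(1/\eps)/\eps)$. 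The main obstacle, such as it is, is purely bookkeeping: making sure the $\max(\VC(\perclass),\wit)$ term and the $\wit$ in the denominator of Theorem~\ref{thm:met-samples} combine to give a clean $O(\ln(1/\eps)/\eps)^{|\perclass|}$ without hidden dependence on $|\perclass|$ in the base of the exponent — and noting explicitly that the first bound is only meaningful when $|\perclass|$ is small relative to $\log_2|\perclass|\cdot\ln(1/\eps)/\eps$, as the corollary's surrounding discussion already flags.
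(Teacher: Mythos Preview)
Your proposal is correct and follows essentially the same route as the paper: record that $\VC(\perclass)\le\log_2|\perclass|$, $\NRC(\perclass)\le|\perclass|$ (via the one-witness-per-$\per$ argument), and $\VC(\realpclass_{\samples,\perclass,\repclass})\le\log_2|\repclass|$, then plug into Theorems~\ref{thm:met-samples} and~\ref{thm:met-real-samples}. Your extra care with the $\max(\VC(\perclass),\wit)/\wit$ term is warranted and handled correctly---the inequality $(|\perclass|/\wit)^\wit\le e^{|\perclass|/e}$ absorbs cleanly into the $O(\cdot)^{|\perclass|}$---whereas the paper's proof simply says ``plugging these measures'' without spelling this out.
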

\begin{proof}
    For finite $\perclass$, we have $\VC(\perclass) \leq \log_2|\perclass|$ and $\NRC(\perclass) \leq |\perclass|$.
    The second inequality holds because, if a dataset is not realizable by $\perclass$, then for every $\per \in \perclass$ there exists a data point that is not labeled correctly by $\per$. 
    The union of these at most $|\perclass|$ points is also not realizable by $\perclass$. 
    For finite $\repclass$, the class of realizability predicates $\realpclass_{\samples, \perclass, \repclass}$ has at most one realizability predicate per representation. Thus, it has VC dimension $\VC(\realpclass_{\samples, \perclass,\repclass}) \leq \log_2|\repclass|$. 
    The corollary follows by plugging these measures into the results of Theorems \ref{thm:met-samples} and \ref{thm:met-real-samples}.
\end{proof}
\begin{cor}
    \label{cor:met-fin-agn}
    If $\repclass, \perclass$ are finite, we can $(\varepsilon, \delta)$-metalearn $(\repclass, \perclass)$ with $\users$ tasks and $n$ samples per task when 
    $$\users = O\left(\frac{\log_2|\repclass| \cdot \ln(1/\varepsilon)+\ln(1/\delta)}{\eps^2}\right) \quad\text{and}\quad \samples = O\left( \frac{\log_2|\perclass|+\ln(1/\varepsilon)}{\varepsilon^2}\right).$$
\end{cor}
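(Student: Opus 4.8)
\textbf{Proof plan for Corollary~\ref{cor:met-fin-agn}.}
The plan is to apply Theorem~\ref{thm:agn-met-samples} directly, substituting the trivial bounds on the complexity measures of finite classes. Recall that Theorem~\ref{thm:agn-met-samples} says that we can $(\eps,\delta)$-properly metalearn $(\repclass,\perclass)$ with
\[
\users = O\!\left(\frac{\pd(\erindclass_{\samples,\perclass,\repclass})\ln(1/\eps)+\ln(1/\delta)}{\eps^2}\right)
\quad\text{and}\quad
\samples = O\!\left(\frac{\VC(\perclass)+\ln(1/\eps)}{\eps^2}\right).
\]
So it suffices to bound $\VC(\perclass)$ and $\pd(\erindclass_{\samples,\perclass,\repclass})$ in terms of $\log_2|\perclass|$ and $\log_2|\repclass|$ respectively.

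First I would bound $\VC(\perclass)\le \log_2|\perclass|$, which is the standard counting bound: a shattered set of size $v$ forces $2^v$ distinct labelings, each realized by a distinct element of $\perclass$, so $2^v\le|\perclass|$. This immediately gives the claimed bound on $\samples$. Second, I would bound $\pd(\erindclass_{\samples,\perclass,\repclass})\le\log_2|\repclass|$. The class $\erindclass_{\samples,\perclass,\repclass}$ contains one empirical error function $\erind_{\rep}$ per representation $\rep\in\repclass$ (by Definition~\ref{def:empirical_error_function}), so $|\erindclass_{\samples,\perclass,\repclass}|\le|\repclass|$. Pseudodimension is the VC dimension of the class of subgraph indicators, and a finite function class of size $N$ yields a subgraph-indicator class of size at most $N$ (one indicator per function, for any fixed threshold vector the labeling is determined by the function), so the pseudoshattering argument gives $2^v\le N\le|\repclass|$, i.e.\ $\pd(\erindclass_{\samples,\perclass,\repclass})\le\log_2|\repclass|$.

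Plugging these two bounds into Theorem~\ref{thm:agn-met-samples} yields exactly
\[
\users = O\!\left(\frac{\log_2|\repclass|\cdot\ln(1/\eps)+\ln(1/\delta)}{\eps^2}\right)
\quad\text{and}\quad
\samples = O\!\left(\frac{\log_2|\perclass|+\ln(1/\eps)}{\eps^2}\right),
\]
which is the statement of the corollary.

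There is no real obstacle here: the only mildly subtle point is the pseudodimension-counting step, where one must be careful that pseudoshattering a set of size $v$ requires $2^v$ distinct \emph{functions} (not merely $2^v$ distinct sign patterns from a common threshold vector, but since the thresholds $r_1,\dots,r_v$ are fixed in the definition of pseudoshattering, distinct sign patterns do force distinct functions). Everything else is a direct substitution into the already-proved general theorem, so this is the kind of corollary whose proof is essentially one line once the counting bounds are stated.
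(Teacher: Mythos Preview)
The proposal is correct and follows essentially the same approach as the paper: bound $\VC(\perclass)\le\log_2|\perclass|$ and $\pd(\erindclass_{\samples,\perclass,\repclass})\le\log_2|\repclass|$ (the latter via the subgraph-indicator characterization and $|\erindclass_{\samples,\perclass,\repclass}|\le|\repclass|$), then plug into Theorem~\ref{thm:agn-met-samples}.
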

\begin{proof}
    We have that $\VC(\perclass) \leq \log_2|\perclass|$. Additionally, we see that $\pd(\erindclass_{\samples, \perclass, \repclass })\leq \log_2|\repclass|$ because $\pd(\erindclass_{\samples, \perclass, \repclass })  = \mathrm{VC}\left( \{ \ind_{\pm}\{q_h(D)\ge \tau\} \mid q_h \in Q_{n,\perclass,\repclass}\} \right)$ and the size of $\erindclass_{\samples, \perclass, \repclass }$ is at most $|\repclass|$. By plugging these two bounds into Theorem~\ref{thm:agn-met-samples} we obtain the result.
\end{proof}

\end{document}